\documentclass[twoside]{article}
\usepackage[accepted]{aistats2025}

\usepackage[dvipsnames]{xcolor}
\usepackage{comment}
\usepackage[authoryear]{natbib}


\allowdisplaybreaks[1]

\usepackage{eqnarray,amsmath}
\usepackage{booktabs}       
\usepackage{nicefrac}       

\usepackage{ragged2e}
\usepackage{subfig}
\usepackage{epsf}
\usepackage{epsfig}
\usepackage{fancyhdr}
\usepackage{graphics}
\usepackage{graphicx}
\usepackage{psfrag}

\usepackage{enumerate}   
\usepackage{multirow}
\usepackage{bm}

\usepackage{mathtools}

\usepackage{url}
\usepackage[colorlinks,linkcolor=magenta,citecolor=blue, pagebackref=true]{hyperref}
\renewcommand*{\backrefalt}[4]{%
    \ifcase #1 \footnotesize{(Not cited.)}%
    \or        \footnotesize{(Cited on page~#2.)}%
    \else      \footnotesize{(Cited on pages~#2.)}%
    \fi}

\usepackage{color}

\usepackage{amsthm}
\usepackage{amsmath}
\usepackage{amssymb,bbm}
\usepackage{caption}
\usepackage{algorithmic}
\usepackage{algorithm}
\usepackage{textcomp}
\usepackage{siunitx}
\usepackage{wrapfig}
\usepackage{algorithmic}
\usepackage{algorithm}
\usepackage{multirow}
\usepackage{multicol}
\usepackage{mathtools}




\def\1{\bm{1}}










\DeclareMathAlphabet{\mathsfit}{\encodingdefault}{\sfdefault}{m}{sl}
\SetMathAlphabet{\mathsfit}{bold}{\encodingdefault}{\sfdefault}{bx}{n}











\newcommand{\E}{\mathbb{E}}



\newcommand{\lambdan}{\lambda_n}
\newcommand{\lambdasn}{\lambda_n^*}
\newcommand{\lambdaone}{\lambda_1}
\newcommand{\lambdatwo}{\lambda_2}
\newcommand{\lambdaonen}{\lambda_{1,n}}
\newcommand{\lambdatwon}{\lambda_{2,n}}

\newcommand{\Gn}{G_{n}}
\newcommand{\Gsn}{G_{*,n}}
\newcommand{\bigO}{\mathcal{O}}
\newcommand{\an}{a_{n}}
\newcommand{\asn}{a^{*}_{n}}
\newcommand{\bn}{b_{n}}
\newcommand{\bsn}{b^{*}_{n}}
\newcommand{\nun}{\nu_{n}}
\newcommand{\nusn}{\nu^{*}_{n}}
\newcommand{\dan}{\Delta a_{n}}
\newcommand{\dasn}{\Delta a^{*}_{n}}
\newcommand{\dbn}{\Delta b_{n}}
\newcommand{\dbsn}{\Delta b^{*}_{n}}
\newcommand{\dnun}{\Delta \nu_{n}}
\newcommand{\dnusn}{\Delta \nu^{*}_{n}}

\newcommand{\da}{\Delta a_{}}
\newcommand{\das}{\Delta a^{*}_{}}
\newcommand{\db}{\Delta b_{}}
\newcommand{\dbs}{\Delta b^{*}_{}}
\newcommand{\dnu}{\Delta \nu_{}}
\newcommand{\dnus}{\Delta \nu^{*}_{}}

\newcommand{\dhan}{\Delta \widehat{a}_{n}}
\newcommand{\dhbn}{\Delta \widehat{b}_{n}}
\newcommand{\dhnun}{\Delta \widehat{\nu}_{n}}

\newcommand{\han}{\widehat{a}_{n}}
\newcommand{\hbn}{\widehat{b}_{n}}
\newcommand{\hnun}{\widehat{\nu}_{n}}
\newcommand{\hGn}{\widehat{G}_{n}}
\newcommand{\hlambdan}{\widehat{\lambda}_{n}}

\newcommand{\lan}{\overline{a}_{n}}
\newcommand{\lbn}{\overline{b}_{n}}
\newcommand{\lnun}{\overline{\nu}_{n}}
\newcommand{\lGn}{\overline{G}_{n}}
\newcommand{\llambdan}{\overline{\lambda}_{n}}

\newcommand{\G}{G}
\newcommand{\Gs}{G_{*}}
\newcommand{\lambdas}{\lambda^*}
\newcommand{\as}{a^{*}}
\newcommand{\bs}{b^{*}}
\newcommand{\nus}{\nu^{*}}

\newcommand{\mnp}{M^\prime_n}

\newcommand{\dfourbar}{{
\overline{D_4}((\lambdan,\Gn),(\lambdasn, \Gsn))
}}

\newcommand{\dtwo}{{
{D_2}((\lambdan,\Gn),(\lambdasn, \Gsn))
}}
\newcommand{\dbtwo}{D_{2}((\lambdan,\Gn),(\overline{\lambda},\overline{G}))}
\newcommand{\dbstwo}{D_{2}((\lambdasn,\Gsn),(\overline{\lambda},\overline{G}))}
\newcommand{\dptwo}{D^{\prime}_{2}((\lambdan,\Gn),(\lambdasn,\Gsn))}
\newcommand{\dtwobarl}{{
\overline{D_2}((\lambda,G),(\lambdas, \Gs))
}}

\newcommand{\done}{D_1((\lambdan,\Gn),(\lambdasn,\Gsn))}

\newcommand{\ysigma}{ Y|\sigma((a_n^*)^{\top}X+b_n^*),\nusn }
\newcommand{\ysigman}{ Y|\sigma((a_n)^{\top}X+b_n),\nun }
\newcommand{\ysigmai}{ Y|\sigma((a_i)^{\top}X+b_i),\nu_i }
\newcommand{\ysigmazero}{ Y|\sigma((a_0)^{\top}X+b_0),\nu_0 }

\newcommand{\ysigmap}{ Y|\sigma((a')^{\top}X+b'),\nu' }
\newcommand{\ysigmaa}{ Y|\sigma(a^{\top}X+b),\nu}
\newcommand{\ysigmaonen}{ Y|\sigma((a_{1,n})^{\top}X+b_{1,n}),\nu_{1,n} }
\newcommand{\ysigmatwon}{ Y|\sigma((a_{2,n})^{\top}X+b_{2,n}),\nu_{2,n} }

\newcommand{\ylinearsigmai}{ Y|a_i^{\top}X+b_i,\nu_i }

\newcommand{\yphizero}{ Y|\varphi((a_0)^{\top}X+b_0),\nu_0 }

\newcommand{\xabs}{ X,\asn,\bsn }

\newcommand{\xabzero}{ X,a_0,b_0 }

\newcommand{\pminus}{p_{\lambdan,\Gn}(X,Y)-p_{\lambdasn,\Gsn}(X,Y)}

\newcommand{\lbg}{\lambda, G}
\newcommand{\lbgs}{\lambdas, \Gs}
\newcommand{\hlbgn}{\hlambdan, \hGn}
\newcommand{\llbgn}{\llambdan, \lGn}
\newcommand{\plbg}{p_{\lbg}}
\newcommand{\plbgs}{p_{\lbgs}}
\newcommand{\phlbgn}{p_{\hlambdan,\hGn}}
\newcommand{\barplbg}{\Bar{p}_{\lbg}}
\newcommand{\barphlbgn}{\Bar{p}_{\hlbgn}}

\newcommand{\linepxi}{\overline{\mathcal{P}}(\Xi)}
\newcommand{\linephalfxi}{\overline{\mathcal{P}}^{1/2}(\Xi)}
\newcommand{\linephalf}{\overline{\mathcal{P}}^{1/2}}

\newcommand{\tlb}{\widetilde{\lambda}}
\newcommand{\tg}{\widetilde{G}}

\DeclareMathOperator*{\argmax}{arg\,max}

\newtheorem{lemma}{Lemma}

\newtheorem{theorem}{Theorem}
\newtheorem{proposition}{Proposition}
\newtheorem{definition}{Definition}

%
%




\begin{document}

%
\runningtitle{Minimax Parameter Estimation in Contaminated Mixture of Experts}

%

\twocolumn[

    \aistatstitle{Understanding Expert Structures on Minimax Parameter Estimation in Contaminated Mixture of Experts}

\aistatsauthor{Fanqi Yan$^{\star,\diamond}$ \And Huy Nguyen$^{\star, \dagger}$ \And  Dung Le$^{\star,\dagger}$ \And Pedram Akabrian$^{\ddagger}$ \And Nhat Ho$^{\dagger}$}

\aistatsaddress{ $^{\diamond}$Department of Computer Science \\
$^{\dagger}$Department of Statistics and Data Sciences \\ $^{\ddagger}$Department of Electrical and Computer Engineering \\ The University of Texas at Austin} ]

\begin{abstract}
We conduct the convergence analysis of parameter estimation in the contaminated mixture of experts. 
This model is motivated from the prompt learning problem where ones utilize prompts, which can be formulated as experts, to fine-tune a large-scale pre-trained model for learning downstream tasks. 
There are two fundamental challenges emerging from the analysis: (i) the proportion in the mixture of the pre-trained model and the prompt may converge to zero during the training, leading to the prompt vanishing issue; 
(ii) the algebraic interaction among parameters of the pre-trained model and the prompt can occur via some partial differential equations and decelerate the prompt learning. 
In response, we introduce a distinguishability condition to control the previous parameter interaction. 
Additionally, we also investigate various types of expert structure to understand their effects on the convergence behavior of parameter estimation. In each scenario, we provide comprehensive convergence rates of parameter estimation along with the corresponding minimax lower bounds. Finally, we run several numerical experiments to empirically justify our theoretical findings. 

\end{abstract}

\section{INTRODUCTION}
\label{sec:introduction}
Originally introduced by \cite{Jacob_Jordan-1991}, mixture of experts (MoE) has been widely used as a statistical machine learning framework to integrate the power of several sub-models based on the principle of divide and conquer. In particular, it consists of two main components, namely multiple experts formulated as either regression functions \citep{ho2022gaussian} or classifiers \citep{nguyen2024general}, each of which is responsible for handling a few specific tasks; and a gating function softly partitioning the input space into regions where corresponding specialized experts are assigned larger weights than the others. Thanks to its flexibility, there is a surge of interests in utilizing MoE model for various applications, including natural language processing \citep{jiang2024mixtral,zhou2023brainformers,fedus2021switch, puigcerver2024sparse,Du_Glam_MoE,pham2024competesmoe}, computer vision \citep{Riquelme2021scalingvision,liang_m3vit_2022,Ruiz_Vision_MoE}, autonomous driving \citep{pini2023safe}, multimodal fusion~\citep{fleximodal_fusion}, reinforcement learning \citep{chow_mixture_expert_2023,ceron2024rl} and others \citep{li2023sparse,hazimeh_dselect_k_2021,gupta2022sparsely,ng2023botbuster}. 
Recently, practitioners have also incorporated MoE models into parameter-efficient fine-tuning methods for large-scale pre-trained models \citep{zadouri2024pushing,shen2024instruction,le2024mixtureexpertsmeetspromptbased}. More specifically, they attach small trainable experts called prompts to a frozen pre-trained model, which we refer to as a \emph{contaminated MoE} in the sequel. By doing so, they are able to learn downstream tasks efficiently without having to train the whole model expensively. Nevertheless, the potentials for further enhancing this fine-tuning technique have been restricted owing to the absence of the theoretical foundation for the contaminated MoE model. Thus, our main objective in this work is to provide a comprehensive convergence analysis of parameter estimation under the following probabilistic setting of the contaminated MoE to potentially shed some light on both the prompt convergence behavior and the prompt design.

\textbf{Problem setting.} To begin with, we assume that $(X_1 , Y_1 ), (X_2,Y_2), \ldots , (X_n , Y_n ) \in \mathcal{X}\times\mathcal{Y} \subset\mathbb{R}^d\times\mathbb{R} $
are i.i.d. samples of size $n$ drawn from a contaminated MoE model whose conditional density function is given by
\begin{align}
\label{eq:deviated-model}
    p_{\lambda^*, G_*}(Y|X):=&~(1-\lambda^*)f_0(Y|\varphi(a_0^{\top}X+b_0),\nu_0) \nonumber
    \\
    +&~\lambda^{*}f(Y|\sigma((a^*)^{\top}X+b^*),\nu^*).
\end{align}
Above, $f_0(\cdot|\mu_0,\nu_0)$ is a known probability density function with mean $\mu_0$ and variance $\nu_0$, which represents for the pre-trained model. Meanwhile, $f(\cdot|\mu, \nu)$ is the univariate Gaussian density function with mean $\mu$ and variance $\nu$, standing for the prompt. Next, $\lambdas \in [0, 1]$ denotes a mixing proportion, $G_0:=(a_0,b_0,\nu_0)$ is a set of known parameters of the pre-trained model belonging to the parameter space $\Theta\subset \mathbb{R}^d\times\mathbb{R}\times\mathbb{R}^{+}$,
whereas $\Gs:=(\as,\bs,\nus)\in\Theta$ is a true yet unknown parameter set of the prompt. For the sake of presentation, we suppress the dependence of $G_*$ on the sample size $n$. 
In addition, $X_1,X_2,\ldots,X_n$ 
are i.i.d. samples from some continuous probability distribution with the density function denoted by $\bar{f}(X)$. Lastly, note that the expectation of the response variable $Y$ conditioning on the covariate (input) $X$ is a mixture of experts $\varphi(a_0^{\top}X+b_0)$ and $\sigma((a^*)^{\top}X+b^*)$, that is,
\begin{align*}
    \E[Y|X]=(1-\lambdas)\varphi(a_0^{\top}X+b_0)+\lambdas\sigma((a^*)^{\top}X+b^*),
\end{align*}
where $\varphi,\sigma:\mathbb{R}\to\mathbb{R}$ are two given real-valued activation functions. To ensure the dependence of expert values on the input $X$ as in practice, we assume that $a^*,a_0\neq 0_d$ and the first derivative of $\sigma$ is different from zero in a non-zero measure set in $\mathbb{R}$ to eliminate the trivial case when the expert function $\sigma$ is a constant.

\textbf{Maximum likelihood estimation (MLE).} Subsequently, to estimate the unknown parameters $\lambdas$ and $G_*=(a^*,b^*,\nu^*)$ of model~\eqref{eq:deviated-model}, we leverage the maximum likelihood method \citep{Vandegeer-2000} as follows:
\begin{align}
\label{eq:MLE}
    (\widehat{\lambda}_n,\widehat{G}_n)\in
    \argmax_{(\lambda,G)\in\Xi}
    \sum_{i=1}^n
    \log(p_{\lambda,G}(Y_i|X_i)),
\end{align}
where $\widehat{G}_n :=(\widehat{a}_n, \widehat{b}_n,\widehat{\nu}_n)$ and $\Xi:=[0,1]\times\Theta$. In order to guarantee the convergence of parameter estimation, we assume that the parameter space $\Theta$ is compact, whereas the input space $\mathcal{X}$ is bounded. Note that these assumptions are mild and commonly employed in previous works \citep{ho2022gaussian,do_strong_2022,nguyen2024deviated}.

\textbf{Related works.} There are early attempts to comprehend the properties of MoE models from the perspective of the parameter estimation problem. Firstly, \cite{ho2022gaussian} investigated the convergence behavior of MLE under the covariate-free gating Gaussian MoE model. They discovered that the convergence of parameter estimation might be negatively affected by parameter interactions (via some partial differential equations), depending on the algebraic independence between expert functions. 
In particular, they showed that parameter estimation rates were determined by the solvability of a system of polynomial equations, which became significantly slow when the number of fitted parameters increased. Then,  \cite{do_strong_2022} generalized these results to the settings where the conditional distribution of the response variable given the covariate was a mixture of distributions other than Gaussians. Lastly, \cite{nguyen2024deviated} explored a deviated Gaussian MoE where they attached a mixture of Gaussian distribution to the pre-trained model $f_0$ rather than a single Gaussian as in equation~\eqref{eq:deviated-model}. However, they took only linear experts into account, i.e. the activation $\sigma$ was an identity function. Furthermore, they also let the ground-truth parameters be independent of the sample size $n$, which simplified the problem considerably and induced only point-wise parameter estimation rates. Therefore, we aim to examine the impacts of various expert structures on the uniform convergence rates of parameter estimation as well as establish the corresponding minimax lower bounds for those rates in this work.

\textbf{Challenges.} There are two fundamental challenges arising from the dependence of ground-truth parameters $\lambdas$ and $G_*$ on the sample size $n$:

\emph{(i) Prompt vanishing.} The mixing proportion $\lambdas$ may converge to zero as $n$ goes to infinity, making the prompt disappear from the contaminated MoE model~\eqref{eq:deviated-model}. Consequently, the prompt's effects on fine-tuning the pre-trained model would become minor. Thus, it would be difficult to characterize the convergence behavior of estimation for the prompt parameters $G_*$.

\emph{(ii) Prompt merging.} When $f_0$ belongs to the family of Gaussian densities and the two expert functions $\varphi$ and $\sigma$ are identical, that is, $\varphi=\sigma$, if the prompt parameters $G_*$ converge to those of the pre-trained model $G_0$ as $n$ goes to infinity, then the prompt in equation~\eqref{eq:deviated-model} will merge into the pre-trained model. This phenomenon not only hinders the ability of the contaminated MoE to learn downstream tasks but also induces several obstacles in capturing the convergence behavior of estimation for the mixing proportion $\lambdas$.
\begin{table*}[!ht]
\caption{Summary of the convergence rates of parameter estimation in the contaminated MoE model. Below, we denote $\Delta \Gs:=(\as-a_0,\bs-b_0,\nus-\nu_0)$ and $\vartheta_{\Delta \Gs}^{\alpha,\beta,\gamma}:=\Vert \as-a_0 \Vert^{\alpha}+|\bs-b_0 |^{\beta}+|\nus -\nu_0|^{\gamma}$. 
Additionally, $f_0=f$ means that $f_0$ is a Gaussian density, while $f_0\neq f$ indicates the opposite.
}
\centering
\begin{tabular}{ |c|c|c|c|c|c|c|c|c|c|c| } 
\hline
$\boldsymbol{\sigma}$ & $\boldsymbol{f_0}$ & $\boldsymbol{\varphi}$  & \textbf{Thm.} & $\boldsymbol{|\hlambdan-\lambdas|}$ &$\boldsymbol{\|\han-\as\|}$, $\boldsymbol{|\hnun-\nus|}$ &$\boldsymbol{|\hbn-\bs|}$ \\
\hline
\multirow{3}{2.5em}{linear} & $f_0\neq f$ &$-$& \ref{theorem:sigma-linear-f0-notGaussian}& $\widetilde{\mathcal{O}}(n^{-\frac{1}{2}})$ & \multicolumn{2}{c|}{$\widetilde{\mathcal{O}}(n^{-\frac{1}{2}}(\lambdas)^{-1})$}\\ 
\cline{2-7}
   & \multirow{2}{3em}{$f_0=f$} & linear&\ref{theorem:sigma-linear-f0-Gaussian-varphi-linear} & $\widetilde{\mathcal{O}}((n\cdot\vartheta_{\Delta \Gs}^{4,8,4})^{-\frac{1}{2}})$ & {$\widetilde{\mathcal{O}}((n\cdot\vartheta_{\Delta \Gs}^{2,4,2})^{-\frac{1}{2}}(\lambdas)^{-1})$} & $\widetilde{\mathcal{O}}((n\cdot\vartheta_{\Delta \Gs}^{2,4,2})^{-\frac{1}{4}}(\lambdas)^{-\frac{1}{2}})$\\ 
\cline{3-7}
   && non-linear&\ref{theorem:sigma-linear-f0-Gaussian-varphi-nonlinear}& $\widetilde{\mathcal{O}}(n^{-\frac{1}{2}})$ & \multicolumn{2}{c|}{$\widetilde{\mathcal{O}}(n^{-\frac{1}{2}}(\lambdas)^{-1})$}\\ \hline
\multirow{3}{2.5em}{non-linear} & $f_0\neq f$ &$-$ & \ref{theorem:sigma-nonlinear-f0-notGaussian}& $\widetilde{\mathcal{O}}(n^{-\frac{1}{2}})$ & \multicolumn{2}{c|}{$\widetilde{\mathcal{O}}(n^{-\frac{1}{2}}(\lambdas)^{-1})$}\\ 
\cline{2-7}
    & \multirow{2}{3em}{$f_0=f$} & linear& \ref{theorem:sigma-nonlinear-f0-Gaussian-varphi-linear}& $\widetilde{\mathcal{O}}(n^{-\frac{1}{2}})$ & \multicolumn{2}{c|}{$\widetilde{\mathcal{O}}(n^{-\frac{1}{2}}(\lambdas)^{-1})$}\\ 
\cline{3-7}
    && non-linear&\ref{theorem:sigma-nonlinear-f0-Gaussian-varphi-nonlinear}& $\widetilde{\mathcal{O}}(n^{-\frac{1}{2}}\cdot\Vert\Delta\Gs\Vert^{-2})$ & \multicolumn{2}{c|}{$\widetilde{\mathcal{O}}(n^{-\frac{1}{2}}\cdot(\Vert\Delta\Gs\Vert\lambdas)^{-1})$} \\ \hline
\end{tabular}
\label{table:parameter_rates}
\end{table*}

\textbf{Contributions.} To address these challenges, we consider various structures of the expert functions $\varphi$ and $\sigma$ (whether they are linear or non-linear) and different families of the pre-trained model $f_0$ (whether it belongs to the Gaussian density family). Our contributions are three-fold and can be summarized as follows:

\emph{1. Uniform convergence rates:} In each of the aforementioned scenarios, we establish the corresponding convergence rates of the mixing proportion ($\widehat{\lambda}_n$ to $\lambdas$) and the prompt parameters ($(\widehat{a}_n,\widehat{b}_n,\widehat{\nu}_n)$ to $(\as,\bs,\nus)$). Notably, we find that whether the proportion rate depends on the prompt parameter rates or not varies with the expert structures and the pre-trained model family. By contrast, the latter rates always hinge upon the former rate under every scenario (see also Table~\ref{table:parameter_rates}).

\emph{2. Minimax lower bounds:} To demonstrate that the above uniform convergence rates of parameter estimation are optimal, we also derive the associated minimax lower bound with each of those rates. 

\emph{3. Theoretical insights into fine-tuning method:} Finally, from the results in our theoretical analysis, we provide several useful insights into designing prompts to improve the efficiency of fine-tuning techniques.

\textbf{Organization.} The rest of the paper proceeds as follows. In Section~\ref{sec:preliminaries}, we introduce some necessary concepts for our analysis and capture the density estimation rates. Then, we present the convergence rates of parameter estimation along with their corresponding minimax lower bounds when the expert $\sigma$ is linear and non-linear in Section~\ref{section:linear-sigma} and Section~\ref{section:non-linear-sigma}, respectively. Next, we conduct numerical experiments to justify our theoretical results in Section~\ref{sec:experiments}. Lastly, we discuss and conclude the paper in Section~\ref{sec:discussion}. Additional experiments and full proofs are defer to the Appendices.

\textbf{Notations.}
Firstly, we let $[n]: = \{1, 2, \ldots, n\}$ for any $n\in\mathbb{N}$. Next, for any vector of real numbers $u$ we denote $\|u\|$ as its Euclidean norm value. Given any two positive sequences $(a_n)_{n\geq 1}$ and $(b_n)_{n\geq 1}$, we write $a_n = \mathcal{O}(b_n)$ or $a_{n} \lesssim b_{n}$ if $a_n \leq C b_n$ for all $ n\in\mathbb{N}$, where $C > 0$ is some universal constant. Besides, the notation $a_n=\widetilde{\mathcal{O}}(b_n)$ indicates the previous inequality may depend on some logarithmic term. Lastly, for any two densities $p$ and $q$ (dominated by the Lebesgue measure $m$), their squared Hellinger distance is computed as $h^2(p, q):=\frac{1}{2} \int[\sqrt{p(x)}-\sqrt{q(x)}]^2 d m(x)$.

\section{PRELIMINARIES}
\label{sec:preliminaries}
We start this section with the following \emph{distinguishability condition} to control the merging of the prompt into the pre-trained model mentioned in Section~\ref{sec:introduction}. 

\begin{definition}[Distinguishability]
\label{def:distinguish}
We say that $f$ is distinguishable from $f_0$ if the following holds:\\
    Given any two distinct components
    $(a_i,b_i,\nu_i)\in\Theta$, for $i=1,2$,
    if we have real coefficients 
    $\eta_i$ for $0\leq i \leq2$
    such that $\eta_1\eta_2 \leq 0$ and
    \begin{align*}
        \eta_0 f_0(Y|\varphi (a_0^{\top}X&+b_0 ),\nu_0)\\
        +
        \sum_{i=1}^{2}&\eta_if(\ysigmai)=0,
    \end{align*}
for almost surely $(X,Y)\in \mathcal{X}\times\mathcal{Y}$, then $\eta_i=0,\forall i$.
\end{definition}
\textbf{Example.} If $f_0$ belongs to the family of Student's t-distribution, we can verify that $f$ is distinguishable from $f_0$. By contrast, if $f_0$ is a Gaussian density and $\varphi=\sigma$, then $f$ is not distinguishable from $f_0$. 

Note that if $f$ is distinguishable from $f_0$, the merging $f_0(Y|\varphi (a_0^{\top}X+b_0 ),\nu_0)=f(\ysigmai)$, for almost surely $(X,Y)\in \mathcal{X}\times\mathcal{Y}$, cannot occur as it violates the distinguishability condition. Furthermore, according to Proposition~\ref{prop:identifiability}, whose proof is in Appendix~\ref{appendix:identifiability}, this condition also ensures that the contaminated MoE model~\eqref{eq:deviated-model} is identifiable.
\begin{proposition}[Identifiability]
\label{prop:identifiability}
    Let $(\lambda,G), (\lambda',\G')$ be two components in $\Xi$. Suppose that $f$ is distinguishable from $f_0$, then if the identifiability equation 
    $p_{\lambda, G}(Y|X) =p_{\lambda^\prime , G^\prime}(Y|X) $
    holds for almost surely $(X,Y)\in \mathcal{X}\times\mathcal{Y}$, we obtain that $(\lambda, G) = (\lambda^\prime , G^\prime )$.
\end{proposition}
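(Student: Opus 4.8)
The plan is to prove identifiability by reducing the equality of two mixture densities to the distinguishability condition applied to a suitable combination of three component densities. Suppose $p_{\lambda,G}(Y|X) = p_{\lambda',G'}(Y|X)$ for almost all $(X,Y)$, where $G = (a,b,\nu)$ and $G' = (a',b',\nu')$. Writing this out using \eqref{eq:deviated-model} gives
\begin{align*}
    (1-\lambda)f_0(Y|\varphi(a_0^\top X + b_0),\nu_0) + \lambda f(\ysigmaa)
    = (1-\lambda')f_0(Y|\varphi(a_0^\top X + b_0),\nu_0) + \lambda' f(\ysigmap),
\end{align*}
which rearranges to
\begin{align*}
    (\lambda' - \lambda) f_0(Y|\varphi(a_0^\top X + b_0),\nu_0) + \lambda f(\ysigmaa) - \lambda' f(\ysigmap) = 0
\end{align*}
for almost surely $(X,Y)$. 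The coefficients here are $\eta_0 = \lambda' - \lambda$, $\eta_1 = \lambda$, $\eta_2 = -\lambda'$, so $\eta_1 \eta_2 = -\lambda\lambda' \leq 0$, matching the hypothesis of Definition~\ref{def:distinguish}.

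The next step is a case analysis on whether the two prompt components $(a,b,\nu)$ and $(a',b',\nu')$ are distinct. If they are distinct, Definition~\ref{def:distinguish} applies directly and forces $\eta_1 = \lambda = 0$ and $\eta_2 = -\lambda' = 0$, hence $\lambda = \lambda' = 0$. But then $G$ and $G'$ are unconstrained by the density equation; however, when $\lambda = 0$ the pair $(\lambda, G) = (0, G)$ represents the same conditional density for every choice of $G$, so to get a clean identifiability statement one should either (a) observe the degenerate boundary case and note that $(\lambda,G)=(\lambda',G')$ must be interpreted up to this boundary equivalence, or (b) check whether the paper's convention excludes $\lambda = 0$ or quotients it out. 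I would look back at how $\Xi$ and the estimation problem are set up; most likely the intended reading is that the nontrivial part of identifiability is exactly the $\lambda, \lambda' > 0$ regime, and I would state the proposition's conclusion accordingly, handling $\lambda=0$ or $\lambda'=0$ separately and trivially. If instead $(a,b,\nu) = (a',b',\nu')$, then $f(\ysigmaa) = f(\ysigmap)$ identically, so the rearranged equation collapses to $(\lambda' - \lambda) f_0(Y|\varphi(a_0^\top X+b_0),\nu_0) = 0$; since $f_0$ is a genuine density it is not a.s. zero, forcing $\lambda = \lambda'$, and we are done.

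To finish the main case I need: once $\lambda = \lambda' > 0$, the equation becomes $\lambda\big(f(\ysigmaa) - f(\ysigmap)\big) = 0$, i.e. $f(\ysigmaa) = f(\ysigmap)$ a.s. I then invoke identifiability of the single Gaussian expert in the mean and variance parameters — using that $\sigma$ is nonconstant (its derivative is nonzero on a positive-measure set, as assumed after \eqref{eq:deviated-model}), that $X$ has a continuous distribution with density $\bar f$, and that $a^*, a_0 \neq 0_d$ — to conclude $(a,b,\nu) = (a',b',\nu')$. Concretely, matching Gaussian densities forces equal variances $\nu = \nu'$ and equal means $\sigma(a^\top x + b) = \sigma(a'^\top x + b')$ for $\bar f$-a.e. $x$; a standard argument (differentiating, or using the nonvanishing-derivative set and the continuity of the distribution of $X$) then yields $a = a'$ and $b = b'$.

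The main obstacle I anticipate is the boundary case $\lambda = 0$ (or $\lambda' = 0$): there the distinguishability condition gives $\lambda = \lambda' = 0$ but says nothing about $G$ versus $G'$, so the literal statement "$(\lambda,G) = (\lambda',G')$" can fail unless the parameter space or the equivalence is set up to absorb it. I would resolve this by pinning down the paper's convention — I expect the cleanest route is to note the degenerate equivalence explicitly or to assume $\lambda^*, \lambda^\prime \in (0,1]$ in the relevant regime — and otherwise the argument is routine: rearrange, apply Definition~\ref{def:distinguish} in the distinct-components case, use non-degeneracy of $f_0$ in the equal-components case, and finish with Gaussian identifiability plus the nonconstancy of $\sigma$.
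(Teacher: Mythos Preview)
Your approach is essentially the paper's: rearrange, case-split on $G=G'$ versus $G\neq G'$, and apply Definition~\ref{def:distinguish} in the distinct case. But there is one computational slip and one simplification you miss.

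In your $G=G'$ case, the rearranged equation does \emph{not} collapse to $(\lambda'-\lambda)f_0=0$. With $f(\ysigmaa)=f(\ysigmap)$ you get $(\lambda'-\lambda)f_0+(\lambda-\lambda')f(\ysigmaa)=0$, i.e.\ $(\lambda'-\lambda)\bigl(f_0-f(\ysigmaa)\bigr)=0$. Your justification ``since $f_0$ is a genuine density it is not a.s.\ zero'' is therefore insufficient; you need $f_0-f(\ysigmaa)$ nonzero on a set of positive measure. This does follow from distinguishability (take $\eta_0=1,\eta_1=-1,\eta_2=0$ with any dummy distinct $\tilde G$), but you do not invoke it. The paper sidesteps this more cleanly: it keeps Definition~\ref{def:distinguish} as the only tool by introducing a dummy distinct $\tilde G$, writing the collapsed identity as $(\lambda'-\lambda)f_0+(\lambda-\lambda')f(\ysigmaa)+0\cdot f(Y|\sigma(\tilde a^\top X+\tilde b),\tilde\nu)=0$, and reading off $\eta_1=\lambda-\lambda'=0$ from $\eta_1\eta_2=0\le 0$.

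Your third paragraph (Gaussian identifiability of a single expert, nonconstancy of $\sigma$) is unnecessary. Once the case split is in place, $G\neq G'$ already forces $\lambda=\lambda'=0$, so for any $\lambda>0$ you are automatically in the $G=G'$ branch; there is no residual step requiring $f(\ysigmaa)=f(\ysigmap)\Rightarrow G=G'$. The paper does not use any such argument. Both you and the paper handle the boundary degeneracy ($\lambda=\lambda'=0$) informally, noting that $G$ then has no contribution to the density.
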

Subsequently, we proceed to characterize the convergence behavior of density estimation in Theorem~\ref{theorem:ConvergenceRateofDensityEstimation}.

\begin{theorem}
\label{theorem:ConvergenceRateofDensityEstimation}
Assume that the density $f_0$ is bounded with tail 
$\mathbb{E}_X\left[-\log f_0(Y|\varphi(a_0^{\top}X+b_0 ),\nu_0)\right]\gtrsim Y^q$,
for almost surely $Y\in\mathcal{Y}$
for some $q>0$.
Then, there exists a constant $C > 0$ depending on $f_0,\lambdas,\Gs$ and $\Xi$ such that the following holds true:
\begin{align*}
    \sup_{(\lambdas,\Gs)\in\Xi}
    \mathbb{E}_{p_{\lambdas,\Gs}}
    h(p_{\widehat{\lambda}_n,\widehat{G}_n},p_{\lambdas,\Gs})
    \leq
    C\sqrt{\log n/n}.
\end{align*}
\end{theorem}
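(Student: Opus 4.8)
The plan is to follow the standard route for establishing Hellinger convergence rates of density estimation via the maximum likelihood estimator, which goes back to van de Geer and has been used in the related works \cite{ho2022gaussian,do_strong_2022,nguyen2024deviated}. The key object is the bracketing entropy integral of the class of conditional densities $\mathcal{P}(\Xi):=\{p_{\lambda,G}(Y|X): (\lambda,G)\in\Xi\}$ with respect to the Hellinger metric, and the final bound $\sqrt{\log n/n}$ will follow once we show that this entropy integral is of order $\sqrt{\log(1/\delta)}$ up to constants, i.e. the class is ``parametric-like'' modulo logarithmic factors.

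First I would set up the weighted/aggregated density framework: following the argument in the cited works, one works with $\bar{p}_{\lambda,G}(X,Y):=p_{\lambda,G}(Y|X)\bar{f}(X)$ and applies the general MLE rate theorem (e.g. Theorem 7.4 in van de Geer, or the version tailored to mixture densities in \cite{ho2022gaussian}) which states that if $H_B(\delta,\mathcal{P}^{1/2}(\Xi),\|\cdot\|_2)$ denotes the $\delta$-bracketing entropy under $L^2(\bar f\, m)$ of the square-root densities, and if $\int_{\delta^2/2^{13}}^{\delta}\sqrt{H_B(t,\ldots)}\,dt \lesssim \sqrt{n}\,\delta^2$ for a suitable $\delta = \delta_n$, then $\mathbb{E}_{p_{\lambdas,\Gs}} h(p_{\widehat\lambda_n,\widehat G_n},p_{\lambdas,\Gs}) \lesssim \delta_n$. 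The tail condition $\mathbb{E}_X[-\log f_0(Y|\varphi(a_0^\top X+b_0),\nu_0)]\gtrsim Y^q$ together with the Gaussianity of $f$ and compactness of $\Theta$ and boundedness of $\mathcal{X}$ is exactly what is needed to control the behavior of the densities in the tails of $Y$, so that the bracketing numbers do not blow up: one truncates $Y$ at a level of order $(\log(1/\delta))^{1/q}$, bounds the mass outside this region, and on the truncated region uses Lipschitz continuity of $(\lambda,G)\mapsto p_{\lambda,G}(Y|X)$ in the parameters (which holds since $f$ is smooth in its mean/variance arguments and $\sigma,\varphi$ are fixed, with $X$ in a bounded set) to cover $\mathcal{P}^{1/2}(\Xi)$ by brackets built from a $\varepsilon$-net of the compact parameter set $\Xi$.

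The key steps, in order, are: (1) reduce to bounding the bracketing entropy of $\mathcal{P}^{1/2}(\Xi)$; (2) establish a uniform upper bound and a uniform Lipschitz bound for $p_{\lambda,G}(Y|X)$ in $(\lambda,G)$ on the truncated $Y$-region, with constants depending only on $f_0$, $\Theta$, $\mathcal{X}$; (3) use the $q$-tail assumption to show the contribution of $\{|Y|> T_\delta\}$ to any bracket is $\lesssim \delta$ when $T_\delta \asymp (\log(1/\delta))^{1/q}$; (4) combine to get $H_B(\delta,\mathcal{P}^{1/2}(\Xi),\|\cdot\|_2) \lesssim \log(1/\delta)$ (the $\dim\Xi$ factor is absorbed into the constant, and the $\log$ arises from the $T_\delta$ truncation and the $1/\delta$ covering radius); (5) check the entropy integral condition with $\delta_n \asymp \sqrt{\log n / n}$, which it satisfies since $\int_0^\delta \sqrt{\log(1/t)}\,dt \asymp \delta\sqrt{\log(1/\delta)}$ and one solves $\delta\sqrt{\log(1/\delta)} \asymp \sqrt n\,\delta^2$; (6) invoke the general MLE theorem to conclude, noting the supremum over $(\lambdas,\Gs)\in\Xi$ holds because all the constants above are uniform over the compact $\Xi$.

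The main obstacle I expect is step (2)–(3): getting clean, \emph{uniform} control of the densities and their parameter-derivatives in the tails of $Y$. Because $f$ is Gaussian with variance $\nu$ ranging over a compact subset of $\mathbb{R}^+$ bounded away from $0$ and $\infty$, its tail is well-behaved, but $f_0$ is only assumed bounded with the stated logarithmic-moment tail condition, so one must carefully exploit precisely that condition — rather than Gaussianity of $f_0$ — to bound $\int_{|Y|>T_\delta} \sup_{(\lambda,G)} p_{\lambda,G}(Y|X)\,dY$ and to ensure the lower bracket stays nonnegative. Handling the mixing proportion $\lambda\in[0,1]$ is easy since it enters linearly and the interval is compact; the delicate bookkeeping is entirely in the $Y$-tail truncation and verifying that the resulting entropy is only logarithmic. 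Once that is in hand, the rest is the now-standard van de Geer machinery and the $\sqrt{\log n/n}$ rate drops out.
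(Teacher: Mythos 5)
Your proposal follows essentially the same route as the paper: reduce to a bracketing entropy integral bound via the standard van de Geer MLE rate theorem, then show $H_B(\delta,\mathcal{P}(\Xi),h)\lesssim \log(1/\delta)$ by combining a parametric covering of the compact parameter space with an envelope/truncation argument in the $Y$-tail using the $q$-moment condition on $-\log f_0$ and the Gaussian decay of $f$. The only cosmetic difference is that the paper first bounds the sup-norm covering number of $\mathcal{P}(\Xi)$ and then converts to bracketing via an explicit light-tailed envelope $H(X,Y)$, whereas you build the brackets directly from the truncated Lipschitz covering, but the substance and the resulting $\sqrt{\log n/n}$ rate are identical.
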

Proof of Theorem~\ref{theorem:ConvergenceRateofDensityEstimation} is in Appendix \ref{appendix:ConvergenceRateofDensityEstimation}.
The above bound shows that the density estimator $p_{\hlambdan, \widehat{G}_n}$ converges to the true density $p_{\lambda^*, \Gs}$ at the parametric rate of order $\widetilde{\mathcal{O}}(n^{-1/2})$. Moreover, if we can construct some parameter loss function $D((\lambda,G),(\lambdas,\Gs))$ that satisfies $D((\hlambdan,\hGn),(\lambdas,\Gs))\lesssim h(p_{\widehat{\lambda}_n,\widehat{G}_n},p_{\lambdas,\Gs})$, we achieve our desired parameter estimation rates (see Appendix~\ref{appendix:Proofs for the MLE rate} for further details). Finally, to utilize Theorem~\ref{theorem:ConvergenceRateofDensityEstimation}, we assume that the density $f_0$ is bounded with the tail  
$\mathbb{E}_X
\left[
-\log f_0(Y|\varphi (a_0^{\top}X+b_0 ),\nu_0)
\right]
\gtrsim
Y^q
$,
for almost surely $Y \in \mathcal{Y}$ for some $q > 0$, throughout this paper unless stating otherwise.

\section{LINEAR EXPERT \texorpdfstring{$\sigma$}{a}}
\label{section:linear-sigma}
In this section, we determine the convergence rates of parameter estimation when the expert $\sigma$ is linear, that is, it takes the form $\sigma(z)=a'z+b'$ for any $z\in\mathbb{R}$, where $a',b'$ are known constants. Since the rates vary with the family of the density $f_0$, we further consider two scenarios when $f_0$ is and is not a Gaussian density.

\subsection{When \texorpdfstring{$f_0$}{ } is not a Gaussian density}
\label{section:sigma-linear-f0-notGaussian}
We begin with the simpler scenario when $f_0$ is not a Gaussian density as we can verify that $f$ is distinguishable from $f_0$ under this scenario in Proposition~\ref{lemma:distinguish-linear sigma not Gaussian}.
\begin{proposition}
    \label{lemma:distinguish-linear sigma not Gaussian}
    If $f_0$ does not belong to the family of Gaussian densities, then $f$ is distinguishable from $f_0$. 
\end{proposition}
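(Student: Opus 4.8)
The plan is to argue by contradiction. Suppose $f_0$ is not Gaussian but $f$ is \emph{not} distinguishable from $f_0$ (with $\sigma$ linear). Then there exist two distinct components $(a_i,b_i,\nu_i)\in\Theta$, $i=1,2$, and real coefficients $\eta_0,\eta_1,\eta_2$ with $\eta_1\eta_2\le 0$ and $(\eta_1,\eta_2)\ne (0,0)$, such that
\begin{align*}
    \eta_0 f_0(Y|\varphi(a_0^{\top}X+b_0),\nu_0)
    +\eta_1 f(Y|a'(a_1^{\top}X+b_1)+b',\nu_1)
    +\eta_2 f(Y|a'(a_2^{\top}X+b_2)+b',\nu_2)=0
\end{align*}
for almost every $(X,Y)$. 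Since $\sigma$ linear means each ``prompt'' term $f(\ysigmai)$ is a Gaussian density in $Y$ whose mean is an affine function of $X$, the two prompt terms are genuine Gaussians. The first step is to extract, for each fixed $X$ in a set of positive $\bar f$-measure, an identity among a fixed non-Gaussian density $f_0(\cdot|\varphi(a_0^{\top}X+b_0),\nu_0)$ and two Gaussians in the variable $Y$. The idea is then to isolate $\eta_0 f_0(\cdot)$ as a signed combination of two Gaussians and derive a contradiction with $f_0$ being non-Gaussian, treating the cases $\eta_0=0$ and $\eta_0\ne0$ separately.

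First I would handle the case $\eta_0=0$. Then for a.e.\ $X$ we have $\eta_1 f(Y|m_1(X),\nu_1)=-\eta_2 f(Y|m_2(X),\nu_2)$ as functions of $Y$, where $m_i(X)=a'(a_i^{\top}X+b_i)+b'$. Integrating over $Y$ forces $\eta_1=-\eta_2$; combined with $\eta_1\eta_2\le0$ this is consistent only if $\eta_1=\eta_2=0$ unless the two Gaussians coincide for a.e.\ $X$, i.e.\ $\nu_1=\nu_2$ and $m_1(X)=m_2(X)$ for a.e.\ $X$. The latter, since $\bar f$ is a continuous (hence full-dimensional) distribution and $a'\ne0$, forces $a_1=a_2$ and $b_1=b_2$, contradicting that the two components are distinct. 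Hence $\eta_0=0$ is impossible given a nontrivial relation, so we may assume $\eta_0\ne0$.

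For the case $\eta_0\ne0$, the key step — and the main obstacle — is to show that an equation of the form
$f_0(Y|\mu_0,\nu_0)=\alpha_1 f(Y|\mu_1,\nu_1)+\alpha_2 f(Y|\mu_2,\nu_2)$ in $Y$ (for a suitable fixed $X$) cannot hold when $f_0$ is non-Gaussian. I would fix an $X$ at which the relation holds and at which $\varphi(a_0^{\top}X+b_0)$ takes a generic value, normalize by $\eta_0$, and analyze this as a pointwise functional identity in $Y$. The cleanest route is a characteristic-function / analyticity argument: a Gaussian mixture with at most two components has characteristic function $\widehat{\alpha_1 f_1+\alpha_2 f_2}(t)=\alpha_1 e^{i\mu_1 t-\nu_1 t^2/2}+\alpha_2 e^{i\mu_2 t-\nu_2 t^2/2}$, which is an entire function of order $2$; if this equals $\widehat{f_0}(\cdot|\mu_0,\nu_0)$ for all $t$, then one can appeal to the assumed non-Gaussianity of $f_0$ to get a contradiction — e.g.\ via Marcinkiewicz's theorem (if $f_0$ has a polynomial-exponential characteristic function it must be Gaussian) or, more elementarily, by noting that dividing by the dominant Gaussian term and letting $|Y|\to\infty$ forces the tail behaviour of $f_0$ to match a single Gaussian exactly, and then a lower-order expansion pins down all parameters, yielding $f_0(\cdot|\mu_0,\nu_0)$ Gaussian. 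I expect the bookkeeping here (ruling out $\nu_1=\nu_2$ versus $\nu_1\ne\nu_2$, handling possibly-negative $\alpha_i$, and making the ``dominant term'' comparison rigorous) to be the technical heart; everything else is routine. Once the contradiction is reached in both cases, we conclude $\eta_1=\eta_2=0$, i.e.\ $f$ is distinguishable from $f_0$.
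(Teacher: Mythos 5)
Your case split on $\eta_0$ is exactly the structure of the paper's proof, and your treatment of the $\eta_0=0$ case is essentially the same argument the paper uses (two distinct Gaussian densities are linearly independent, so $\eta_1 f_1 + \eta_2 f_2 = 0$ forces $\eta_1=\eta_2=0$). One small confusion there: integrating over $Y$ gives $\eta_1=-\eta_2$, and the conclusion $\eta_1=\eta_2=0$ then follows purely from $\eta_1(f_1-f_2)=0$ and $f_1\neq f_2$; the hypothesis $\eta_1\eta_2\le 0$ plays no role in this sub-case, so the sentence ``combined with $\eta_1\eta_2\le 0$ this is consistent only if $\eta_1=\eta_2=0$'' is a red herring.

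The substantive gap is the $\eta_0\neq 0$ case, which you acknowledge as ``the technical heart'' and leave as a sketch. You correctly observe that ``$f_0$ cannot equal a signed combination of two Gaussians'' is not automatic just from ``$f_0$ is not Gaussian'' --- a density with opposite-signed coefficients can still be a legitimate non-Gaussian density. Your proposed routes (Marcinkiewicz / characteristic-function analyticity, or tail comparison after dividing by the dominant Gaussian) are reasonable avenues, but you do not carry any of them through, so as written the proposal does not establish the claim. For comparison, the paper's own proof of this case is a single line: it asserts that writing $f_0(\cdot|\varphi(a_0^\top X+b_0),\nu_0)$ as $-\sum_i (\eta_i/\eta_0) f(\cdot|\sigma(a_i^\top X+b_i),\nu_i)$ ``contradicts that $f_0\neq f$'' and moves on, implicitly treating ``not in the Gaussian family'' as also excluding representations as (signed) finite Gaussian mixtures. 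So you have identified, rather than resolved, a genuine subtlety that the paper glosses over; to close the gap you would need to pin down what ``$f_0$ is not Gaussian'' is allowed to mean (e.g.\ a concrete family such as Student's $t$, for which the polynomial-versus-Gaussian tail comparison you sketch does work) and then actually execute the tail or Fourier estimate.
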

Proof of Proposition~\ref{lemma:distinguish-linear sigma not Gaussian} is in Appendix~\ref{appendix:lemma:distinguish-linear sigma not Gaussian}. Given this result, we are now ready to capture the convergence of parameter estimation in the following theorem:

\begin{theorem}[MLE rates]
    \label{theorem:sigma-linear-f0-notGaussian}
When the expert function $\sigma$ is of linear form and $f_0$ is not a Gaussian density, we obtain the MLE convergence rates as follows:
\begin{align*}
    \sup_{(\lambdas,G_*)\in\Xi}\mathbb{E}_{p_{\lambdas,\Gs}} 
    \Big[
    |\widehat{\lambda}_n
    -\lambdas|^2 
    \Big] 
    \lesssim \frac{\log n}{n},\\
    \sup_{(\lambdas,G_*)\in\Xi}\mathbb{E}_{p_{\lambdas,\Gs}} 
    \Big[
    (\lambdas)^2 
    \Vert 
    \hGn-\Gs
    \Vert^2 
    \Big] 
    \lesssim 
    \frac{\log n}{n}.
\end{align*}
\end{theorem}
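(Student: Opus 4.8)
The plan is to follow the reduction scheme described right after Theorem~\ref{theorem:ConvergenceRateofDensityEstimation}. It suffices to exhibit a parameter loss
\[
D\big((\lambda,G),(\lambdas,\Gs)\big):=|\lambda-\lambdas|+\lambdas\,\|G-\Gs\|
\]
and to establish the \emph{inverse inequality} $D\big((\lambda,G),(\lambdas,\Gs)\big)\lesssim h\big(p_{\lambda,G},p_{\lambdas,\Gs}\big)$, uniformly over $(\lambda,G),(\lambdas,\Gs)\in\Xi$. Granting this, I would substitute $(\hlambdan,\hGn)$ for $(\lambda,G)$, square, take $\sup_{(\lambdas,\Gs)}\E_{p_{\lambdas,\Gs}}$, and invoke Theorem~\ref{theorem:ConvergenceRateofDensityEstimation}; since $D^2\asymp|\lambda-\lambdas|^2+(\lambdas)^2\|G-\Gs\|^2$, the two displayed bounds follow immediately.

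I would prove the inverse inequality by contradiction and compactness. If it failed, there would be sequences $(\lambda_k,G_k),(\lambda_k^*,G_k^*)\in\Xi$ with $h(p_{\lambda_k,G_k},p_{\lambda_k^*,G_k^*})/D_k\to0$, where $D_k$ is the corresponding value of $D$; passing to a subsequence, $(\lambda_k,G_k)\to(\bar\lambda,\bar G)$ and $(\lambda_k^*,G_k^*)\to(\bar\lambda^*,\bar G^*)$ in the compact set $\Xi$. \emph{Distant regime:} if $D_k\not\to0$, a further subsequence has $D_k\to D_\infty>0$, forcing $\bar\lambda\neq\bar\lambda^*$, or else $\bar\lambda^*>0$ with $\bar G\neq\bar G^*$. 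Because $f$ is distinguishable from $f_0$ (Proposition~\ref{lemma:distinguish-linear sigma not Gaussian}) and, $\sigma$ being linear with nonzero slope, the effective mean parameter of the fitted Gaussian is an affine bijection of $(a,b)$ (so that the fitted family is identifiable), one checks in either case that $p_{\bar\lambda,\bar G}\neq p_{\bar\lambda^*,\bar G^*}$; by continuity of $(\lambda,G)\mapsto p_{\lambda,G}$ in Hellinger distance this gives $h(p_{\lambda_k,G_k},p_{\lambda_k^*,G_k^*})\to h(p_{\bar\lambda,\bar G},p_{\bar\lambda^*,\bar G^*})>0$, contradicting $h/D_k\to0$ since $D_k$ is bounded. (This is, in essence, Proposition~\ref{prop:identifiability}.)

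The heart of the matter is the \emph{local regime} $D_k\to0$. Putting $f_G(Y|X):=f(\ysigmaa)$, linearity of $\sigma$ turns $f_G$ into a Gaussian density in $(X,Y)$ with mean affine in $X$ and effective parameter $\theta=\theta(G)$, with $\|G-\Gs\|\asymp\|\theta-\theta^*\|$, and
\[
p_{\lambda_k,G_k}-p_{\lambda_k^*,G_k^*}=(\lambda_k^*-\lambda_k)\big(f_0-f_{G_k^*}\big)+\lambda_k\big(f_{G_k}-f_{G_k^*}\big).
\]
Expanding $f_{G_k}-f_{G_k^*}$ to first order about $\theta_k^*$ yields $\sum_j\partial_{\theta_j}f|_{\theta_k^*}\,\Delta\theta_{k,j}$ plus a remainder bounded, uniformly over the compact parameter set and via Gaussian tails, by $\|\Delta\theta_k\|^2$ times an $L^1$ envelope. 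Dividing by $D_k$ and writing $u_k:=(\lambda_k^*-\lambda_k)/D_k$, $v_{k,j}:=\lambda_k\Delta\theta_{k,j}/D_k$, one has $|u_k|+\|v_k\|_1\asymp1$ while the remainder over $D_k$ goes to $0$ in $L^1$; along a further subsequence $(u_k,v_k)\to(u,v)\neq0$, and since $h\gtrsim\|\cdot\|_1$ and $G\mapsto f_G$ together with its derivatives are $L^1$-continuous, the hypothesis $h/D_k\to0$ forces
\[
u\big(f_0-f_{\bar G}\big)+\sum_j v_j\,\partial_{\theta_j}f\big|_{\bar\theta}=0\qquad\text{for a.e. }(X,Y).
\]
Each $\partial_{\theta_j}f|_{\bar\theta}$ equals $f_{\bar G}$ times a fixed polynomial in $(X,Y)$, so this identity would make $u f_0$ equal to $f_{\bar G}$ times a polynomial; that is impossible for $u\neq0$ precisely because $f_0$ is not a Gaussian density (the mechanism behind Proposition~\ref{lemma:distinguish-linear sigma not Gaussian}), hence $u=0$; dividing the remaining identity by $f_{\bar G}>0$ gives a polynomial relation in $(X,Y)$ whose vanishing, by linear independence of monomials under the continuous law $\bar f$ of $X$, forces $v=0$ — contradicting $(u,v)\neq0$.

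The step I anticipate to be the main obstacle lies in the local regime: (i) organizing the relative scales of $\lambda_k$ and $\lambda_k^*$ (the subcases $\lambda_k/\lambda_k^*\to0$, $\to c\in(0,\infty)$, $\to\infty$, plus the degenerate cases $\lambda_k=0$ or $\lambda_k^*=0$, where one argues directly from $\inf_{G\in\Theta}\|f_G-f_0\|_1>0$) so that the normalization by $D_k$ genuinely keeps $|u_k|+\|v_k\|_1$ bounded away from $0$ and $\infty$ — this is exactly where prompt vanishing $\lambdas\to0$ enters and where the $\lambdas$-weight on $\|G-\Gs\|$ in $D$ is indispensable; and (ii) making the Taylor remainder control uniform in $L^1$ over the compact parameter set, so that the passage to the limiting linear identity is legitimate. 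The distant regime and the concluding linear-independence computation are comparatively routine once distinguishability and the non-Gaussianity of $f_0$ are in hand.
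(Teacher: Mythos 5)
Your proposal follows essentially the same route as the paper. You define the asymmetric loss $D=|\lambda-\lambdas|+\lambdas\|G-\Gs\|$, reduce the theorem to the inverse inequality $D\lesssim h$, and prove that inequality by a contradiction–compactness argument split into a distant regime and a local (Taylor expansion) regime; the paper's Appendix~\ref{appendixproof:sigma-linear-f0-notGaussian} does exactly this with the loss $D_1((\lambda,G),(\lambdas,\Gs))=|\lambda-\lambdas|+(\lambda+\lambdas)\|G-\Gs\|$ and the claim $V(p_{\lambda,G},p_{\lambdas,\Gs})\gtrsim D_1$.

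The differences are cosmetic. Where you normalize by $D_k$ directly and argue $|u_k|+\|v_k\|_1\asymp1$, the paper normalizes by the maximum absolute coefficient $m_n$ and notes $1/m_n\not\to\infty$; these accomplish the same thing (the paper's normalization is slightly more robust, since it avoids having to check your $\asymp1$ bound case by case across the relative scales of $\lambda_k,\lambda_k^*$). Where you invoke ``$L^1$-continuity of $G\mapsto f_G$ and its derivatives'' to pass to the limiting linear identity, the paper uses Fatou's lemma on $|p_{\lambda_n,G_n}-p_{\lambdasn,\Gsn}|/D_1$. And where you observe that each $\partial_{\theta_j}f|_{\bar\theta}$ is $f_{\bar G}$ times a fixed polynomial and use independence of monomials under $\bar f$, the paper substitutes via the heat PDE $\partial^2_\sigma f=2\,\partial_\nu f$ and invokes linear independence of $\{\partial^k_\sigma f\}_k$ (Lemma~\ref{lemma:independent_gaussian}, proved by Fourier transform); these are equivalent.

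One thing to watch: your phrase ``that is impossible for $u\neq0$ precisely because $f_0$ is not a Gaussian density'' is a bit quick. The identity $uf_0=f_{\bar G}\,(u-\sum_jv_jP_j(X,Y))$ does not immediately contradict non-Gaussianity of $f_0$ alone — a density of the form (Gaussian)$\times$(positive polynomial) need not be Gaussian. What you actually need is the first-order version of the distinguishability condition, stated as Lemma~\ref{applemma:sigma-linear-f0-notGaussian} in the paper, which asserts that $\{f_0\}\cup\{\partial^\alpha f: |\alpha|\le1\}$ is a linearly independent family. The paper's justification of this lemma is itself brief (via Proposition~\ref{lemma:distinguish-linear sigma not Gaussian}), so this is not a defect peculiar to your proposal, but you should cite and rely on that lemma explicitly rather than on non-Gaussianity per se. Otherwise the plan is sound and matches the paper's proof step for step.
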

The proof of Theorem~\ref{theorem:sigma-linear-f0-notGaussian} is in Appendix~\ref{appendixproof:sigma-linear-f0-notGaussian}.
The first bound indicates that the mixing proportion estimator $\hlambdan$ converges to the ground-truth value $\lambdas$ at the rate of order $\widetilde{\mathcal{O}}(n^{-1/2})$, which is parametric on the sample size $n$. On the other hand, it follows from the second bound that the convergence rate of the prompt parameter estimation $\hGn=(\widehat{a}_n,\widehat{b}_n,\widehat{\nu}_n)$ to its true counterpart $\Gs=(\as,\bs,\nus)$ is slower than $\widetilde{\mathcal{O}}(n^{-1/2})$ as it hinges upon the vanishing rate of $\lambdas$. This rate dependence reflects the prompt vanishing issue mentioned in the ``Challenges'' paragraph in Section~\ref{sec:introduction}.

Next, we establish the minimax lower bounds for estimating the true parameters $(\lambdas,\Gs)$ in Theorem~\ref{thm:lower-distinguish}.
\begin{theorem}[Minimax lower bounds]
\label{thm:lower-distinguish}
Under the setting of Theorem~\ref{theorem:sigma-linear-f0-notGaussian}, we have for any $0<r < 1$ that
\begin{align*}
    \inf_{(\llambdan,\overline{G}_n)\in \Xi }\sup_{(\lambda,G)\in \Xi }
    \mathbb{E}_{p_{\lambda,G}} \Big( |\overline{\lambda}_n
    -\lambda|^2 \Big) 
    \gtrsim n^{-1/r},
    \\
    \inf_{(\llambdan,\overline{G}_n)\in \Xi }\sup_{(\lambda,G)\in \Xi }
    \mathbb{E}_{p_{\lambda,G}} 
    \Big( \lambda^2 \Vert \lGn-G \Vert^2 \Big) 
    \gtrsim n^{-1/r}.
\end{align*}
\end{theorem}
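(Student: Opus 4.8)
The plan is to establish both minimax lower bounds via the standard Le~Cam two-point method, exploiting the fact that density estimation converges at the parametric rate $\widetilde{\mathcal{O}}(n^{-1/2})$ (Theorem~\ref{theorem:ConvergenceRateofDensityEstimation}) whereas the parameter loss is allowed to degrade by a factor $(\lambda^*)^{-1}$. Concretely, I would fix the true pre-trained parameters $G_0$, and construct a pair of contaminated MoE models: the first with mixing proportion $\lambda^*_n$ shrinking to zero and prompt parameter $G_*$ bounded away from $G_0$, the second a carefully chosen perturbation. Because the prompt appears in the density weighted by $\lambda^*_n$, a perturbation of the prompt parameters of size $\varepsilon$ changes the density by only $O(\lambda^*_n \varepsilon)$ in Hellinger distance, while contributing $\Theta(\lambda^*_n \varepsilon)$ to the loss $\lambda^* \Vert \lGn - G\Vert$; matching the Hellinger gap to $n^{-1/2}$ then forces the loss to be at least of order $n^{-1/2}$. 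To obtain the stated bound $n^{-1/r}$ for arbitrary $0<r<1$, I would instead let $\lambda^*_n \to 0$ polynomially (say $\lambda^*_n \asymp n^{-s}$ for a suitable $s$) so that the information-theoretic constraint $n \cdot h^2(p_1,p_2) \lesssim 1$ becomes the binding one, and tune the exponents so the resulting loss lower bound is exactly $n^{-1/r}$ for every $r\in(0,1)$; since $1/r>1$, this is a weaker (hence always attainable) bound than the parametric one, and the distinguishability condition (Proposition~\ref{lemma:distinguish-linear sigma not Gaussian}) guarantees the two constructed mixtures are genuinely distinct in parameter space, so the loss between them is bounded below.

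The key steps, in order, are: (1) recall Le~Cam's lemma in the form $\inf_{\widehat\theta}\sup_{\theta} \mathbb{E}\, \rho^2(\widehat\theta,\theta) \gtrsim \rho^2(\theta_1,\theta_2)\,(1 - \mathrm{TV}(p_{\theta_1}^{\otimes n}, p_{\theta_2}^{\otimes n}))$, and bound the total variation via $n$ times the squared Hellinger distance between single-sample densities; (2) construct the two-point family: take $(\lambda_1, G_1) = (\lambda^*_n, G_*)$ and $(\lambda_2, G_2) = (\lambda^*_n, G_* + \delta_n v)$ for a fixed unit direction $v$ and a scalar $\delta_n$ to be chosen, both lying in $\Xi$ for $n$ large by compactness of $\Theta$ and since $G_*$ is taken in the interior; (3) Taylor-expand the conditional density in the prompt parameter to show $h^2(p_{\lambda_1,G_1}, p_{\lambda_2,G_2}) \lesssim (\lambda^*_n)^2 \delta_n^2$, using boundedness of $f$ and its derivatives on the compact parameter set and the bounded input space; (4) lower-bound the parameter loss: $\lambda^2 \Vert \lGn - G\Vert^2$ evaluated at either point against an estimator must be $\gtrsim (\lambda^*_n)^2 \delta_n^2$ by triangle inequality, and similarly $|\overline\lambda_n - \lambda|^2 \gtrsim (\lambda^*_n)^2\delta_n^2$ after noting a perturbation of $G$ alone can be mimicked, near the merging regime, by a compensating perturbation of $\lambda$ — or more directly by constructing a separate two-point pair that perturbs $\lambda$; (5) choose $\lambda^*_n$ and $\delta_n$ so that $n(\lambda^*_n)^2\delta_n^2 \asymp 1$ while $(\lambda^*_n)^2\delta_n^2 \asymp n^{-1/r}$, which is feasible precisely because $1/r > 1$; then $\mathrm{TV} \le \tfrac12$ and the bound follows.

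The main obstacle I anticipate is step (4) for the mixing-proportion bound: showing that $|\overline\lambda_n - \lambda|^2 \gtrsim n^{-1/r}$ requires a two-point construction in which $\lambda$ itself differs between the two models while keeping the Hellinger distance small — but changing $\lambda$ alone changes the density by $\Theta(|\lambda_1 - \lambda_2|)$ in Hellinger (not damped by any vanishing factor), which would only yield the parametric rate $n^{-1}$, not $n^{-1/r}$ with $1/r$ close to $1$. The resolution is to pair a change in $\lambda$ with a simultaneous change in $G$ in the near-merging direction so that the leading-order density perturbations partially cancel (a hallmark of the parameter-interaction phenomenon flagged in the introduction); the algebra of this cancellation, and verifying it respects the sign constraint $\eta_1\eta_2 \le 0$ implicit in the distinguishability setup, is where the real work lies. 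An alternative cleaner route, which I would try first, is to let $G_*$ itself approach $G_0$ at a controlled rate so that $f(Y|\sigma((a^*)^\top X + b^*),\nu^*)$ and $f_0(Y|\varphi(a_0^\top X + b_0),\nu_0)$ become nearly indistinguishable over the data; then reallocating mass between the two mixture components (i.e.\ changing $\lambda$) barely perturbs the density, directly giving a large loss for $|\overline\lambda_n - \lambda|$ at small Hellinger cost, and the exponents can again be tuned to hit $n^{-1/r}$ for any $r\in(0,1)$.
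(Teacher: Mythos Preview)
Your overall framework (Le~Cam two-point) is correct, but the proposal rests on a basic misreading of the target: for $0<r<1$ we have $1/r>1$, so $n^{-1/r}$ is \emph{smaller} than $n^{-1}$. Thus $\gtrsim n^{-1/r}$ for every $r<1$ is a \emph{weaker} statement than the parametric lower bound $\gtrsim n^{-1}$. Your ``main obstacle'' paragraph is therefore illusory: the direct construction you dismiss---perturb $\lambda$ alone with $G$ fixed, giving $h(p_{\lambda_1,G},p_{\lambda_2,G})\lesssim |\lambda_1-\lambda_2|$---already yields a minimax risk $\gtrsim n^{-1}$, which implies $\gtrsim n^{-1/r}$ for all $r<1$. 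This is exactly what the paper does for the first bound: it defines $d_2((\lambda_1,G),(\lambda_2,G))=|\lambda_1-\lambda_2|^2$, shows $h/d_2^r\to 0$ for any $r<1$ along a sequence with $G$ fixed, and concludes via a Le~Cam variant. No near-merging or cancellation argument is used, nor is it relevant in the distinguishable setting of Theorem~\ref{theorem:sigma-linear-f0-notGaussian}.

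Two further concrete problems. First, your step~(5) is internally inconsistent: you cannot have both $n(\lambda^*_n)^2\delta_n^2\asymp 1$ and $(\lambda^*_n)^2\delta_n^2\asymp n^{-1/r}$ unless $r=1$. Second, your Hellinger bound in step~(3), $h^2\lesssim (\lambda^*_n)^2\delta_n^2$, requires $\int(\nabla f)^2/f_0<\infty$, which can fail under the tail assumption $-\log f_0\gtrsim Y^q$ for large $q$ (i.e., $f_0$ may have tails lighter than Gaussian). The paper instead lower-bounds $\sqrt{p_1}+\sqrt{p_2}\geq \sqrt{\lambda^*_n f_2}$ to get $h^2\lesssim \lambda^*_n\int(f_1-f_2)^2/f_2\lesssim \lambda^*_n\,\delta_n^2$, which is robust since $f_1,f_2$ are nearby Gaussians. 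With $d_1=\lambda\|G_1-G_2\|$ this gives $h/d_1^r\lesssim \lambda^{1/2-r}\|G_1-G_2\|^{1-r}$, which tends to zero along any sequence with $\|G_1-G_2\|\to 0$ (one may keep $\lambda$ fixed), and the Le~Cam machinery then delivers $n^{-1/r}$.
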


Proof of Theorem \ref{thm:lower-distinguish} is in Appendix \ref{proof:lower-distinguish}.
The results of this theorem imply that the convergence rates of the MLE $(\hlambdan,\hGn)$ provided in Theorem~\ref{theorem:sigma-linear-f0-notGaussian} are optimal.

\subsection{When \texorpdfstring{$f_0$}{} is a Gaussian density}
Next, we consider the scenario when $f_0$ belongs to the family of Gaussian densities. Under this scenario, it can be checked that $f$ might not be distinguishable from $f_0$, depending on the structure of the expert $\varphi$ in the pre-trained model. To this end, we continue to divide the analysis into two smaller scenarios where the function $\varphi$ is linear and non-linear, respectively.
\subsubsection{Linear expert \texorpdfstring{$\varphi$}{}}
\label{section:sigma-linear-f0-Gaussian-varphi-linear}
Recall that in this case, both the experts $\varphi$ and $\sigma$ are of linear forms, and the density $f_0=f$ is a Gaussian density. Notably, this setting induces two potential obstacles in our analysis. 

Firstly, the prompt might merge into the pre-trained model. In particular, let us assume that $\varphi(a_0^{\top}X+b_0)=\alpha_0(a_0^{\top}X+b_0)+\beta_0$ and $\sigma((\as)^{\top}X+\bs)=\alpha^*((\as)^{\top}X+\bs)+\beta^*$, where $\alpha_0,\alpha^*\neq0$ and $\beta_0,\beta^*$ are some known constants. Then, if  $\as\to\frac{\alpha_0}{\alpha^*}a_0$ and $\bs\to\frac{\alpha_0b_0+\beta_0-\beta^*}{\alpha^*}$ and $\nus\to\nu_0$ as $n\to\infty$, it follows that the expert $\sigma((\as)^{\top}X+\bs)$ will converge to its counterpart $\varphi(a_0^{\top}X+b_0)$. Consequently, we can justify that $f$ is not distinguishable from $f_0$. Furthermore, since the prompt
$f(Y|\sigma((\as)^{\top}X+\bs),\nus)$ will also converge to the pre-trained model $f_0(Y|\varphi(a_0^{\top}X+b_0),\nu_0)$, we have to cope with the prompt merging issue mentioned in the ``Challenges'' paragraph in Section~\ref{sec:introduction} as well. 
This issue will be demonstrated to make the convergence behavior of the mixing proportion estimator become complicated in Theorem~\ref{theorem:sigma-linear-f0-Gaussian-varphi-linear}. 
For simplicity, we will focus only on the case when $\varphi$ and $\sigma$ are identity functions, that is, when $\alpha_0=\alpha^*=1$ and $\beta_0=\beta^*=0$. Other cases can be naturally generalized based on the aforementioned limits of $a^*$ and $b^*$.

Secondly, there is an interaction between parameters $b$ and $\nu$ via the following heat equation:
\begin{align}
\label{eq:heat_equation_1}
        \frac{\partial^2 f}{\partial b^2} (Y|a^{\top}X+b ,\nu ) &= 2 \frac{\partial f}{\partial \nu}(Y|a^{\top}X+b ,\nu).
\end{align}
The derivation of this equation can be seen in Appendix~\ref{appendix:ProofsforAuxiliaryResults}.
Note that such interaction has been captured in prior works on Gaussian MoE \citep{ho2022gaussian,nguyen2024gaussian,nguyen2024temperature} where it is shown to decelerate the convergence rates of involved parameters significantly. However, to the best of our knowledge, the effects of that interaction has never been explored when the ground-truth parameter values $\lambdas,\Gs$ depend on the sample size $n$. Thus, we will demistify this problem in Theorem~\ref{theorem:sigma-linear-f0-Gaussian-varphi-linear}.

For simplicity, let us denote $\Delta G:=(\Delta a,\Delta b,\Delta\nu) := (a - a_0,b - b_0,\nu-\nu_0 )$ for any component $(a,b,\nu)\in \Theta$.


\begin{theorem}[MLE rates]
    \label{theorem:sigma-linear-f0-Gaussian-varphi-linear} 
Suppose that $f_0$ is a Gaussian density and both the experts $\varphi$ and $\sigma$ are identity functions.
For any sequence $(l_n)_{n\geq 1}$, we denote
\begin{align*}
    \Xi_1(l_n):=
\Big\{ &
(\lambda,G)={(\lambda,a,b,\nu)}\in\Xi:
\\&
\frac{l_n}{\min_{1\leq i\leq d}\{ 
|(\da)_i|^2,|\db|^4,|\dnu|^2
\}\sqrt{n}}\leq\lambda
\Big\}.
\end{align*}
Then, the followings hold for any sequence $(l_n )_{n\geq 1}$ such that $l_n/\log n\to\infty$ as $n\to\infty$:
\begin{align*}
    \sup_{(\lambdas,\Gs)\in \Xi_1(l_n)  }
    \mathbb{E}_{p_{\lambdas,\Gs}} \Big[ 
    (\Vert\das\Vert^4+|\dbs|^8+|\dnus|^4)
    \\
    \times |\widehat{\lambda}_n
    -\lambdas|^2 \Big] 
    \lesssim \frac{\log n}{n},
    \\
    \sup_{(\lambdas,\Gs)\in \Xi_1(l_n) }
    \mathbb{E}_{p_{\lambdas,\Gs}} 
    \Big[ (\lambdas)^2 
    (\Vert\das\Vert^2+|\dbs|^4+|\dnus|^2)
    \\
    \times(\Vert\han-\as\Vert^2+|\hbn-\bs|^4+|\hnun-\nus|^2)
    \Big] 
    \lesssim \frac{\log n}{n}.
\end{align*}
\end{theorem}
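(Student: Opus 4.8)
The strategy is the standard two-step program for MLE rates in mixture models, adapted to the sample-size-dependent ground-truth regime. By Theorem~\ref{theorem:ConvergenceRateofDensityEstimation}, the MLE density $p_{\hlambdan,\hGn}$ converges to $p_{\lambdas,\Gs}$ in Hellinger distance at rate $\widetilde{\mathcal{O}}(n^{-1/2})$. It therefore suffices to exhibit a parameter loss functional $D_1\big((\lambda,G),(\lambdas,\Gs)\big)$ such that, uniformly over $\Xi_1(l_n)$,
\begin{align*}
    D_1\big((\lambda,G),(\lambdas,\Gs)\big)\lesssim h\big(p_{\lambda,G},p_{\lambdas,\Gs}\big),
\end{align*}
where $D_1^2$ dominates both $(\Vert\das\Vert^4+|\dbs|^8+|\dnus|^4)|\lambda-\lambdas|^2$ and $(\lambdas)^2(\Vert\das\Vert^2+|\dbs|^4+|\dnus|^2)(\Vert a-\as\Vert^2+|b-\bs|^4+|\nu-\nus|^2)$. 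The unusual powers on $\Delta\Gs$ and the $|b-\bs|^4$ term are dictated by the heat equation \eqref{eq:heat_equation_1}: because $\partial^2_b f$ and $\partial_\nu f$ are linearly dependent, second-order Taylor information in $b$ is needed to identify first-order information in $\nu$, which costs a square root in the effective rate for $b$, and a further loss comes from the near-degeneracy when $\Delta\Gs\to 0$.

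\textbf{Key steps.} First I would reduce to a local analysis: assume for contradiction that the inequality $\inf\, h(p_{\lambda,G},p_{\lambdas,\Gs})/D_1(\cdot)=0$ fails, extract a sequence of parameters $(\lambda^{(k)},G^{(k)})$ and $(\lambdas^{(k)},\Gs^{(k)})$ along which $h/D_1\to 0$, and split into the regime where $(\lambdas^{(k)},\Gs^{(k)})$ stays bounded away from $G_0$ (so the prompt does not merge) and the delicate regime $\Delta\Gs^{(k)}\to 0$. Second, in each regime I would Taylor-expand the density difference
\begin{align*}
    p_{\lambda^{(k)},G^{(k)}}(Y|X)-p_{\lambdas^{(k)},\Gs^{(k)}}(Y|X)
\end{align*}
around the relevant base point, collecting the coefficients of the linearly independent functions $X^{\otimes\alpha}\,\partial^{|\alpha|}f/\partial\theta^\alpha$ (with $\partial^2_b$ folded into $\partial_\nu$ via the heat equation, and the pre-trained term $f_0(Y|\varphi(a_0^\top X+b_0),\nu_0)$ treated as its own independent atom since $\varphi=\sigma=\mathrm{id}$ makes it a Gaussian with mean $a_0^\top X+b_0$). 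Third, I would invoke a suitable independence lemma — the analogue of the algebraic-independence arguments in \cite{ho2022gaussian,do_strong_2022} combined with the distinguishability failure being handled by keeping $\Delta\Gs$ explicit rather than quotienting it out — to conclude that all Taylor coefficients tend to zero faster than $D_1$, and then show by matching the leading coefficient (the one carrying $\lambda\Delta b$, $\lambda\Delta\nu$, $(\lambda-\lambdas)$, etc.) against the normalization built into $D_1$ that this forces $h/D_1\not\to 0$, a contradiction.

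\textbf{Main obstacle.} The hard part is the regime $\Delta\Gs^{(k)}\to 0$ where the prompt is simultaneously vanishing ($\lambdas$ possibly $\to 0$) and merging into $f_0$. Here one cannot simply treat $f(Y|\sigma((\as)^\top X+\bs),\nus)$ and $f_0(Y|\varphi(a_0^\top X+b_0),\nu_0)$ as independent atoms, because they coalesce; instead one must Taylor-expand the prompt around the pre-trained parameters and carefully track the interaction, in which the heat equation \eqref{eq:heat_equation_1} collapses the $b$ and $\nu$ directions at second order. This is precisely why the constraint $\lambda\ge l_n/\big(\min_i\{|(\Delta a)_i|^2,|\Delta b|^4,|\Delta\nu|^2\}\sqrt n\big)$ enters the definition of $\Xi_1(l_n)$: it guarantees that the $\lambda\cdot\Delta\Gs$-scaled signal sits above the $\widetilde{\mathcal{O}}(n^{-1/2})$ Hellinger noise floor, so that the leading Taylor coefficient is genuinely detectable. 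Verifying that the weights $\Vert\das\Vert^4+|\dbs|^8+|\dnus|^4$ and $|b-\bs|^4$ are exactly the right powers — no larger (else the bound is false) and no smaller (else it is not implied by Hellinger) — by matching against the lowest surviving order in the joint expansion is the technical crux, and is what determines the exponents recorded in Table~\ref{table:parameter_rates}. The final $\mathbb{E}_{p_{\lambdas,\Gs}}[\cdot]\lesssim \log n/n$ bound then follows by combining the pointwise inequality with Theorem~\ref{theorem:ConvergenceRateofDensityEstimation} and a standard truncation argument on the event where $h$ is small, as detailed in Appendix~\ref{appendix:Proofs for the MLE rate}.
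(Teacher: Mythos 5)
Your high-level plan — dominate a carefully weighted parameter loss by total variation/Hellinger, Taylor-expand the density difference, collect coefficients of linearly independent derivatives (folding $\partial_b^2$ into $\partial_\nu$ via the heat equation), and argue by contradiction — is the same approach the paper takes (Appendix~\ref{appendixproof:sigma-linear-f0-Gaussian-varphi-linear}, via Lemma~\ref{applemma:sigma-linear-f0-Gaussian-varphi-linear} with the loss $D_4$, not the simpler $D_1$ from the distinguishable case, which does not carry the $\Delta G$-weights and would fail in the merging regime). You also correctly isolate the merging regime $\Delta\Gs\to 0$ as the crux, and the role of $\Xi_1(l_n)$ as keeping the signal above the Hellinger noise floor. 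However, there are two concrete gaps.

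First, the Taylor expansion. You write that ``second-order Taylor information in $b$ is needed to identify first-order information in $\nu$,'' but to actually produce the exponents $|\Delta\bs|^8$ (in the $\lambda$-bound) and $|b-\bs|^4$ (in the parameter bound), the expansion has to go to \emph{fourth} order in the parameters, not second. The mechanism is: after folding $\partial_b^2\mapsto 2\partial_\nu$, first- and second-order $(b,\nu)$-directions collapse, and separating the contributions from $\Delta b$ and $\Delta\nu$ requires matching coefficients of $\partial^\ell f/\partial\sigma^\ell$ up to $\ell=8$ (equivalently, a Taylor expansion of order $|\alpha|\le 4$ in $(a,b,\nu)$). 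The resulting coefficients form a system of polynomial equations in the normalized differences $r_i^n$, $s_i^n$, and ruling out nontrivial solutions is what invokes the strong-identifiability / Liouville-type results for heat-kernel systems (Proposition 2.1 of Ho--Nguyen); a generic ``algebraic-independence lemma'' is not enough. Without stating the order of expansion and the specific polynomial system, one cannot actually verify your claim that the exponents are tight.

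Second, the last step is not a truncation argument. Lemma~\ref{applemma:sigma-linear-f0-Gaussian-varphi-linear} and Theorem~\ref{theorem:ConvergenceRateofDensityEstimation} directly give a bound in which the $\lambda$-term is weighted by the \emph{product} $(\Vert\Delta\widehat{a}_n\Vert^2+|\Delta\widehat{b}_n|^4+|\Delta\widehat{\nu}_n|^2)(\Vert\Delta\as\Vert^2+|\Delta\bs|^4+|\Delta\nus|^2)$, not by $(\Vert\Delta\as\Vert^4+|\Delta\bs|^8+|\Delta\nus|^4)$. Passing to the statement of the theorem requires an explicit Cauchy--Schwarz/triangle decomposition (writing $\Delta\Gs = \Delta\widehat{G}_n - (\Delta\widehat{G}_n - \Delta\Gs)$), together with the $\Xi_1(l_n)$ constraint and the bound $|\hlambdan-\lambdas|/\lambdas\le 2/\lambdas$ (when $\lambdas\neq0$) or a direct $D_4$ lower bound (when $\lambdas=0$). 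This is a genuine additional step, not ``combining the pointwise inequality with Theorem~\ref{theorem:ConvergenceRateofDensityEstimation} and a standard truncation argument.''
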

The proof of Theorem~\ref{theorem:sigma-linear-f0-Gaussian-varphi-linear} is in Appendix~\ref{appendixproof:sigma-linear-f0-Gaussian-varphi-linear}. Above, we take the set $\Xi_1(l_n)$ into account to guarantee the consistency of the MLE estimators under this scenario. Next, there are two main implications from the results of Theorem~\ref{theorem:sigma-linear-f0-Gaussian-varphi-linear}:

(i) In contrast to Theorem~\ref{theorem:sigma-linear-f0-notGaussian}, the first bound in Theorem \ref{theorem:sigma-linear-f0-Gaussian-varphi-linear} indicates that 
the convergence rate of the mixing proportion estimator $\hlambdan$ is now slower than $\widetilde{\mathcal{O}}(n^{-1/2})$ and is determined by that of the quantity $(\Vert\Delta\as\Vert^4+|\Delta\bs|^8+|\Delta\nus|^4)$. This rate deceleration occurs as a result of the prompt merging issue brought up at the beginning of Section~\ref{section:sigma-linear-f0-Gaussian-varphi-linear}.

(ii) Again, different from Theorem~\ref{theorem:sigma-linear-f0-notGaussian} where we observe that the prompt parameter estimators $\han,\hbn,\hnun$ share the same convergence rate, the second bound in Theorem~\ref{theorem:sigma-linear-f0-Gaussian-varphi-linear} points out a mismatch among their rates under this scenario. In particular, the convergence rate of $\hbn$ is the squared root of those of $\han$ and $\hnun$. Moreover, all these rates depend on the vanishing rate of not only $\lambdas$ but also that of the term $(\Vert\das\Vert^2+|\dbs|^4+|\dnus|^2)$. This intriguing phenomenon primarily arises from the parameter interaction through the heat equation~\eqref{eq:heat_equation_1}.

Similar to Section~\ref{section:sigma-linear-f0-notGaussian}, we also derive the corresponding minimax lower bounds for estimating the ground-truth parameters $(\lambdas,\Gs)$ in Theorem~\ref{theorem:sigma-linear-f0-Gaussian-varphi-linear}.
\begin{theorem}[Minimax lower bounds]
\label{appthm:lower-mle-Gaussian-linear}
    Under the setting of Theorem~\ref{theorem:sigma-linear-f0-Gaussian-varphi-linear}, we have for any $0<r < 1$ and sequence $(l_n)_{n\geq 1}$ that
\begin{align*}
    &\inf_{(\llambdan,\overline{G}_n)\in \Xi }\sup_{(\lambda,G)\in \Xi_1(l_n)  }
    \mathbb{E}_{p_{\lambda,G}} \Big[ 
    (\Vert\da\Vert^4+\Vert\db\Vert^8+\Vert\dnu\Vert^4)\\
    &\hspace{4.5cm}\times |\overline{\lambda}_n
    -\lambda|^2 \Big]\gtrsim n^{-1/r},\\
    &\inf_{(\llambdan,\overline{G}_n)\in \Xi }\sup_{(\lambda,G)\in \Xi_1(l_n) }
    \mathbb{E}_{p_{\lambda,G}} 
    \Big[\lambda^2 
    (\Vert\da\Vert^2+\Vert\db\Vert^4+\Vert\dnu\Vert^2)\\
    &\quad \times(\Vert\lan-a\Vert^2+\Vert\lbn-b\Vert^4+\Vert\lnun-\nu\Vert^2)
    \Big] \gtrsim n^{-1/r}.
\end{align*}
\end{theorem}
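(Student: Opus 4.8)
The plan is to prove both inequalities by Le Cam's two-point method: for each $n$ I will exhibit a pair of parameters in $\Xi_1(l_n)$ whose $n$-sample laws have total variation bounded away from $1$, yet which are separated in the relevant weighted loss by $\asymp n^{-1}$; since $n^{-1}\ge n^{-1/r}$ for every $0<r<1$, this already yields the stated bounds. The guiding observation is that the degeneracies responsible for the unusual shape of the upper bound in Theorem~\ref{theorem:sigma-linear-f0-Gaussian-varphi-linear} — the prompt merging and the heat-equation interaction~\eqref{eq:heat_equation_1} — need not be exploited here: because $n^{-1/r}$ is coarser than the parametric rate, a pair sitting at a \emph{fixed} positive distance from $G_0$, where $p_{\lambda,G}$ is a regular parametric family, already suffices.

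I would fix $G=(a,b,\nu)\in\Theta$, independent of $n$, with $a\neq a_0$ and with every coordinate of $\da=a-a_0$, as well as $\db=b-b_0$ and $\dnu=\nu-\nu_0$, nonzero. For the first bound set $\theta^{(1)}_n=(\tfrac12,G)$ and $\theta^{(2)}_n=(\tfrac12+c\,n^{-1/2},G)$ with a small constant $c>0$. Both lie in $\Xi_1(l_n)$ for $n$ large, since $\min\{|(\da)_i|^2,|\db|^4,|\dnu|^2\}$ is a fixed positive constant, so the membership threshold on $\lambda$ is $\asymp l_n/\sqrt n\to 0$ (if instead $\Xi_1(l_n)=\emptyset$ the claim is vacuous). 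Because $p_{\theta^{(1)}_n}(Y|X)-p_{\theta^{(2)}_n}(Y|X)=-c\,n^{-1/2}\big[f(Y|a^\top X+b,\nu)-f_0(Y|a_0^\top X+b_0,\nu_0)\big]$ and, with $\lambda=\tfrac12$, the denominator satisfies $\big(\sqrt{p_{\theta^{(1)}_n}}+\sqrt{p_{\theta^{(2)}_n}}\big)^2\gtrsim f(\cdot)+f_0(\cdot)$, the pointwise inequality $(f-f_0)^2\le(f+f_0)^2$ gives
\begin{align*}
h^2\!\big(p_{\theta^{(1)}_n}(\cdot|X),p_{\theta^{(2)}_n}(\cdot|X)\big)\;\lesssim\;\frac{c^2}{n}\int\frac{\big(f(Y|a^\top X+b,\nu)-f_0(Y|a_0^\top X+b_0,\nu_0)\big)^2}{f(Y|a^\top X+b,\nu)+f_0(Y|a_0^\top X+b_0,\nu_0)}\,dY\;\lesssim\;\frac{c^2}{n},
\end{align*}
uniformly in $X\in\mathcal{X}$. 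Averaging over $X\sim\bar f$ and tensorizing, the total variation between the two $n$-sample laws is $\lesssim c$, hence $\le\tfrac12$ once $c$ is chosen small.

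Le Cam's inequality then gives, for every estimator $(\llambdan,\lGn)$ and small $c$,
\begin{align*}
\sup_{(\lambda,G)\in\{\theta^{(1)}_n,\theta^{(2)}_n\}}\mathbb{E}_{p_{\lambda,G}}\!\Big[(\|\da\|^4+\|\db\|^8+\|\dnu\|^4)\,|\llambdan-\lambda|^2\Big]\;\gtrsim\; c_G\,c^2\,n^{-1}\;\gtrsim\; n^{-1/r},
\end{align*}
where $c_G:=\|a-a_0\|^4+|b-b_0|^8+|\nu-\nu_0|^4>0$ is the common value of the weight at the two points, $|\lambda^{(1)}-\lambda^{(2)}|^2=c^2n^{-1}$, and $n^{-1}\ge n^{-1/r}$ for $r<1$; since $\{\theta^{(1)}_n,\theta^{(2)}_n\}\subseteq\Xi_1(l_n)$, the first claimed bound follows. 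For the second bound, hold $\lambda=\tfrac12$ fixed in both hypotheses and take $\theta^{(1)}_n=(\tfrac12,(a,b,\nu))$, $\theta^{(2)}_n=(\tfrac12,(a+c\,n^{-1/2}v,b,\nu))$ for a unit vector $v$: the same estimate bounds $h^2$ by $\lesssim n^{-1}$, no estimator can lie within $\tfrac{c}{2}n^{-1/2}$ of the true $a$ under both hypotheses, and the weight $\lambda^2(\|\da\|^2+\|\db\|^4+\|\dnu\|^2)$ is again a fixed positive constant at both points, so Le Cam yields $\gtrsim n^{-1}\ge n^{-1/r}$.

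The only delicate point is the interplay with $\Xi_1(l_n)$, and this is exactly why a fixed-distance pair is used. Keeping $\|\Delta G\|$ of constant order makes the $\Xi_1(l_n)$ membership threshold decay like $l_n/\sqrt n$, which is harmless. A would-be sharper argument that also reproduced the exact $\vartheta_{\Delta\Gs}$-dependence of the upper bound by letting $\|\Delta G\|\to0$ would break down here: the $\Xi_1(l_n)$ constraint then forces $\lambda\gtrsim l_n/(\|\Delta G\|^{c}\sqrt{n})$ for an appropriate $c$, whereas $n\,h^2=\mathcal{O}(1)$ forces $\lambda\,\|\Delta G\|^{c'}\lesssim n^{-1/2}$ with $c'<c$, and for $l_n\to\infty$ these become incompatible as $\|\Delta G\|\to0$. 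Since the theorem only requires $n^{-1/r}$, the fixed-distance two-point above is both adequate and clean, and I anticipate no essential difficulty beyond this bookkeeping and the routine Le Cam estimates.
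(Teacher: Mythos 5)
Your argument is correct, and in fact proves something stronger than what Theorem~\ref{appthm:lower-mle-Gaussian-linear} asserts: a two-point Le Cam bound of order $n^{-1}$, which dominates $n^{-1/r}$ for every $0<r<1$. The route is genuinely different from the paper's. The paper introduces the pseudo-distances $d_3((\lambda_1,G_1),(\lambda_2,G_2))=\lambda_1\|G_1-G_2\|^4$ and $d_4=|\lambda_1-\lambda_2|\,\|G_1-G_2\|^4$, then, mimicking Lemma~\ref{prop:lower-distinguish}, constructs sequences of parameter pairs inside $\Xi_1(l_n)$ whose $d$-distance vanishes and along which $h\lesssim d^r$, and finally runs the modified Le Cam method at the calibration $\epsilon^{2r}\asymp 1/n$, yielding exactly $n^{-1/r}$. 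In contrast, you anchor both test points at a \emph{fixed} $G$ strictly bounded away from $G_0$, where the one-dimensional family $\lambda\mapsto p_{\lambda,G}$ (resp.\ $a\mapsto p_{\lambda,G}$) is regular because the two mixture components $f_0$ and $f$ are well separated; the per-sample squared Hellinger distance is then genuinely of order $n^{-1}$ when the coordinates differ by $n^{-1/2}$, and the weights $\Vert\da\Vert^4+|\db|^8+|\dnu|^4$ and $\lambda^2(\Vert\da\Vert^2+|\db|^4+|\dnu|^2)$ are fixed positive constants on the pair. This avoids the $h\lesssim d^r$ comparison entirely and gives the sharper parametric rate; what the paper's framework buys is uniformity of exposition across Theorems~\ref{thm:lower-distinguish}, \ref{appthm:lower-mle-Gaussian-linear}, and \ref{appthm:lower-mle-Gaussian-nonlinear}, where the same calibration template is recycled. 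Your treatment of the $\Xi_1(l_n)$ constraint is also sound: with $\|\Delta G\|$ of constant order the membership threshold is $l_n/(\kappa\sqrt n)$ for a fixed $\kappa>0$, so the pair lies in $\Xi_1(l_n)$ for large $n$ whenever that set is nonempty at all, and your closing observation about why one cannot simultaneously shrink $\|\Delta G\|$ and keep $n h^2$ bounded inside $\Xi_1(l_n)$ is the right diagnosis of why a fixed-distance pair is the natural choice for this coarse target.
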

Proof of Theorem \ref{appthm:lower-mle-Gaussian-linear} is in Appendix \ref{appendix:appthm:lower-mle-Gaussian-linear}. The above minimax lower bounds indicate that the convergence rates of the MLE presented in Theorem~\ref{theorem:sigma-linear-f0-Gaussian-varphi-linear} are optimal.

\subsubsection{Non-linear expert \texorpdfstring{$\varphi$}{}}
\label{section:sigma-linear-f0-Gaussian-varphi-nonlinear}
In this section, we draw our attention to the scenario where $f_0$ is a Gaussian density, the expert $\sigma$ is of linear form, while $\varphi$ is a non-linear expert. For instance, $\varphi$ can be one among the activation functions $\mathrm{sigmoid}$, $\mathrm{ReLU}$ and $\mathrm{tanh}$, which are commonly used in practice.

Under this scenario, since the expert $\sigma((\as)^{\top}X+\bs)$ cannot converge to its counterpart $\varphi(a_0^{\top}X+b_0)$ due to their structure distinction, the prompt $f(Y|\sigma((\as)^{\top}X+\bs),\nus)$ does not converge to the pre-trained model $f_0(Y|\varphi(a_0^{\top}X+b_0),\nu_0)$. Therefore, the prompt merging phenomenon does not take place as in Section~\ref{section:sigma-linear-f0-Gaussian-varphi-linear} when $\varphi$ is a linear expert. Additionally, we can also validate that $f$ is distinguishable from $f_0$. As a consequence, we illustrate in the following theorem that the convergence behavior of the MLE in this case is identical to that in Section~\ref{section:sigma-linear-f0-notGaussian}. 

\begin{theorem}[MLE rates]
    \label{theorem:sigma-linear-f0-Gaussian-varphi-nonlinear} 
    Suppose that $f_0$ is a Gaussian density, the expert function $\sigma$ is linear, while its counterpart $\varphi$ is a non-linear expert.
Then, we obtain the following MLE convergence rates:
\begin{align*}
    \sup_{(\lambdas,G_*)\in\Xi}\mathbb{E}_{p_{\lambdas,\Gs}} 
    \Big[
    |\widehat{\lambda}_n
    -\lambdas|^2 
    \Big] 
    \lesssim \frac{\log n}{n},\\
    \sup_{(\lambdas,G_*)\in\Xi}\mathbb{E}_{p_{\lambdas,\Gs}} 
    \Big[
    (\lambdas)^2 
    \Vert 
    \hGn-\Gs
    \Vert^2 
    \Big] 
    \lesssim 
    \frac{\log n}{n}.
\end{align*}
\end{theorem}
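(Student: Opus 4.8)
The plan is to follow the same two-stage strategy used for Theorem~\ref{theorem:sigma-linear-f0-notGaussian}: first, establish the parametric density estimation rate from Theorem~\ref{theorem:ConvergenceRateofDensityEstimation}, and then convert it into parameter estimation rates via a suitable local-to-global inequality of the form
\begin{align*}
    \inf_{(\lambda,G)\in\Xi}\frac{h(p_{\lambda,G},p_{\lambdas,\Gs})}{D((\lambda,G),(\lambdas,\Gs))}>0,
\end{align*}
where $D((\lambda,G),(\lambdas,\Gs)):=|\lambda-\lambdas|+\lambdas\Vert G-\Gs\Vert$ (or its squared analog, matching the statement). Since $\varphi$ is non-linear while $\sigma$ is linear, Proposition~\ref{lemma:distinguish-linear sigma not Gaussian}'s proof technique should adapt to show $f$ is distinguishable from $f_0$ even though $f_0$ is Gaussian here — the structural mismatch between $\varphi(a_0^{\top}X+b_0)$ and the linear argument of $\sigma$ prevents the merging. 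With distinguishability in hand, Proposition~\ref{prop:identifiability} gives identifiability, so the only remaining work is the local inequality, i.e. a Taylor-expansion / linear-independence argument near the true parameters.

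First I would prove the analog of Proposition~\ref{lemma:distinguish-linear sigma not Gaussian} in this setting: suppose $\eta_0 f_0(Y|\varphi(a_0^{\top}X+b_0),\nu_0)+\sum_{i=1}^2\eta_i f(Y|\sigma(a_i^{\top}X+b_i),\nu_i)=0$ a.s.\ with $\eta_1\eta_2\le0$. Because $\sigma$ is affine, each $f(Y|\sigma(a_i^{\top}X+b_i),\nu_i)$ is a Gaussian density in $Y$ whose mean is affine in $X$, whereas $f_0(Y|\varphi(a_0^{\top}X+b_0),\nu_0)$ is a Gaussian whose mean $\varphi(a_0^{\top}X+b_0)$ is a strictly non-affine function of $X$ (using $a_0\neq 0_d$ and non-linearity of $\varphi$). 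Viewing the identity for fixed $X$ as a linear relation among Gaussian densities in $Y$, one uses the classical fact that distinct-parameter Gaussian densities are linearly independent to force $\eta_0$ and the $\eta_i$'s to vanish after separating the $X$-dependence; the non-affinity of $\varphi$ is what rules out $\varphi(a_0^{\top}X+b_0)$ coinciding with any affine $a_i^{\top}X+b_i$ on a positive-measure set. This mirrors the earlier proof and should be routine.

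Next I would establish the key local inequality. The argument is by contradiction: if it fails, there is a sequence $(\lambda_k,G_k)\to(\lambdas,\Gs)$ (or, because $\lambdas$ may itself depend on $n$, a sequence of pairs along which the ratio tends to zero) with $h(p_{\lambda_k,G_k},p_{\lambdas,\Gs})/D_k=o(1)$. Writing $p_{\lambda_k,G_k}-p_{\lambdas,\Gs}$ and Taylor-expanding $f(Y|\sigma(a^{\top}X+b),\nu)$ in $(a,b,\nu)$ around $(\as,\bs,\nus)$ up to first order, dividing by $D_k$ and passing to the limit yields a nontrivial vanishing linear combination
\begin{align*}
    \tau_0\,\big[f_0(Y|\varphi(a_0^{\top}X+b_0),\nu_0)-f(Y|\sigma((\as)^{\top}X+\bs),\nus)\big]
    +\tau_1\,\partial_a f(\cdots)^{\top}X+\tau_2\,\partial_b f(\cdots)+\tau_3\,\partial_\nu f(\cdots)=0,
\end{align*}
with not all $\tau_j$ zero. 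Since $\sigma$ is linear, no heat-equation degeneracy of the form \eqref{eq:heat_equation_1} couples $b$ and $\nu$ here (that coupling needed $\sigma$ linear AND $\varphi$ linear simultaneously to produce a merging), so the first-order derivatives $\{1,X,\partial_b f,\partial_\nu f\}$ together with the distinguishability-guaranteed linear independence of $f_0(Y|\varphi(\cdot),\nu_0)$ from the $f$-family force all $\tau_j=0$, a contradiction. I would need to handle the mixing-proportion direction separately: the component $|\lambda_k-\lambdas|$ in $D_k$ multiplies the difference $f_0(\cdots)-f(\cdots)$, which distinguishability guarantees is not identically zero, so that direction cannot be absorbed either.

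The main obstacle I anticipate is the bookkeeping around the sample-size dependence of $(\lambdas,\Gs)$: the inequality must be \emph{uniform} over $\Xi$, so the compactness argument has to be run carefully — one compactifies by allowing $\lambdas\to0$ and $\Gs$ anywhere in the compact $\Theta$, and must check that in the limiting case $\lambdas=0$ the loss $D$ degenerates in exactly the way that makes the bound $\lambdas^2\Vert\hGn-\Gs\Vert^2\lesssim\log n/n$ (rather than $\Vert\hGn-\Gs\Vert^2$) the correct, achievable statement. Once the uniform local inequality $D((\lambda,G),(\lambdas,\Gs))\lesssim h(p_{\lambda,G},p_{\lambdas,\Gs})$ is in place, combining with Theorem~\ref{theorem:ConvergenceRateofDensityEstimation} and squaring gives both displayed bounds immediately; the details largely parallel Appendix~\ref{appendixproof:sigma-linear-f0-notGaussian}.
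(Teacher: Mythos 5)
Your proposal is correct and follows essentially the same route as the paper: exploit the structural mismatch between linear $\sigma$ and non-linear $\varphi$ to establish distinguishability of $f$ from $f_0$ (the analog of Proposition~\ref{lemma:distinguish-linear sigma not Gaussian} / Lemma~\ref{applemma:sigma-linear-f0-notGaussian}), derive the first-order local inequality $V(p_{\lambda,G},p_{\lambda^*,G_*})\gtrsim D_1((\lambda,G),(\lambda^*,G_*))$ via Taylor expansion and linear independence of the derivative terms, and combine with Theorem~\ref{theorem:ConvergenceRateofDensityEstimation}. One small imprecision: the heat equation~\eqref{eq:heat_equation_1} does still hold here (it follows from $f$ Gaussian and $\sigma$ linear, independent of $\varphi$); what the non-linearity of $\varphi$ prevents is the prompt merging, which is what would otherwise force a higher-order Taylor expansion and thereby let the heat-equation coupling decelerate the rates — but your conclusion that no such deceleration occurs here is correct.
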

The proof of Theorem~\ref{theorem:sigma-linear-f0-Gaussian-varphi-nonlinear} is in Appendix~\ref{appendixproof:sigma-linear-f0-Gaussian-varphi-nonlinear}. As mentioned above, since the two experts $\sigma$ and $\varphi$ have different structures (linear vs. non-linear), the issue that the prompt merges into the pre-trained model does not exist. Therefore, $\hlambdan$ converges to $\lambdas$ at the rate of order $\widetilde{\mathcal{O}}(n^{-1/2})$, which does not depend on any other factors. On the other hand, note that the prompt vanishing problem still holds when $\lambdas$ approaches zero. Thus, the convergence rate of $\hGn=(\han,\hbn,\hnun)$ is slower than $\widetilde{\mathcal{O}}(n^{-1/2})$ as it is governed by the vanishing rate of $\lambdas$. Lastly, we can also arrive at the minimax lower bounds in Theorem~\ref{thm:lower-distinguish} by arguing similarly. Consequently, the above MLE convergence rates are optimal.  

\section{NON-LINEAR EXPERT \texorpdfstring{$\sigma$}{a}}
\label{section:non-linear-sigma}
Moving to this section, we will look into the convergence behavior of the MLE when the expert $\sigma$ in the prompt takes a non-linear form such as $\mathrm{sigmoid}$, $\mathrm{ReLU}$ and $\mathrm{tanh}$ functions, etc.  
Similar to Section~\ref{section:linear-sigma}, we also separate the analysis into two scenarios when $f_0$ is excluded from and included in the family of Gaussian densities, respectively.

\subsection{When \texorpdfstring{$f_0$}{} is not a Gaussian density}
\label{section:sigma-nonlinear-f0-notGaussian}
Firstly, when $f_0$ does not belong to the family of Gaussian densities, we can verify that $f$ is distinguishable from $f_0$ by employing similar arguments for Proposition~\ref{lemma:distinguish-linear sigma not Gaussian}. Thus, the MLE admits the same convergence behavior as that in Section~\ref{section:sigma-linear-f0-notGaussian}, which is exhibited in Theorem~\ref{theorem:sigma-nonlinear-f0-notGaussian} whose proof is in Appendix~\ref{appendixproof:sigma-nonlinear-f0-notGaussian}.

\begin{theorem}[MLE rates]
    \label{theorem:sigma-nonlinear-f0-notGaussian}
    When the expert function $\sigma$ is of non-linear form and $f_0$ is not a Gaussian density, we obtain the MLE convergence rates as follows:
\begin{align*}
    \sup_{(\lambdas,G_*)\in\Xi}\mathbb{E}_{p_{\lambdas,\Gs}} 
    \Big[
    |\widehat{\lambda}_n
    -\lambdas|^2 
    \Big] 
    \lesssim \frac{\log n}{n},\\
    \sup_{(\lambdas,G_*)\in\Xi}\mathbb{E}_{p_{\lambdas,\Gs}} 
    \Big[
    (\lambdas)^2 
    \Vert 
    \hGn-\Gs
    \Vert^2 
    \Big] 
    \lesssim 
    \frac{\log n}{n}.
\end{align*}
\end{theorem}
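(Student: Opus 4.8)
The plan is to follow the standard two-step recipe already outlined in the excerpt: first invoke Theorem~\ref{theorem:ConvergenceRateofDensityEstimation} to get the parametric rate $h(p_{\hlambdan,\hGn},p_{\lambdas,\Gs})=\widetilde{\mathcal O}(n^{-1/2})$ uniformly over $\Xi$, and then transfer this density rate to a parameter rate by establishing a local (and, since the ground-truth depends on $n$, genuinely uniform) lower bound of the Hellinger distance by a suitable parameter loss. Concretely, I would aim to prove the key inequality
\begin{align*}
    \inf_{(\lambda,G)\in\Xi}\ \frac{h\big(p_{\lambda,G}(\cdot|X),p_{\lambdas,\Gs}(\cdot|X)\big)}{|\lambda-\lambdas|+\lambdas\,\Vert G-\Gs\Vert}\ \gtrsim\ 1,
\end{align*}
where the infimum is interpreted in the usual minimax sense over sequences of ground-truth parameters, so that combining with Theorem~\ref{theorem:ConvergenceRateofDensityEstimation} immediately yields both displayed bounds (after squaring, using $(a+b)^2\lesssim a^2+b^2$, and absorbing logarithmic factors). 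This mirrors the argument in Section~\ref{appendix:Proofs for the MLE rate} and in Appendix~\ref{appendixproof:sigma-linear-f0-notGaussian}; indeed, because $f_0$ is not Gaussian we are exactly in the ``distinguishable'' regime of Proposition~\ref{lemma:distinguish-linear sigma not Gaussian} (the same Proposition applies verbatim with $\sigma$ non-linear, since its proof only uses the non-Gaussianity of $f_0$), so no new structural obstacle appears relative to Theorem~\ref{theorem:sigma-linear-f0-notGaussian}.

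The heart of the matter is the lower bound on $h$, which I would prove by contradiction in the standard way. Suppose no such constant exists; then there is a sequence $(\lambda_k,G_k)$ with $(\lambda_k,G_k)\to(\lambda_\infty,G_\infty)$ (by compactness of $\Xi$) along which $h(p_{\lambda_k,G_k},p_{\lambda_k^*,G_k^*})/\big(|\lambda_k-\lambda_k^*|+\lambda_k^*\Vert G_k-G_k^*\Vert\big)\to 0$. Writing $p_{\lambda_k,G_k}-p_{\lambda_k^*,G_k^*}$ and Taylor-expanding $f(Y|\sigma(a^\top X+b),\nu)$ around the limiting parameters, I would normalize by the denominator $D_k:=|\lambda_k-\lambda_k^*|+\lambda_k^*\Vert G_k-G_k^*\Vert$ and pass to the limit, obtaining a non-trivial vanishing linear combination
\begin{align*}
    \eta_0\, f_0\big(Y|\varphi(a_0^\top X+b_0),\nu_0\big)+\eta_1\, f\big(Y|\sigma(a_\infty^\top X+b_\infty),\nu_\infty\big)+\text{(derivative terms of }f\text{)}=0
\end{align*}
almost surely in $(X,Y)$, with not all normalized coefficients equal to zero. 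Because $\sigma$ is non-linear, the relevant derivatives $\partial f/\partial a$, $\partial f/\partial b$, $\partial f/\partial\nu$ of the Gaussian expert in terms of $X$ carry linearly independent functional factors ($\sigma'(a^\top X+b)X$, $\sigma'(a^\top X+b)$, etc.), and — crucially — there is no heat-type PDE collapsing $\partial^2_b$ into $\partial_\nu$ that could break independence (the heat equation~\eqref{eq:heat_equation_1} only matters when $\sigma$ is linear), so after separating $f_0$ via the distinguishability condition (Definition~\ref{def:distinguish}/Proposition~\ref{lemma:distinguish-linear sigma not Gaussian}) and then using the algebraic independence of the Gaussian-expert derivatives, all $\eta_i$ must vanish, a contradiction.

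A few technical points deserve care. First, one must separate the two regimes $\lambda_k^*\to 0$ and $\lambda_k^*\to\lambda_\infty>0$: in the latter the argument is essentially the classical one for a fixed mixture, while in the former the rescaling by $D_k$ must be done so that the $f_0$-coefficient and the prompt-derivative coefficients are tracked on the correct scales — this is exactly why the parameter loss is $|\lambda-\lambdas|+\lambdas\Vert G-\Gs\Vert$ rather than $|\lambda-\lambdas|+\Vert G-\Gs\Vert$, and the $\lambdas$-weight on $\Vert G-\Gs\Vert$ is what the lower bound naturally produces. Second, the distinguishability hypothesis is used in the form that allows $\eta_1$ and $\eta_2$ (two prompt-like components) to be separated from $\eta_0 f_0$; here only one limiting prompt component arises, which is the easier case, but one should still confirm the non-Gaussian $f_0$ cannot be written in terms of a Gaussian expert and its $X$-derivatives — precisely Proposition~\ref{lemma:distinguish-linear sigma not Gaussian}. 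Third, I would verify the tail assumption of Theorem~\ref{theorem:ConvergenceRateofDensityEstimation} holds for non-Gaussian $f_0$ (it is assumed throughout), so the density rate is available. The main obstacle, and the only place where the non-linearity of $\sigma$ enters non-trivially, is confirming the algebraic/functional independence of the set $\{f,\ \partial_a f,\ \partial_b f,\ \partial_\nu f\}$ (evaluated at $\sigma(a_\infty^\top X+b_\infty)$) together with $f_0$ as functions of $(X,Y)$; once that linear-independence lemma is in place — which should follow from the standard Vandermonde/polynomial-in-$\exp$ argument used in \cite{ho2022gaussian} combined with $\sigma'\neq 0$ on a positive-measure set — the contradiction closes and the theorem follows. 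I expect the write-up to reduce to citing Proposition~\ref{lemma:distinguish-linear sigma not Gaussian}, reproducing the contradiction scheme of Appendix~\ref{appendixproof:sigma-linear-f0-notGaussian} with $\sigma$ non-linear, and noting that the heat-equation degeneracy is absent.
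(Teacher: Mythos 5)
Your proposal is correct and follows essentially the same route as the paper's Appendix~\ref{appendixproof:sigma-nonlinear-f0-notGaussian}: verify first-order distinguishability of $f$ from $f_0$ (which holds since $f_0\neq f$, the Gaussian derivatives $\partial^k f/\partial\sigma^k$ for $k=0,1,2$ are linearly independent, and $\sigma'\neq 0$ on a set of positive measure even though $\sigma$ is non-linear), then establish the lower bound $D_1((\lambda,G),(\lambdas,\Gs))\lesssim V(p_{\lambda,G},p_{\lambdas,\Gs})$ by the contradiction/Taylor-expansion scheme of Appendix~\ref{appendixproof:sigma-linear-f0-notGaussian}, and finally chain with Theorem~\ref{theorem:ConvergenceRateofDensityEstimation}. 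The only minor cosmetic difference is your loss weights $\Vert G-\Gs\Vert$ by $\lambdas$ alone whereas the paper uses $\lambda+\lambdas$ (the definition of $D_1$), which is immaterial for the stated conclusion.
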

Notably, it can be validated that the minimax lower bounds in Theorem~\ref{thm:lower-distinguish} also hold under the setting of this section by employing the same arguments. Thus, we can claim that the above MLE rates are optimal.
\subsection{When \texorpdfstring{$f_0$}{} is a Gaussian density}
\label{section:sigma-nonlinear-f0-Gaussian}
Subsequently, we examine the convergence behavior of parameter estimation when $f_0$ belongs to the family of Gaussian densities. Since the distinguishability of the prompt $f$ from the pre-trained model $f_0$ hinges upon the structure of the expert function $\varphi$, we proceed to divide the analysis into two smaller cases: when $\varphi$ is a linear and non-linear expert function, respectively.
\subsubsection{Linear expert \texorpdfstring{$\varphi$}{}}
\label{section:sigma-nonlinear-f0-Gaussian-varphi-linear}
In this section, we concentrate on the scenario where the expert $\sigma$ takes a non-linear form, while $\varphi$ is a linear expert function. 

Although this scenario is opposite to that in Section~\ref{section:sigma-linear-f0-Gaussian-varphi-nonlinear} where the expert $\sigma$ is linear and its counterpart $\varphi$ is non-linear, these two scenarios have one thing in common, which is the expert structure distinction. Therefore, it can be validated that they share several convergence properties. In particular, since the structures of the two experts $\sigma$ and $\varphi$ are different from each other, it is obvious that $\sigma((\as)^{\top}X+\bs)$ cannot converge to $\varphi(a_0^{\top}X+b_0)$ as $n\to\infty$ for almost surely $X\in\mathcal{X}$. As a result, the prompt $f(Y|\sigma((\as)^{\top}X+\bs),\nus)$ does not converge to the pre-trained model $f_0(Y|\varphi(a_0^{\top}X+b_0),\nu_0)$, either. This means that the expert structure difference helps tackle the prompt merging issue. Furthermore, it also guarantees that $f$ is distinguishable from $f_0$. Consequently, we demonstrate in the following theorem that the convergence behavior of the MLE in this scenario resembles that in Section~\ref{section:sigma-linear-f0-Gaussian-varphi-nonlinear}. 

\begin{theorem}[MLE rates]
    \label{theorem:sigma-nonlinear-f0-Gaussian-varphi-linear} 
    Suppose that $f_0$ is a Gaussian density, the expert function $\sigma$ is non-linear, while its counterpart $\varphi$ is a linear expert. Then, we obtain the following MLE convergence rates:
\begin{align*}
    \sup_{(\lambdas,G_*)\in\Xi}\mathbb{E}_{p_{\lambdas,\Gs}} 
    \Big[
    |\widehat{\lambda}_n
    -\lambdas|^2 
    \Big] 
    \lesssim \frac{\log n}{n},\\
    \sup_{(\lambdas,G_*)\in\Xi}\mathbb{E}_{p_{\lambdas,\Gs}} 
    \Big[
    (\lambdas)^2 
    \Vert 
    \hGn-\Gs
    \Vert^2 
    \Big] 
    \lesssim 
    \frac{\log n}{n}.
\end{align*}
\end{theorem}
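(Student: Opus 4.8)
The plan is to reuse the architecture of the proof of Theorem~\ref{theorem:sigma-linear-f0-notGaussian}, since the present scenario (Gaussian $f_0$, linear $\varphi$, non-linear $\sigma$) shares its two enabling features. First I would verify, by an argument parallel to Proposition~\ref{lemma:distinguish-linear sigma not Gaussian}, that $f$ is distinguishable from $f_0$: a relation $\eta_0 f_0(Y|\varphi(a_0^\top X+b_0),\nu_0)+\sum_{i=1}^2\eta_i f(Y|\sigma(a_i^\top X+b_i),\nu_i)=0$ holding almost surely with $\eta_1\eta_2\le 0$ forces $\eta_1=\eta_2=0$, because a Gaussian component whose mean is \emph{affine} in $X$ cannot be matched by Gaussian components whose means pass through a genuinely non-linear $\sigma$ (compare, e.g., the behaviour as $\|X\|$ grows, or the zero sets of the associated score functions). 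By Proposition~\ref{prop:identifiability} the model~\eqref{eq:deviated-model} is then identifiable, so the prompt-merging obstruction is absent, exactly as in Section~\ref{section:sigma-linear-f0-notGaussian}.

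Next, as the discussion after Theorem~\ref{theorem:ConvergenceRateofDensityEstimation} suggests, I would reduce both displayed bounds to a single Hellinger-to-parameter inequality. Taking the loss
\[
D\big((\lambda,G),(\lambdas,\Gs)\big):=|\lambda-\lambdas|+\min\{\lambda,\lambdas\}\,\big(\|a-\as\|+|b-\bs|+|\nu-\nus|\big),
\]
it suffices to show $h(p_{\lambda,G},p_{\lambdas,\Gs})\gtrsim D\big((\lambda,G),(\lambdas,\Gs)\big)$ uniformly over $(\lambdas,\Gs),(\lambda,G)\in\Xi$. Combined with the parametric density rate from Theorem~\ref{theorem:ConvergenceRateofDensityEstimation}, this gives $\mathbb{E}\,|\widehat\lambda_n-\lambdas|^2\lesssim\log n/n$ and $\mathbb{E}\,\big(\min\{\widehat\lambda_n,\lambdas\}\big)^2\,\|\widehat G_n-\Gs\|^2\lesssim\log n/n$; splitting on whether $\widehat\lambda_n\ge\lambdas/2$, and using compactness of $\Theta$ in the complementary case (where $\lambdas\lesssim|\widehat\lambda_n-\lambdas|$), upgrades the second estimate to $\mathbb{E}\,(\lambdas)^2\|\widehat G_n-\Gs\|^2\lesssim\log n/n$, which are the asserted rates.

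For the inequality $h\gtrsim D$ I would split into a non-local regime ($D\ge\varepsilon_0$) and a local regime ($D\to0$). The non-local regime is handled by a standard compactness-plus-Fatou contradiction using the identifiability established above. For the local regime I argue by contradiction: given $(\lambda_k,G_k)$ with $D_k\to0$ and $h(p_{\lambda_k,G_k},p_{\lambdas_k,\Gs_k})/D_k\to0$, boundedness of $f_0$ turns this into $\|p_{\lambda_k,G_k}-p_{\lambdas_k,\Gs_k}\|_{L^2(\bar f\,m)}/D_k\to0$; passing to a subsequence along which $(\lambda_k-\lambdas_k)/D_k$, the rescaled $G$-increments $\min\{\lambda_k,\lambdas_k\}(a_k-\as_k)/D_k$, $\min\{\lambda_k,\lambdas_k\}(b_k-\bs_k)/D_k$, $\min\{\lambda_k,\lambdas_k\}(\nu_k-\nus_k)/D_k$, and $\Gs_k$ all converge, a first-order Taylor expansion of $p_{\lambda_k,G_k}-p_{\lambdas_k,\Gs_k}$ about $(\lambdas_k,\Gs_k)$ produces in the limit a vanishing $L^2$ combination of $f(Y|\sigma(\overline a^\top X+\overline b),\overline\nu)-f_0(Y|\varphi(a_0^\top X+b_0),\nu_0)$ and the scores $\partial_a f,\partial_b f,\partial_\nu f$ at the limit $\overline G$, whose coefficients are the limiting normalized increments and cannot all vanish since they are a $D$-normalization. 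These functions are linearly independent precisely because $\sigma$ is non-linear while $\varphi$ is linear: there is no algebraic relation tying $f-f_0$ to the scores, and the only available differential identity, the heat equation~\eqref{eq:heat_equation_1}, couples $\partial_b^2 f$ to $\partial_\nu f$ — a second-order phenomenon that never enters the first-order system. This yields the contradiction, and since no degeneracy forces higher-order derivatives, every parameter is pinned at first order, so the rates match Theorem~\ref{theorem:sigma-linear-f0-notGaussian}; the matching minimax lower bounds then follow, as remarked in the text, from the construction of Theorem~\ref{thm:lower-distinguish}, establishing optimality.

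I expect the main obstacle to be the scaling bookkeeping in the local regime when $\lambdas_k\to0$: the $\lambda$-direction term $(\lambda_k-\lambdas_k)(f-f_0)$ and the $G$-direction terms (each carrying a factor $\lambda_k$) live at incompatible magnitudes, so the normalization must be against $D$ rather than the Euclidean parameter distance, and one must check that $h/D\to0$ still forces \emph{every} normalized coefficient to zero. In particular, the sub-case where $\lambda_k$ and $\lambdas_k$ are both small yet $G_k,\Gs_k$ stay separated — so that $D_k\asymp|\lambda_k-\lambdas_k|$ — requires the separate estimate $h(p_{\lambda,G},p_{\lambdas,\Gs})\gtrsim|\lambda-\lambdas|$, again a consequence of $f$ being distinguishable from $f_0$.
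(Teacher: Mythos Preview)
Your proposal is correct and follows essentially the same route as the paper: verify distinguishability of $f$ from $f_0$ via the structural mismatch between the linear $\varphi$ and non-linear $\sigma$, establish a total-variation/Hellinger lower bound against a loss of the form $|\lambda-\lambdas|+(\text{weight})\,\|G-\Gs\|$ by the Taylor-expansion-plus-linear-independence contradiction argument (exactly the machinery behind equation~\eqref{applossineq:D1-loss}), and conclude via Theorem~\ref{theorem:ConvergenceRateofDensityEstimation}. The only cosmetic difference is that the paper uses the weight $(\lambda+\lambdas)$ in its loss $D_1$ rather than your $\min\{\lambda,\lambdas\}$, which delivers the $(\lambdas)^2$ bound directly and spares your final case-split.
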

The proof of Theorem~\ref{theorem:sigma-nonlinear-f0-Gaussian-varphi-linear} is in Appendix~\ref{appendixproof:sigma-nonlinear-f0-Gaussian-varphi-linear}. Note that we can also derive the minimax lower bounds in Theorem~\ref{thm:lower-distinguish} under the setting of Theorem~\ref{theorem:sigma-nonlinear-f0-Gaussian-varphi-linear} by arguing in a similar fashion and conclude that the above MLE convergence rates are optimal.


\subsubsection{Non-linear expert \texorpdfstring{$\varphi$}{}}
\label{section:sigma-nonlinear-f0-Gaussian-varphi-nonlinear}

In this section, we take into account the last scenario when both the expert functions $\varphi$ and $\sigma$ are non-linear. 

Since both $\varphi$ and $\sigma$ are non-linear expert functions, we observe that if $\sigma$ is not equal to $\varphi$ almost everywhere, then the fact $\sigma((\as)^{\top}X+\bs)$ converges to $\varphi(a_0^{\top}X+b_0)$ as $n\to\infty$ for almost surely $X\in\mathcal{X}$ does not hold. Consequently, the prompt $f$ become distinguishable from the the pre-trained model $f_0$, which allows us to demonstrate that the MLE convergence behavior in this case is totally similar to that in Theorem~\ref{theorem:sigma-nonlinear-f0-Gaussian-varphi-linear} by using the same arguments. 
Therefore, we will focus only on the challenging case when $\varphi=\sigma$ almost everywhere as the prompt might not be distinguishable from the pre-trained model in this situation, inducing the prompt merging issue if $(\as,\bs,\nus)\to (a_0,b_0,\nu_0)$ when $n\to\infty$ as in Section~\ref{section:sigma-linear-f0-Gaussian-varphi-linear}. 

On the other hand, thanks to the non-linear structures of the experts $\varphi$ and $\sigma$, we exhibit in Proposition~\ref{prop:sigma-nonlinear-f0-Gaussian-varphi-nonlinear} a property of the prompt $f$ that wipes out the interaction between parameters $b$ and $\nu$ as in equation~\eqref{eq:heat_equation_1}.

\begin{proposition}
\label{prop:sigma-nonlinear-f0-Gaussian-varphi-nonlinear}
     Suppose that the expert function $\sigma$ is twice differentiable for almost everywhere.
     For any $(a,b,\nu)\in\Theta$, if we have a set of scalars $\alpha_{\eta}$, for $\eta=(\eta_1,\eta_2,\eta_3)\in\mathbb{N}^3$, such that 
     \begin{align*}
         \sum_{\ell=0}^2\sum_{|\eta|=\ell}
         \alpha_{\eta}\cdot
         \frac{\partial^{|\eta|}f}{\partial a^{\eta_1}\partial b^{\eta_2}\partial \nu^{\eta_3}}
         ( Y|\sigma(a^{\top}X+b),\nu)=0
     \end{align*}
     for almost surely $(X,Y)\in\mathcal{X}\times\mathcal{Y}$, then we get that $\alpha_{\eta}=0$
     for all $\eta\in\mathbb{N}^3$ such that $|\eta|:=\eta_1+\eta_2+\eta_3\leq 2$.
\end{proposition}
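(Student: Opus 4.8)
The plan is to differentiate through the composition $\mu=\sigma(a^{\top}X+b)$, turn the hypothesized identity into a linear relation among the $\mu$-derivatives of a single univariate Gaussian density, and then strip off the coefficients $\alpha_{\eta}$ group by group, isolating one genuinely delicate case which is exactly where the non-linearity of $\sigma$ must be invoked.

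First I would set $F(a,b,\nu):=f(Y\mid\sigma(a^{\top}X+b),\nu)$ and $u:=a^{\top}X+b$, and expand every partial derivative of $F$ of order at most two in $(a,b,\nu)$ by the chain rule, rewriting each $\nu$-derivative via the Gaussian heat identity $\partial_{\nu}f(Y\mid\mu,\nu)=\tfrac12\,\partial_{\mu}^{2}f(Y\mid\mu,\nu)$ (this is \eqref{eq:heat_equation_1}, specialized to $\mu=a^{\top}X+b$). After collecting terms the hypothesis becomes $\sum_{m=0}^{4}c_{m}(X)\,\partial_{\mu}^{m}f(Y\mid\sigma(u),\nu)=0$ for a.s.\ $(X,Y)$, with $c_{0}=\alpha_{(0,0,0)}$, $c_{4}=\tfrac14\alpha_{\nu\nu}$, $c_{3}(X)=\tfrac12\sigma'(u)M(X)$, $c_{2}(X)=(\sigma'(u))^{2}P(X)+\tfrac12\alpha_{\nu}$, and $c_{1}(X)=\sigma'(u)L(X)+\sigma''(u)P(X)$, where $P$ is the degree-$\le 2$ polynomial in $X$ whose coefficients are the second-order $(a,b)$ coefficients among the $\alpha_{\eta}$, $L$ the affine polynomial whose coefficients are the first-order $(a,b)$ ones, and $M$ the affine polynomial whose coefficients are the mixed $(a,\nu)$ and $(b,\nu)$ ones; thus every $\alpha_{\eta}$ appears exactly once among $\alpha_{(0,0,0)},\alpha_{\nu\nu},\alpha_{\nu}$ and the coefficients of $P,L,M$. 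Since $\{\partial_{\mu}^{m}f(Y\mid\mu,\nu)\}_{m=0}^{4}$ are linearly independent in $Y$ (each is a degree-$m$ polynomial in $Y-\mu$ times the Gaussian), I conclude that each $c_{m}$ vanishes for almost every $X$.

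The cases $m=0,4$ give $\alpha_{(0,0,0)}=\alpha_{\nu\nu}=0$ at once. For $m=3$, the standing assumptions ($a\ne 0_{d}$ and $\sigma'$ nonzero on a set of positive Lebesgue measure) make $\{X:\sigma'(a^{\top}X+b)\ne 0\}$ have positive $\bar f$-probability (the law of $a^{\top}X$ is absolutely continuous), so $M$ vanishes on a set of positive Lebesgue measure; an affine map on $\mathbb R^{d}$ with a positive-measure zero set is identically zero, killing all mixed coefficients. The hard part will be the coupled pair $c_{1}=c_{2}=0$. From $c_{2}=0$ the product $(\sigma'(u))^{2}P(X)$ equals the constant $-\tfrac12\alpha_{\nu}$; restricting $X$ to level sets of $a^{\top}X$ inside $\{\sigma'(u)\ne 0\}$ forces $P(X)=\widetilde P(a^{\top}X)$ for a univariate degree-$\le 2$ polynomial $\widetilde P$ (this restriction is vacuous, but the conclusion automatic, when $d=1$), and then $c_{1}=0$ forces $L(X)=\widetilde L(a^{\top}X)$ with $\widetilde L$ affine. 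One is then left with the one-variable identities $(\sigma'(u))^{2}\widetilde P(u-b)\equiv-\tfrac12\alpha_{\nu}$ and $\sigma'(u)\widetilde L(u-b)+\sigma''(u)\widetilde P(u-b)\equiv 0$ on the bounded interval swept out by $u$.

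If $\alpha_{\nu}=0$, the first identity gives $\widetilde P\equiv 0$ (as $\sigma'\not\equiv 0$), hence $P\equiv 0$ and all second-order $(a,b)$ coefficients vanish, and then the second gives $\widetilde L\equiv 0$, so all first-order $(a,b)$ coefficients vanish and we are done. Thus everything reduces to excluding $\alpha_{\nu}\ne 0$: differentiating the first identity and substituting the second yields $\widetilde P'=2\widetilde L$ and then $(\sigma'(u))^{2}=(-\alpha_{\nu}/2)\big/\widetilde P(u-b)$ on an interval, i.e.\ $(\sigma')^{2}$ would locally equal a constant multiple of the reciprocal of a polynomial of degree one or two (degree zero being impossible, since that would make $\sigma$ linear). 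Ruling this out is precisely what the non-linearity hypothesis on $\sigma$ buys us -- a linear $\sigma$ has $\sigma''\equiv 0$ and one falls straight back onto the heat-equation degeneracy \eqref{eq:heat_equation_1} -- and this is the single step where the non-linear structure of $\sigma$ must be used decisively; activations with $\sigma''\equiv 0$ almost everywhere, such as $\mathrm{ReLU}$, instead need the separate observation that $\sigma'$ is non-constant on the range of $u$. Throughout, I would track carefully the passages between ``a.s.\ $(X,Y)$'', ``a.e.\ $X$'', ``positive Lebesgue measure'' and ``identically zero'' for polynomials, all of which are legitimate because $\bar f$ is a continuous density on the bounded set $\mathcal X$.
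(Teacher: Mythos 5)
Your structural plan mirrors the paper's: expand by the chain rule, use the Gaussian heat identity to push all $\nu$-derivatives into $\mu$-derivatives, invoke linear independence of $\partial_\mu^m f$ for $m=0,\dots,4$ (Lemma~\ref{lemma:independent_gaussian}), and then peel off the coefficients $c_m(X)$ in decreasing order. The stripping of $c_0,c_4,c_3$ is fine, as is the reduction of the coupled pair $c_1=c_2=0$ to a one-variable system and the observation that differentiating $(\sigma')^2\widetilde P\equiv\text{const}$ together with $\sigma'\widetilde L+\sigma''\widetilde P\equiv 0$ yields $\widetilde P'=2\widetilde L$ and hence $(\sigma')^2= c/\widetilde P$ for some polynomial $\widetilde P$ of degree at most two.

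The genuine gap is the single sentence ``Ruling this out is precisely what the non-linearity hypothesis on $\sigma$ buys us.'' That is not true, and it is precisely the delicate point. For $\widetilde P$ of degree one, $\sigma(y)=\tfrac{2\sqrt{c}}{\alpha}\sqrt{\alpha y+\beta}+C$ is a twice-differentiable non-linear function on $\{\alpha y+\beta>0\}$ with $(\sigma')^2=c/(\alpha y+\beta)$; for degree two with positive discriminant, $\sigma(y)=\pm\log\big(y+c_1+\sqrt{(y+c_1)^2+c_2}\big)+C$ is smooth and non-linear on all of $\mathbb{R}$ with $(\sigma')^2=c/((y+c_1)^2+c_2)$. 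Non-linearity of $\sigma$ therefore does not close the case $\alpha_\nu\neq 0$; what closes it in the paper is an explicit extra hypothesis stated at the start of the proof in Appendix~\ref{appendix:prop:sigma-nonlinear-f0-Gaussian-varphi-nonlinear}: $\sigma$ is not of the form $\pm\log\big(x+c_1+\sqrt{(x+c_1)^2+c_2}\big)+C$, together with an assumption that $\bar f$ is non-vanishing a.e.\ on $\mathbb{R}^d$. The latter is what lets the paper kill the degree-one and discriminant-zero subcases (via a sign change of $\widetilde P$, resp.\ unboundedness of $\sigma'$ near a root), and it is incompatible with the ``bounded $\mathcal X$'' setting you invoke at the end of your write-up — on a bounded interval $\widetilde P$ need not change sign, so that argument is not available. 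You need to either adopt the paper's extra assumptions explicitly, or supply a different argument ruling out $(\sigma')^2=c/\widetilde P$ on the (possibly bounded) range of $u$; as written the proof does not go through.
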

The proof of Proposition~\ref{prop:sigma-nonlinear-f0-Gaussian-varphi-nonlinear} is in Appendix~\ref{appendix:prop:sigma-nonlinear-f0-Gaussian-varphi-nonlinear}. Given this result, we then illustrate in Theorem~\ref{theorem:sigma-nonlinear-f0-Gaussian-varphi-nonlinear} that due to the parameter interaction disappearance, the convergence rate of parameter estimation is  improved compared to that in Theorem~\ref{theorem:sigma-linear-f0-Gaussian-varphi-linear} where the expert functions $\varphi$ and $\sigma$ also share the same structure but of linear form rather than non-linear form. Before presenting the theorem statement, let us recall that we denote $\Delta G:=(\Delta a,\Delta b,\Delta\nu) := (a - a_0,b - b_0,\nu-\nu_0 )$. 


\begin{theorem}[MLE rates]
    \label{theorem:sigma-nonlinear-f0-Gaussian-varphi-nonlinear}
Suppose that $f_0$ is a Gaussian density, the non-linear expert $\sigma$ is twice differentiable and $\sigma=\varphi$ almost everywhere.
For any sequence $(l_n)_{n\geq 1}$, we denote
\begin{align*}
    \Xi_{2}(l_n):=
\Big\{ &
(\lambda,G)={(\lambda,a,b,\nu)}\in\Xi:
\\
&\frac{l_n}{\min_{1\leq i\leq d}\{ 
|(\da)_i|^2,|\db|^2,|\dnu|^2
\}\sqrt{n}}\leq\lambda
\Big\}.
\end{align*} 
Then, the followings hold for any sequence $(l_n)_{n\geq 1}$ such that $l_n/\log n\to\infty$ as $n\to\infty$:
\begin{align*}
    \sup_{(\lambdas,G_*)\in\Xi_{2}(l_n)}&\mathbb{E}_{p_{\lambdas,\Gs}} 
    \Big[
    \Vert (\das,\dbs,\dnus) \Vert^4
    |\widehat{\lambda}_n
    -\lambdas|^2 
    \Big] \\&\hspace{4.5cm}
    \lesssim \frac{\log n}{n},\\
    \sup_{(\lambdas,G_*)\in\Xi_{2}(l_n)}&\mathbb{E}_{p_{\lambdas,\Gs}} 
    \Big[
    (\lambdas)^2 
    \Vert (\das,\dbs,\dnus) \Vert^2
    \\&
    \times\Vert (\widehat{a}_n, \widehat{b}_n, \widehat{\nu}_n)-(\as,\bs,\nus) \Vert^2 
    \Big] 
    \lesssim 
    \frac{\log n}{n}.
\end{align*}
\end{theorem}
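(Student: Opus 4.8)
The plan is to follow the now-standard two-step strategy for deriving MLE parameter-estimation rates from the density-estimation rate of Theorem~\ref{theorem:ConvergenceRateofDensityEstimation}: (i) construct a suitable parameter-discrepancy functional $D_2((\lambda,G),(\lambdas,\Gs))$ tailored to this scenario, and (ii) prove the local lower bound
\[
    \inf_{(\lambda,G)}\ \frac{h(p_{\lambda,G},p_{\lambdas,\Gs})}{D_2((\lambda,G),(\lambdas,\Gs))}\ >\ 0,
\]
where the infimum is over components with $(\lambda,G)$ in a neighbourhood of $(\lambdas,\Gs)$ intersected with $\Xi_2(l_n)$. Here $D_2$ should be chosen so that $D_2^2$ is comparable to $\Vert(\das,\dbs,\dnus)\Vert^4|\hlambdan-\lambdas|^2 + (\lambdas)^2\Vert(\das,\dbs,\dnus)\Vert^2\Vert(\han,\hbn,\hnun)-(\as,\bs,\nus)\Vert^2$ (plus, if convenient, an auxiliary $|\hlambdan-\lambdas|^2$-type term coming from the distinguishability of $f$ from $f_0$, which holds here whenever $\sigma=\varphi$ is nonlinear and the components stay away from $G_0$; but on $\Xi_2(l_n)$ the weighting by powers of $\Vert\Delta\Gs\Vert$ is what matters). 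Combining the local inequality with a standard argument (first showing consistency of $(\hlambdan,\hGn)$ so that the local regime applies, using the constraint $\Xi_2(l_n)$ to rule out the degenerate merging limit $G_*\to G_0$), and then plugging in $h(p_{\hlambdan,\hGn},p_{\lambdas,\Gs})=\widetilde{\mathcal O}(n^{-1/2})$ and taking expectations, yields both displayed bounds.

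The heart of the matter is the local lower bound, and I would prove it by contradiction via a Taylor-expansion / linear-independence argument. Suppose it fails: then there is a sequence $(\lambda_k,G_k)\to(\lambdas,\Gs)$ (along a subsequence we may assume $\lambdas\to\bar\lambda$, $\Gs\to\bar G$, and that the normalized coefficient vector converges) along which $h(p_{\lambda_k,G_k},p_{\lambdas,\Gs})/D_2(\cdot)\to 0$. Expanding the numerator $p_{\lambda_k,G_k}(Y|X)-p_{\lambdas,\Gs}(Y|X)$ to second order in the prompt parameters around $(\as,\bs,\nus)$, and writing the difference as a combination of the functions
\[
    f_0\big(Y|\varphi(a_0^{\top}X+b_0),\nu_0\big),\quad
    \frac{\partial^{|\eta|}f}{\partial a^{\eta_1}\partial b^{\eta_2}\partial \nu^{\eta_3}}\big(Y|\sigma(a^{\top}X+b),\nu\big),\ |\eta|\le 2,
\]
the vanishing of the ratio forces, after dividing by the largest coefficient, a nontrivial linear relation among these functions that holds $p_{\lambdas,\Gs}$-almost surely. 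Proposition~\ref{prop:sigma-nonlinear-f0-Gaussian-varphi-nonlinear} is exactly what kills this relation among the $f$-derivatives of order $\le 2$ (and in particular rules out the heat-equation interaction that degraded the rate in Theorem~\ref{theorem:sigma-linear-f0-Gaussian-varphi-linear}), while distinguishability of $f$ from $f_0$ handles the extra $f_0$ term; together they yield a contradiction. The key subtlety in setting up the ratios correctly is the choice of normalizing denominators in $D_2$: the factor $\Vert\Delta\Gs\Vert^2$ multiplying $(\lambdas)^2$ reflects that the first-order $f$-derivative terms come with coefficients of size $\lambdas\cdot\Vert\Delta\Gs\Vert$ after accounting for the cancellation between the $\lambda_k$-weighted prompt at $G_k$ and the $(1-\lambda_k)$-weighted pre-trained model, and the $\Vert\Delta\Gs\Vert^4$ on the $|\hlambdan-\lambdas|^2$ term similarly tracks how the coefficient of the $f_0$-vs-$f$ difference scales as $G_*\to G_0$.

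I expect the main obstacle to be exactly this bookkeeping of coefficient magnitudes — i.e., verifying that with the proposed $D_2$ the ratio $h/D_2$ does decompose so that \emph{every} surviving coefficient of the limiting linear relation is forced to be nonzero (so that Proposition~\ref{prop:sigma-nonlinear-f0-Gaussian-varphi-nonlinear} applies), rather than some of them vanishing trivially and leaving a true relation. This requires carefully separating the regimes $\lambdas\Vert\Delta\Gs\Vert^2\gg |\hlambdan-\lambdas|\Vert\Delta\Gs\Vert^2$ versus the reverse, and treating the cases $\bar G\neq G_0$ (where ordinary identifiability/strong-identifiability of the Gaussian family over $(a,b,\nu)$ suffices and one recovers the $\widetilde{\mathcal O}(n^{-1/2})$-type behaviour) and $\bar G=G_0$ (where the $\Xi_2(l_n)$ constraint $l_n/\log n\to\infty$ is used to guarantee the scaling $\min_i\{|(\da)_i|^2,|\db|^2,|\dnu|^2\}\sqrt n\,\lambda\ge l_n$ keeps the estimator consistent and the expansion valid) separately. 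Once the local inequality is established, the passage to the stated expectation bounds is routine and mirrors Appendix~\ref{appendix:Proofs for the MLE rate}.
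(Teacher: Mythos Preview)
Your overall strategy matches the paper's: define a loss $D_2$, prove a global lower bound $V(p_{\lambda,G},p_{\lambdas,\Gs})\gtrsim D_2$ by contradiction via Taylor expansion and a linear-independence argument (Proposition~\ref{prop:sigma-nonlinear-f0-Gaussian-varphi-nonlinear} is indeed the tool that kills the heat-equation interaction), and then combine with Theorem~\ref{theorem:ConvergenceRateofDensityEstimation}. However, your treatment of the critical merging regime $\bar G=G_0$ contains a genuine gap.

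You write that ``distinguishability of $f$ from $f_0$ handles the extra $f_0$ term''. But in the present scenario $f_0=f$ and $\varphi=\sigma$, so $f$ is \emph{not} distinguishable from $f_0$ in the sense of Definition~\ref{def:distinguish}; this is exactly why the case is harder than Theorems~\ref{theorem:sigma-linear-f0-notGaussian}, \ref{theorem:sigma-linear-f0-Gaussian-varphi-nonlinear}, \ref{theorem:sigma-nonlinear-f0-notGaussian}, \ref{theorem:sigma-nonlinear-f0-Gaussian-varphi-linear}. The paper's actual argument (Lemma~\ref{applemma:sigma-nonlinear-f0-Gaussian-varphi-nonlinear}) uses that $f_0(Y|\varphi(a_0^{\top}X+b_0),\nu_0)=f(Y|\sigma(a_0^{\top}X+b_0),\nu_0)$ and Taylor-expands the term $(\lambdasn-\lambdan)\big[f_0-f(\cdot\,|\,\Gsn)\big]$ \emph{itself} to second order in $(\Delta\asn,\Delta\bsn,\Delta\nusn)$. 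Thus both pieces $S_n$ and $T_n$ of $p_{\lambdan,\Gn}-p_{\lambdasn,\Gsn}$ become linear combinations of the same family of partial derivatives $\partial^{|\eta|}f/\partial a^{\eta_1}\partial b^{\eta_2}\partial\nu^{\eta_3}$ at $\Gsn$ (equivalently, of $\partial^\tau f/\partial\sigma^\tau$ for $\tau\le 4$ with $X$-dependent coefficients $E_\tau(X)$). Only then does the linear-independence conclusion (essentially Proposition~\ref{prop:sigma-nonlinear-f0-Gaussian-varphi-nonlinear}) force the limiting coefficients to vanish and produce the contradiction. If you leave $f_0$ as a separate function and appeal to distinguishability, the argument simply fails in this regime. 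This second-order expansion of $f_0-f(\cdot|\Gsn)$ is also what generates the factor $\Vert\Delta\Gs\Vert^2$ in $D_2$ that you were trying to explain heuristically.

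A second, smaller correction: the constraint $\Xi_2(l_n)$ is not used in the proof of the local lower bound at all; Lemma~\ref{applemma:sigma-nonlinear-f0-Gaussian-varphi-nonlinear} holds on the whole of $\Xi$. It enters only afterwards. The raw consequence of the lemma plus Theorem~\ref{theorem:ConvergenceRateofDensityEstimation} is
\[
\mathbb{E}_{p_{\lambdas,\Gs}}\Big[\Vert\Delta\hGn\Vert^2\,\Vert\Delta\Gs\Vert^2\,|\hlambdan-\lambdas|^2\Big]\lesssim \frac{\log n}{n},
\]
and one needs a Cauchy--Schwarz step together with the second bound of the theorem to replace $\Vert\Delta\hGn\Vert^2$ by $\Vert\Delta\Gs\Vert^2$; the membership $(\lambdas,\Gs)\in\Xi_2(l_n)$ guarantees $\lambdas>0$ so that $|\hlambdan-\lambdas|/\lambdas$ is controllable and the second bound can be invoked. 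It does not replace the second-order expansion in the $\bar G=G_0$ case.
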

The proof of Theorem~\ref{theorem:sigma-nonlinear-f0-Gaussian-varphi-nonlinear} is in Appendix~\ref{appendixproof:sigma-nonlinear-f0-Gaussian-varphi-nonlinear}. Above, we involve the set $ \Xi_{2}(l_n) $ to ensure that the MLE $(\hlambdan,\hGn)=(\hlambdan,\han,\hbn,\hnun)$ is a consistent estimator of its ground-truth value $(\lambdas,\Gs)=(\lambdas,\as,\bs,\nus)$. A few comments regarding Theorem~\ref{theorem:sigma-nonlinear-f0-Gaussian-varphi-nonlinear} are in order.

(i) The first bound in Theorem \ref{theorem:sigma-nonlinear-f0-Gaussian-varphi-nonlinear} reveals that 
the convergence rate of $\hlambdan$ becomes slower than $\widetilde{\mathcal{O}}(n^{-1/2})$ and is subject to the vanishing rate of the quantity $\Vert (\das,\dbs,\dnus) \Vert^4$ as a result of the prompt merging issue. Note that $\Vert (\das,\dbs,\dnus) \Vert^4$ cannot approach zero faster than the term $(\Vert\das\Vert^4+|\dbs|^8+|\dnus|^4)$ in Theorem~\ref{theorem:sigma-linear-f0-Gaussian-varphi-linear}. Thus, the convergence rate of $\hlambdan$ in this theorem is technically faster than that in Theorem~\ref{theorem:sigma-linear-f0-Gaussian-varphi-linear}.

(ii) The second bound in Theorem~\ref{theorem:sigma-nonlinear-f0-Gaussian-varphi-nonlinear} suggests that the prompt parameter estimators $\han,\hbn,\hnun$ share the same convergence rate which depend on the vanishing rates of both $\lambdas$ and the term $\Vert (\das,\dbs,\dnus) \Vert^2$. Such convergence behavior is different from that in Theorem~\ref{theorem:sigma-linear-f0-Gaussian-varphi-linear} where the rate of $\hbn$ is only the squared root of those of $\han$ and $\hnun$. Moreover, since $\Vert (\das,\dbs,\dnus) \Vert^2$ cannot vanish faster than its counterpart $(\Vert\das\Vert^2+|\dbs|^4+|\dnus|^2)$ in Theorem~\ref{theorem:sigma-linear-f0-Gaussian-varphi-linear}, we deduce that the convergence rates of $\han,\hbn,\hnun$ are faster than those in Theorem~\ref{theorem:sigma-linear-f0-Gaussian-varphi-linear}. These rate improvements are owing to the disappearance of the parameter interaction in equation~\eqref{eq:heat_equation_1}.

Finally, we establish the minimax lower bounds for estimating true parameters $(\lambdas,\Gs)$ under this section setting to demonstrate that the above MLE convergence rates are optimal.

\begin{theorem}[Minimax lower bounds]
\label{appthm:lower-mle-Gaussian-nonlinear}
    Under the setting of Theorem~\ref{theorem:sigma-nonlinear-f0-Gaussian-varphi-nonlinear}, we have for any $0<r < 1$ and sequence $(l_n)_{n\geq 1}$ that
\begin{align*}
    &\inf_{(\llambdan,\overline{G}_n)\in \Xi }\sup_{(\lambda,G)\in \Xi_{2}(l_n)  }
    \mathbb{E}_{p_{\lambda,G}} \Big[ 
    \Vert (\da,\db,\dnu) \Vert^4\\
    &\hspace{4.4cm}\times|\overline{\lambda}_n
    -\lambda|^2 \Big] 
    \gtrsim n^{-1/r},
    \\
    &\inf_{(\llambdan,\overline{G}_n)\in \Xi }\sup_{(\lambda,G)\in \Xi_{2}(l_n) }
    \mathbb{E}_{p_{\lambda,G}} 
    \Big[ \lambda^2 
    \Vert(\da,\db,\dnu) \Vert^2\\
    &\hspace{2cm}\times\Vert (\overline{a}_n, \overline{b}_n, \overline{\nu}_n)-(a,b,\nu) \Vert^2 \Big] 
    \gtrsim n^{-1/r}.
\end{align*}
\end{theorem}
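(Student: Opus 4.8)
The plan is to establish both displayed inequalities by Le Cam's two‑point method, following the scheme used for Theorems~\ref{thm:lower-distinguish} and~\ref{appthm:lower-mle-Gaussian-linear}. For a loss $\ell$ and two hypotheses $\theta_1,\theta_2\in\Xi_2(l_n)$ with $\inf_{t}\big(\ell(t,\theta_1)+\ell(t,\theta_2)\big)\ge\Delta_n$, Le Cam's inequality gives
\begin{align*}
\inf_{(\llambdan,\lGn)\in\Xi}\ \sup_{(\lambda,G)\in\Xi_2(l_n)}\ \E_{p_{\lambda,G}}[\ell]\ \gtrsim\ \Delta_n\big(1-V(p_{\theta_1}^{\otimes n},p_{\theta_2}^{\otimes n})\big),
\end{align*}
where $V$ is the total variation distance; by tensorisation of the Hellinger affinity, $V(p^{\otimes n},q^{\otimes n})\le\sqrt{2n\,h^2(p,q)}$, so the factor $1-V$ stays bounded away from $0$ whenever $n\,h^2(p,q)$ is a small enough constant. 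It therefore suffices to construct, for each displayed inequality, sequences $\theta_{1,n},\theta_{2,n}\in\Xi_2(l_n)$ with $h^2(p_{\theta_{1,n}},p_{\theta_{2,n}})$ a small multiple of $1/n$ and loss separation $\Delta_n\gtrsim n^{-1/r}$. Since $n^{-1/r}$ increases in $r\in(0,1)$, the binding case is $r\uparrow1$; the obstruction to reaching the endpoint is precisely the membership constraint $\lambda\ge l_n/\big(\min_i\{|(\da)_i|^2,|\db|^2,|\dnu|^2\}\sqrt n\big)$ of $\Xi_2(l_n)$, which costs an $l_n$‑factor in any near‑merging construction.

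For the first inequality we use the prompt‑merging configuration described at the start of Section~\ref{section:sigma-nonlinear-f0-Gaussian-varphi-nonlinear}. Fix $u\in\R^{d+2}$ with all coordinates bounded away from $0$, put $G_n:=(a_n,b_n,\nu_n):=G_0+\delta_n u$ with $\delta_n\downarrow0$ chosen below, and take $\theta_{i,n}=(\lambda_{i,n},G_n)$ for $i=1,2$ with $\lambda_{1,n},\lambda_{2,n}$ of order $l_n/(\delta_n^2\sqrt n)$ and $|\lambda_{1,n}-\lambda_{2,n}|=\kappa/(\delta_n\sqrt n)$ for a small constant $\kappa$; since each coordinate of $\da,\db,\dnu$ is of order $\delta_n$ at $G_n$, both $\theta_{i,n}$ lie in $\Xi_2(l_n)$ once $\delta_n\gtrsim\sqrt{l_n}\,n^{-1/4}$. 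Because $f_0=f$ is Gaussian and $\varphi=\sigma$, the component $f(Y|\sigma(a_n^\top X+b_n),\nu_n)$ collapses onto $f_0(Y|\varphi(a_0^\top X+b_0),\nu_0)$, so
\begin{align*}
p_{\theta_{1,n}}-p_{\theta_{2,n}}=(\lambda_{1,n}-\lambda_{2,n})\big(f(Y|\sigma(a_n^\top X+b_n),\nu_n)-f_0(Y|\varphi(a_0^\top X+b_0),\nu_0)\big),
\end{align*}
and a first‑order Taylor expansion of the Gaussian prompt density in $(a,b,\nu)$ about $G_0$ shows the $L^2$‑norm of this component difference is of exact order $\delta_n$ — the matching lower bound is exactly where Proposition~\ref{prop:sigma-nonlinear-f0-Gaussian-varphi-nonlinear} (linear independence of the order‑one derivatives of $f$ in $(a,b,\nu)$) is invoked, to preclude cancellation of the leading term. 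Hence $h^2(p_{\theta_{1,n}},p_{\theta_{2,n}})\asymp(\lambda_{1,n}-\lambda_{2,n})^2\delta_n^2\asymp\kappa^2/n$, a small multiple of $1/n$, whereas $\Delta_n\asymp\|(\da,\db,\dnu)\|^4|\lambda_{1,n}-\lambda_{2,n}|^2\asymp\delta_n^4\cdot\kappa^2/(\delta_n^2n)=\kappa^2\delta_n^2/n$; taking $\delta_n:=\max\{n^{-(1/r-1)/2},\sqrt{l_n}\,n^{-1/4}\}$ gives $\Delta_n\gtrsim n^{-1/r}$, and Le Cam closes the first inequality.

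The second inequality uses the prompt‑vanishing mechanism. Fix unit vectors $u_1\ne u_2$ with all coordinates bounded away from $0$ and a constant $c_1>0$ with $G_0+c_1u_i\in\Theta$, let $G_{i,n}:=(a_{i,n},b_{i,n},\nu_{i,n})$ sit at a common constant distance from $G_0$ with $\|G_{1,n}-G_{2,n}\|=\rho_n\downarrow0$, and take $\theta_{i,n}=(\mu_n,G_{i,n})$ with a common $\mu_n$ of order $l_n/\sqrt n$, which lies in $\Xi_2(l_n)$ because the coordinates of $\da,\db,\dnu$ are of constant order. Here $p_{\theta_{1,n}}-p_{\theta_{2,n}}=\mu_n\big(f(Y|\sigma(a_{1,n}^\top X+b_{1,n}),\nu_{1,n})-f(Y|\sigma(a_{2,n}^\top X+b_{2,n}),\nu_{2,n})\big)$, and differentiating in $(a,b,\nu)$ at the intermediate point, again using Proposition~\ref{prop:sigma-nonlinear-f0-Gaussian-varphi-nonlinear}, shows the $L^2$‑norm of this difference is of order $\rho_n$, so $h^2(p_{\theta_{1,n}},p_{\theta_{2,n}})\asymp\mu_n^2\rho_n^2\asymp l_n^2\rho_n^2/n$; choosing $\rho_n$ a small multiple of $1/l_n$ makes $n\,h^2$ a small constant while $\Delta_n\asymp\mu_n^2\|(\da,\db,\dnu)\|^2\|G_{1,n}-G_{2,n}\|^2\asymp(l_n^2/n)\cdot1\cdot l_n^{-2}=1/n\gtrsim n^{-1/r}$, and Le Cam closes the second inequality. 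I expect the main obstacle to be the Hellinger estimate in the merging regime: one must verify that the leading Taylor terms of the Gaussian prompt density in $(a,b,\nu)$ are of exactly the claimed order and do not cancel — this is the content of Proposition~\ref{prop:sigma-nonlinear-f0-Gaussian-varphi-nonlinear}, i.e.\ the very disappearance of the heat‑equation interaction that in the linear‑$\varphi$ case forces the asymmetric $|\db|^8$ weighting of Theorem~\ref{appthm:lower-mle-Gaussian-linear} — while simultaneously reconciling the $\Xi_2(l_n)$ membership constraint with $n\,h^2\lesssim1$; the interplay of these two constraints is what produces the rate $n^{-1/r}$ for every $r<1$ rather than exactly $n^{-1}$. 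A secondary point is controlling the tails of the integrals defining $h^2$, which is handled by the boundedness of $\Theta$ and $\mathcal X$ and the Gaussian tail estimate underlying Theorem~\ref{theorem:ConvergenceRateofDensityEstimation}.
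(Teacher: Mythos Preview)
Your approach is essentially the same two–point Le Cam scheme the paper uses: for the $\lambda$–loss you perturb $\lambda$ at a common $G$, and for the $G$–loss you perturb $G$ at a common $\lambda$. The paper packages these two constructions as the non-symmetric ``distances''
\[
d_5((\lambda_1,G_1),(\lambda_2,G_2))=\lambda_1\|\Delta G_1\|\,\|G_1-G_2\|,\qquad
d_6((\lambda_1,G_1),(\lambda_2,G_2))=|\lambda_1-\lambda_2|\,\|\Delta G_1\|^2,
\]
observes that they satisfy a weak triangle inequality, invokes the modified Le Cam lemma (Lemma~C.1 in \cite{gadat2020parameter}), and asserts without detail that the required sequences exist in $\Xi_2(l_n)$ by analogy with Lemma~\ref{prop:lower-distinguish}. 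Your version is the explicit unpacking of exactly this: you exhibit the sequences, compute $h^2$, and check $\Xi_2(l_n)$–membership directly. In that sense your write-up is more complete than the paper's own proof.

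Two minor remarks. First, your appeal to Proposition~\ref{prop:sigma-nonlinear-f0-Gaussian-varphi-nonlinear} is not needed: Le Cam requires only an \emph{upper} bound on $h^2$, which follows from Taylor with remainder; the lower bound (non-cancellation of first-order terms) plays no role in the argument. Second, in your first construction the merging $\delta_n\downarrow 0$ is valid but unnecessary: taking $G_n$ at a fixed distance from $G_0$ already yields $\Delta_n\asymp|\lambda_{1,n}-\lambda_{2,n}|^2\asymp 1/n\gtrsim n^{-1/r}$ for every $r<1$, which is exactly what the paper's $d_6$ encodes. For the second construction, your Hellinger bound $h^2\lesssim\mu_n^2\rho_n^2$ (via $p_1+p_2\gtrsim f_0$) is sharper than the paper's $h^2\lesssim\mu_n\rho_n^2$ (via $p_1+p_2\gtrsim\mu_n f_2$, as in Lemma~\ref{prop:lower-distinguish}(i)); both are valid here since $\mathcal X$ is bounded and variances in $\Theta$ are bounded away from~$0$.
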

Proof of Theorem \ref{appthm:lower-mle-Gaussian-nonlinear} is in Appendix \ref{appendix:appthm:lower-mle-Gaussian-nonlinear}.

\begin{figure*}[t]
    \centering
    \subfloat[\textbf{Case (i):} $\lambda^* = 0.5$.\label{fig:thm2-fixed}]{
        \includegraphics[width=1\textwidth]{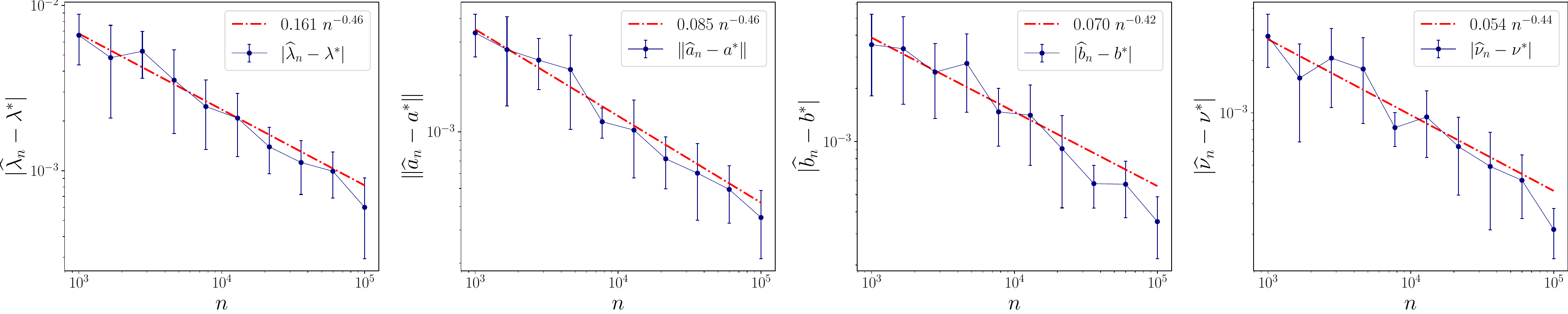}
    }
    
    \subfloat[\textbf{Case (ii):} $\lambda^* = 0.5~n^{-1/4}$.\label{fig:thm2-var}]{
        \includegraphics[width=1\textwidth]{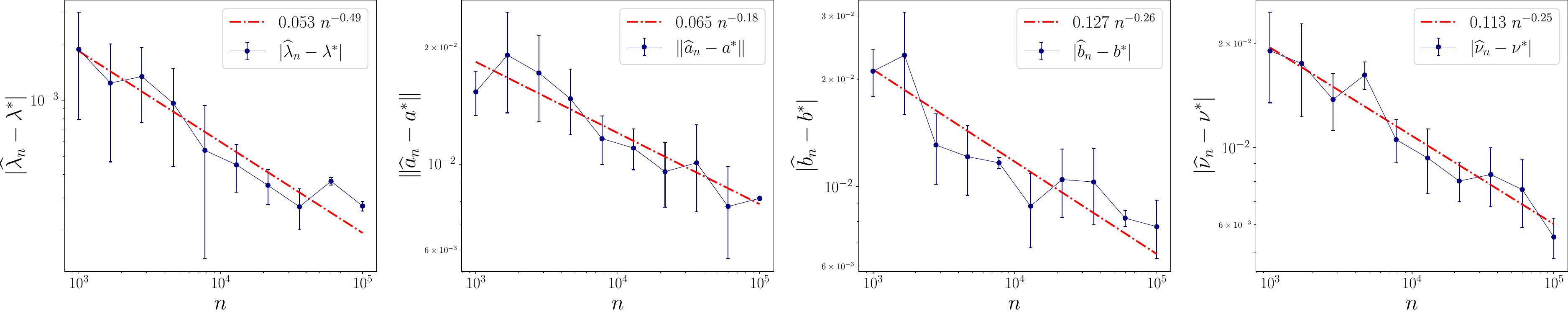}
    }
    \caption{Log-log graphs depicting the empirical convergence rates of the MLE $(\hlambdan,\han,\hbn,\hnun)$ to the ground-truth values $(\lambdas,\as,\bs,\nus)$. 
    The blue lines display the parameter estimation errors, while the red dashed dotted lines are the fitted lines, highlighting the empirical MLE convergence rates. Figure~\ref{fig:thm2-fixed} and Figure~\ref{fig:thm2-var} illustrates results for the cases when $\lambdas = 0.5$ and $\lambdas = 0.5~n^{-1/4}$, respectively.
    }
    \label{fig:both_plots}
\end{figure*}


\section{NUMERICAL EXPERIMENTS}
\label{sec:experiments}
In this section, we conduct numerical experiments to verify the theoretical results presented in Theorem~\ref{theorem:sigma-linear-f0-notGaussian}. Meanwhile, we defer additional experiments for other theorems to Appendix \ref{appendix:DiscussionandAdditionalExperiments}.

\textbf{Experimental setup.} 
In Theorem~\ref{theorem:sigma-linear-f0-notGaussian}, since the
pre-trained model $f_0$ does not belong to the Gaussian density family, we let $f_0$ be the density of a Student's t-distribution, characterized by mean $\varphi(a_0^{\top} X + b_0)=a_0^{\top} X + b_0$ and degrees of freedom $\nu_0 = 4$.
Here, $a_0$ is a $d$-dimension vector $(1, 0, \ldots, 0)$
and $b_0 = 0$. 
Meanwhile, the prompt $f$ is formulated as a Gaussian density with mean $\sigma((\as)^\top X + \bs) = (\as)^\top X + \bs$ and variance $\nu^*$. 


\textbf{Synthetic data generation.} We create synthetic datasets following the model outlined in equation~\eqref{eq:deviated-model}. Specifically, we generate data pairs $\{(X_i, Y_i)\}_{i=1}^n\in\mathcal{X}\times\mathcal{Y} \subset \mathbb{R}^d \times \mathbb{R}$ by initially drawing $X_i$ from a uniform distribution on the interval $[-1, 1]^d$ for $i = 1, \ldots, n$, consistently setting $d=8$ across all trials. Subsequently, $Y_i$ is produced according to the true density $p_{\lambdas,\Gs}(Y|X)$. Here, we consider two following cases of the true parameters $(\lambdas,\Gs) =(\lambdas,\as,\bs,\nus)$ to observe the difference in the MLE convergence behavior when $\lambdas$ is fixed versus when it varies with $n$: 

(i) $\lambdas=0.5,\as=\mathbbm{1}_d,\bs=1,\nus=0.01$;

(ii) $\lambdas=0.5~n^{-1/4},\as=\mathbbm{1}_d,\bs=1,\nus=0.01$.

\textbf{Training procedure.} We conducted $20$ experiments for each sample size $n$, ranging from $10^3$ to $10^5$. 
Prior to each experiment, parameter initialization was carefully set close to the true parameter values to mitigate potential optimization instabilities. 
Each experiment employed an EM algorithm~\citep{Jordan-1994} to derive the MLE, utilizing an SGD optimizer in PyTorch for the maximization step due to the absence of a universal closed-form solution.

\textbf{Results.}
Note that the convergence rates of $\hlambdan$ under both cases are close to $\mathcal{O}(n^{-1/2})$. However, there are differences among those of $(\han,\hbn,\hnun)$ in the two cases. In particular, in Figure~\ref{fig:thm2-fixed} when $\lambdas$ is fixed, they share similar rates of orders roughly $\mathcal{O}(n^{-1/2})$. By contrast, when $\lambdas$ is set to vanish at the rate  $\mathcal{O}(n^{-1/4})$, their rates become substantially slower as shown in Figure~\ref{fig:thm2-var}, which are of orders around $\mathcal{O}(n^{-1/4})$. Those empirical results totally align with Theorem~\ref{theorem:sigma-linear-f0-notGaussian}.


\section{DISCUSSION}
\label{sec:discussion}
In this paper, we carry out the convergence analysis of maximum likelihood parameter estimation under the contaminated MoE model which is the mixture of a frozen pre-trained model and a trainable prompt for learning downstream tasks. Due to the dependence of ground-truth parameters on the sample size, there are several challenges for the theoretical analysis, namely, the prompt vanishing when the mixing proportion tends to zero; the prompt merging when it converges to the pre-trained model; and the parameter interaction via the heat equation. For better understanding, we divide our analysis into multiple scenarios based upon the linear and non-linear structures of the expert functions as well as the distribution family of the pre-trained model. In each scenario, we provide corresponding MLE convergence rates along with their minimax lower bounds to show that those rates are optimal. Additionally, we also conduct experiments to verify our theoretical results. 

\textbf{Practical implications.} From our convergence analysis whose result is summarized in Table~\ref{table:parameter_rates}, we observe that the MLE convergence behaves better when the expert functions associated with the pre-trained model and the prompt do not share the same structure. Since several large-scale pre-trained models often adopt non-linear experts in practice, our theory implies that we can achieve a good performance on downstream tasks even when attaching the prompts with simple linear experts to those models. However, if ones still would like to employ non-linear experts in the prompt, then our analysis suggests that it is better to use different expert functions from those in the pre-trained model.

\textbf{Limitations and Future directions.} A few natural directions arise from our work. Firstly, the current contaminated MoE model considers only one prompt. It is of practical importance to extend the current analysis to the setting where several prompts are incorporated. The convergence rates for parameter estimation under that setting will provide us new insights into reduce the prompt length in fine-tuning large-scale AI models based on merging the close experts. Secondly, the current model assumes that the density $f$ belongs to the Gaussian family. We plan to generalize the current theoretical analysis for general family of density functions $f$. Finally, the gating function (mixing proportion) considered in this paper is input-independent. Thus, we can examine popular input-dependent gating functions, namely softmax gating \citep{shazeer2017topk,nguyen2023demystifying} and sigmoid gating \citep{csordas2023approximating,nguyen2024sigmoid}, in future works.

\section*{Acknowledgements}
NH acknowledges support from the NSF IFML 2019844 and the NSF AI Institute for Foundations of Machine Learning.

\bibliographystyle{apalike}
\bibliography{references}

\newpage
\appendix
\onecolumn
\centering
\textbf{\Large{Supplement to
``Understanding Expert Structures on Minimax Parameter Estimation in Contaminated Mixture of Experts''}}

\justifying
\setlength{\parindent}{0pt}
\vspace{.2in}


In this supplementary material, we provide additional results and proofs. 
At first, Appendix~\ref{appendix:Proofs for the MLE rate} will provide the proofs for convergence rates of parameter estimation.
Appendix \ref{appendix:lower-bounds-proof} focuses on the proofs for the minimax lower bounds for parameter estimation. 
Appendix \ref{appendix:ConvergenceRateofDensityEstimation} contains the general theory for convergence rates of densities (Theorem \ref{theorem:ConvergenceRateofDensityEstimation}) and their corresponding proofs. 
Appendix \ref{appendix:ProofsforAuxiliaryResults} includes auxiliary results and their proof. 
Finally, Appendix \ref{appendix:DiscussionandAdditionalExperiments} illustrates the numerical experiments for justifying the theoretical findings.

\section{PROOFS FOR PARAMETER ESTIMATION RATES}
\label{appendix:Proofs for the MLE rate}



In this section, we will provide the proofs for the MLE rate theorems discussed in Sections \ref{section:linear-sigma} and \ref{section:non-linear-sigma}, focusing on various expert structures.



\subsection{Proof of Theorem \ref{theorem:sigma-linear-f0-notGaussian}}
\label{appendixproof:sigma-linear-f0-notGaussian}

Now we consider the first situation we meet, the expert function $\sigma$ is of linear form and $f_0$ is not a Gaussian density.
For the MLE $(\widehat{\lambda}_n,\widehat{G}_n) = (\widehat{\lambda}_n,\widehat{a}_n,\widehat{b}_n,\widehat{\nu}_n)$ as defined in equation (\ref{eq:MLE}),
we obtain the MLE convergence rates as follows:
\begin{align*}
    \sup_{(\lambdas,G_*)\in\Xi}\mathbb{E}_{p_{\lambdas,\Gs}} 
    \Big[
    |\widehat{\lambda}_n
    -\lambdas|^2 
    \Big] 
    \lesssim \frac{\log n}{n},\\
    \sup_{(\lambdas,G_*)\in\Xi}\mathbb{E}_{p_{\lambdas,\Gs}} 
    \Big[
    (\lambdas)^2 
    \Vert 
    (\widehat{a}_n, \widehat{b}_n, \widehat{\nu}_n)-(\as,\bs,\nus) 
    \Vert^2 
    \Big] 
    \lesssim 
    \frac{\log n}{n}.
\end{align*}

\begin{proof}[Proof of Theorem \ref{theorem:sigma-linear-f0-notGaussian}]
In order to prove the above results, 
for any $(\lambda, G),(\lambda^*, G_*)$, we define a loss
\begin{align}
\label{applossdef:D1-loss}
    D_1((\lambda, G),(\lambda^*, G_*))
    =
    |\lambda-\lambda^*|
    +(\lambda+\lambda^*)
    \|(a,b,\nu)-(a^*,b^*,\nu^*)\|.
\end{align}
Then, we claim that there exists a positive constant $C_1$ that depends on $\Theta$, $f_0$ and satisfies the following for all $(\lambda, G)$ and $(\lambda^*, G_*)$:
\begin{align}
\label{applossineq:D1-loss}
    V(p_{\lambda, G},p_{\lambda^*, G_*})\geq C_1D_1((\lambda, G),(\lambda^*, G_*)).
\end{align}

Then it follows from Theorem \ref{theorem:ConvergenceRateofDensityEstimation}
that:
\begin{align*}
    \mathbb{E}_{\plbgs}\Big( |\widehat{\lambda}_n-\lambdas|+\lambdas \Vert (\widehat{a}_n,\widehat{b}_n,\widehat{\nu}_n)-(\as,\bs,\nus) \Vert \Big)
    &\lesssim 
    \mathbb{E}_{\plbgs}
     V(p_{\widehat{\lambda}_n,\widehat{G}_n}, p_{\lambdas,G_{*}})
     \leq
     \mathbb{E}_{\plbgs}
     h(p_{\widehat{\lambda}_n,\widehat{G}_n}, p_{\lambdas,G_{*}})
     \nonumber\\
     &\lesssim^{Thm\ref{theorem:ConvergenceRateofDensityEstimation}}
\sqrt{\frac{\log n}{n}}.
\end{align*}
Since all inequalities are uniform with respect to $(\lambdas,\Gs)$, we can conclude the theorem.    
\end{proof}


We will now prove the claim stated in equation \eqref{applossineq:D1-loss}. 
To establish the inequality in equation \eqref{applossineq:D1-loss}, 
we require a distinguishability condition between \( f_0 \) and \( f \), which involves the higher-order derivatives of the density.

\begin{lemma}
\label{applemma:sigma-linear-f0-notGaussian}
Under the setting of Theorem \ref{theorem:sigma-linear-f0-notGaussian}, i.e. the expert function $\sigma$ is of linear form and $f_0$ is not a Gaussian density, 
for any component $(a,b,\nu)\in\Theta$ ,
    if we have real coefficients $\eta,\tau_{\alpha}$ for all $\alpha=(\alpha_1,\alpha_2,\alpha_3)\in\mathbb{N}^{d}\times\mathbb{N}\times\mathbb{N}$,
    $|\alpha|=|\alpha_1|+|\alpha_2|+|\alpha_3|\leq1$ such that
    \begin{align}
    \label{eq:first-order-distinguish}
    \eta f_0(Y|\varphi (a_0^{\top}X+b_0 ),\nu_0)
    + 
    \sum_{|\alpha|\leq1} \tau_{\alpha}
    \frac{\partial^{|\alpha|}f}{\partial a^{\alpha_1}\partial b^{\alpha_2}
    \partial \nu^{\alpha_3}}
    (\ysigmaa)=0
    \end{align}
for almost surely $(X,Y)\in\mathcal{X}\times\mathcal{Y}$, then we will have $\eta=\tau_{\alpha}=0$ for all $|\alpha|\leq1$.
\end{lemma}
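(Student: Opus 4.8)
The plan is to proceed in two stages: first eliminate the coefficient $\eta$ of the pre-trained density, and then eliminate all the coefficients $\tau_\alpha$ attached to the Gaussian density $f$ and its first-order partial derivatives. The preliminary move is to make the left-hand side of the hypothesised identity fully explicit. Writing the linear expert as $\sigma(z)=a'z+b'$ with $a'\neq 0$ (this is exactly the standing assumption that $\sigma$ is non-constant), the mean $m(X):=\sigma(a^{\top}X+b)=a'\,a^{\top}X+(a'b+b')$ is affine in $X$, and a direct computation with the Gaussian density yields $\partial f/\partial a_j=a'X_j\,\tfrac{Y-m}{\nu}f$, $\partial f/\partial b=a'\,\tfrac{Y-m}{\nu}f$, and $\partial f/\partial\nu=\tfrac12\big(\tfrac{(Y-m)^2}{\nu^2}-\tfrac1\nu\big)f$, where $f=f(Y\mid m(X),\nu)$. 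Substituting these into the hypothesised relation turns it into
\[
\eta\, f_0\big(Y\mid\varphi(a_0^{\top}X+b_0),\nu_0\big)+R(X,Y)\,f(Y\mid m(X),\nu)=0\qquad\text{a.e. }(X,Y),
\]
where $R(X,Y)$ is a polynomial in $Y$ of degree at most $2$ whose coefficients are affine in $X$; its $Y^2$-coefficient is $\tau_\nu/(2\nu^2)$, and after one substitutes $\tau_\nu=0$ its $Y$-coefficient becomes $\tfrac{a'}{\nu}\big(\tau_b+\sum_j\tau_{a_j}X_j\big)$, while its constant term becomes $\tau_0$ (the coefficient attached to $f$ itself).

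For the first stage I would fix any single $X$ for which the displayed identity holds for a.e. $Y$, and read it as a functional identity in $Y$. If $\eta\neq 0$, it would express the density $f_0(\cdot\mid\varphi(a_0^{\top}X+b_0),\nu_0)$ as a polynomial of degree at most $2$ times the Gaussian density $f(\cdot\mid m(X),\nu)$; taking Fourier transforms in $Y$, the characteristic function of $f_0$ would then be a degree-$\le 2$ polynomial in $t$ times $\exp(i m(X)t-\nu t^2/2)$, i.e. it would have strictly Gaussian decay. This contradicts the hypothesis that $f_0$ is not a Gaussian density — the same structural obstruction already used in the proof of Proposition~\ref{lemma:distinguish-linear sigma not Gaussian} (via the tail, equivalently the characteristic-function, behaviour of $f_0$) — so $\eta=0$. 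I expect this to be the crux of the argument: everything around it is routine, but here one must convert the informal ``$f_0$ is not Gaussian'' into the precise statement that $f_0(\cdot\mid\mu_0,\nu_0)$ is not a low-degree polynomial times a Gaussian, which is where non-Gaussianity genuinely enters.

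Once $\eta=0$, the identity reduces to $R(X,Y)\,f(Y\mid m(X),\nu)=0$ a.e., and since the Gaussian density is strictly positive we get $R(X,Y)=0$ for a.e. $(X,Y)$, hence $R(X,\cdot)\equiv 0$ as a polynomial in $Y$ for a.e. fixed $X$, so all of its coefficients vanish. Matching the $Y^2$-coefficient gives $\tau_\nu=0$; the $Y$-coefficient then reads $\tfrac{a'}{\nu}\big(\tau_b+\sum_j\tau_{a_j}X_j\big)=0$ for a.e. $X$, and since $a'\neq 0$, $\nu>0$, and the functions $1,X_1,\dots,X_d$ are linearly independent on $\mathcal{X}$ (the distribution of $X$ is continuous, so its support cannot lie in a hyperplane), this forces $\tau_b=0$ and $\tau_{a_j}=0$ for every $j$; finally the constant coefficient collapses to $\tau_0=0$. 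Together with $\eta=0$, this yields the claim. Note that this last stage needs no assumption on $a$ (it works even if $a=0_d$) nor on the structure of $\varphi$; only $a'\neq 0$ and the continuity of the input distribution are used.
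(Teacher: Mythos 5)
Your proof is correct and follows the same skeleton as the paper's: expand the first-order derivatives of the Gaussian $f$ in the $Y$-variable, separate the $f_0$ piece from the $f$ piece, and then identify coefficients. The two write-ups diverge in execution in a way worth spelling out. The paper passes through the heat equation to rewrite $\partial f/\partial\nu$ as $\tfrac12\partial^2 f/\partial\sigma^2$, then invokes Lemma~\ref{lemma:independent_gaussian} (linear independence of $\{\partial^k f/\partial\sigma^k\}_{k=0,1,2}$, proved by Fourier transform) together with ``$f_0\neq f$'' to declare all four coefficients zero, after which the $X$-dependent coefficient of $\partial f/\partial\sigma$, namely $(\tau_{\alpha_1}^\top X+\tau_{\alpha_2})\sigma'(a^\top X+b)$, is set to zero to kill $\tau_{\alpha_1},\tau_{\alpha_2}$. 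You instead make the derivatives explicit as a degree-$\le2$ polynomial $R(X,Y)$ times $f$, and match $Y$-powers directly — which is literally the same linear-independence fact, unrolled. That part of your proof is cleaner and more self-contained than the paper's, and your explicit observation that the continuous input distribution makes $1,X_1,\dots,X_d$ linearly independent on the support fills a step the paper leaves tacit. The one place where you add genuine content is the elimination of $\eta$: the paper waves at ``$f_0\neq f$ and linear independence,'' whereas you show concretely that $\eta\neq0$ would force the characteristic function of $f_0(\cdot\mid\mu_0,\nu_0)$ to be a degree-$\le2$ polynomial in $t$ times $e^{\,i m t - \nu t^2/2}$. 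You are also right to flag that this only contradicts the slightly stronger statement ``$f_0$ is not a low-degree polynomial times a Gaussian kernel,'' not the literal ``$f_0$ is not Gaussian'' — but the paper's proof of Proposition~\ref{lemma:distinguish-linear sigma not Gaussian} relies on the same implicit strengthening, so you have inherited, rather than introduced, that looseness. In short: same route, your version is more explicit in the two places where the paper is terse, and both share the same convention about how to read the non-Gaussianity hypothesis.
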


  

\begin{proof}[Proof of Lemma \ref{applemma:sigma-linear-f0-notGaussian}]
Under the setting of Theorem \ref{theorem:sigma-linear-f0-notGaussian}, equation \eqref{eq:first-order-distinguish} could be expressed as
\begin{align*}
    \eta f_0+\tau_{\alpha_0}f+\sum_{i=1}^d\tau_{\alpha_{1,i}}\dfrac{\partial f}{\partial a_i} +\tau_{\alpha_2}\dfrac{\partial f}{\partial b} 
    +\tau_{\alpha_3}\dfrac{\partial f}{\partial \nu}=
    \eta f_0+\tau_{\alpha_0}f
    +
    \left(
    \sum_{i=1}^d\tau_{\alpha_{1,i}}\dfrac{\partial \sigma}{\partial a_i} 
    +
    \tau_{\alpha_2}\dfrac{\partial \sigma}{\partial b} 
    \right)\dfrac{\partial f}{\partial \sigma}
    +\tau_{\alpha_3}\frac{1}{2}\dfrac{\partial^2 f}{\partial \sigma^2}=0.
\end{align*}
Since $f_0\neq f$ and $\displaystyle\dfrac{\partial^k f}{\partial \sigma^k}, k=0,1,2$ are linear independent (see Lemma \ref{lemma:independent_gaussian}).
The coefficients of $f_0$, $f$, $\displaystyle\dfrac{\partial f}{\partial \sigma}$ and $\displaystyle\dfrac{\partial^2 f}{\partial \sigma^2}$ must be zero.
Now consider the coefficient for $\displaystyle\dfrac{\partial f}{\partial \sigma}$:
\begin{align}
\label{appeq:thm2-lemma1}
    \sum_{i=1}^d\tau_{\alpha_{1,i}}\dfrac{\partial \sigma}{\partial a_i} 
    +
    \tau_{\alpha_2}\dfrac{\partial \sigma}{\partial b} 
    =0 
    \quad\Longleftrightarrow\quad
    \left(
    \tau_{\alpha_1}^{\top}X+\tau_{\alpha_2}
    \right)
    \sigma^{\prime}(a^{\top}X+b)
    =0.
\end{align}
Here we consider $\sigma$ is a linear function
and $\sigma^{\prime}\neq0$
in a non-zero measure set in $\mathbb{R}$,
so only when $\tau_{\alpha_1}=0$ and $\tau_{\alpha_2}=0$, 
equation \eqref{appeq:thm2-lemma1}  will occur, which means that
$
\eta = \tau_{\alpha} = 0 \text{ for all } |\alpha| \leq 1
$.
\end{proof}


\begin{proof}[Proof of the claim in equation \eqref{applossineq:D1-loss}]
Now we will prove the inequality in equation \eqref{applossineq:D1-loss}.
Let $\overline{G}=(\Bar{a},\Bar{b},\Bar{\nu})$, we need to demonstrate that
 \begin{align*}
     \lim_{\varepsilon\to 0}\inf_{(\lambda,G),(\lambda^*,G_*)}\left\{\frac{V(p_{\lambda, G},p_{\lambda^*, G_*})}{D_1((\lambda,G),(\lambda^*,G_*))}: 
     D_1((\lambda,G),(\overline{\lambda},\overline{G}))\vee D_1((\lambda^*, G_*),(\overline{\lambda},\overline{G}))\leq\varepsilon\right\}>0.
 \end{align*}
Using the argument with Fatou's lemma as in Theorem 3.1, \citep{Ho-Nguyen-EJS-16} , it is sufficient to show that
\begin{align*}
     \lim_{\varepsilon\to 0}\inf_{(\lambda,G),(\lambda^*,G_*)}\left\{\frac{\|p_{\lambda, G}-p_{\lambda^*, G_*}\|_{\infty}}{D_1((\lambda,G),(\lambda^*,G_*))}:
     D_1((\lambda,G),(\overline{\lambda},\overline{G}))\vee D_1((\lambda^*, G_*),(\overline{\lambda},\overline{G}))\leq\varepsilon\right\}>0.
 \end{align*}
 Assume by contrary that the above claim is not true. 
Then, there exist two sequences $G_n=(a_n,b_{n},\nu_{n})$ and $G_{*,n}=(a_n^*, b^*_{n},\nu^*_{n})$, and two sequences of mixing proportions $\lambda_n$ and $\lambda_n^* \in [0, 1]$
 such that when $n$ tends to infinity, we get
 \begin{align*}
     \begin{cases}
        D_1((\lambda_n,G_n),(\overline{\lambda},\overline{G})) \to 0 ,\\
        D_1((\lambda^*_n,G_{*,n}),(\overline{\lambda},\overline{G})) \to 0 ,\\
        {\|p_{\lambda_n,G_n}-p_{\lambda_n^*,G_{*,n}}\|_{\infty}}/D_1((\lambda_n,G_n),(\lambda_n^*,G_{*,n}))\to 0.
     \end{cases}
 \end{align*}
 In this proof, we will take into account only the most challenging setting of $G_n=(a_n, b_n, \nu_n)$ and $G^*_n=(a^*_n, b^*_n, \nu^*_n)$ when they converge to the same limit point $(a',b', \nu')$, where $(a',b', \nu')$ is not necessarily equal to $(\Bar{a},\Bar{b}, \Bar{\nu} )$ as $\lambda_n, \lambda_n^* \in [0, 1]$ can go to $0$ or $1$ in the limit. 
 Next, we consider
 \begin{align*}
     p_{\lambdan, \Gn}(X,Y)-p_{\lambdasn, \Gsn}(X,Y)
     &=(\lambdasn-\lambdan)
     [f_0(\yphizero)
     -
     f(Y|\sigma((a_n^*)^{\top}X+b_n^*),\nu_n^*)
     ]\Bar{f}(X)\\
     &~+\lambdan[
     f(Y|\sigma((a_n)^{\top}X+b_n),\nu_n)-f(Y|\sigma((a_n^*)^{\top}X+b_n^*),\nu_n^*)
     ] \Bar{f}(X).
 \end{align*}
 By applying the Taylor expansion up to the first order, 
 we get
  \begin{align*}
     &f(Y|\sigma((a_n)^{\top}X+b_n),\nu_n)-f(Y|\sigma((a_n^*)^{\top}X+b_n^*),\nu_n^*)\\
     &=\left[
     \sum_{u=1}^{d}(a_n-a_n^*)^{(u)}
     \frac{\partial f}{\partial a^{(u)}}
     +(b_n-b_n^*)^{}
     \frac{\partial f}{\partial b^{}}
     +(\nu_n-\nu_n^*)^{}
     \frac{\partial f}{\partial \nu^{}}
     \right]
     (
     Y|\sigma((a_n^*)^{\top}X+b_n^*),\nu_n^*)
     +R(X,Y),
 \end{align*}
where 
 $R(X,Y)=\bigO(\|(a_n, b_n, \nu_n)-(a^*_n, b^*_n, \nu^*_n)\|^{1+\gamma})$ 
 for some $\gamma>0$ is the remainder from the Taylor expansion.
 Since $f$ is the p.d.f of the univariate location-scale Gaussian distribution, we have the following partial differential equation (PDE):
 \begin{align*}
     \frac12
     \frac{\partial^2f}{\partial \sigma^2}
     (Y|\sigma((a_n^*)^{\top}X+b_n^*),\nu_n^*)
     =
     \frac{\partial f}{\partial \nu}
     (Y|\sigma((a_n^*)^{\top}X+b_n^*),\nu_n^*
     ).
 \end{align*}
 Thus, the difference $p_{\lambdan,\Gn}(X,Y)-p_{\lambdasn,\Gsn}(X,Y)$ is reduced to
 \begin{align}
    \label{eq:distinguishable_independent_taylor}
     &p_{\lambdan,\Gn}(X,Y)-p_{\lambdasn,\Gsn}(X,Y)
     =(\lambdasn-\lambdan)[f_0(\yphizero)-f(Y|\sigma((a_n^*)^{\top}X+b_n^*),\nu_n^*)]\Bar{f}(X)
     \nonumber\\
     &
     +\lambdan
     \Big[
     \sum_{u=1}^{d}(a_n-a_n^*)^{(u)}
     \frac{\partial \sigma}{\partial a^{(u)}}(X, a_n^*, b_n^*)
     \frac{\partial f}{\partial \sigma}
     (
     Y|\sigma((a_n^*)^{\top}X+b_n^*),\nu_n^*)
     \\&
     +(b_n-b_n^*)^{}
     \frac{\partial \sigma}{\partial b}(X, a_n^*, b_n^*)
     \frac{\partial f}{\partial \sigma }
     (
     Y|\sigma((a_n^*)^{\top}X+b_n^*),\nu_n^*)
     \\&
     +
     \frac12
     (\nu_n-\nu_n^*)^{}
     \frac{\partial^2 f}{\partial \sigma^2 }
     (
     Y|\sigma((a_n^*)^{\top}X+b_n^*),\nu_n^*)
     +R(X,Y)
     \Big]
     \Bar{f}(X).
 \end{align}
 From the formulation of $\done$, it is obvious that
\begin{align*}
     R(X,Y)\Bar{f}(X)/\done=
     \bigO(\|(a_n, b_n, \nu_n)-(a^*_n, b^*_n, \nu^*_n)\|^{\gamma})\to 0
 \end{align*}
 as $n\to\infty$. Therefore, the quantity $[p_{\lambdan,\Gn}(X,Y)-p_{\lambdasn,\Gsn}(X,Y)]/\done$ can be seen as a linear combination of elements in the following set
  \begin{align*}
     \mathcal{S}=\Big\{
     &f_0(\yphizero)\Bar{f}(X),
     f(Y|\sigma((a_n^*)^{\top}X+b_n^*,\nu_n^*))\Bar{f}(X),
     \\&
     \frac{\partial \sigma}{\partial a^{(u)}}(X, a_n^*, b_n^*)
     \frac{\partial f}{\partial \sigma}
     (
     Y|\sigma((a_n^*)^{\top}X+b_n^*,\nu_n^*))
     \Bar{f}(X),
     \\&
     \frac{\partial \sigma}{\partial b}(X, a_n^*, b_n^*)
     \frac{\partial f}{\partial \sigma }
     (
     Y|\sigma((a_n^*)^{\top}X+b_n^*,\nu_n^*))
     \Bar{f}(X),
     \\&
     \frac{\partial^2 f}{\partial \sigma^2 }
     (
     Y|\sigma((a_n^*)^{\top}X+b_n^*,\nu_n^*))
     \Bar{f}(X)
     :u\in[d] \Big\}.
 \end{align*}
 Assume that the coefficients associated with those terms go to 0 as $n\to\infty$. Thus, we obtain that
\begin{align*}
    \frac{\lambdasn-\lambdan}{\done}\to 0, 
    \quad\frac{\lambdan(a_n-a_n^*)^{(u)}}{\done}\to 0, \\
    \quad\frac{\lambdan(b_n-b_n^*)^{}}{\done}\to 0,
    \quad\frac{\lambdan(\nu_n-\nu_n^*)^{}}{\done}\to 0,
\end{align*}
as $n\to\infty$ for all $u\in[d]$. 
The last three limits imply that
\begin{align*}
    (\lambdan+\lambdasn)\|(a_n,b_n,\nu_n)-(a^*_n,b^*_n,\nu^*_n)\|/\done\to 0.
\end{align*}
Consequently, we achieve that
\begin{align*}
    1=\left[|\lambdasn-\lambdan|+(\lambdasn+\lambdan)
    \|
    (a_n,b_n,\nu_n)-(a^*_n,b^*_n,\nu^*_n)
    \|\right]/\done\to 0,
\end{align*}
which is a contradiction. Thus, not all the coefficients of elements in the set $\mathcal{S}$ tend to 0 as $n\to\infty$. Let us denote by $m_n$ the maximum of the absolute values of those coefficients. It follows from the previous result that $1/m_n\not\to\infty$. Therefore, we get
\begin{align}
    \label{eq:distinguishable_independent_fatou}
    &\frac{1}{m_n}\frac{p_{\lambdan,\Gn}(X,Y)-p_{\lambdasn,\Gsn}(X,Y)}{\done}
    \\&\to\eta 
    f_0(X|\yphizero)+
    \sum_{\ell=0}^2\alpha_{\ell}(X)
    \frac{\partial^\ell f}{\partial \sigma^{\ell}}
    (Y|
    \sigma((a'_n)^{\top}X+b'_n),\nu'_n
    )=0, 
\end{align}
where
\begin{align*}
    \eta&=\lim_{n\to\infty}\frac{1}{m_n}\frac{(\lambdasn-\lambdan)}{\done},
    \quad 
    \alpha_0(X)=\lim_{n\to\infty}\frac{1}{m_n}\frac{-(\lambdasn-\lambdan)}{\done},\\
    \alpha_1(X)&=
    \lim_{n\to\infty}
    \left[
    \sum_{u=1}^{d}\frac{1}{m_n}
    \frac{\lambdan(a_n-a_n^*)^{(u)}}{\done}
    \frac{\partial \sigma}{\partial a ^{(u)}}(X,a',b')
    +
    \frac{1}{m_n}
    \frac{\lambdan(b_n-b_n^*)^{}}{\done}
    \frac{\partial \sigma}{\partial b ^{}}(X,a',b')
    \right]
    ,\\
    \alpha_2(X)&=
    \lim_{n\to\infty}
    \frac{1}{2 m_n}\frac{\lambdan(\nu_n-\nu_n^*)^{}}{\done}
    .
\end{align*}
Recall that $f$ is distinguishable from $f_0$,
we obtain that $\eta=\alpha_0(X)=\alpha_1(X)=\alpha_2(X)=0$ for almost surely $X$. 
Since $\sigma(a^{\top} X+b)$ and $\nu$ are algebraically independent, 
this result implies that 
$$
\lim_{n\to\infty}\frac{\lambdan(a_n-a_n^*)^{(u)}}{\done} =
\lim_{n\to\infty}\frac{\lambdan(b_n-b_n^*)^{}}{\done} =
\lim_{n\to\infty}\frac{\lambdan(\nu_n-\nu_n^*)^{}}{\done} =
0$$ 
for all $u\in[d]$. 
This contradicts the fact that not all the coefficients of elements in the set $\mathcal{S}$ vanish as $n\to\infty$.
Hence, we reach the desired conclusion.
\end{proof}


\subsection{Proof of Theorem~\ref{theorem:sigma-linear-f0-Gaussian-varphi-nonlinear}}
\label{appendixproof:sigma-linear-f0-Gaussian-varphi-nonlinear}
In this section, we draw our attention to the scenario where $f_0$ is a Gaussian density, the expert $\sigma$ is of linear form, while $\varphi$ is a non-linear expert. 
\begin{proof}[Proof of Theorem~\ref{theorem:sigma-linear-f0-Gaussian-varphi-nonlinear}]
As discussed in Section \ref{section:sigma-linear-f0-Gaussian-varphi-nonlinear}, the expert model $\sigma((\as)^{\top}X+\bs)$ cannot converge to its counterpart $\varphi(a_0^{\top}X+b_0)$ due to the inherent differences in their structures. Consequently, the prompt $f(Y|\sigma((\as)^{\top}X+\bs),\nus)$ won't converge to the pre-trained model $f_0(Y|\varphi(a_0^{\top}X+b_0),\nu_0)$.
Following the same reasoning as in Lemma \ref{applemma:sigma-linear-f0-notGaussian}, we can validate that
for any component $(a,b,\nu)\in\Theta$ ,
    if we have real coefficients $\eta,\tau_{\alpha}$ for all $\alpha=(\alpha_1,\alpha_2,\alpha_3)\in\mathbb{N}^{d}\times\mathbb{N}\times\mathbb{N}$,
    $|\alpha|=|\alpha_1|+|\alpha_2|+|\alpha_3|\leq1$ such that
$$
    \eta f_0(Y|\varphi (a_0^{\top}X+b_0 ),\nu_0)
    + 
    \sum_{|\alpha|\leq1} \tau_{\alpha}
    \frac{\partial^{|\alpha|}f}{\partial a^{\alpha_1}\partial b^{\alpha_2}
    \partial \nu^{\alpha_3}}
    (\ysigmaa)=0
$$
for almost surely $(X,Y)\in\mathcal{X}\times\mathcal{Y}$, then we will have $\eta=\tau_{\alpha}=0$ for all $|\alpha|\leq1$.
Building on this, and similar to the discussion in the proof of the claim in equation \eqref{applossineq:D1-loss}, we can conclude that
there exists a positive constant $C_1$ that depends on $\Theta$, $f_0$ and satisfies the following for all $(\lambda, G)$ and $(\lambda^*, G_*)$:
\begin{align*}
    C_1
    \left(
    |\lambda-\lambda^*|
    +(\lambda+\lambda^*)
    \|(a,b,\nu)-(a^*,b^*,\nu^*)\|
    \right)
    \leq
    V(p_{\lambda, G},p_{\lambda^*, G_*}).
\end{align*}
Recall Theorem \ref{theorem:ConvergenceRateofDensityEstimation},
we will have
that:
\begin{align*}
    \mathbb{E}_{\plbgs}\Big( |\widehat{\lambda}_n-\lambdas|+\lambdas \Vert (\widehat{a}_n,\widehat{b}_n,\widehat{\nu}_n)-(\as,\bs,\nus) \Vert \Big)
     \lesssim
\sqrt{\frac{\log n}{n}}.
\end{align*}
Since all inequalities hold uniformly with respect to $(\lambdas, \Gs)$, we can thus conclude the theorem.
\end{proof}

\subsection{Proof of Theorem~\ref{theorem:sigma-nonlinear-f0-notGaussian}}
\label{appendixproof:sigma-nonlinear-f0-notGaussian}


In this section, we focus on the scenario where $f_0$ does not belong to the family of Gaussian densities and the expert $\sigma$ takes a non-linear form.

\begin{proof}[Proof of Theorem~\ref{theorem:sigma-nonlinear-f0-notGaussian}]

Firstly, when $f_0$ does not belong to the family of Gaussian densities, $f_0\neq f$ and ${\partial^k f}/{\partial \sigma^k}, k=0,1,2$ are linear independent. 
When equation \eqref{eq:first-order-distinguish} holds, we will have $\eta=\tau_{\alpha_0}=\tau_{\alpha_3}=0$ and
$\left(
    \tau_{\alpha_1}^{\top}X+\tau_{\alpha_2}
    \right)
    \sigma^{\prime}(a^{\top}X+b)
    =0$.
Since $\sigma^{\prime}(a^{\top}X + b)$ is a non-linear function, it is natural to assume that the first derivative of $\sigma$ is non-zero on a set of non-zero measure in $\mathbb{R}$, which serves as our global assumption.
Then we can verify that $f$ is distinguishable from $f_0$ as in Lemma \ref{applemma:sigma-linear-f0-notGaussian}.
Based on this distinguishability condition, following the same argument in Appendix \ref{appendixproof:sigma-linear-f0-notGaussian} we will have $\forall (\lambda, G),(\lambda^*, G_*)\in\Xi$,
\begin{align*}
    |\lambda-\lambda^*|
    +(\lambda+\lambda^*)
    \|(a,b,\nu)-(a^*,b^*,\nu^*)\|
    \lesssim
    V(p_{\lambda, G},p_{\lambda^*, G_*}).
\end{align*}
Recall Theorem \ref{theorem:ConvergenceRateofDensityEstimation},
we obtain the following:
\begin{align*}
    \mathbb{E}_{\plbgs}\Big( |\widehat{\lambda}_n-\lambdas|+\lambdas \Vert (\widehat{a}_n,\widehat{b}_n,\widehat{\nu}_n)-(\as,\bs,\nus) \Vert \Big)
     \lesssim
\sqrt{\frac{\log n}{n}}.
\end{align*}
Since all inequalities hold uniformly with respect to $(\lambdas, \Gs)$, we can thus conclude the theorem.
\end{proof}


\subsection{Proof of Theorem~\ref{theorem:sigma-nonlinear-f0-Gaussian-varphi-linear}}
\label{appendixproof:sigma-nonlinear-f0-Gaussian-varphi-linear}


In this section, we concentrate on the scenario where $f_0$ and $f$ are both Gaussian distribution and the expert $\sigma$ takes a non-linear form, while $\varphi$ is a linear expert function. 

\begin{proof}[Proof of Theorem~\ref{theorem:sigma-nonlinear-f0-Gaussian-varphi-linear}]
In fact, we can view this scenario as the opposite of Appendix \ref{appendixproof:sigma-linear-f0-Gaussian-varphi-nonlinear}, where the expert $\sigma$ is linear and its counterpart $\varphi$ is non-linear. Both scenarios share a common feature: the distinction in the structure of the experts. As a result, the prompt $f(Y|\sigma((\as)^{\top}X + \bs), \nus)$ does not converge to the pre-trained model $f_0(Y|\varphi(a_0^{\top}X + b_0), \nu_0)$.
Therefore, following the reasoning in Appendix \ref{appendixproof:sigma-linear-f0-Gaussian-varphi-nonlinear} and assuming that the first derivative of the non-linear function $\sigma^{\prime}(a^{\top}X + b)$ is non-zero on a set of non-zero measure in $\mathbb{R}$, as our global assumption, we can conclude the theorem.

A more detailed explanation is that this situation guarantees that $f$ is distinguishable from $f_0$ as in Lemma \ref{applemma:sigma-linear-f0-notGaussian}. 
Then the claim in equation \eqref{applossineq:D1-loss} will hold, i.e.
$\forall (\lambda, G),(\lambda^*, G_*)\in\Xi$,
\begin{align*}
    |\lambda-\lambda^*|
    +(\lambda+\lambda^*)
    \|(a,b,\nu)-(a^*,b^*,\nu^*)\|
    \lesssim
    V(p_{\lambda, G},p_{\lambda^*, G_*}).
\end{align*}
Thus, we can conclude the theorem, as all inequalities hold uniformly with respect to $(\lambdas, \Gs)$, in conjunction with Theorem \ref{theorem:ConvergenceRateofDensityEstimation}:
\begin{align*}
    \sup_{(\lambdas,G_*)\in\Xi}
    \mathbb{E}_{p_{\lambdas,\Gs}} 
    \Big(
    |\widehat{\lambda}_n
    -\lambdas| 
    \Big)
    \lesssim \sqrt{\frac{\log n}{n}},
    \quad\text{and}\quad
    \sup_{(\lambdas,G_*)\in\Xi}
    \mathbb{E}_{p_{\lambdas,\Gs}} 
    \Big(
    \lambdas
    \Vert 
    (\widehat{a}_n, \widehat{b}_n, \widehat{\nu}_n)-(\as,\bs,\nus) 
    \Vert
    \Big)
    \lesssim 
    \sqrt{\frac{\log n}{n}}.
\end{align*}
\end{proof}

\subsection{Proof of Theorem~\ref{theorem:sigma-nonlinear-f0-Gaussian-varphi-nonlinear}}
\label{appendixproof:sigma-nonlinear-f0-Gaussian-varphi-nonlinear}


In this section, we consider the scenario where both \( f_0 \) and \( f \) belong to the family of Gaussian densities, and both expert functions, \( \varphi \) and \( \sigma \), are non-linear. 
Building on the explanation in Section \ref{section:sigma-nonlinear-f0-Gaussian-varphi-nonlinear}, we will focus on the challenging case where \( \varphi = \sigma \)  and they are twice differentiable almost everywhere, as in this situation, the prompt might not be distinguishable from the pre-trained model.
Recall for the simplicity of the presentation in the paper, we define $\Delta G=(\Delta a,\Delta b,\Delta\nu) = (a - a_0,b - b_0,\nu-\nu_0 )$ and $\Delta \Gs=(\Delta \as,\Delta \bs,\Delta\nus) = (\as - a_0,\bs - b_0,\nus-\nu_0 )$ for any element $(a,b,\nu), (\as,\bs,\nus)\in \Theta$.
For any sequence $(l_n)_{n\geq 1}$, we denote
\begin{align*}
    \Xi_{2}(l_n):=
\Big\{ 
(\lambda,G)={(\lambda,a,b,\nu)}\in\Xi:
\frac{l_n}{\min_{1\leq i\leq d}\{ 
|(\da)_i|^2,|\db|^2,|\dnu|^2
\}\sqrt{n}}\leq\lambda
\Big\}.
\end{align*} 
Then, from Theorem~\ref{theorem:sigma-nonlinear-f0-Gaussian-varphi-nonlinear} the followings will hold for any sequence $(l_n)_{n\geq 1}$ such that $l_n/\log n\to\infty$ as $n\to\infty$:
\begin{align*}
    \sup_{(\lambdas,G_*)\in\Xi_{2}(l_n)}\mathbb{E}_{p_{\lambdas,\Gs}} 
    \Big[
    \Vert (\das,\dbs,\dnus) \Vert^4
    |\widehat{\lambda}_n
    -\lambdas|^2 
    \Big]
    \lesssim \frac{\log n}{n},\\
    \sup_{(\lambdas,G_*)\in\Xi_{2}(l_n)}\mathbb{E}_{p_{\lambdas,\Gs}} 
    \Big[
    (\lambdas)^2 
    \Vert (\das,\dbs,\dnus) \Vert^2
    \times\Vert (\widehat{a}_n, \widehat{b}_n, \widehat{\nu}_n)-(\as,\bs,\nus) \Vert^2 
    \Big] 
    \lesssim 
    \frac{\log n}{n}.
\end{align*}
To prove the above results, 
drawing from Proposition \ref{prop:sigma-nonlinear-f0-Gaussian-varphi-nonlinear}, we can establish the following result concerning the lower bound of $V(p_{\lambda, G}, p_{\lambda^*, G_*})$.
\begin{lemma}
    \label{applemma:sigma-nonlinear-f0-Gaussian-varphi-nonlinear}
Assume that $f_0$ and $f$ are both Gaussians, and the mean experts $\varphi$ in $f_0$ and $\sigma$ in $f$ are both non-linear functions.
Futhermore, if $f_0(Y|\varphi(a_0^{\top}X+b_0),\nu_0)=f(Y|\sigma(a_0^{\top}X+b_0),\nu_0)$ for some $(a_0,b_0,\nu_0)\in\Theta$.
    For any $(\lambda, G),(\lambda^*, G_*)$, we define
    \begin{align*}
        &D_2((\lambda, G),(\lambda^*, G_*))
        \\&
        =\lambda\|(\Delta a,\Delta b, \Delta\nu)\|^2+\lambda^*\|(\Delta a^*,\Delta b^*, \Delta\nu^*)\|^2
        -\min\{\lambda,\lambda^*\}
        \left(
        \|(\Delta a,\Delta b, \Delta\nu)\|^2+\|(\Delta a^*,\Delta b^*, \Delta\nu^*)\|^2
        \right)
        \\
        &+
        \left(
        \lambda\|(\Delta a,\Delta b, \Delta\nu)\|+\lambda^*\|(\Delta a^*,\Delta b^*, \Delta\nu^*)\|
        \right)
        \times\|(\Delta a-\Delta a^*,\Delta b-\Delta b^*,\Delta\nu-\Delta\nu^*)\|.
    \end{align*}
    Then, there exists a positive constant $C_2$ that depends on $\Theta$, 
    $(a_0,b_0,\nu_0)$
    and satisfies 
\begin{align*}
    V(p_{\lambda, G},p_{\lambda^*, G_*})\geq C_2D_2((\lambda, G),(\lambda^*, G_*)).
\end{align*}
    for all $(\lambda, G)$ and $(\lambda^*, G_*)$.
\end{lemma}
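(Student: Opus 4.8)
The plan is to follow the same strategy as the proof of the claim in equation~\eqref{applossineq:D1-loss}, but with a second-order Taylor expansion and with Proposition~\ref{prop:sigma-nonlinear-f0-Gaussian-varphi-nonlinear} taking the place of the distinguishability condition. By the Fatou-type reduction of \cite{Ho-Nguyen-EJS-16} (Theorem~3.1) it suffices to rule out sequences $(\lambda_n,G_n)$, $(\lambda_n^*,G_{*,n})$ with $D_2((\lambda_n,G_n),(\overline{\lambda},\overline{G}))\vee D_2((\lambda_n^*,G_{*,n}),(\overline{\lambda},\overline{G}))\to 0$ and
$\|p_{\lambda_n,G_n}-p_{\lambda_n^*,G_{*,n}}\|_\infty/D_2((\lambda_n,G_n),(\lambda_n^*,G_{*,n}))\to 0$. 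As in the $D_1$ proof, one reduces to the case where $G_n$ and $G_{*,n}$ converge to a common limit point $(a',b',\nu')$; when $(a',b',\nu')\neq(a_0,b_0,\nu_0)$ the prompt remains separated from $f_0$, $f(\cdot|\sigma((a')^{\top}X+b'),\nu')-f_0(\cdot)$ is a fixed nonzero function, and a first-order expansion together with the linear independence of $\{f,\partial f/\partial\sigma,\partial^2 f/\partial\sigma^2\}$ (Lemma~\ref{lemma:independent_gaussian}) yields the lower bound exactly as in Lemma~\ref{applemma:sigma-linear-f0-notGaussian}. So the only genuinely new situation is $G_n,G_{*,n}\to(a_0,b_0,\nu_0)$, i.e. the prompt merges into the pre-trained model.

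In that case I would first use the merging identity $f_0(Y|\varphi(a_0^{\top}X+b_0),\nu_0)=f(Y|\sigma(a_0^{\top}X+b_0),\nu_0)$ to write
\begin{align*}
 p_{\lambda_n,G_n}-p_{\lambda_n^*,G_{*,n}}
 &=\lambda_n\big[f(Y|\sigma(a_n^{\top}X+b_n),\nu_n)-f(Y|\sigma(a_0^{\top}X+b_0),\nu_0)\big]\bar f(X)\\
 &\quad-\lambda_n^*\big[f(Y|\sigma((a_n^*)^{\top}X+b_n^*),\nu_n^*)-f(Y|\sigma(a_0^{\top}X+b_0),\nu_0)\big]\bar f(X),
\end{align*}
so that $f_0$ drops out entirely. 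Taylor-expanding each bracket around $(a_0,b_0,\nu_0)$ up to second order in $\Delta G_n=(a_n-a_0,b_n-b_0,\nu_n-\nu_0)$ (resp. $\Delta G_{*,n}$), the difference becomes a linear combination of the derivatives $\partial^{|\eta|}f/\partial a^{\eta_1}\partial b^{\eta_2}\partial\nu^{\eta_3}(Y|\sigma(a_0^{\top}X+b_0),\nu_0)$ for $1\le|\eta|\le 2$, with coefficients of the form $\lambda_n(\Delta G_n)^{\eta}-\lambda_n^*(\Delta G_{*,n})^{\eta}$, plus a remainder of order $o\big(\lambda_n\|\Delta G_n\|^2+\lambda_n^*\|\Delta G_{*,n}\|^2\big)$. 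The loss $D_2$ is engineered precisely so that, using the scalarwise identities $\lambda x-\lambda^*y=\lambda(x-y)+(\lambda-\lambda^*)y$ and $\lambda x^2-\lambda^*y^2=\lambda(x-y)(x+y)+(\lambda-\lambda^*)y^2$: (i) every such coefficient and the remainder are $\mathcal O\big(D_2((\lambda_n,G_n),(\lambda_n^*,G_{*,n}))\big)$; and (ii) if, after dividing by $D_2$, all of these coefficients tended to $0$, then $D_2/D_2\to 0$, a contradiction.

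Granting (ii), let $m_n$ be the maximum of the absolute values of these coefficients and of $|\lambda_n-\lambda_n^*|$; then $m_n/D_2$ is bounded away from $0$ along a subsequence, so dividing the density difference by $m_nD_2$ and passing to a further subsequence produces real constants $\beta_\eta$, not all zero, with $\sum_{0\le|\eta|\le 2}\beta_\eta\,\partial^{|\eta|}f/\partial a^{\eta_1}\partial b^{\eta_2}\partial\nu^{\eta_3}(Y|\sigma(a_0^{\top}X+b_0),\nu_0)=0$ for almost every $(X,Y)\in\mathcal{X}\times\mathcal{Y}$. Proposition~\ref{prop:sigma-nonlinear-f0-Gaussian-varphi-nonlinear}, applied at $(a,b,\nu)=(a_0,b_0,\nu_0)$, forces $\beta_\eta=0$ for all $|\eta|\le 2$, contradicting the choice of $m_n$; unwinding this via the two identities above shows that all the normalized coefficients in fact tend to $0$, which is the contradiction in (ii). Since all bounds are uniform in $(\lambda^*,G_*)$, this yields the claimed $V(p_{\lambda,G},p_{\lambda^*,G_*})\ge C_2 D_2((\lambda,G),(\lambda^*,G_*))$. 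I expect the main obstacle to be the verification of (i)--(ii): showing that the particular min-of-weights-plus-cross-term form of $D_2$ is simultaneously an upper bound for all the Taylor coefficients and a genuine obstruction to their joint vanishing, since this is exactly the bookkeeping that records how the merging gap $\Delta G$, the vanishing proportion $\lambda$, and the parameter error interact. The nonlinearity of $\sigma$ enters only through Proposition~\ref{prop:sigma-nonlinear-f0-Gaussian-varphi-nonlinear}, which removes the heat-equation degeneracy~\eqref{eq:heat_equation_1} and is what makes all the $|\eta|\le 2$ derivatives simultaneously available.
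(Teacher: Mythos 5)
Your plan is in the same spirit as the paper's proof — reduce via Fatou to sequences, do a second‑order Taylor expansion, and use linear independence of derivatives of $f$ to kill the limiting coefficients. Using Proposition~\ref{prop:sigma-nonlinear-f0-Gaussian-varphi-nonlinear} directly at $(a_0,b_0,\nu_0)$ is a slightly cleaner way of packaging what the paper does (the paper instead manipulates $\partial^\tau f/\partial\sigma^\tau$ via the heat equation and invokes Lemma~\ref{lemma:independent_gaussian} at the end). However, two of your reductions have real problems.

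First, expanding both brackets around $(a_0,b_0,\nu_0)$ rather than around $(a_n^*,b_n^*,\nu_n^*)$ gives a remainder bounded only by $\lambda_n\|\Delta G_n\|^{2+\gamma}+\lambda_n^*\|\Delta G_{*,n}\|^{2+\gamma}$, and this is \emph{not} $o(D_2)$. Taking $\lambda_n\to\lambda_n^*$ and $G_n\to G_{*,n}$ fast enough, $D_2$ can be much smaller than $\lambda_n^*\|\Delta G_{*,n}\|^{2}$, so the normalized remainder blows up rather than vanishing. The paper avoids this by decomposing $p_{\lambda_n,G_n}-p_{\lambda_n^*,G_{*,n}}=(\lambda_n^*-\lambda_n)\bigl[f(G_0)-f(G_{*,n})\bigr]+\lambda_n\bigl[f(G_n)-f(G_{*,n})\bigr]$ and expanding \emph{both} brackets around $G_{*,n}$; the two remainders are then $O(\|\Delta G_{*,n}\|^{2+\gamma})$ and $O(\|G_n-G_{*,n}\|^{2+\gamma})$, which are exactly controlled by the $(\lambda_n^*-\lambda_n)B_n^2$ and $(\lambda_n A_n+\lambda_n^* B_n)C_n$ pieces of $D_2$. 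This choice also makes the diagonal second‑order coefficients a sum of two non‑negative terms, $(\lambda_n^*-\lambda_n)\{(\Delta a_n^*)^{(u)}\}^2+\lambda_n\{(\Delta a_n-\Delta a_n^*)^{(u)}\}^2$, whose individual vanishing is what powers the elimination in your step~(ii); your coefficients $\lambda_n(\Delta G_n)^{\eta}-\lambda_n^*(\Delta G_{*,n})^{\eta}$ lack that sign structure, so the ``they all vanish $\Rightarrow D_2/D_2\to 0$'' step would need a genuinely different argument you have not supplied.

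Second, the reduction to sequences converging to a common limit point is not valid here. When $\|\Delta G_{*,n}\|\to 0$ but $\|\Delta G_n\|\not\to 0$, the two parameter sequences have no common limit; $D_2$ of the approaching sequence tending to $0$ only forces $\lambda_n\to 0$. The paper has to treat this separately (its Case~2), introducing the auxiliary quantity $D_2^{\prime}=|\lambda_n^*-\lambda_n|B_n+\lambda_n A_n+\lambda_n^* B_n\gtrsim D_2$ and a \emph{first}‑order expansion; your outline folds this into ``as in the $D_1$ proof,'' which is not enough since the $D_1$ comparison alone does not control the cross term in $D_2$.
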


\begin{proof}[Proof of Theorem~\ref{theorem:sigma-nonlinear-f0-Gaussian-varphi-nonlinear}]
First we denote $\overline{D_2}((\lambda,G),(\lambdas,\Gs))$ by
\begin{align*}
    \dtwobarl &:=|\lambdas-\lambda| 
    \Vert (\da,\db,\dnu) \Vert
    \Vert (\das,\dbs,\dnus) \Vert
    \\&
    +\Vert  (\da,\db,\dnu)-(\das,\dbs,\dnus) \Vert
    \big( 
    \lambda\Vert (\da,\db,\dnu) \Vert
    +\lambdas\Vert (\das,\dbs,\dnus) \Vert
    \big)
\end{align*}
It's easily to verify that 
$D_2((\lambda,G),(\lambdas,G_{*})) \asymp 
\overline{D_2}((\lambda,G),(\lambdas,G_{*}))$.
By leveraging this result together with
Theorem \ref{theorem:ConvergenceRateofDensityEstimation} and
Lemma \ref{applemma:sigma-nonlinear-f0-Gaussian-varphi-nonlinear}
, we immediately establish the following convergence rates
\begin{align}
    \label{eq:second_theorem:sigma-nonlinear-f0-Gaussian-varphi-nonlinear}
     \sup_{{\lambdas,\Gs} \in \Xi}\mathbb{E}_{p_{{\lambdas,\Gs}}}\left(\Vert (\dhan,\dhbn,\dhnun)\Vert ^2 \Vert(\Delta a^*,\Delta b^*, \Delta \nu^*)\Vert ^2 |\hlambdan-\lambdas|^2\right) &\lesssim \dfrac{\log n}{n},\\
    \label{eq:first_theorem:sigma-nonlinear-f0-Gaussian-varphi-nonlinear}
    \sup_{{\lambdas,\Gs} \in \Xi}\mathbb{E}_{p_{{\lambdas,\Gs}}}\left((\lambdas)^2\Vert (\Delta a^*,\Delta b^*, \Delta \nu^*)\Vert ^2 \Vert (\dhan,\dhbn,\dhnun) -  (\Delta a^*,\Delta b^*, \Delta \nu^*)\Vert^2\right) &\lesssim \dfrac{\log n}{n}.
\end{align}
Although the result in equation \eqref{eq:second_theorem:sigma-nonlinear-f0-Gaussian-varphi-nonlinear} does not align with the first formulae of Theorem \ref{theorem:sigma-nonlinear-f0-Gaussian-varphi-nonlinear}, we can bypass this issue using the fact that $(\lambdas,\as,\bs,\nus) \in \Xi_{2}(l_n)$. In fact, observe that 
$(\han,\hbn,\hnun)-(\as, \bs, \nus) = (\dhan,\dhbn,\dhnun)-(\Delta\as, \Delta\bs, \Delta\nus)$, applying the Cauchy-Schwartz inequality, we have 
\begin{align*}
    \mathbb{E}_{p_{{\lambdas,\Gs}}}\left(\Vert (\das,\dbs,\dnus)\Vert ^4 |\hlambdan-\lambdas|^2\right) 
    \lesssim  \mathbb{E}_{p_{{\lambdas,\Gs}}}\left(\Vert (\das,\dbs,\dnus)\Vert ^2\Vert (\dhan,\dhbn,\dhnun)\Vert ^2 |\hlambdan-\lambdas|^2\right)\\
    +\mathbb{E}_{p_{{\lambdas,\Gs}}}\left(\Vert (\das,\dbs,\dnus)\Vert ^2\Vert (\dhan,\dhbn,\dhnun) -  (\Delta a^*,\Delta b^*, \Delta \nu^*)\Vert^2|\hlambdan-\lambdas|^2\right)
\end{align*}
The first term in the right-hand-side has been estimated in equation \eqref{eq:second_theorem:sigma-nonlinear-f0-Gaussian-varphi-nonlinear}. For the second term, if $\lambdas \neq 0$ it is in fact an implication of equation \eqref{eq:first_theorem:sigma-nonlinear-f0-Gaussian-varphi-nonlinear}, noting that 
$|\hlambdan-\lambdas|/\lambdas$ is uniformly bounded by $2/\lambdas$.
Otherwise, if $\lambdas = 0$, we have 
    \begin{align*}
    D_2((\hlambdan, \hGn),(\lambda^*, G_*))
    &=\hlambdan\|(\dhan,\dhbn, \dhnun)\|^2
    +\hlambdan\|(\dhan,\dhbn, \dhnun)\|\|(\dhan-\Delta a^*,\dhbn-\Delta b^*,\dhnun-\Delta\nu^*)\|\\
    &=\hlambdan \left( \|(\dhan,\dhbn, \dhnun)\|^2 +\|(\dhan,\dhbn, \dhnun)\|\|(\dhan-\Delta a^*,\dhbn-\Delta b^*,\dhnun-\Delta\nu^*)\|\right)
    \\&\geq \hlambdan\|(\dhan,\dhbn, \dhnun)\|\|(\dhan, \dhbn, \dhnun)-(\Delta a^*,\Delta b^*,\Delta\nu^*)\|.
    \end{align*}
The conclusion follows from direct application of Lemma \ref{applemma:sigma-nonlinear-f0-Gaussian-varphi-nonlinear} and Theorem \ref{theorem:ConvergenceRateofDensityEstimation}.
\end{proof}

Now we will prove Lemma \ref{applemma:sigma-nonlinear-f0-Gaussian-varphi-nonlinear}.
\begin{proof}[Proof of Lemma \ref{applemma:sigma-nonlinear-f0-Gaussian-varphi-nonlinear}]
Let $\overline{G}=(\Bar{a},\Bar{b},\Bar{\nu})$ such that $\Bar{\lambda}\in[0,1]$ and $(\Bar{a},\Bar{b},\Bar{\nu})$ can be identical to $({a_0},{b_0},{\nu_0})$. 
Then, we will show that
\begin{itemize}
    \item[(i)] When 
    $({a_0},{b_0},{\nu_0})\neq(\Bar{a},\Bar{b},\Bar{\nu})$ 
    and $\Bar{\lambda}>0$, 
    \begin{align*}
     \lim_{\varepsilon\to 0}\inf_{(\lambda,G),(\lambdas,\Gs)}\left\{\frac{\|p_{\lambda, G} - p_{\lambda^*, G_*}\|_{\infty}}{D_1((\lambda,G),(\lambda^*,G_*))}:D_1((\lambda,G),(\overline{\lambda},\overline{G}))\vee D_1((\lambda^*, G_*),(\overline{\lambda},\overline{G}))\leq\varepsilon\right\}>0.
    \end{align*}
    \item[(ii)] 
    When $({a_0},{b_0},{\nu_0})=(\Bar{a},\Bar{b},\Bar{\nu})$ or $({a_0},{b_0},{\nu_0})\neq(\Bar{a},\Bar{b},\Bar{\nu})$ and $\Bar{\lambda}=0$, 
    \begin{align}
    \label{eq:claim_nondistinguishable_independent}
     \lim_{\varepsilon\to 0}\inf_{(\lambda,G),(\lambdas,\Gs)}\left\{\frac{\|p_{\lambda, G} - p_{\lambda^*, G_*}\|_{\infty}}{D_2((\lambda,G),(\lambda^*,G_*))}:D_2((\lambda,G),(\overline{\lambda},\overline{G}))\vee D_2((\lambda^*, G_*),(\overline{\lambda},\overline{G}))\leq\varepsilon\right\}>0.
    \end{align}
\end{itemize}
 Part (i) can be proved by using the same arguments as in the proof of Theorem~\ref{theorem:sigma-linear-f0-notGaussian}. Thus, we will consider only part (ii) in this section, specifically the most challenging setting that $({a_0},{b_0},{\nu_0})=(\Bar{a},\Bar{b},\Bar{\nu})$. 
Under this assumption, we know that $\varphi$ and $\sigma$ are the same nonlinear function, s.t. 
$f_0(Y|\varphi (a_0^{\top}X+b_0),\nu_0)=f(Y|\sigma (a_0^{\top}X+b_0 ),\nu_0)$ for all most surly $(X,Y)\in \mathcal{X}\times\mathcal{Y}$.
Assume that the above claim in equation~\eqref{eq:claim_nondistinguishable_independent} does not hold, 
then there exist two sequences $G_n=(a_n,b_{n},\nu_{n})$ and $G_{*,n}=(a_n^*, b^*_{n},\nu^*_{n})$, and two sequences of mixing proportions $\lambda_n$ and $\lambda_n^* \in [0, 1]$
 such that
\begin{align*}
    \begin{cases}
        D_2((\lambdan,\Gn),(\overline{\lambda},\overline{G}))=\lambdan\|(\Delta\an, \Delta\bn,\Delta\nun)\|^2\to 0,\\
        D_2((\lambdasn,\Gsn),(\overline{\lambda},\overline{G}))=\lambdasn\|(\Delta\asn, \Delta\bsn,\Delta\nusn)\|^2\to 0,
        \\ {\|p_{\lambdan, \Gn} - p_{\lambdasn, \Gsn}\|_{\infty}}/{D_2((\lambdan,\Gn),(\lambdasn,\Gsn))}\to 0.
    \end{cases}
\end{align*}
To facilitate the presentation, let us denote
\begin{align*}
    A_n&=\|(\Delta\an, \Delta\bn,\Delta\nun)\|,\\
    B_n&=\|(\Delta\asn, \Delta\bsn,\Delta\nusn)\|,\\
    C_n&=\|(\Delta\an-\Delta\asn, \Delta\bn-\Delta\bsn,\Delta\nun-\Delta\nusn)\|.
\end{align*}
Now, we have three primary cases regarding the convergence behaviors between $G_n=(a_n,b_{n},\nu_{n})$ and $G_{*,n}=(a_n^*, b^*_{n},\nu^*_{n})$.

\subsubsection*{Case 1:}
Both $A_n\to 0$ and $B_n\to 0$ as $n\to\infty$, or equivalently, $(\an, \bn,\nun)$ and $(\asn, \bsn,\nusn)$ share the same limit of $(a_0,b_0,\nu_0)$. Without loss of generality,
we assume that $\lambdasn\geq\lambdan$ for all $n\in\mathbb{N}$. Then, the metric ${D_2((\lambdan,\Gn),(\lambdasn,\Gsn))}$ is reduced to
\begin{align*}
    {D_2((\lambdan,\Gn),(\lambdasn,\Gsn))}=(\lambdasn-\lambdan)B_n^2+(\lambdan A_n+\lambdasn B_n)C_n.
\end{align*}
Subsequently, we consider
\begin{align*}
    p_{\lambdan, \Gn}(X,Y)-p_{\lambdasn, \Gsn}(X,Y)
    &= 
    (\lambdasn-\lambdan)
    \left[
    f(Y|
    \sigma (a_0^{\top}X+b_0 ),\nu_0
    )
    -
    f(Y|
    \sigma \left((a_n^*)^{\top}X+b_n^* \right),\nu_n^*
    )
    \right]\Bar{f}(X)\\
    &~+\lambdan
    \left[f(Y|
    \sigma \left(a_n^{\top}X+b_n \right),\nu\textbf{}
    )-f(Y|
    \sigma \left((a_n^*)^{\top}X+b_n^* \right),\nu_n^*
    )
    \right]\Bar{f}(X)\\
    &=S_n+T_n,
\end{align*}
where $S_n$ and $T_n$ denote the first term and second term in the equation above, respectively. 

By using the Taylor expansion up to the second order, we get
\begin{align*}
    S_n&=(\lambdasn-\lambdan)
    \Big[\sum_{1\leq|\alpha|+|\beta|\leq 2}\frac{1}{\alpha!\beta!}\prod_{u=1}^{d}
    \{-(\Delta\asn)^{(u)}\}^{\alpha_{1u}}
    \{-(\Delta\bsn)^{}\}^{\alpha_2}
    \{-(\Delta\nusn)^{}\}^{\beta}
    \\
    &\hspace{3.8cm}\times\frac{\partial^{|\alpha|+|\beta|}f}{\partial a^{\alpha_1}\partial b^{\alpha_2}\partial\nu^{\beta}}(\ysigma)\Bar{f}(X)+R_1(X,Y)\Big],\\
    T_n&=\lambdan\Big[\sum_{1\leq|\alpha|+|\beta|\leq 2}\frac{1}{\alpha!\beta!}\prod_{u=1}^{d}
    \{(\Delta\an-\Delta\asn)^{(u)}\}^{\alpha_{1u}}
    \{(\Delta\bn-\Delta\bsn)^{ }\}^{\alpha_{2}}
    \{(\Delta\nun-\Delta\nusn)^{ }\}^{\beta}
    \\
    &\hspace{3.8cm}\times\frac{\partial^{|\alpha|+|\beta|}f}{\partial a^{\alpha_1}\partial b^{\alpha_2}\partial\nu^{\beta}}(\ysigma)\Bar{f}(X)+R_2(X,Y)\Big],
\end{align*}
where 
$R_1(X,Y)=\bigO(B_n^{2+\gamma})$ and $R_2(X,Y)=\bigO(C_n^{2+\gamma})$ are Taylor remainders for some $\gamma>0$. According to the triangle inequality, we get $A_n+B_n\geq C_n$. It follows from this result and the assumption both $A_n\to 0$ and $B_n\to 0$ that
\begin{align*}
    (\lambdasn-\lambdan)|R_1(X,Y)|/\dtwo\leq |R_1(X,Y)|/B_n^2=\bigO(B_n^\gamma)&\to 0,\\
    \lambdan|R_2(X,Y)|/\dtwo\leq|R_2(X,Y)|/[(A_n+B_n)C_n]=\bigO(C_n^\gamma)&\to 0,
\end{align*}
for all $(X,Y)\in\mathcal{X}\times\mathcal{Y}$. 
Then we will have 
\begin{align*}
    \frac{(\lambdan-\lambdasn)|R_1(X,Y)|+\lambdan|R_2(X,Y)|}{\dtwo}\to0 .
\end{align*}
Due to the following PDE:
\begin{align*}
    \frac{\partial^2 f}{\partial \sigma^2}(Y|\sigma((a_n^*)^{\top}X+b_n^*),\nu_n^*)=2\frac{\partial f}{\partial \nu}(Y|\sigma((a_n^*)^{\top}X+b_n^*),\nu_n^*),
\end{align*}
for all $(a,b,\nu)\in\Theta$, $S_n$ can be rewritten as follows:
\begin{align*}
    S_n&=\sum_{\tau=1}^4S_{n,\tau}(X)\frac{\partial^{\tau} f}{\partial \sigma^{\tau}}(Y|\sigma((a_n^*)^{\top}X+b_n^*),\nu_n^*)\Bar{f}(X)+R_1(X,Y)\,
\end{align*}
where 
\begin{align*}
    &S_{n,1}(X)=(\lambdasn-\lambdan)
    \Big[\sum_{u=1}^{d}\{-(\Delta\asn)^{(u)}\}\frac{\partial  \sigma}{\partial a^{(u)}}(X,\asn,\bsn)+ 
    \{-(\Delta\bsn)^{}\}\frac{\partial  \sigma}{\partial b^{}}(X,\asn,\bsn)\\ 
    &~+\sum_{1\leq u,v\leq d}\frac{\{-(\Delta \asn)^{(u)}\}\{-(\Delta\asn)^{(v)}\}}{1+\mathbf{1}_{u=v}}\frac{\partial^2 \sigma}{\partial a^{(u)}\partial a^{(v)}}(X,\asn,\bsn)
    \\&~+\sum_{1\leq u\leq d}{\{-(\Delta \asn)^{(u)}\}\{-(\Delta\bsn)^{}\}}{ }\frac{\partial^2 \sigma}{\partial a^{(u)}\partial b}(X,\asn,\bsn)
    +\frac{1}{2}{\{-(\Delta \bsn)^{}\}^2}
    \frac{\partial^2 \sigma}{\partial b^{2}}(X,\asn,\bsn)
    \Big],\\
    &S_{n,2}(X)=(\lambdasn-\lambdan)\Big[
    \frac{1}{2}
    \{-(\Delta\nusn)^{}\} 
    +\sum_{1\leq u,v\leq d}\frac{\{-(\Delta \asn)^{(u)}\}\{-(\Delta\asn)^{(v)}\}}{1+\mathbf{1}_{u=v}}\frac{\partial \sigma}{\partial a^{(u)}}(X,\asn,\bsn)
    \frac{\partial \sigma}{\partial a^{(v)}}(X,\asn,\bsn)
    \\&~+\sum_{1\leq u\leq d}{\{-(\Delta \asn)^{(u)}\}\{-(\Delta\bsn)^{}\}}{ }
    \frac{\partial \sigma}{\partial a^{(u)}}(X,\asn,\bsn)
    \frac{\partial \sigma}{\partial b}
    (X,\asn,\bsn)
    +\frac{1}{2}{\{-(\Delta \bsn)^{}\}^2}
    (\frac{\partial \sigma}{\partial b}
    (X,\asn,\bsn))^2
    \Big],\\
    &S_{n,3}(X)=\frac{\lambdasn-\lambdan}{2}
    \Big[
    \sum_{u=1}^{d}
    \{-(\Delta\asn)^{(u)}\}\{-(\Delta\nusn)^{}\}\frac{\partial  \sigma}{\partial a^{(u)}}(X,\asn,\bsn) 
    +\{-(\Delta\bsn)^{}\}\{-(\Delta\nusn)^{}\}\frac{\partial  \sigma}{\partial b }(X,\asn,\bsn) ,
    \Big]\\
    &S_{n,4}(X)=\frac{\lambdasn-\lambdan}{8}
    \{-(\Delta\nusn)^{}\}^2.
\end{align*}
Similarly, we can rewrite $T_n$ in the same fashion as follows:
\begin{align*}
    T_n&=\sum_{\tau=1}^4T_{n,\tau}(X)\frac{\partial^{\tau} f}{\partial  \sigma^{\tau}}(\ysigma)\Bar{f}(X)+R_2(X,Y),
\end{align*}
where 
\begin{align*}
    T_{n,1}(X)&=\lambdan\Big[\sum_{u=1}^{d}\{(\Delta\an-\Delta\asn)^{(u)}\}\frac{\partial  \sigma}{\partial a^{(u)}}( \xabs)
    +\{(\Delta\bn-\Delta\bsn)^{}\}\frac{\partial  \sigma}{\partial b^{}}( \xabs)
    \\
    &~+\sum_{1\leq u,v\leq d}\frac{\{(\Delta\an-\Delta\asn)^{(u)}\}\{(\Delta\an-\Delta\asn)^{(v)}\}}{1+\mathbf{1}_{u=v}}\frac{\partial^2 \sigma}{\partial a^{(u)}\partial a^{(v)}}( \xabs)
    \\&~+\sum_{1\leq u\leq d}{\{(\dan-\Delta \asn)^{(u)}\}\{(\dbn-\Delta\bsn)^{}\}}{ }\frac{\partial^2 \sigma}{\partial a^{(u)}\partial b}(X,\asn,\bsn)
    +\frac{1}{2}{\{(\dbn-\Delta \bn)^{}\}^2}
    \frac{\partial^2 \sigma}{\partial b^{2}}(X,\asn,\bsn)
    \Big],\\
    T_{n,2}(X)&=\lambdan\Big[
    \frac{1}{2}
    \{(\dnun-\Delta\nusn)^{}\} 
    +\sum_{1\leq u,v\leq d}\frac{\{(\dan-\Delta \asn)^{(u)}\}\{(\dan-\Delta\asn)^{(v)}\}}{1+\mathbf{1}_{u=v}}
    \frac{\partial \sigma}{\partial a^{(u)}}(X,\asn,\bsn)
    \frac{\partial \sigma}{\partial a^{(v)}}
    (X,\asn,\bsn)
    \\&~+\sum_{1\leq u\leq d}{\{(\dan-\Delta \asn)^{(u)}\}\{(\dbn-\Delta\bsn)^{}\}}{ }
    \frac{\partial  \sigma}{\partial a^{(u)} }(X,\asn,\bsn)
    \frac{\partial  \sigma}{ \partial b}
    (X,\asn,\bsn)
    \\&~
    +\frac{1}{2}{\{(\dbn-\Delta \bsn)^{}\}^2}
    (\frac{\partial \sigma}{\partial b^{}}
    (X,\asn,\bsn))^2
    \Big],\\
    T_{n,3}(X)&=\frac{\lambdan}{2}
    \Big[
    \sum_{u=1}^{d}
    \{(\dan-\Delta\asn)^{(u)}\}\{(\dnun-\Delta\nusn)^{}\}\frac{\partial  \sigma}{\partial a^{(u)}}(X,\asn,\bsn) 
    \\
    &~
    +\{(\dbn-\Delta\bsn)^{}\}\{(\dnun-\Delta\nusn)^{}\}\frac{\partial  \sigma}{\partial b }(X,\asn,\bsn) ,
    \Big]\\
    T_{n,4}(X)&=\frac{\lambdan}{8}
    \{(\dnun-\Delta\nusn)^{}\}^2.
\end{align*}
Therefore, we can view the quantity $[p_{\lambdan,\Gn}(X,Y)-p_{\lambdasn,\Gsn}(X,Y)]/\dtwo)$ as a linear combination of elements of 
the set $\mathcal{L}=\cup_{\tau=1}^4\mathcal{L}_{\tau}$, where
\begin{align*}
    \mathcal{L}_1&=\left\{\frac{\partial  \sigma}{\partial a^{(u)}}(\xabs)\frac{\partial f}{\partial  \sigma}(\ysigma)\Bar{f}(X):u\in[d]\right\}\\
    &\cup\left\{\frac{\partial  \sigma}{\partial b}(\xabs)\frac{\partial f}{\partial  \sigma}(\ysigma)\Bar{f}(X) \right\}\\
    &\cup\left\{\frac{\partial^2  \sigma}{\partial a^{(u)}\partial a^{(v)}}(\xabs)\frac{\partial f}{\partial  \sigma}(\ysigma)\Bar{f}(X):u,v\in[d]\right\},\\
    &\cup\left\{\frac{\partial^2  \sigma}{\partial a^{(u)}\partial b}(\xabs)\frac{\partial f}{\partial  \sigma}(\ysigma)\Bar{f}(X):u \in[d]\right\},\\
    &\cup\left\{\frac{\partial^2  \sigma}{\partial b^2}(\xabs)\frac{\partial f}{\partial  \sigma}(\ysigma)\Bar{f}(X) \right\},\\
    \mathcal{L}_2
    &=\left\{
    \frac{\partial^2f}{\partial  \sigma^2}(\ysigma)\Bar{f}(X) \right\}\\
    &\cup\left\{
    \frac{\partial  \sigma}{\partial a^{(u)} }(\xabs)
    \frac{\partial  \sigma}{\partial a^{(v)} }(\xabs)
    \frac{\partial^2f}{\partial  \sigma^2}(\ysigma)\Bar{f}(X):u,v\in[d]\right\}\\
    &\cup\left\{
    \frac{\partial  \sigma}{\partial a^{(u)}}(\xabs)
    \frac{\partial   \sigma}{\partial  b}(\xabs)
    \frac{\partial^2f}{\partial  \sigma^2}(\ysigma)\Bar{f}(X):u \in[d]\right\}\\
    &\cup\left\{
    (\frac{\partial  \sigma}{  \partial  b}(\xabs))^2
    \frac{\partial^2f}{\partial  \sigma^2}(\ysigma)\Bar{f}(X) \right\},\\
    \mathcal{L}_3&=
    \left\{\frac{\partial  \sigma}{\partial a^{(u)}}(\xabs)
    \frac{\partial^3f}{\partial  \sigma^3}(\ysigma)\Bar{f}(X):u\in[d] \right\}\\
    &\cup\left\{\frac{\partial  \sigma}{\partial b}(\xabs)
    \frac{\partial^3f}{\partial  \sigma^3}(\ysigma)\Bar{f}(X)  \right\},\\
    \mathcal{L}_4&=\left\{
    \frac{\partial^4 f}{\partial  \sigma^4}(\ysigma)\Bar{f}(X) \right\}.
\end{align*}
Assume by contrary that all the coefficients of these elements vanish when $n\to\infty$. 
Looking at the coefficients of 
$\dfrac{\partial^2  \sigma}{\partial a^{(u)}\partial a^{(v)}}(\xabs)\dfrac{\partial f}{\partial  \sigma}(\ysigma)\Bar{f}(X)$ and
$\dfrac{\partial^2  \sigma}{\partial a^{(u)}\partial b}(\xabs)\dfrac{\partial f}{\partial  \sigma}(\ysigma)\Bar{f}(X)$
and
$\dfrac{\partial^2  \sigma}{\partial b^{2}}(\xabs)\dfrac{\partial f}{\partial  \sigma}(\ysigma)\Bar{f}(X)$
, 
we get for all $u,v\in[d]$,
\begin{align}
    \label{eq:nondistinguishable_independent_3}
    [(\lambdasn-\lambdan)(\Delta\asn)^{(u)}(\Delta\asn)^{(v)}+\lambdan(\Delta\an-\Delta\asn)^{(u)}(\Delta\an-\Delta\asn)^{(v)}]/\dtwo&\to 0,
    \nonumber\\
    [(\lambdasn-\lambdan)(\Delta\asn)^{(u)}(\Delta\bsn)^{ }+\lambdan(\Delta\an-\Delta\asn)^{(u)}(\Delta\bn-\Delta\bsn)^{ }]/\dtwo&\to 0,
    \nonumber\\
    [(\lambdasn-\lambdan) (\Delta\bsn)^{ 2}+\lambdan (\Delta\bn-\Delta\bsn)^{ 2}]/\dtwo&\to 0
    .
\end{align}
Let $u=v$ in the first equation in equation \eqref{eq:nondistinguishable_independent_3}, we achieve that for all $u\in[d]$,
\begin{align}   
    \label{eq:nondistinguishable_independent_4}
    [(\lambdasn-\lambdan)\{(\Delta\asn)^{(u)}\}^2+\lambdan\{(\Delta\an-\Delta\asn)^{(u)}\}^2]/\dtwo\to 0,
\end{align}
which implies that
\begin{align}
    \label{eq:nondistinguishable_independent_5}
    [(\lambdasn-\lambdan)\|\Delta\asn\|^2+\lambdan\|\Delta\an-\Delta\asn\|^2]/\dtwo\to 0,
    \nonumber\\
    [(\lambdasn-\lambdan)\|\Delta\bsn\|^2+\lambdan\|\Delta\bn-\Delta\bsn\|^2]/\dtwo\to 0
    .
\end{align}

We also have each term inside equation \eqref{eq:nondistinguishable_independent_5} is non-negative, thus  
\begin{align}
\label{eq:nondistinguishable_independent_11}
    (\lambdasn-\lambdan)\|\Delta\asn\|^2/\dtwo \to 0,~\lambdan\|\Delta\an-\Delta\asn\|^2/\dtwo &\to 0, \nonumber \\ 
    (\lambdasn-\lambdan)\|\Delta\bsn\|^2/\dtwo \to 0,~ \lambdan\|\Delta\bn-\Delta\bsn\|^2/\dtwo &\to 0.
\end{align}

Applying the AM-GM inequality, we have for all $u,v\in[d]$,
\begin{align}
\label{eq:nondistinguishable_independent_12}
    \dfrac{(\lambdasn-\lambdan)(\Delta\asn)^{(u)}(\Delta\asn)^{(v)}}{\dtwo}\to 0,~ \dfrac{\lambdan(\Delta\an-\Delta\asn)^{(u)}(\Delta\an-\Delta\asn)^{(v)}}{\dtwo} &\to 0,
    \\ \label{eq:nondistinguishable_independent_13_hello}   \dfrac{(\lambdasn-\lambdan)(\Delta\asn)^{(u)}(\Delta\bsn)^{ }}{\dtwo}\to 0,~ \dfrac{\lambdan(\Delta\an-\Delta\asn)^{(u)}(\Delta\bn-\Delta\bsn)^{ }}{\dtwo} &\to 0.
\end{align}

Next, by considering the coefficients of $\dfrac{\partial  \sigma}{\partial a^{(u)}}(X,\asn,\bsn)\dfrac{\partial f}{\partial  \sigma}( \ysigma)\Bar{f}(X)$,
$\dfrac{\partial  \sigma}{\partial b}(X,\asn,\bsn)\dfrac{\partial f}{\partial  \sigma}( \ysigma)\Bar{f}(X)$
and 
$\dfrac{\partial^2f}{\partial  \sigma^2}( \ysigma)\Bar{f}(X)$, we have
\begin{align}
    \label{eq:nondistinguishable_independent_1}
    [\lambdan(\Delta\an)^{(u)}-\lambdasn(\Delta\asn)^{(u)}]/\dtwo&\to 0,\quad u\in[d],\\
    \label{eq:nondistinguishable_independent_1.2}
    [\lambdan(\Delta\bn)^{ }-\lambdasn(\Delta\bsn)^{ }]/\dtwo&\to 0,\quad    \\
    \label{eq:nondistinguishable_independent_2}
    [\lambdan(\Delta\nun)^{ }-\lambdasn(\Delta\nusn)^{ }]/\dtwo&\to 0.\quad 
\end{align}
Noting that for $u,v\in[d]$,
\begin{align*}
    \lambdasn(\dasn)^{(u)}(\dan-\dasn)^{(v)} &= (\lambdan(\dan)^{(v)} - \lambdasn(\dasn)^{(v)})(\dasn)^{(u)}+(\lambdasn-\lambdan)(\dan)^{(v)}(\dasn)^{(u)},\\
    \lambdan(\dan)^{(u)}(\dan-\dasn)^{(v)} &= \lambdasn(\dasn)^{(u)}(\dan-\dasn)^{(v)} - (\lambdan(\dan)^{(u)} - \lambdasn(\dasn)^{(u)})(\dan-\dasn)^{(v)}.
\end{align*}  
Thus, from equation \eqref{eq:nondistinguishable_independent_12} and equation \eqref{eq:nondistinguishable_independent_1}, we achieve that for $u,v\in[d]$,
\begin{align*}
    \lambdasn(\dasn)^{(u)}(\dan-\dasn)^{(v)}/\dtwo &\to 0,\\ \lambdan(\dan)^{(u)}(\dan-\dasn)^{(v)}/\dtwo &\to 0. 
\end{align*}

Noting that for all $u\in[d]$,
\begin{align*}
    \lambdasn(\dasn)^{(u)}(\dbn-\dbsn)^{ } &= (\lambdan\dbn - \lambdasn\dbsn)(\dasn)^{(u)}+(\lambdasn-\lambdan)(\dbn)(\dasn)^{(u)},\\
    \lambdan(\dan)^{(u)}(\dbn-\dbsn)^{ } &= \lambdasn(\dasn)^{(u)}(\dbn-\dbsn)^{ } - \left(\lambdan(\dan)^{(u)} - \lambdasn(\dasn)^{(u)}\right)(\dbn-\dbsn)^{ }.
\end{align*}
Thus, from 
equation \eqref{eq:nondistinguishable_independent_1} and equation \eqref{eq:nondistinguishable_independent_1.2}, we have for all $u\in[d]$,
\begin{align*}
    \lambdasn(\dasn)^{(u)}(\dbn-\dbsn) /\dtwo &\to 0\\ \lambdan(\dan)^{(u)}(\dbn-\dbsn) /\dtwo &\to 0. 
\end{align*}

By using the same arguments we will derive 
\begin{align}
    \label{eq:nondistinguishable_independent_11.3}
    \lambdan\|\Delta\an\|.\|\Delta\bn-\Delta\bsn\|/\dtwo\to0,\\
    \label{eq:nondistinguishable_independent_12.3}
    \lambdasn\|\Delta\asn\|.\|\Delta\bn-\Delta\bsn\|/\dtwo\to0,\\
    \label{eq:nondistinguishable_independent_11.4}
    \lambdan\|\Delta\bn\|.\|\Delta\bn-\Delta\bsn\|/\dtwo\to0,\\
    \label{eq:nondistinguishable_independent_12.4}
    \lambdasn\|\Delta\bsn\|.\|\Delta\bn-\Delta\bsn\|/\dtwo\to0.
\end{align}
By using the same arguments to derive equation~\eqref{eq:nondistinguishable_independent_4}, equation~\eqref{eq:nondistinguishable_independent_11} and equation~\eqref{eq:nondistinguishable_independent_12}, we can point out that
\begin{align}
    [(\lambdasn-\lambdan)\|\Delta\nusn\|^2+\lambdan\|\Delta\nun-\Delta\nusn\|^2]/\dtwo&\to 0,\nonumber\\
    \lambdan\|\Delta\nun\|.\|\Delta\nun-\Delta\nusn\|/\dtwo&\to 0,\nonumber\\
    \label{eq:nondistinguishable_independent_13}
    \lambdasn\|\Delta\nusn\|.\|\Delta\nun-\Delta\nusn\|/\dtwo&\to 0,\\
    \lambdan\|\Delta\an\|.\|\Delta\nun-\Delta\nusn\|/\dtwo&\to 0,\nonumber\\
    \lambdasn\|\Delta\asn\|.\|\Delta\nun-\Delta\nusn\|/\dtwo&\to 0\nonumber\\
    \lambdan\|\Delta\bn\|.\|\Delta\nun-\Delta\nusn\|/\dtwo&\to 0,\nonumber\\
    \lambdasn\|\Delta\bsn\|.\|\Delta\nun-\Delta\nusn\|/\dtwo&\to 0\nonumber.
\end{align}
Collecting results in equation~\eqref{eq:nondistinguishable_independent_4}, and equations~\eqref{eq:nondistinguishable_independent_11} to
\eqref{eq:nondistinguishable_independent_13}, we obtain that
\begin{align*}
    1=\dtwo/\dtwo\to 0,
\end{align*}
which is a contradiction. Therefore, not all the coefficients in the representation of 
${\pminus}/\dtwo$
tend to 0 as $n\to\infty$. Let us denote by $m_n$ the maximum of the absolute values of those coefficients. Based on the previous result, $1/m_n\not\to\infty$. Additionally, we define
\begin{align*}
    [(\lambdasn-\lambdan)\{-(\Delta a^*_{  n})^{(u)}\}+\lambdan(\Delta a_{  n}-\Delta a^*_{ n})^{(u)}]/m_n&\to\alpha_{1 u},\\
    [(\lambdasn-\lambdan)\{-(\Delta b^*_{  n})^{ }\}+\lambdan(\Delta b_{  n}-\Delta b^*_{ n})^{ }]/m_n&\to\alpha_{20},\\
    [(\lambdasn-\lambdan)\{-(\Delta \nu^*_{  n})^{ }\}+\lambdan(\Delta \nu_{  n}-\Delta \nu^*_{ n})^{ }]/m_n&\to\alpha_{30},\\
    [(\lambdasn-\lambdan)(\Delta a^*_{  n})^{(u)}(\Delta a^*_{  n})^{(v)}+\lambdan(\Delta a_{  n}-\Delta a^*_{ n})^{(u)}(\Delta a_{  n}-\Delta a^*_{  n})^{(v)}]/m_n&\to\beta_{ 1 uv},\\
    [(\lambdasn-\lambdan)(\Delta a^*_{  n})^{(u)}(\Delta b^*_{  n})^{ }+\lambdan(\Delta a_{  n}-\Delta a^*_{ n})^{(u)}(\Delta b_{  n}-\Delta b^*_{  n})^{ }]/m_n&\to\beta_{ 2 u0},\\
    [(\lambdasn-\lambdan)(\Delta b^*_{  n})^{2} +\lambdan (\Delta b_{  n}-\Delta b^*_{  n})^{ 2}]/m_n&\to\beta_{ 3 00},\\
    [(\lambdasn-\lambdan)(\Delta\asn)^{(u)}(\Delta\nusn)^{ }+\lambdan(\Delta\an-\Delta\asn)^{(u)}(\Delta\nun-\Delta\nusn)^{ }]/m_n&\to\gamma_{1u},\\
    [(\lambdasn-\lambdan)(\Delta\bsn)^{ }(\Delta\nusn)^{ }+\lambdan(\Delta\bn-\Delta\bsn)^{ }(\Delta\nun-\Delta\nusn)^{ }]/m_n&\to\gamma_{20},\\
    [(\lambdasn-\lambdan) (\Delta\nusn)^{ 2}+ \lambdan(\Delta\nun-\Delta\nusn)^{2 }]/m_n&\to\gamma_{30},
\end{align*}
when $n\to\infty$ for all $u,v\in[d]$. Note that at least one among $\alpha_{\tau u},\beta_{\tau uv}$ and $\gamma_{\tau u}$ where $\tau \in \{ 1,2,3\}$ must be different from zero. By applying the Fatou's lemma, we get
\begin{align*}
    0=\lim_{n\to\infty}\frac{1}{m_n}\frac{2V(p_{\lambdan,\Gn},p_{\lambdasn,\Gsn})}{\dtwo}\geq \int\liminf_{n\to\infty}\frac{1}{m_n}\frac{|p_{\lambdan,\Gn}(X,Y)-p_{\lambdasn,\Gsn}(X,Y)|}{\dtwo}d(X,Y).
\end{align*}
On the other hand,
\begin{align*}
    \frac{1}{m_n}\frac{p_{\lambdan,\Gn}(X,Y)-p_{\lambdasn,\Gsn}(X,Y)}{\dtwo}\to\sum_{\tau=1}^4E_{\tau}(X)\frac{\partial^{\tau} f}{\partial  \sigma^{\tau}}(\ysigmazero)\Bar{f}(X),
\end{align*}
where 
\begin{align*}
    E_1(X)&=\sum_{u=1}^{d}\alpha_{1 u}\frac{\partial  \sigma}{\partial a^{(u)}}( \xabzero)
    +\alpha_{20}\frac{\partial  \sigma}{\partial b}( \xabzero)
    +\sum_{1\leq u,v\leq d}\frac{\beta_{1uv}}{1+\mathbf{1}_{\{u=v\}}}\frac{\partial^2  \sigma}{ \partial a^{(u)} \partial a^{(v)}}( \xabzero)\\
    &+\sum_{1\leq u\leq d}{\beta_{2u0}}{ }\frac{\partial^2  \sigma}{ \partial a^{(u)} \partial b}( \xabzero)
    +{\beta_{300}}{ }\frac{\partial^2  \sigma}{\partial b^2}( \xabzero)
    ,\\
    E_2(X)&=
    \frac{1}{2}\alpha_{30}
    +\sum_{1\leq u,v\leq d}\frac{\beta_{1uv}}{1+\mathbf{1}_{\{u=v\}}}
    \frac{\partial   \sigma}{   \partial a^{(v)}}( \xabzero)
    \frac{\partial   \sigma}{ \partial a^{(u)}  }( \xabzero)
    \\
    &~
    +\sum_{1\leq u\leq d} {\beta_{2u0}}{ }
    \frac{\partial   \sigma}{ \partial a^{(u)}  }( \xabzero)
    \frac{\partial   \sigma}{  \partial b}( \xabzero)
    +\frac{1}{2}{\beta_{300}}{ }
    (\frac{\partial \sigma}{ \partial b  }( \xabzero))^2
    \\
    E_3(X)&=\frac{1}{2}\sum_{u=1}^{d}
    \gamma_{1u}\frac{\partial  \sigma}{\partial a^{(u)}}( \xabzero)
    +\frac{1}{2}\gamma_{20}\frac{\partial  \sigma}{\partial b}( \xabzero)
    ,\\
    E_4(X)&=\frac{1}{8}\gamma_{30}
    .
\end{align*}
Then, we have
\begin{align*}
    \sum_{\tau=1}^4E_{\tau}(X)\frac{\partial^{\tau}f}{\partial  \sigma^{\tau}}(\ysigmazero )=0.
\end{align*}
It is worth noting that for almost surely $(X,Y)$, the set
\begin{align*}
    \left\{\frac{\partial^{\tau}f}{\partial  \sigma^{\tau}}(\ysigmazero):\tau\in[4]\right\}
\end{align*}
is linearly independent, which leads to the fact that $E_{\tau}(X)=0$ for almost surely $X$ for any $\tau\in[4]$. 

Since $\sigma(X,a,b)$ and $\nu$ are algebraically independent and $E_3(X)=E_4(X)=0$ for almost surely $X$, 
we get 
$\gamma_{\tau u}=0$ 
for all $u$. 
It follows from the result $E_2(X)=0$ and 
$\gamma_{\tau u}=0$ that 
$\alpha_{30}=\beta_{\tau uv}=0$ for all $u,v$. Next, $E_1(X)=0$ implies that 
$\alpha_{\tau u}=0$ for all $u$. This contradicts the fact that not all $\alpha_{\tau u}$, $\beta_{\tau uv}$ and $\gamma_{uv}$ vanish. Thus, we obtain the conclusion for this case.

\subsubsection*{Case 2:}
Either $A_n\to 0$ or $B_n\to 0$ as $n\to\infty$. WLOG, we assume that $A_n\not\to 0$ and $B_n\to 0$, which means that 
$(\an,\bn,\nun)\not\to(a_0,b_0,\nu_0)$ and $(\asn,\bsn,\nusn)\to(a_0,b_0,\nu_0)$. 
Let us denote
\begin{align*}
    D^{\prime}_{2}((\lambdan,\Gn),(\lambdasn,\Gsn)):=|\lambdasn-\lambdan|B_n+\lambdan A_n+\lambdasn B_n.
\end{align*}
As 
$\dptwo \gtrsim \dtwo$ and $[p_{\lambdan,\Gn}(X,Y)-p_{\lambdasn,\Gsn}(X,Y)]/\dtwo\to 0$, 
we get $[p_{\lambdan,\Gn}(X,Y)-p_{\lambdasn,\Gsn}(X,Y)]/\dptwo\to 0$. By using the Taylor expansion up to the first order, we have
\begin{align*}
    &p_{\lambdan,\Gn}(X,Y)-p_{\lambdasn,\Gsn}(X,Y)
    =(\lambdasn-\lambdan)[f(\ysigmazero)-f(\ysigma)]\Bar{f}(X)\\
    &~+\lambdan[f(\ysigman)-f(\ysigma)]\Bar{f}(X)\\
    &=(\lambdasn-\lambdan)\Big[\sum_{u=1}^{d}(-\Delta\asn)^{(u)}\frac{\partial  \sigma}{\partial a^{(u)}}(\xabs)\frac{\partial f}{\partial  \sigma}(\ysigma)
    \\
    &~
    +  
    (-\Delta\bsn)^{ }\frac{\partial  \sigma}{\partial b}(\xabs)\frac{\partial f}{\partial  \sigma}(\ysigma)
    +\frac{1}{2}
    (-\Delta\nusn)^{ }
    \frac{\partial^2f}{\partial  \sigma^2}(\ysigma)+R_3(X,Y)\Big]\Bar{f}(X)\\
    &~+\lambdan[f(\ysigman)-f(\ysigma)]\Bar{f}(X),
\end{align*}
where $R_3(X,Y)$ is a Taylor remainder that satisfies
\begin{align*}
    (\lambdasn-\lambdan)|R_3(X,Y)|/\dptwo=\bigO(B_n^{\gamma'})\to 0,
\end{align*}
for some $\gamma'>0$. Thus, we can treat $[p_{\lambdan,\Gn}(X,Y)-p_{\lambdasn,\Gsn}(X,Y)]/\dptwo$ as a linear combination of elements of the set
\begin{align}
    \mathcal{K}=\Big\{&f(\ysigman)\Bar{f}(X),f(\ysigma)\Bar{f}(X),\nonumber\\
    &~\frac{\partial  \sigma}{\partial a^{(u)}}(\xabs)\frac{\partial f}{\partial  \sigma}(\ysigma)\Bar{f}(X),\nonumber
    \frac{\partial  \sigma}{\partial b}(\xabs)\frac{\partial f}{\partial  \sigma}(\ysigma)\Bar{f}(X),\nonumber\\
    \label{eq:nondistinguishable_independent_case_2}
    &~
    \frac{\partial^2f}{\partial  \sigma^2}(\ysigma)\Bar{f}(X):u\in[d] \Big\}.
\end{align}
Assume by contrary that all the coefficients of those elements go to 0 as $n\to\infty$, then we get for all $u\in[d]$,
\begin{align*}
    \frac{\lambdan}{\dptwo}\to 0, \quad 
    (\lambdasn-\lambdan)\frac{(-\Delta\asn)^{(u)}}{\dptwo}\to0,\\
    (\lambdasn-\lambdan)\frac{(-\Delta\bsn)^{}}{\dptwo}\to0,\quad
    (\lambdasn-\lambdan)\frac{(-\Delta\nusn)^{ }}{\dptwo}\to 0,
\end{align*}
which leads to 
\begin{align*}
    \frac{(\lambdasn-\lambdan)B_n}{\dptwo}\to 0,\quad \frac{\lambdan A_n}{\dptwo}\to 0, \quad \frac{\lambdasn B_n}{\dptwo}\to 0.
\end{align*}
As a result,
\begin{align*}
    1=\frac{|\lambdasn-\lambdan|B_n+\lambdan A_n+\lambdasn B_n}{\dptwo}\to 0,
\end{align*}
which is a contradiction. Consequently, not all the aforementioned coefficients tend to 0 as $n\to\infty$.

Subsequently, we denote by $m'_n$ the maximum of the absolute values of those coefficients, then $1/m'_n\not\to\infty$. 
Recall that $\dbtwo=\lambdan\|(\Delta\an,\dbn,\Delta\nun)\|^2\to 0$. 
Since $(\an,\bn,\nun)\not\to(a_0,b_0,\nu_0)$, we get $\lambdan\to 0$. Thus, 
\begin{align*}
    \frac{1}{m'_n}\frac{p_{\lambdan,\Gn}(X,Y)-p_{\lambdasn,\Gsn}(X,Y)}{\dptwo}\to\sum_{\ell=1}^2\alpha'_{\ell}(X)\frac{\partial^{\ell}f}{\partial  \sigma^{\ell}}(\ysigmazero)=0.
\end{align*}
Recall that for almost surely $(X,Y)\in\mathcal{X}\times\mathcal{Y}$, the set
\begin{align}
    \label{eq:nondistinguishable_independent_14}
    \left\{\frac{\partial^{\ell} f}{\partial  \sigma^{\ell}}(\ysigmazero):0\leq\ell\leq 2\right\}
\end{align}
is linearly independent. Then, equation~\eqref{eq:nondistinguishable_independent_14} indicates that 
\begin{align*}
    & \alpha'_1(X)=\lim_{n\to\infty}
    \Big[
    \sum_{u=1}^{d}
    \frac{1}{m'_n}\frac{(\lambdasn-\lambdan)(-\Delta\asn)^{(u)}}{\dptwo}\frac{\partial  \sigma}{\partial a^{(u)}}(X, a_0, b_0)
    +\frac{1}{m'_n}\frac{(\lambdasn-\lambdan)(-\Delta\bsn)^{ }}{\dptwo}\frac{\partial  \sigma}{\partial b}(X, a_0, b_0)
    \Big]
    =0,\\
    &\alpha'_2(X)=\lim_{n\to\infty}
     \frac{1}{m'_n}\frac{(\lambdasn-\lambdan)(-\Delta\nusn)^{ }}{\dptwo}
    =0.
\end{align*}
Due to the algebraic independence of $\sigma(X,a,b)$ and $\nu$, we obtain that for all $u\in[d]$,
\begin{align*}
    \lim_{n\to\infty}\frac{1}{m'_n}\frac{(\lambdasn-\lambdan)(-\Delta\asn)^{(u)}}{\dptwo}=0,\\
    \lim_{n\to\infty}\frac{1}{m'_n}\frac{(\lambdasn-\lambdan)(-\Delta\bsn)^{ }}{\dptwo}=0,\\
    \lim_{n\to\infty}\frac{1}{m'_n}\frac{(\lambdasn-\lambdan)(-\Delta\nusn)^{ }}{\dptwo}=0.
\end{align*}
Since $1/m'_n$ is bounded, the above limits imply that the coefficients of the following items
$$\frac{\partial  \sigma}{\partial a^{(u)}}(\xabs)\frac{\partial f}{\partial  \sigma}(\ysigma),
$$ 
$$
\frac{\partial  \sigma}{\partial b}(\xabs)\frac{\partial f}{\partial  \sigma}(\ysigma),
$$ 
$$\frac{\partial^2f}{\partial  \sigma^2}(\ysigma)$$
vanish when $n\to\infty$ for all $u\in[d] $. Moreover, those of $f(\ysigman)$ and $f(\ysigma)$, which are $\lambdan/\dptwo$ and $-\lambdan/\dptwo$, also go to 0. 
This contradicts the assumption that at least one among the coefficients of elements in the set $\mathcal{K}$ not vanishing. Thus, we reach the conclusion for this case.

\subsubsection*{Case 3:}
Both $A_n$ and $B_n$ do not vanish. This means that $( \an, \bn,\nun)\not\to( a_0, b_0,\nu_0)$ and $(\asn,\bsn,\nusn)\not\to( a_0, b_0,\nu_0)$. In this case, we will consider only the most difficult case when $( \an, \bn,\nun)$ and $(\asn,\bsn,\nusn)$ converge to the same point $(a',b',\nu')\neq( a_0, b_0,\nu_0)$.

From the formulation of the metric $\done$ in the proof of Theorem~\ref{theorem:sigma-linear-f0-notGaussian}, it is clear that $\dtwo\leq D_1((\lambdan,\Gn),(\lambdasn,\Gsn))$. 
Therefore, we get $[p_{\lambdan,\Gn}(X,Y)-p_{\lambdasn,\Gsn}(X,Y)]/\done\to 0$ as $n\to\infty$. By following the same steps to derive equation~\eqref{eq:distinguishable_independent_taylor} and equation~\eqref{eq:distinguishable_independent_fatou} and abuse notations defined for those equations, we obtain that
\begin{align*}
     &p_{\lambdan,\Gn}(X,Y)-p_{\lambdasn,\Gsn}(X,Y)
     =(\lambdasn-\lambdan)[f(\ysigmazero)-f(\ysigma)]\Bar{f}(X)\\
     &~+\lambdan\Big[\sum_{u=1}^{d}( \dan -\dasn)^{(u)}\frac{\partial  \sigma}{\partial a^{(u)}}(X,\asn,\bsn)\frac{\partial f}{\partial  \sigma}(\ysigma)\\
     &~+( \dbn -\dbsn)^{ }\frac{\partial  \sigma}{\partial b}(X,\asn,\bsn)\frac{\partial f}{\partial  \sigma}(\ysigma)
     \\
     &~
     +\frac{1}{2} (\dnun-\dnusn)^{ }
     \frac{\partial^2 f}{\partial  \sigma^2}(\ysigma)+R(X,Y)\Big]\Bar{f}(X).
 \end{align*}
 with a note that not all the coefficients of elements of the set $\mathcal{K}$ (defined in equation~\eqref{eq:nondistinguishable_independent_case_2}) go to 0 as $n\to\infty$. Furthermore,
\begin{align*}
    \frac{1}{m_n}\frac{p_{\lambdan,\Gn}(X,Y)-p_{\lambdasn,\Gsn}(X,Y)}{\done}\to&~\eta f(\ysigmazero)\Bar{f}(X)\\
    &~+\sum_{\ell=0}^2\alpha_{\ell}(X)\frac{\partial^\ell f}{\partial  \sigma^{\ell}}(\ysigmap)\Bar{f}(X)=0, 
\end{align*}
where
\begin{align*}
    \eta&=\lim_{n\to\infty}\frac{1}{m_n}\frac{(\lambdasn-\lambdan)}{\done},\quad  \alpha_0(X)=\lim_{n\to\infty}\frac{1}{m_n}\frac{-(\lambdasn-\lambdan)}{\done},\\
    \alpha_1(X)&=\lim_{n\to\infty}
    \Big[
    \sum_{u=1}^{d}
    \frac{1}{m_n}\frac{\lambdan( \dan-\dasn)^{(u)}}{\done}\frac{\partial  \sigma}{\partial a^{(u)}}(X, a',b' )
    +
    \frac{1}{m_n}\frac{\lambdan( \dbn-\dbsn)^{}}{\done}\frac{\partial  \sigma}{\partial b}(X, a',b' )
    \Big]
    ,\\
    \alpha_2(X)&=\lim_{n\to\infty} \frac{1}{2m_n}\frac{\lambdan(\dnun-\dnusn)^{ }}{\done}
    .
\end{align*}
Recall that $\dbtwo=\lambdan\|( \dan, \dbn,\dnun)\|^2\to 0$ and $\dbstwo=\lambdasn\|(\dasn, \dbsn,\dnusn)\|^2\to 0$. 
Since $( \an, \bn,\nun)\not\to( a_0, b_0,\nu_0)$ and $(\asn,\bsn,\nusn)\not\to( a_0, b_0,\nu_0)$, we get that $\lambdan\to 0$ and $\lambdasn\to 0$. As a result, we achieve that $\eta=\alpha_0(X)=0$, which leads to 
\begin{align*}
    \sum_{\ell=0}^2\alpha_{\ell}(X)\frac{\partial^\ell f}{\partial  \sigma^{\ell}}(\ysigmap)=0.
\end{align*}
Additionally, as the set 
\begin{align*}
    \left\{\frac{\partial^{\tau} f}{\partial  \sigma^{\tau}}(\ysigmap):0\leq\tau\leq 2\right\}
\end{align*}
is linearly independent, then $\alpha_1(X)=\alpha_2(X)=0$ for almost surely $X$. Moreover, $\sigma(X,a,b)$ and $\nu$ are algebraically independent and $1/m_n$ is bounded, we get that for all $u\in[d]$,
\begin{align*}
    \lim_{n\to\infty}\frac{\lambdan(\dan-\dasn)^{(u)}}{\done}=0, \\
    \lim_{n\to\infty}\frac{\lambdan(\dbn-\dbsn)^{ }}{\done}=0, \\
    \lim_{n\to\infty}\frac{\lambdan(\dnun-\dnusn)^{ }}{\done}=0.
\end{align*}
This contradicts the fact that not all the coefficients of elements in the set $\mathcal{K}$ vanish as $n\to\infty$. Hence, we reach the conclusion for this case.
\end{proof}

\subsection{Proof of Theorem~\ref{theorem:sigma-linear-f0-Gaussian-varphi-linear}}
\label{appendixproof:sigma-linear-f0-Gaussian-varphi-linear}
In this section, we consider both the experts $\varphi$ and $\sigma$ are of linear forms, and the density $f_0=f$ is a Gaussian density.
Building on the explanation in Section \ref{section:sigma-linear-f0-Gaussian-varphi-linear},  we will focus only on the case when $\varphi$ and $\sigma$ are the same linear functions.
The most challenging part is that there is an interaction between parameters $b$ and $\nu$ via the following heat equation:
\begin{align*}
        \frac{\partial^2 f}{\partial b^2} (Y|a^{\top}X+b ,\nu ) &= 2 \frac{\partial f}{\partial \nu}(Y|a^{\top}X+b ,\nu).
\end{align*}
Recall for the simplicity of the presentation in the paper, we define $\Delta G=(\Delta a,\Delta b,\Delta\nu) = (a - a_0,b - b_0,\nu-\nu_0 )$ and $\Delta \Gs=(\Delta \as,\Delta \bs,\Delta\nus) = (\as - a_0,\bs - b_0,\nus-\nu_0 )$ for any element $(a,b,\nu), (\as,\bs,\nus)\in \Theta$.
For the convenience of readers, we will first restate the result of Theorem~\ref{theorem:sigma-linear-f0-Gaussian-varphi-linear}:
denote for any sequence $(l_n)_{n\geq 1}$,
\begin{align*}
    \Xi_1(l_n):=
\Big\{ 
(\lambda,G)={(\lambda,a,b,\nu)}\in\Xi:
\frac{l_n}{\min_{1\leq i\leq d}\{ 
|(\da)_i|^2,|\db|^4,|\dnu|^2
\}\sqrt{n}}\leq\lambda
\Big\}.
\end{align*}
Then, the followings hold for any sequence $(l_n )_{n\geq 1}$ such that $l_n/\log n\to\infty$ as $n\to\infty$:
\begin{align*}
    \sup_{(\lambdas,\Gs)\in \Xi_1(l_n)  }
    \mathbb{E}_{p_{\lambdas,\Gs}} \Big[ 
    (\Vert\das\Vert^4+|\dbs|^8+|\dnus|^4)
    \times |\widehat{\lambda}_n
    -\lambdas|^2 \Big] 
    &\lesssim \frac{\log n}{n},
    \\
    \sup_{(\lambdas,\Gs)\in \Xi_1(l_n) }
    \mathbb{E}_{p_{\lambdas,\Gs}} 
    \Big[ (\lambdas)^2 
    (\Vert\das\Vert^2+|\dbs|^4+|\dnus|^2)
    \qquad\qquad&
    \\
    \times(\Vert\dhan-\das\Vert^2+|\dhbn-\dbs|^4+|\dhnun-\dnus|^2)
    \Big] 
    &\lesssim \frac{\log n}{n}.
\end{align*}
To prove the above results,  we can establish the following result concerning the lower bound of $V(p_{\lambda, G}, p_{\lambda^*, G_*})$.
\begin{lemma}
    \label{applemma:sigma-linear-f0-Gaussian-varphi-linear} 
Assume that $f_0$ and $f$ are both Gaussians, and the mean experts $\varphi$ in $f_0$ and $\sigma$ in $f$ are both linear functions.
Futhermore, if $f_0(Y|\varphi(a_0^{\top}X+b_0),\nu_0)=f(Y|\sigma(a_0^{\top}X+b_0),\nu_0)$ for some $(a_0,b_0,\nu_0)\in\Theta$.
For any $(\lambda, G),(\lambda^*, G_*)$, let us define
    \begin{align*}
        &D_4((\lambda,\G),(\lambdas, \Gs))
        =~
        \lambda 
        \left(
        \Vert \Delta a \Vert^2
        +|\Delta b |^4+|\Delta \nu |^2 
        \right)
        +\lambdas
        \left(
        \Vert\Delta\as\Vert^2+|\Delta\bs|^4+|\Delta\nus|^2
        \right)\\
        &-\min\{\lambda,\lambdas\}
        \left[        
        \Vert\Delta a\Vert^2+|\Delta b|^4+|\Delta \nu|^2+
        \Vert\Delta\as\Vert^2+|\Delta\bs|^4+|\Delta\nus|^2
        \right]\\
        &+\left[
        \lambda
\left(
        \Vert\Delta a\Vert+|\Delta b|^2+|\Delta \nu|
        \right)
        +\lambdas
        \left(
        \Vert\Delta\as\Vert+|\Delta\bs|^2+|\Delta\nus|
        \right)
        \right]
        \times
        \left(
        \Vert\Delta a-\Delta\as\Vert+
        |\Delta b-\Delta\bs|^2+
        |\Delta \nu-\Delta\nus|
        \right).
    \end{align*}
    Then, there exists a positive constant $C_4$ that depends on $\Theta$, $(a_0,b_0,\nu_0)$ and satisfies the following:
\begin{align*}
    V(p_{\lambda, G},p_{\lambda^*, G_*})\geq C_4D_4((\lambda, G),(\lambda^*, G_*)).
\end{align*}
    for all $(\lambda, G)$ and $(\lambda^*, G_*)$
\end{lemma}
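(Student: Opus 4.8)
The plan is to run the local-to-global scheme used for Theorem~\ref{theorem:sigma-linear-f0-notGaussian} and Lemma~\ref{applemma:sigma-nonlinear-f0-Gaussian-varphi-nonlinear}, with the new feature that, since $\varphi=\sigma=\mathrm{id}$ and $f_0=f$, the $(b,\nu)$-jet of the prompt is degenerate through the heat equation~\eqref{eq:heat_equation_1}. First I would use the Fatou-type reduction of Theorem~3.1 in~\cite{Ho-Nguyen-EJS-16} (as in the proof of~\eqref{applossineq:D1-loss}) to reduce $V(p_{\lambda,G},p_{\lambda^*,G^*})\ge C_4 D_4((\lambda,G),(\lambda^*,G^*))$ to showing, for each limit point $(\bar\lambda,\bar G)\in\Xi$, that
\begin{align*}
\lim_{\varepsilon\to 0}\,\inf_{(\lambda,G),(\lambda^*,G^*)}\left\{\frac{\|p_{\lambda,G}-p_{\lambda^*,G^*}\|_\infty}{D_4((\lambda,G),(\lambda^*,G^*))}:D_4((\lambda,G),(\bar\lambda,\bar G))\vee D_4((\lambda^*,G^*),(\bar\lambda,\bar G))\le\varepsilon\right\}>0,
\end{align*}
and then argue by contradiction along sequences $(\lambda_n,G_n),(\lambda_n^*,G_{*,n})$ whose two $D_4$-distances to $(\bar\lambda,\bar G)$ vanish while $\|p_{\lambda_n,G_n}-p_{\lambda_n^*,G_{*,n}}\|_\infty/D_4((\lambda_n,G_n),(\lambda_n^*,G_{*,n}))\to 0$. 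If $\bar G\ne(a_0,b_0,\nu_0)$ and $\bar\lambda>0$, the prompt does not merge, the two Gaussians near $\bar G$ are genuinely distinct from $f_0$, and locally $D_4\asymp D_1$, so the bound follows from~\eqref{applossineq:D1-loss}; thus the work reduces to the merging regime $\bar G=(a_0,b_0,\nu_0)$, possibly with $\bar\lambda=0$.

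In the merging regime I would split, as in Lemma~\ref{applemma:sigma-nonlinear-f0-Gaussian-varphi-nonlinear}, on whether $A_n:=\|(\Delta a_n,\Delta b_n,\Delta\nu_n)\|$ and $B_n:=\|(\Delta a_n^*,\Delta b_n^*,\Delta\nu_n^*)\|$ tend to $0$. If at least one does not, then $D_4(\cdot,(\bar\lambda,\bar G))\to 0$ forces $\lambda_n\to 0$ (and $\lambda_n^*\to 0$ when both stay away from the origin), and a first-order expansion with a coarser surrogate loss reduces that case verbatim to Theorem~\ref{theorem:sigma-linear-f0-notGaussian}. The substantive case is $A_n\to 0$ and $B_n\to 0$: assume WLOG $\lambda_n^*\ge\lambda_n$, pass to a subsequence with $(a_n^*,b_n^*,\nu_n^*)\to(a_0,b_0,\nu_0)$, write
\begin{align*}
p_{\lambda_n,G_n}-p_{\lambda_n^*,G_{*,n}}&=(\lambda_n^*-\lambda_n)\big[f(Y|a_0^\top X+b_0,\nu_0)-f(Y|(a_n^*)^\top X+b_n^*,\nu_n^*)\big]\bar f(X)\\
&\quad+\lambda_n\big[f(Y|a_n^\top X+b_n,\nu_n)-f(Y|(a_n^*)^\top X+b_n^*,\nu_n^*)\big]\bar f(X),
\end{align*}
and Taylor-expand both brackets around $((a_n^*)^\top X+b_n^*,\nu_n^*)$ up to order $4$ — one order higher than in the non-linear case — because $\partial_b^2 f=2\partial_\nu f$ and its iterates collapse all mixed $(b,\nu)$-derivatives onto the family $\partial_b^m f$, $0\le m\le 4$, so orders $1$ and $2$ no longer separate $\Delta b$ from $\Delta\nu$. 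Using also $\partial_{a_i}f=X_i\partial_b f$ and discarding the negligible Taylor remainders (they are $o(D_4)$ by $A_n+B_n\ge C_n$ and the shape of $D_4$), the quotient $[p_{\lambda_n,G_n}-p_{\lambda_n^*,G_{*,n}}]/D_4((\lambda_n,G_n),(\lambda_n^*,G_{*,n}))$ becomes a linear combination of $\{X^\beta\partial_b^m f(Y|(a_n^*)^\top X+b_n^*,\nu_n^*)\bar f(X):0\le m\le 4,\ 0\le|\beta|\le m\}$ with explicit polynomial coefficients in $\lambda_n,\lambda_n^*$ and the $\Delta$-quantities.

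The endgame is standard. These functions are linearly independent a.e.\ (Gaussian density times distinct-degree Hermite polynomials in $Y$; monomials $X^\beta$ independent since $X$ has a density; $a^\top X+b$ algebraically independent of $\nu$). If all coefficients divided by $D_4$ tend to $0$, then, using the elementary identity $\lambda p-\lambda^* p^*=\lambda(p-p^*)-(\lambda^*-\lambda)p^*$ and AM--GM exactly as in Lemma~\ref{applemma:sigma-nonlinear-f0-Gaussian-varphi-nonlinear}, one extracts from the coefficients of $\partial_b f,\dots,\partial_b^4 f$ that $(\lambda_n^*-\lambda_n)/D_4\to0$, that $\lambda_n\|\Delta a_n-\Delta a_n^*\|/D_4,\ \lambda_n|\Delta b_n-\Delta b_n^*|^2/D_4,\ \lambda_n|\Delta\nu_n-\Delta\nu_n^*|/D_4\to0$, and that $(\lambda_n^*-\lambda_n)(\|\Delta a_n^*\|^2+|\Delta b_n^*|^4+|\Delta\nu_n^*|^2)/D_4\to 0$ together with the $\lambda_n$-weighted analogues; summing these gives $1=D_4/D_4\to0$, a contradiction. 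Otherwise one rescales by the largest coefficient $m_n$ (so $1/m_n\not\to\infty$), takes limits, applies Fatou again to obtain $\sum_{\tau=0}^4 E_\tau(X)\partial_b^\tau f(Y|a_0^\top X+b_0,\nu_0)\bar f(X)=0$, and the same independence forces all $E_\tau\equiv0$, a contradiction.

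The main obstacle is the bookkeeping in the substantive case. Because the heat equation degenerates the $(b,\nu)$-jet, the coefficient of $\partial_b^m f$ carries only $\Delta b$ for $m=1$, $(\Delta b)^2+\Delta\nu$ for $m=2$, $\tfrac16(\Delta b)^3+\tfrac12\Delta b\,\Delta\nu$ for $m=3$, and $\tfrac1{24}(\Delta b)^4+\tfrac14(\Delta b)^2\Delta\nu+\tfrac18(\Delta\nu)^2$ for $m=4$ (plus their $X$-dressings by $\Delta a$); the delicate point is to verify that the particular weights in $D_4$ — the diagonal $\lambda(\|\Delta a\|^2+|\Delta b|^4+|\Delta\nu|^2)$ and the cross factor $(\|\Delta a-\Delta a^*\|+|\Delta b-\Delta b^*|^2+|\Delta\nu-\Delta\nu^*|)$ multiplying $[\lambda(\|\Delta a\|+|\Delta b|^2+|\Delta\nu|)+\lambda^*(\|\Delta a^*\|+|\Delta b^*|^2+|\Delta\nu^*|)]$ — are, up to universal constants, exactly the total absolute value of all these coefficients over the two sequences, so that ``every coefficient $\to0$'' is equivalent to ``$D_4/D_4\to0$''. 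This matching is what forces the asymmetric rates ($\widehat b_n$ at the square root of the rate of $\widehat a_n,\widehat\nu_n$), and it is the single genuinely computational step.
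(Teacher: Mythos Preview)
Your high-level scheme (Fatou reduction, split on whether $\bar G=(a_0,b_0,\nu_0)$, fourth-order Taylor expansion in the merging regime, basis $\{X^{\alpha_1}\partial_\sigma^{\ell+\alpha_1}f\}$) matches the paper exactly, and you correctly identify that the heat equation $\partial_b^2 f=2\partial_\nu f$ forces the higher-order expansion. The gap is in what you call the ``standard endgame.''

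You propose to handle ``all coefficients $/D_4\to 0$'' by the same AM--GM/nonnegativity extraction used in Lemma~\ref{applemma:sigma-nonlinear-f0-Gaussian-varphi-nonlinear}. That argument works there only because Proposition~\ref{prop:sigma-nonlinear-f0-Gaussian-varphi-nonlinear} makes \emph{all} second-order partials of $f$ linearly independent: in particular the coefficient of $\tfrac{\partial^2\sigma}{\partial b^2}\tfrac{\partial f}{\partial\sigma}$ isolates $(\lambda_n^*-\lambda_n)(\Delta b_n^*)^2+\lambda_n(\Delta b_n-\Delta b_n^*)^2$, a sum of nonnegative terms that AM--GM then splits. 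In the linear case $\sigma''\equiv 0$ so that term is absent, and the heat equation collapses the basis so that the coefficient $E_{0,2}$ of $\partial_\sigma^2 f$ is
\[
\tfrac12\Big[(\lambda_n^*-\lambda_n)\big((\Delta b_n^*)^2-\Delta\nu_n^*\big)+\lambda_n\big((\Delta b_n-\Delta b_n^*)^2+(\Delta\nu_n-\Delta\nu_n^*)\big)\Big],
\]
which has no sign structure and cannot be separated by AM--GM. The same mixing occurs at orders $3$ and $4$. Consequently ``$E_{\alpha_1,\ell}/D_4\to 0$ for all $(\alpha_1,\ell)$'' does \emph{not} directly yield the individual pieces you list; the matching you describe in your last paragraph is not a bookkeeping exercise but the actual content of the proof.

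What the paper does instead is a lengthy case analysis: it splits on whether the ratio
\[
R_n=\frac{\lambda_n(|\Delta a_n|+|\Delta b_n|^2+|\Delta\nu_n|)+\lambda_n^*(|\Delta a_n^*|+|\Delta b_n^*|^2+|\Delta\nu_n^*|)}{\lambda_n(|\Delta a_n-\Delta a_n^*|+|\Delta b_n-\Delta b_n^*|^2+|\Delta\nu_n-\Delta\nu_n^*|)}
\]
diverges, then further on $\lambda_n^*/\lambda_n$ and on which of $|\Delta a_n|,|\Delta b_n|^2,|\Delta\nu_n|,\ldots$ is maximal. In several subcases the contradiction comes not from positivity but from showing that a limiting system of polynomial equations (e.g.\ $\sum_{\alpha_2+2\alpha_3=\ell}\tfrac{1}{\alpha_2!\alpha_3!}(c_1^2 a_1^{\alpha_2}b_1^{\alpha_3}+c_2^2 a_2^{\alpha_2}b_2^{\alpha_3})=0$ for $\ell=1,\ldots,4$) has only the trivial solution, invoking Proposition~2.1 of \cite{Ho-Nguyen-Ann-16}; in others it comes from explicit substitutions $r_i^n,s_i^n$ and checking that the resulting scalar limits are inconsistent. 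This polynomial-system obstruction, not AM--GM, is what pins down the anisotropic weights $(\|\Delta a\|^2+|\Delta b|^4+|\Delta\nu|^2)$ in $D_4$.
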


\begin{proof}[Proof of Theorem~\ref{theorem:sigma-linear-f0-Gaussian-varphi-linear}]
Similar to the proof in the previous section, we define:
    \begin{align*}
        &\overline{D_4}((\lambda,G),(\lambdas, \Gs))
        =
        (\lambdas -\lambda )
        \Big(
        \Vert \Delta a\Vert+|\Delta b|^2+|\Delta\nu |
        \Big)
        \Big(
        \Vert\Delta\as \Vert+|\Delta\bs |^2+|\Delta\nus |
        \Big)
        \\
        &+\Big(
        \lambda 
        (
        \Vert \Delta a\Vert+|\Delta b|^2+|\Delta\nu |
        )
        +\lambdas 
        (
        \Vert\Delta\as \Vert+|\Delta\bs |^2+|\Delta\nus |
        )
        \Big)\cdot
        \Big(
        \Vert\Delta a-\Delta\as \Vert+
        |\Delta b-\Delta\bs |^2+
        |\Delta\nu -\Delta\nus |
        \Big).        
    \end{align*}
It will be shown that $D_4((\lambda ,G),(\lambdas , \Gs )) \asymp \overline{D_4}((\lambda ,G),(\lambdas , \Gs ))$. 
Considering the above formulation of $\overline{D_4}((\lambda ,G),(\lambdas , \Gs ))$ and assuming, without loss of generality, that $\lambdas \geq \lambda$, we can infer that
 \begin{align*}
         &\overline{D_4}((\lambda ,G ),(\lambdas , \Gs ))
         \\&\lesssim
         (\lambdas -\lambda )
         \Big(
         \Vert\Delta\as \Vert + \Vert\Delta a-\Delta\as \Vert+
         |\Delta\bs |^2 + |\Delta b-\Delta\bs |^2 +
         |\Delta\nus | + |\Delta\nu -\Delta\nus |
         \Big)
         \cdot
         \Big(
         \Vert\Delta\as \Vert+|\Delta\bs |^2+|\Delta\nus |
         \Big)
         \\
         &+\Big(
         \lambda 
         (
         \Vert \Delta a\Vert+|\Delta b|^2+|\Delta\nu |
         )
         +\lambdas 
         (
         \Vert\Delta\as \Vert+|\Delta\bs |^2+|\Delta\nus |
         )
         \Big)
         \cdot
         \Big(
         \Vert\Delta a-\Delta\as \Vert+
         |\Delta b-\Delta\bs |^2+
        |\Delta\nu -\Delta\nus |
        \Big)       
        \\
        &\lesssim (\lambdas -\lambda )
        \Big(
        \Vert\Delta\as \Vert +
        |\Delta\bs |^2 +
        |\Delta\nus | 
        \Big)\Big(
        \Vert\Delta\as \Vert +
        |\Delta\bs |^2 +
        |\Delta\nus | 
        \Big)\\
        &+\lambdas 
        \Big(
        \Vert\Delta\as \Vert +
        |\Delta\bs |^2 +
        |\Delta\nus | 
        \Big) \Big(
        \Vert\Delta a-\Delta\as \Vert+
        |\Delta b-\Delta\bs |^2+
        |\Delta\nu -\Delta\nus |
         \Big) \\
        &+\Big(
        \lambda 
        (
        \Vert \Delta a\Vert+|\Delta b|^2+|\Delta\nu |
        )
        +\lambdas 
        (
        \Vert\Delta\as \Vert+|\Delta\bs |^2+|\Delta\nus |
        )
        \Big)
        \Big(
        \Vert\Delta a-\Delta\as \Vert+
        |\Delta b-\Delta\bs |^2+
        |\Delta\nu -\Delta\nus |
         \Big)\\
         &\lesssim D_4((\lambda ,G ),(\lambdas , \Gs )).
     \end{align*}
On the other hand, 
\begin{align*}
        &\overline{D_4}((\lambda ,G ),(\lambdas , \Gs ))
        \gtrsim
        (\lambdas -\lambda )
        \Big(
        \Vert \Delta a\Vert + \Vert\Delta a-\Delta\as \Vert+
        |\Delta b|^2 + |\Delta b-\Delta\bs |^2 +
        |\Delta\nu | + |\Delta\nu -\Delta\nus |
        \Big)
        \cdot\\&
        \Big(
        \Vert\Delta\as \Vert+|\Delta\bs |^2+|\Delta\nus |
        \Big)
        +\Big(
        \lambda 
        (
        \Vert \Delta a\Vert+|\Delta b|^2+|\Delta\nu |
        )
        +\lambdas 
        (
        \Vert\Delta\as \Vert+|\Delta\bs |^2+|\Delta\nus |
        )
        \Big)
        \cdot
        \Big(
        \Vert\Delta a-\Delta\as \Vert+\\&
        |\Delta b-\Delta\bs |^2+
        |\Delta\nu -\Delta\nus |
        \Big)       
        \gtrsim
        D_4((\lambda ,G ),(\lambdas , \Gs ))
    \end{align*}
    where the last inequality come from the triangle inequality and Cauchy–Schwarz inequality. Now we could conclude that $D_4((\lambda,G),(\lambdas, \Gs)) \asymp \overline{D_4}((\lambda,G),(\lambdas, \Gs))$, i.e.
\begin{align*}
    D_4((\lambda, G), (\lambda^*, G_*)) &\asymp |\lambda - \lambdas| 
    (\Vert \Delta a\Vert + |\Delta b|^2+ | \Delta \nu |  )(\Vert\das\Vert+ | \dbs| ^2+| \dnus| ) \\
    &+ (\lambdas
    (\Vert\das\Vert+| \dbs|^2+ |\dnus |)+\lambda(\Vert\da\Vert+| \db|^2+ |\dnu |) )
    (\Vert a-\as\Vert+ | b-\bs |^2+ |\nu-\nus |)
\end{align*}
Combining this result with Theorem \ref{theorem:ConvergenceRateofDensityEstimation} and Lemma \ref{applemma:sigma-linear-f0-Gaussian-varphi-linear}, we have 
\begin{align}
    \label{eq:second_appthm:lower-mle-Gaussian-linear}
    \sup_{(\lambdas,\Gs)\in \Xi_1(l_n)  }
    \mathbb{E}_{p_{\lambdas,\Gs}} \Big( 
    (\Vert \Delta \widehat{a}_n\Vert^2 +  |\Delta \widehat{b}_n |^4+  |\Delta \widehat{\nu}_n  |^2 )(\Vert\das\Vert^2+  |\dbs |^4+ |\dnus |^2)
    |\widehat{\lambda}_n
    -\lambdas|^2 \Big) 
    &\lesssim \frac{\log n}{n},\\
    \label{eq:first_appthm:lower-mle-Gaussian-linear}
    \sup_{(\lambdas,\Gs)\in \Xi_1(l_n) }
    \mathbb{E}_{p_{\lambdas,\Gs}} 
    \Big( (\lambdas)^2 
    (\Vert\das\Vert^2+ |\dbs |^4+ |\dnus |^2)
    (\Vert\han-\as\Vert^2+ |\hbn-\bs |^4+ |\hnun-\nus |^2)
    \Big) 
    &\lesssim \frac{\log n}{n}.
\end{align}

We follow the same argument as in the proof of Theorem \ref{theorem:sigma-nonlinear-f0-Gaussian-varphi-nonlinear}. Applying the Cauchy-Schwartz inequality, we can bound  $\mathbb{E}_{p_{\lambdas,G_*}}\left(\Vert \Delta \as\Vert^4 +  | \Delta \bs |^8 +  |\Delta \nus| ^4 |\widehat{\lambda}_n-\lambdas|^2\right)$ by
\begin{align*}
    & \mathbb{E}_{p_{\lambdas,G_*}}\left((\Vert \Delta \as \Vert^2 +  |\Delta \bs |^4 +  | \Delta \nus |^2)(\Vert \dhan \Vert^2 +  |\dhbn  |^4 +  | \dhnun |^2) |\widehat{\lambda}_n-\lambdas|^2\right)\\
    +&\mathbb{E}_{p_{\lambdas,G_*}}\left((\Vert \Delta \as \Vert^2 +  |\Delta \bs |^4 +  | \Delta \nus |^2)(\Vert\dhan-\das\Vert^2+ |\dhbn-\dbs |^4+ |\dhnun-\dnus |^2)|\widehat{\lambda}_n-\lambdas|^2\right)\\
\end{align*}
The first term has been estimated in  equation~\eqref{eq:second_appthm:lower-mle-Gaussian-linear}. For the second term, if $\lambdas \neq 0$ it is in fact an implication of  equation~\eqref{eq:first_appthm:lower-mle-Gaussian-linear}, noting that $\limsup \frac{|\hat{\lambda}_n-\lambdas|}{\lambdas} \leq \frac{2}{\lambdas}$. Otherwise, if $\lambdas = 0$, we have 
\begin{align*}
    D_4((\widehat{\lambda}_n,\widehat{G}_n), (\lambda^*, G_{*})) &= \widehat{\lambda}_n(\Vert \Delta \widehat{a}_n \Vert^2 +  | \Delta \widehat{b}_n  |^4 +  | \Delta \widehat{\nu}_n  |^2) \\
    &+ \widehat{\lambda}_n(\Vert \Delta \widehat{a}_n \Vert +  | \Delta \widehat{b}_n  |^2 +  | \Delta \widehat{\nu}_n  |)(\Vert \Delta \widehat{a}_n - \das\Vert +  |\Delta \widehat{b}_n - \dbs |^2 +  |\Delta \widehat{\nu}_n - \dnus |)\\
    &\geq \widehat{\lambda}_n(\Vert \Delta \widehat{a}_n \Vert +  | \Delta \widehat{b}_n  |^2 +  | \Delta \widehat{\nu}_n  |)(\Vert \Delta \widehat{a}_n - \das\Vert +  |\Delta \widehat{b}_n - \dbs |^2 +  |\Delta \widehat{\nu}_n - \dnus |).
\end{align*}
The conclusion follows from direct application of Lemma
\ref{applemma:sigma-linear-f0-Gaussian-varphi-linear} and Theorem \ref{theorem:ConvergenceRateofDensityEstimation}.
\end{proof}

We will now prove Lemma \ref{applemma:sigma-linear-f0-Gaussian-varphi-linear}.
\begin{proof}[Proof of Lemma \ref{applemma:sigma-linear-f0-Gaussian-varphi-linear}]
Given $\overline{G}=(\Bar{a},\Bar{b},\Bar{\nu})$, where $\Bar{\lambda}\in[0,1]$ and $(\Bar{a},\Bar{b},\Bar{\nu})$ can be identical to $({a_0},{b_0},{\nu_0})$, we will prove that
\begin{itemize}
    \item[(i)] If $({a_0},{b_0},{\nu_0})\neq(\Bar{a},\Bar{b},\Bar{\nu})$ and $\Bar{\lambda}>0$, then
    \begin{align*}
     \lim_{\varepsilon\to 0}\inf_{(\lambda,G),(\lambda^*,G_*)}\left\{\frac{\|p_{\lambda, G} - p_{\lambda^*, G_*}\|_{\infty}}{D_1((\lambda,G),(\lambda^*,G_*))}:D_1((\lambda,G),(\overline{\lambda},\overline{G}))\vee D_1((\lambda^*, G_*),(\overline{\lambda},\overline{G}))\leq\varepsilon\right\}>0.
    \end{align*}
 
    \item[(ii)] If $({a_0},{b_0},{\nu_0})=(\Bar{a},\Bar{b},\Bar{\nu})$ or $({a_0},{b_0},{\nu_0})\neq(\Bar{a},\Bar{b},\Bar{\nu})$ and $\Bar{\lambda}=0$, then
    \begin{align*}
     \lim_{\varepsilon\to 0}\inf_{(\lambda,G),(\lambda^*,G_*)}\left\{\frac{\|p_{\lambda, G} - p_{\lambda^*, G_*}\|_{\infty}}{D_4((\lambda,G),(\lambda^*,G_*))}:D_4((\lambda,G),(\overline{\lambda},\overline{G}))\vee D_4((\lambda^*, G_*),(\overline{\lambda},\overline{G}))\leq\varepsilon\right\}>0.
    \end{align*}
\end{itemize}


Note that we can apply the arguments in the proof of Theorem~\ref{theorem:sigma-linear-f0-notGaussian} for part (i). Therefore, we will provide proof for only part (ii) in this section, particularly the most demanding setting when $({a_0},{b_0},{\nu_0})=(\Bar{a},\Bar{b},\Bar{\nu})$. 


Assume that the conclusion in part (ii) does not hold, we can find two sequences 
$G_n=(a_n,b_{n},\nu_{n})$
and 
$\Gsn=(a_n^*, b^*_{n},\nu^*_{n})$
and two sequences of mixing proportions $\lambda_n$ and $\lambda_n^* \in [0, 1]$
that satisfy the following properties
\begin{align*}
     \begin{cases}
        D_4((\lambda_n,G_n),(\overline{\lambda},\overline{G})) \to 0 ,\\
        D_4((\lambda^*_n,G_{*,n}),(\overline{\lambda},\overline{G})) \to 0 ,\\
        \|p_{\lambda_n,G_n}-p_{\lambda_n^*,G_{*,n}}\|_{\infty}/D_4((\lambda_n,G_n),(\lambda_n^*,G_{*,n}))\to 0.
     \end{cases}
 \end{align*}
when $n\to\infty$. WLOG, we assume that $\lambdasn\geq\lambdan$. Now, we take into account only the case when 
$\|(\Delta a_n,\Delta b_n, \Delta \nu_n ) \|\to0$ 
and 
$\|(\Delta a^*_n,\Delta b^*_n, \Delta \nu^*_n ) \|\to0$, 
while the other two cases can be argued in the same way as in the proof of Theorem~
\ref{theorem:sigma-nonlinear-f0-Gaussian-varphi-nonlinear}
, Case 2 and Case 3. Here $\Delta a_n = a_n - a_0$, $\Delta a_n^* = a_n^* - a_0$.


We consider 
\begin{align*}
    p_{\lambdan, \Gn}(X,Y)-p_{\lambdasn, \Gsn}(X,Y)
    &= 
    (\lambdasn-\lambdan)
    [
    f(Y|
    \varphi (a_0^{\top}X+b_0 ),\nu_0
    )
    -
    f(Y|
    \sigma \Big((a_n^*)^{\top}X+b_n^* \Big),\nu_n^*
    )
    ]\Bar{f}(X)\\
    &~+\lambdan[f(Y|
    \sigma \Big(a_n^{\top}X+b_n \Big),\nu\textbf{}
    )-f(Y|
    \sigma \Big((a_n^*)^{\top}X+b_n^* \Big),\nu_n^*
    )]\Bar{f}(X)\\
    &=S_n+T_n.
\end{align*}

WLOG, since $\varphi(\cdot)$ and $\sigma (\cdot)$ are the same linear function, for simplicity, we could just consider that they are both identity.
{Also for the simplicity of the proof, we will consider the parameter vector $a$ in the mean expert is just a one-dimension vector.}
By means of Taylor expansion up to the fourth order, we get that
\begin{align}
    S_n&=(\lambdasn-\lambdan)
    \Big[
    \sum_{|\alpha|=1}^4\frac{1}{\alpha!}
    (-\Delta a^*_n)^{\alpha_1}
    (-\Delta b^*_n)^{\alpha_2}
    (-\Delta \nu^*_n)^{\alpha_3}\nonumber\\
    &\hspace{2.8cm}~
    \times\frac{\partial^{|\alpha|}f}{\partial a^{\alpha_1}\partial b ^{\alpha_2}\partial\nu^{\alpha_3}}
    (Y|\sigma\left( (a_n^*)^{\top}X+b_n^*\right),\nu_n^*)+R_1(X,Y)
    \Big]\Bar{f}(X)\nonumber\\
    &=(\lambdasn-\lambdan)\Big[\sum_{|\alpha|=1}^4\frac{1}{\alpha!}
    \{-\Delta a^*_n\}^{\alpha_1}\{-\Delta b^*_n\}^{\alpha_2}\{-\Delta\nu^*_n\}^{\alpha_3}\nonumber\\
    &\hspace{2.8cm}~\times\frac{X^{\alpha_1}}{2^{\alpha_3}}\frac{\partial^{\alpha_1+\alpha_2+2\alpha_3}f}{\partial \sigma^{\alpha_1+\alpha_2+2\alpha_3}}
    (Y|\sigma\left( (a_n^*)^{\top}X+b_n^*\right),\nu_n^*)+R_1(X,Y)
    \Big]\Bar{f}(X)\nonumber\\
    &=(\lambdasn-\lambdan)\Big[\sum_{\alpha_1=0}^4\sum_{\ell=0}^{2(4-\alpha_1)}\sum_{\alpha_2,\alpha_3}\frac{1}{2^{\alpha_3}\alpha!}
    \{-\Delta a^*_n\}^{\alpha_1}\{-\Delta b^*_n\}^{\alpha_2}\{-\Delta\nu^*_n\}^{\alpha_3}\nonumber\\
    \label{eq:nondistinguishable_dependent_1}
    &\hspace{2.8cm}\times X^{\alpha_1}\frac{\partial^{\ell+\alpha_1}f}{\partial \sigma^{\ell+\alpha_1}}
    (Y|\sigma\left( (a_n^*)^{\top}X+b_n^*\right),\nu_n^*)+R_1(X,Y)
    \Big]\Bar{f}(X),
\end{align}
where $\alpha_2$ and $\alpha_3$ in the above sum satisfy $\alpha_2+2\alpha_3=\ell$ and 
\textcolor{black}{$0~\vee~1-\alpha_1\leq\alpha_2+\alpha_3\leq 4-\alpha_1$}
. In addition, $R_1(X,Y)$ is the Taylor remainder such that $(\lambdasn-\lambdan)|R_1(X,Y)|/D_4((\lambdan,\Gn),(\lambdasn, \Gsn)) \to 0$ as $n\to\infty$. Analogously, we have
\begin{align}
    T_n&=\lambdan\Big[\sum_{\alpha_1=0}^4\sum_{\ell=0}^{2(4-\alpha_1)}\sum_{\alpha_2,\alpha_3}\frac{1}{2^{\alpha_3}\alpha!}\{\Delta a_n-\Delta a_n^*\}^{\alpha_1}\{\Delta b_n-\Delta b_n^*\}^{\alpha_2}\{\Delta\nu_n-\Delta\nu_n^*\}^{\alpha_3}\nonumber\\
    \label{eq:nondistinguishable_dependent_2}
    &\hspace{2.8cm}\times X^{\alpha_1}\frac{\partial^{\ell+\alpha_1}f}{\partial \sigma^{\ell+\alpha_1}}
    (Y|\sigma\left( (a_n^*)^{\top}X+b_n^*\right),\nu_n^*)
    +R_2(X,Y)\Big]\Bar{f}(X),
\end{align}
where $R_2(X,Y)$ is the Taylor remainder such that $\lambdan|R_2(X,Y)|/D_4((\lambdan,\Gn),(\lambdasn, \Gsn)) \to0$.

Subsequently, we define
\begin{align*}
    \mathcal{F}_n=\Big\{X^{\alpha_1}\frac{\partial^{\ell+\alpha_1}f}{\partial \sigma^{\ell+\alpha_1}}
    (Y|\sigma\left( (a_n^*)^{\top}X+b_n^*\right),\nu_n^*)
    \Bar{f}(X):0\leq\alpha_1\leq 4,0\leq\ell\leq 2(4-\alpha_1)\Big\}.
\end{align*}
From equations (\ref{eq:nondistinguishable_dependent_1}) and (\ref{eq:nondistinguishable_dependent_2}), the quantity 
$p_{\lambdan, \Gn}(X,Y)-p_{\lambdasn, \Gsn}(X,Y)$
can be viewed as a linear combination of elements in the set $\mathcal{F}_n$. 
Let us denote by 
$E_{\alpha_1,\ell}$ 
the coefficients of those elements in the representation of 
$[p_{\lambdan, \Gn}(X,Y)-p_{\lambdasn, \Gsn}(X,Y)]$, i.e.
\begin{align*}
    E_{\alpha_1,\ell}
    =\sum_{\alpha_2+2\alpha_3=\ell}\frac{1}{2^{\alpha_3}\textcolor{black}{\alpha!}}
    \Big[
    &
    (\lambdasn-\lambdan)
    {(-\Delta\asn)}^{\alpha_1}{(-\Delta\bsn)}^{\alpha_2}{(-\Delta\nusn)}^{\alpha_3}
    \\&
    +\lambdan\{\Delta \an -\Delta\asn\}^{\alpha_1}\{\Delta\bn-\Delta\bsn\}^{\alpha_2}\{\Delta\nun-\Delta\nusn\}^{\alpha_3}
    \Big].
\end{align*}
Assume by contrary that all the coefficients of elements in the set $\mathcal{F}_n$ in the representation of 
$[p_{\lambdan, \Gn}(X,Y)-p_{\lambdasn, \Gsn}(X,Y)]/D_4((\lambdan,\Gn),(\lambdasn, \Gsn))$
vanish when $n$ tends to infinity, for all $0\leq\alpha_1\leq 4$ and $0~\vee~1-\alpha_1\leq\ell\leq 2(4-\alpha_1)$, i.e.
\begin{align}
    \label{eq:nondistinguishable_dependent_3}
    E_{\alpha_1,\ell}/D_4((\lambdan,\Gn),(\lambdasn, \Gsn)) \to 0.
\end{align}
Now, we will divide the proof into two main scenarios:

\subsubsection*{Case 1:}
The following ratio does not tend to infinity: 
$$
R_n = \dfrac{\lambdan(|\Delta\an|+|\Delta\bn|^2+|\Delta\nun|)
+\lambdasn(|\Delta\asn|+|\Delta\bsn|^2+|\Delta\nusn|)}{\lambdan(|\Delta\an-\Delta\asn|+|\Delta\bn-\Delta\bsn|^2+|\Delta \nun-\Delta\nusn|)}\not\to\infty.$$ 

It follows that as $n$ is large enough
\begin{align*}
    D_4((\lambdan,\Gn),(\lambdasn,\Gsn))&\lesssim(\lambdasn-\lambdan)
    (|\Delta\asn|^2+|\Delta\bsn|^4+|\Delta\nusn|^2)\\
    &~+\lambdan(
    |\Delta\an-\Delta\asn|^2+
    |\Delta\bn-\Delta\bsn|^4+
    |\Delta\nun-\Delta\nusn|^2
    )\\
    &=:H_n.
\end{align*}

Putting this result together with equation (\ref{eq:nondistinguishable_dependent_3}), we have $E_{\alpha_1,\ell}/H_n\to 0$ as $n\to\infty$ for any feasible $\alpha_1,\ell$. Then, for $\alpha_1=2$ and $\ell=0$, we obtain
\begin{align*}
   \frac{E_{2,0}}{H_n}=\dfrac{(\lambdasn-\lambdan)|\Delta\asn|^2+\lambdan|\Delta\an-\Delta\asn|^2}{H_n}\to0,
\end{align*}
which leads to
\begin{align*}
    \dfrac{(\lambdasn-\lambdan)(|\Delta\bsn|^4+|\Delta\nusn|^2)+\lambdan(|\Delta\bn-\Delta\bsn|^4+|\Delta\nun-\Delta\nusn|^2)}{H_n}\to1.
\end{align*}
As a consequence, we get
\begin{align*}
    T_{0,\ell}:&=
    \dfrac
    {\sum_{\substack{\alpha_2+2\alpha_3=\ell\\ \alpha_2+\alpha_3\leq 4}}\frac{1}{2^{\alpha_3}\alpha_2!\alpha_3!}[(\lambdasn-\lambdan)\{-\Delta\bsn\}^{\alpha_2}\{-\Delta\nusn\}^{\alpha_3}+\lambdan\{\Delta\bn-\Delta\bsn\}^{\alpha_2}\{\Delta\nun-\Delta\nusn\}^{\alpha_3}]}
    {(\lambdasn-\lambdan)(|\Delta\bsn|^4+|\Delta\nusn|^2)+\lambdan(|\Delta\bn-\Delta\bsn|^4+|\Delta\nun-\Delta\nusn|^2)}
    \\
    &=\dfrac{E_{0,\ell}}
    {(\lambdasn-\lambdan)(|\Delta\bsn|^4+|\Delta\nusn|^2)+\lambdan(|\Delta\bn-\Delta\bsn|^4+|\Delta\nun-\Delta\nusn|^2)}
    \\
    &=\dfrac{E_{0,\ell}/H_n}
    {
    \Big(
    {(\lambdasn-\lambdan)(|\Delta\bsn|^4+|\Delta\nusn|^2)+\lambdan(|\Delta\bn-\Delta\bsn|^4+|\Delta\nun-\Delta\nusn|^2)}
    \Big)
    /H_n}
    \\
    &\to \dfrac{0}{1}=0.
\end{align*}
Since $(\lambdasn-\lambdan)$ and $\lambdan$ play the same role, we can assume that $\lambdasn-\lambdan\leq \lambdan$ for all $n\to\infty$. 

\textbf{Case 1.1}: $(\lambdasn-\lambdan)/\lambdan\not\to0$ as $n\to\infty$.

Under this setting, we define $p_n=\max\{\lambdasn-\lambdan,\lambdan\}$ and
\begin{align*}
    M_n=\max\{|\Delta\bsn|,|\Delta\bn-\Delta\bsn|,|\Delta\nusn|^{1/2},|\Delta\nun-\Delta\nusn|^{1/2}\}.
\end{align*}
In addition, we denote
\begin{align*}
(\lambdasn-\lambdan)/p_n\to c_1^2 &,\quad \lambdan/p_n\to c_2^2, \\
    \Delta\bsn/M_n\to -a_1 &,\quad (\Delta\bn-\Delta\bsn)/M_n\to a_2,\\
    \frac{1}{2}\Delta\nusn/M_n^2\to -b_1 &,\quad \frac{1}{2}(\Delta\nun-\Delta\nusn)/M_n^2\to b_2.
\end{align*}
Note that at least one among $a_1,b_1,a_2,b_2$ and both $c_1,c_2$ are different from zero. Next, for each $\ell\in[4]$, 
we divide both the numerators and the denominators of $T_{0,\ell}$ by $p_nM_n^{\ell}$, then we achieve the following system of polynomial equations
\begin{align*}
    \sum_{\alpha_2+2\alpha_3=\ell}\frac{1}{\alpha_2!\alpha_3!}(c_1^2a_1^{\alpha_2}b_1^{\alpha_3}+c_2^2a_2^{\alpha_2}b_2^{\alpha_3}),\quad \forall \ell\in[4].
\end{align*}
According to Propsition 2.1 in \citet{Ho-Nguyen-Ann-16}, this system admits only the trivial solution $a_1=b_1=a_2=b_2=0$, which is a contradiction. Thus, the setting that $(\lambdasn-\lambdan)/\lambdan\not\to0$ as $n\to\infty$ cannot occur.

\textbf{Case 1.2}: $(\lambdasn-\lambdan)/\lambdan\to0$, or equivalently $\lambdasn/\lambdan\to 1$ as $n\to\infty$.

Now, we consider the following quantity:
\begin{align*}
    T_{0,1}=\dfrac{(\lambdasn-\lambdan)(-\Delta\bsn)+\lambdan(\Delta\bn-\Delta\bsn)}{(\lambdasn-\lambdan)(|\Delta\bsn|^4+|\Delta\nusn|^2)+\lambdan(|\Delta\bn-\Delta\bsn|^4+|\Delta\nun-\Delta\nusn|^2)}.
\end{align*}
By dividing both the numerator and the denominator of $T_{0,1}$ by $\lambdan M_n$ (with a convention that if the new denominator goes to 0, then the new numerator also goes to 0), we get
\begin{align*}
    \dfrac{\lambdasn-\lambdan}{\lambdan}
    \cdot
    \dfrac{(-\Delta\bsn)}{M_n}+\dfrac{\Delta\bn-\Delta\bsn}{M_n}\to 0.
\end{align*}
As $(\lambdasn-\lambdan)/\lambdan\to 0$ and $|\Delta\bsn|\leq M_n$, then the first term in the above sum tends to 0. This implies that
\begin{align}
    \label{eq:nondistinguishable_dependent_4}
    (\Delta\bn-\Delta\bsn)/M_n\to0.
\end{align}
Similarly, we continue to divide both the numerator and the denominator of $T_{0,2}$ by $\lambdan M_n^2$, and obtain that
\begin{align*}
    \dfrac{(\lambdasn-\lambdan)(-\Delta\bsn)^2+\lambdan(\Delta\bn-\Delta\bsn)^2}{\lambdan M_n^2}+\dfrac{(\lambdasn-\lambdan)(-\Delta\nusn)+\lambdan(\Delta\nun-\Delta\nusn)}{\lambdan M_n^2}\to0.
\end{align*}
As demonstrated above, the first term in the above sum goes to 0 when $n\to\infty$, therefore, the second term also tends to 0, i.e.
\begin{align*}
    \dfrac{\lambdasn-\lambdan}{\lambdan}
    \cdot
    \dfrac{\Delta\nusn}{M_n^2}+\dfrac{\Delta\nun-\Delta\nusn}{M_n^2}\to 0.
\end{align*}
Recall that $(\lambdasn-\lambdan)/\lambdan\to 0$ and $|\Delta\nusn|\leq M_n^2$, then $\dfrac{\lambdasn-\lambdan}{\lambdan}\cdot\dfrac{\Delta\nusn}{M_n^2}\to 0$, leading to 
\begin{align}
    \label{eq:nondistinguishable_dependent_5}
    \dfrac{\Delta\nun-\Delta\nusn}{M_n^2}\to 0
\end{align}
Assume by contrary that $M_n\in\{|\Delta\bn-\Delta\bsn|,|\Delta\nun-\Delta\nusn|^{1/2}\}$, then it follows from equations (\ref{eq:nondistinguishable_dependent_4}) and (\ref{eq:nondistinguishable_dependent_5}) that
\begin{align*}
    1=
    \dfrac{
    \max\{
    |\Delta\bn-\Delta\bsn|^2,
    |\Delta\nun-\Delta\nusn|
    \}
    }{M_n^2}
    \to 0,
\end{align*}
which is a contradiction. As a result, $M_n\in\{|\Delta\bsn|,|\Delta\nusn|^{1/2}\}$. 

For the sake of simplicity, we will consider only the setting when $M_n=|\Delta\bsn|$, whereas the other case can be argued similarly. Subsequently, we continue to consider two small cases of this setting.

\textbf{Case 1.2.1}: $(\lambdasn-\lambdan)|\Delta\bsn|^4\leq\lambdan|\Delta\bn-\Delta\bsn|^4$ for all $n$.

Assume by contrary that 
$|\Delta\bn-\Delta\bsn|/|\Delta\nun-\Delta\nusn|^{1/2}\not\to0$. By dividing both the numerator and the denominator of $T_{0,1}$ by 
$\lambdan|\Delta\bn-\Delta\bsn|$, we get
\begin{align*}
    \dfrac{\lambdasn-\lambdan}{\lambdan}\cdot
    \dfrac{
    \Delta\bsn
    }{
    \Delta\bn-\Delta\bsn
    }\to 1.
\end{align*}

From equation
(\ref{eq:nondistinguishable_dependent_4})
, we have 
$|\Delta\bn-\Delta\bsn|/|\Delta\bsn|\to0$
, which together with the above equation deduces that
\begin{align*}
    \dfrac{
    \lambdasn-\lambdan
    }{
    \lambdan
    }\cdot
    \dfrac{
    |\Delta\bsn|^4
    }{
    |\Delta\bn-\Delta\bsn|^4
    }\to\infty.
\end{align*}
This contradicts the assumption of Case 1.2.1. Consequently, we must have that 
\begin{align*}
    \frac{|\Delta\bn-\Delta\bsn|~~}{~|\Delta\nun-\Delta\nusn|^{1/2}}\to0.
\end{align*}
WLOG, assume that $\Delta\nun-\Delta\nusn>0$ for all $n$. Additionally, let us denote
\begin{align*}
    -\Delta\bsn &=r_1^n(\Delta\nun-\Delta\nusn)^{1/2},\\
    \Delta\nusn&=r_2^n(\Delta\nun-\Delta\nusn).
\end{align*}
By dividing the numerators and the denominators of $T_{0,\ell}$ by $\lambdan(\Delta\nun-\Delta\nusn)^{\ell/2}$ for any $\ell\in[3]$, then the new denominators will go to zero. Thus, all the new numerators also tend to 0, i.e.
\begin{align*}
    \begin{cases}
        \vspace{0.2cm}\dfrac{\lambdasn-\lambdan}{\lambdan}\cdot r_1^n\to 0,\\
        \vspace{0.2cm}\dfrac{\lambdasn-\lambdan}{\lambdan}\cdot[(r_1^n)^2+r_2^n]+1\to 0,\\
        \dfrac{\lambdasn-\lambdan}{\lambdan}\cdot\Big[\dfrac{(r_1^n)^3}{6}+\dfrac{r_1^nr_2^n}{2}\Big]\to 0.
    \end{cases}
\end{align*}
Note that equation~(\ref{eq:nondistinguishable_dependent_5}) indicates that $1/|r_1^n|\to0$ as $n\to\infty$. Therefore, the third limit in the above system is reduced to
\begin{align*}
   \dfrac{\lambdasn-\lambdan}{\lambdan}\cdot\Big[\dfrac{(r_1^n)^2}{3}+r_2^n\Big]\to0.
\end{align*}
Putting this result together with the second limit, we obtain that
\begin{align*}
     \dfrac{\lambdasn-\lambdan}{\lambdan}\cdot\dfrac{2}{3}(r_1^n)^2+1\to 0.
\end{align*}
However, the left hand side of the above equation cannot converge to 0 as it is not less than one. Thus, case 1.2.1 cannot occur.

\textbf{Case 1.2.2}: $(\lambdasn-\lambdan)|\Delta\bsn|^4>\lambdan|\Delta\bn-\Delta\bsn|^4$ for all $n$.

It is worth noting that we can achieve that $(\lambdasn-\lambdan)|\Delta\bsn|^4>\lambdan|\Delta\nun-\Delta\nusn|^2$ for all $n$ by arguing similarly to case 1.2.1. 
Next, let us denote
\begin{align*}
    \Delta\bn-\Delta\bsn &=s_1^n(-\Delta\bsn),\\
    -\Delta\nusn &=s_2^n(\Delta\bsn)^2,\\
    \Delta\nun-\Delta\nusn &=s_3^n(\Delta\bsn)^2.
\end{align*}
For $\ell\in[4]$, we divide both the numerators and the denominators of $T_{0,\ell}$ by 
$(\lambdasn-\lambdan)(-\Delta\bsn)^{\ell}$. Then, we see that the new numerators goes to 0, i.e.
\begin{align*}
    1+\frac{\lambdan}{\lambdasn-\lambdan}\cdot s_1^n&\to0,\\
    \Big[1+\frac{\lambdan}{\lambdasn-\lambdan}\cdot (s_1^n)^2\Big]+s_2^n+\frac{\lambdan}{\lambdasn-\lambdan}\cdot s_3^n&\to 0,\\
    \frac{1}{6}\Big[1+\frac{\lambdan}{\lambdasn-\lambdan}\cdot(s_1^n)^3\Big]+\frac{1}{2}\Big[s_2^n+\frac{\lambdan}{\lambdasn-\lambdan}\cdot s_1^ns_3^n\Big]&\to 0,\\
    \frac{1}{24}\Big[1+\frac{\lambdan}{\lambdasn-\lambdan}\cdot(s_1^n)^4\Big]+\frac{1}{4}\Big[s_2^n+\frac{\lambdan}{\lambdasn-\lambdan}\cdot(s_3^n)^2\Big]+\frac{1}{8}\Big[(s_2^n)^2+\frac{\lambdan}{\lambdasn-\lambdan}\cdot(s_3^n)^2\Big]&\to 0.
\end{align*}
Recall that $M_n=|\Delta\bsn|$, then $|s_i^n|\leq 1$ for all $i\in[3]$. Suppose that $s_i^n\to s_i$ for any $i\in[3]$. From equations~(\ref{eq:nondistinguishable_dependent_4}) and (\ref{eq:nondistinguishable_dependent_5}), we know that $s_1=s_3=0$. Looking at the above system, the first and third limits yield that $s_2=-\frac{1}{3}$, while the second and the fourth indicate that
\begin{align*}
    \frac{1}{6}+s_2+\frac{1}{2}s_2^2=0,
\end{align*}
which contradicts the fact that $s_2=-\frac{1}{3}$. Thus, case 1.2.2 does not hold.
\subsubsection*{Case 2:}
The following ratio tends to infinity:
\begin{equation}
\label{eq:lqd_rn_infty}
    R_n = \dfrac{
\lambdan(
|\Delta\an|+|\Delta\bn|^2+|\Delta\nun|
)
+\lambdasn(
|\Delta\asn|+|\Delta\bsn|^2+|\Delta\nusn|
)
}{
\lambdan(
|\Delta\an-\Delta\asn|+
|\Delta\bn-\Delta\bsn|^2+
|\Delta\nun-\Delta\nusn|
)
}\to\infty.
\end{equation}

Recall we have defined
    \begin{align*}
        &\overline{D_4}((\lambdan,\Gn),(\lambdasn, \Gsn))
        =
        (\lambdasn-\lambdan)
        \Big(
        |\Delta\an|+|\Delta\bn|^2+|\Delta\nun|
        \Big)
        \Big(
        |\Delta\asn|+|\Delta\bsn|^2+|\Delta\nusn|
        \Big)
        \\
        &+\Big(
        \lambdan
        (
        |\Delta\an|+|\Delta\bn|^2+|\Delta\nun|
        )
        +\lambdasn
        (
        |\Delta\asn|+|\Delta\bsn|^2+|\Delta\nusn|
        )
        \Big)\cdot
        \Big(
        |\Delta\an-\Delta\asn|+
        |\Delta\bn-\Delta\bsn|^2+
        |\Delta\nun-\Delta\nusn|
        \Big).        
    \end{align*}

In the proof of Theorem \ref{theorem:sigma-linear-f0-Gaussian-varphi-linear}, we already have $D_4((\lambdan,\Gn),(\lambdasn, \Gsn)) \asymp \overline{D_4}((\lambdan,\Gn),(\lambdasn, \Gsn))$.
Recall equation (\ref{eq:nondistinguishable_dependent_3}), we have $E_{\alpha_1,\ell}/{D_4((\lambdan,\Gn),(\lambdasn, \Gsn))}\to 0$ as $n\to\infty$ for any feasible $\alpha_1,\ell$. Then, for $\alpha_1=[4]$ and $0\leq\ell\leq 2(4-\alpha_1)$, we would have that
    \begin{align*}
        F_{\alpha_1,\ell}={E_{\alpha_1,\ell}}/{\overline{D_4}((\lambdan,\Gn),(\lambdasn, \Gsn))} \to 0 .
    \end{align*}
    
For convenience, we denote and recall the following terms that are frequently employed in our proof

\begin{align}
    \label{eq:lqd_an}
    A_n &:= -F_{1,0} = \dfrac{(\lambdasn - \lambdan)\dasn + \lambdan(\dasn-\dan)}{\dfourbar} =  \dfrac{\lambdasn\dasn - \lambdan\dan}{\dfourbar}\to 0\\
    \label{eq:lqd_bn}
    B_n &:= -F_{0,1} = \dfrac{(\lambdasn - \lambdan)\dbsn + \lambdan(\dbsn-\dbn)}{\dfourbar} =  \dfrac{\lambdasn\dbsn - \lambdan\dbn}{\dfourbar}\to 0\\
    \label{eq:lqd_cn}
    C_n &:= F_{0,2} = \dfrac{(\lambdasn - \lambdan)(\dbsn)^2 + \lambdan(\dbsn-\dbn)^2 + (\lambdasn - \lambdan)(-\dnusn) + \lambdan(\dnun-\dnusn)}{\dfourbar} \to 0. \\
     \label{eq:lqd_f03}
    F_{0,3} &:= \dfrac{1}{6}\dfrac{(\lambdasn - \lambdan)(-\dbsn)^3
    +\lambdan(\dbn-\dbsn)^3}{\dfourbar}\\
    &\notag+ \dfrac{1}{2}\dfrac{(\lambdasn - \lambdan)(-\dnusn)(-\dbsn) + \lambdan(\dbn-\dbsn)(\dnun-\dnusn)}{\dfourbar} \to 0\\
    \label{eq:lqd_f04}
    F_{0,4} &:= \frac{1}{24}\dfrac{(\lambdasn - \lambdan)(-\dbsn)^4+\lambdan(\dbn-\dbsn)^4}{\dfourbar}
    + \frac{1}{8}\dfrac{(\lambdasn - \lambdan)(-\dnusn)^2+ \lambdan(\dnun-\dnusn)^2}{\dfourbar}\\
    &\notag +  \frac{1}{4}\dfrac{(\lambdasn - \lambdan)(-\dnusn)(-\dbsn)^2 + \lambdan(\dbn-\dbsn)(\dnun-\dnusn)}{\dfourbar} \to 0.
\end{align}
    Also, let $M'_n = \max\{|\dan|,|\dbn|^2, |\dnun|,|\dasn|,|\dbsn|^2, |\dnusn|\}$. We consider sub-cases based on the asymptotic behavior of $\lambdasn/\lambdan$. 

\textbf{Case 2.1}: $\lambdasn/\lambdan\not\to 1$ and $\lambdasn/\lambdan\not\to\infty$. 

From the assumption about $R_n$ at the beginning of Case 2, we would have 
\begin{align}
\label{eq:lqd_nondistinguishable_dependent_6}
    \frac{|\Delta\an-\Delta\asn|}{\mnp}\to 0~,~
    \frac{|\Delta\bn-\Delta\bsn|^2}{\mnp}\to 0,~
    \frac{|\Delta\nun-\Delta\nusn|}{\mnp}\to 0 .
\end{align}
Given the symmetry between $(|\Delta\an|, |\Delta\bn|^2, |\Delta\nun|)$ and $(|\Delta\asn|,|\Delta\bsn|^2, |\Delta\nusn|)$, we can assume, without loss of generality, that $\mnp \in \max\{|\Delta\an|,|\Delta\bn|^2,|\Delta\nun|\}$. Under this assumption, we encounter three distinct cases.

\textbf{Case 2.1.1: } $M'_n = |\dbn|^2$. In this case, given that $\frac{|\Delta\bn-\Delta\bsn|^2}{\mnp}\to 0$, which is equivalent to $\frac{|\Delta\bn-\Delta\bsn|^2}{|\dbn|^2}\to 0$, we achieve that $\frac{\Delta\bn}{\Delta\bsn} \to 1$. 
Noting that  $\lambdasn/\lambdan\not\to 1$, we would have
\begin{equation*}
    (\lambdasn-\lambdan)
        \Big(
        |\Delta\an|+|\Delta\bn|^2+|\Delta\nun|
        \Big)
        \Big(
        |\Delta\asn|+|\Delta\bsn|^2+|\Delta\nusn|
        \Big) \asymp \lambdasn |\Delta\bn|^4\\
\end{equation*}
and
\begin{align*}
    &\Big(
        \lambdan
        (
        |\Delta\an|+|\Delta\bn|^2+|\Delta\nun|
        )
        +\lambdasn
        (
        |\Delta\asn|+|\Delta\bsn|^2+|\Delta\nusn|
        )
        \Big)
        \Big(
        |\Delta\an-\Delta\asn|+
        |\Delta\bn-\Delta\bsn|^2+
        |\Delta\nun-\Delta\nusn|
        \Big)\\
        &\lesssim \lambdasn|\Delta\bn|^2 \Big(
        |\Delta\an-\Delta\asn|+
        |\Delta\bn-\Delta\bsn|^2+
        |\Delta\nun-\Delta\nusn|
        \Big) \lesssim  \lambdasn|\Delta\bn|^4. 
\end{align*}
Thus, we have an asymptotic estimation $\dfourbar\asymp \lambdasn|\Delta\bn|^4$. By substituting this estimation to the equation \eqref{eq:lqd_bn}, we would achieve
\begin{equation*}
    (\dbsn)^3 B_n \asymp  \dfrac{(\lambdasn - \lambdan)(\dbsn)^4 + \lambdan(\dbsn-\dbn)(\dbsn)^3}{\lambdasn|\Delta\bn|^4} \to 0
\end{equation*}
Meanwhile, following the discussion at the beginning of Case 2.1.1, we have the estimation for the terms in previous equation
\begin{equation*}
    \dfrac{(\lambdasn - \lambdan)|\dbsn|^4}{\lambdasn|\Delta\bn|^4} \asymp 1, \quad \dfrac{ \lambdan(|\dbsn-\dbn||\dbsn|^3}{\lambdasn|\Delta\bn|^4} \lesssim \dfrac{\lambdan(\dbsn-\dbn)}{\lambdan|\Delta\bn|} \to 0
\end{equation*}
This leads to a contradiction, so this case is invalid.

\textbf{Case 2.1.2: } $M'_n = |\dnun|$. In this case, given that $\frac{|\Delta\nun-\Delta\nusn|}{\mnp}\to 0$, which is equivalent to $\frac{|\Delta\nun-\Delta\nusn|}{|\dnun|}\to 0$, we achieve that $\frac{\Delta\nun}{\Delta\nusn} \to 1$. Using the same argument as in the Case 2.1.1, we would achieve the asymptotic behavior $\dfourbar \asymp \lambdasn|\dnun|^2$. By substituting this estimation to the equation \eqref{eq:lqd_cn}, we would have
\begin{equation*}
    |\dnun| C_n \asymp  \dfrac{(\lambdasn - \lambdan)|\dbsn|^2|\dnun| + \lambdan(\dbsn-\dbn)^2|\dnun| + (\lambdasn - \lambdan)(-\dnusn)|\dnun| + \lambdan(\dnun-\dnusn)|\dnun|}{\lambdasn|\Delta\nun|^2} \to 0.
\end{equation*}
Following the discussion at the beginning of Case 2.1 and Case 2.1.2, noting that $\lambdasn/\lambdan \not\to \infty$, we would have the following estimation for the terms in previous equation
\begin{equation*}
    \dfrac{\lambdan(\dbsn-\dbn)^2|\dnun|}{\lambdasn|\Delta\nun|^2} \asymp \dfrac{(\dbsn-\dbn)^2}{|\Delta\nun|} \to 0, \quad  \dfrac{\lambdan(\dnun-\dnusn)|\dnun|}{\lambdasn|\Delta\nun|^2} \asymp \dfrac{\dnun-\dnusn}{|\Delta\nun|} \to 0.
\end{equation*}
This would lead to
\begin{equation*}
    \dfrac{(\dbsn)^2 - \dnusn}{\nusn} \asymp \dfrac{(\lambdasn-\lambdan)(\dbsn)^2|\dnun| + (\lambdasn-\lambdan)(-\nusn)|\dnun|}{\lambdasn|\Delta\nun|^2} \to 0.
\end{equation*}
In other words, $(\dbsn)^2/\dnusn \to 1$. By substituting this estimation to $B_n$ as in Case 2.1.1, we receive a contradiction. 

\textbf{Case 2.1.3: } $M_n' = |\dan|$. Using the same argument as in Case 2.1.1 and 2.1.2, we have $\dfourbar \asymp \lambdasn|\dasn|^2$. By substituting this estimation to equation~\eqref{eq:lqd_an}, we would have 
\begin{equation*}
    \dasn A_n \asymp \dfrac{(\lambdasn-\lambdan)(\dasn)^2+\lambdasn(\dasn-\dan)\dasn}{\lambdan|\dasn|^2} \to 1. 
\end{equation*}
Meanwhile, following the estimation in equation~\eqref{eq:lqd_nondistinguishable_dependent_6}, noting that $\lambdasn/\lambdan \not\to 1$, we have 
\begin{equation*}
    \dfrac{(\lambdasn-\lambdan)(\dasn)^2}{\lambdasn|\dasn|^2} \asymp 1, \quad \dfrac{\lambdan(\dasn-\dan)\dasn}{\lambdasn|\dasn|^2} \asymp \dfrac{\dasn-\dan}{\dasn} \to 0. 
\end{equation*}
This leads to a contradiction, so this case is also invalid. 

\textbf{Case 2.2: } $\lambdasn/\lambdan \to 1$. Then the equation \eqref{eq:lqd_nondistinguishable_dependent_6} still holds. Given the symmetry between $(|\Delta\an|, |\Delta\bn|^2, |\Delta\nun|)$ and $(|\Delta\asn|,|\Delta\bsn|^2, |\Delta\nusn|)$, we can assume, without loss of generality, that $\mnp \in \max\{|\Delta\an|,|\Delta\bn|^2,|\Delta\nun|\}$. Under this assumption, we consider three distinct cases.

\textbf{Case 2.2.1: } $\mnp = |\dbn|^2$. Using the same argument as at the beginiing of Case 2.1.1, noting that the equation \eqref{eq:lqd_nondistinguishable_dependent_6} holds, we achieve that $\dbsn/\dbn \to 1$. Assume that 
\begin{align*}
    \max{
    \Big\{
    \frac{\lambdan|\Delta\an-\Delta\asn|}{(\lambdasn-\lambdan)|\Delta\bn|^2},~
    \frac{\lambdan|\Delta\bn-\Delta\bsn|^2}{(\lambdasn-\lambdan)|\Delta\bn|^2},~
    \frac{\lambdan|\Delta\nun-\Delta\nusn|}{(\lambdasn-\lambdan)|\Delta\bn|^2}
    \Big\}
    }\to\infty,
\end{align*}
then we could conclude that
\begin{align*}
    B_{2,n}:=\frac{(\lambdasn-\lambdan)(\Delta\bn\Delta\bsn)^2}{\overline{D_4}((\lambdan,\Gn),(\lambdasn, \Gsn))}
    \leq
    \min
    \Big\{
    \frac{(\lambdasn-\lambdan)|\Delta\bn|^2}{\lambdan|\Delta\an-\Delta\asn|},~
    \frac{(\lambdasn-\lambdan)|\Delta\bn|^2}{\lambdan|\Delta\bn-\Delta\bsn|^2},~
    \frac{(\lambdasn-\lambdan)|\Delta\bn|^2}{\lambdan|\Delta\nun-\Delta\nusn|}
    \Big\}
    \to 0.
\end{align*}
From this, noting that $\dbn \asymp \dbsn$, we could have
\begin{equation}
    \label{eq:lqd:case2.2.1_eq1}
    \dfrac{(\lambdasn-\lambdan)(|\dasn|+|\dbsn|^2+|\dnusn|)(|\dan|+|\dbn|^2+|\dnun|)}{\dfourbar} \asymp \dfrac{(\lambdasn-\lambdan)(\Delta\bn\Delta\bsn)^2}{\dfourbar} \to 0. 
\end{equation}
Meanwhile, we also could have following estimation. Firstly,
\begin{align*}
    \frac{\lambdan(\Delta\bn)^2(\Delta\an-\Delta\asn)}{\overline{D_4}((\lambdan,\Gn),(\lambdasn, \Gsn))} &= \frac{(\lambdan\Delta\an-\lambdasn\Delta\asn)(\Delta\bn)^2}{\overline{D_4}((\lambdan,\Gn),(\lambdasn, \Gsn))}+ \frac{(\lambdasn-\lambdan)(\Delta\bn)^2\Delta\asn}{\overline{D_4}((\lambdan,\Gn),(\lambdasn, \Gsn))}\\
    &= -(\Delta\bn)^2A_n+\dfrac{\dasn}{(\dbsn)^2}B_{2,n}.
\end{align*}
Noting that $A_n,B_{2,n} \to 0$, and $|\dasn| \lesssim |\dbsn|^2$, we have 
\begin{equation*}
    \frac{\lambdan(\Delta\bn)^2(\Delta\an-\Delta\asn)}{\overline{D_4}((\lambdan,\Gn),(\lambdasn, \Gsn))} \to 0.
\end{equation*}
Secondly, 
\begin{align*}
    \frac{\lambdan(\Delta\bn)^2|\Delta\bn-\Delta\bsn|^2}{\overline{D_4}((\lambdan,\Gn),(\lambdasn, \Gsn))}&=\frac{(\lambdan\Delta\bn-\lambdasn\Delta\bsn)(\Delta\bn)^2(\Delta\bn-\Delta\bsn)}{\overline{D_4}((\lambdan,\Gn),(\lambdasn, \Gsn))}+ \frac{(\lambdasn-\lambdan)(\Delta\bn)^2(\Delta\bn-\Delta\bsn)\Delta\bsn}{\overline{D_4}((\lambdan,\Gn),(\lambdasn, \Gsn))}\\
    &= (\Delta\bn)^2(\Delta\bn-\Delta\bsn)B_n + \dfrac{\Delta\bn-\Delta\bsn}{\dbsn}B_{2,n}
\end{align*}
Given that $B_n,B_{2,n} \to 0$ and $\frac{\Delta\bn-\Delta\bsn}{\dbsn} \to 0$ (an implication from equation~\eqref{eq:lqd_nondistinguishable_dependent_6}), we achieve 
\begin{equation*}
    \frac{\lambdan(\Delta\bn)^2|\Delta\bn-\Delta\bsn|^2}{\overline{D_4}((\lambdan,\Gn),(\lambdasn, \Gsn))} \to 0.
\end{equation*}
Thirdly, by multiplying $C_n$ with $(\dbn)^2$, we achieve 
\begin{align*}
   (\dbn)^2C_n &= \dfrac{(\lambdasn - \lambdan)(\dbsn)^4 + \lambdan(\dbsn-\dbn)^2(\dbn)^2 + (\lambdan\dnun - \lambdasn\dnusn)(\dbn)^2}{\dfourbar}\\
   &=\dfrac{(\lambdasn - \lambdan)(\dbsn)(\dbn)^3 + (\lambdasn\dbsn - \lambdan\dbn)(\dbsn-\dbn)(\dbn)^2+ (\lambdan\dnun - \lambdasn\dnusn)(\dbn)^2}{\dfourbar}\\
   &= \frac{\dbn}{\dbsn}B_{2,n} + (\dbsn-\dbn)(\dbn)^2 B_n + \dfrac{(\lambdan\dnun - \lambdasn\dnusn)(\dbn)^2}{\dfourbar}.
\end{align*}
As $C_n,B_{2,n},B_n \to 0$, and $\dbsn/\dbn \to 1$ as discussed at the beginning of Case 2.2.1, we obtain
\begin{equation*}
    NB_n:= \dfrac{(\lambdan\dnun - \lambdasn\dnusn)(\dbn)^2}{\dfourbar} \to 0. 
\end{equation*}
Thus, we have the following estimation:
\begin{align*}
    \frac{\lambdan(\Delta\bn)^2|\Delta\nusn-\Delta\nun|^2}{\overline{D_4}((\lambdan,\Gn),(\lambdasn, \Gsn))} &= \dfrac{(\lambdan\dnun - \lambdasn\dnusn)(\dbn)^2}{\dfourbar}  + \dfrac{(\lambdasn-\lambdan)(\dbn)^2\dnusn}{\dfourbar}\\
    &= NB_n + \dfrac{\dnusn}{(\dbsn)^2}B_{2,n} \to 0
\end{align*}
thanks to $NB_n, B_{2,n} \to 0$, and $\dfrac{|\dnusn|}{(\dbsn)^2} \lesssim \dfrac{|\dnusn|}{(\dbn)^2} \lesssim \dfrac{|\dnusn|}{(\dbn)^2} = \dfrac{|\dnusn|}{M'_n} \lesssim 1$. 

From these estimation, we achieve 
\begin{align}
\label{eq:lqd:case2.2.1_eq2}
\begin{split}
    &\dfrac{(\lambdan(|\dan|+|\dbn|^2+|\dnun|) + \lambdasn(|\dasn|+|\dbsn|^2+|\dnusn|))(|\dan-\dasn|+|\dbn-\dbsn|^2+|\dnun-\dnusn|)}{\dfourbar} \\
    &\lesssim \dfrac{\lambdan|\dbn|^2(|\dan-\dasn|+|\dbn-\dbsn|^2+|\dnun-\dnusn|)}{\dfourbar} \to 0. 
\end{split}
\end{align}
By adding equation~\eqref{eq:lqd:case2.2.1_eq1} and equation~\eqref{eq:lqd:case2.2.1_eq2}, we would receive that 
\begin{equation*}
    \dfrac{\dfourbar}{\dfourbar} \to 0,
\end{equation*}
which is a contradiction. Thus, we conclude that 
\begin{align}
\label{eq:lqd:case2.2.1_not_to_infty}
    \max{
    \Big\{
    \frac{\lambdan|\Delta\an-\Delta\asn|}{(\lambdasn-\lambdan)|\Delta\bn|^2},~
    \frac{\lambdan|\Delta\bn-\Delta\bsn|^2}{(\lambdasn-\lambdan)|\Delta\bn|^2},~
    \frac{\lambdan|\Delta\nun-\Delta\nusn|}{(\lambdasn-\lambdan)|\Delta\bn|^2}
    \Big\}
    }\not\to\infty. 
\end{align}
Thus, from the formulation of $\dfourbar$, noting that $\dbsn/\dbn \to 1$, we could conclude that 
\begin{align*}
    \dfourbar &\asymp (\lambdasn-\lambdan)(\dbsn)^2(\dbn)^2 + \lambdasn|\dbn|^2(|\dan-\dasn|+|\dbn-\dbsn|^2+|\dnun-\dnusn|)\\
    &\asymp (\lambdasn-\lambdan)(\dbsn)^4 + (\lambdasn-\lambdan)(\dbsn)^4 \quad \text{(by equation~\eqref{eq:lqd:case2.2.1_not_to_infty})}\\
    &\asymp (\lambdasn-\lambdan)(\dbsn)^4.
\end{align*}

Additionally, let us denote 
\begin{equation*}
    \dbn - \dbsn = r_1^n \dbsn, \quad \dnun-\dnusn = r_2^n(\dbsn)^2, \quad -\dnusn = r_3^n(\dbsn)^2. 
\end{equation*}
Then, based on the discussion at the beginning of Case 2.2 that the equation \eqref{eq:lqd_nondistinguishable_dependent_6} holds, we have $r_1^n, r_2^n\to 0$, and $|r_3^n| \not\to \infty$. Moreover, the equation 
\eqref{eq:lqd:case2.2.1_not_to_infty} indicates that 
\begin{equation*}
    \dfrac{\lambdan|r_1^n|}{\lambdasn-\lambdan} \not\to \infty, \dfrac{\lambdan|r_2^n|}{\lambdasn-\lambdan} \not\to \infty.
\end{equation*}

We divide the numerators and the denominators of $F_{0,3}$ in equation~\eqref{eq:lqd_f03} by $(\lambdasn-\lambdan)(\dbsn)^3$ and $F_{0,4}$ in equation~\eqref{eq:lqd_f04} by $(\lambdasn-\lambdan)(\dbsn)^4$, respectively. Then, we see that the new numerators go to 0 as the new denominators do not go to infinity, i.e. 

\begin{align*}
    \frac{1}{6}\Big[-1+\frac{\lambdan}{\lambdasn-\lambdan}\cdot(r_1^n)^3\Big]+\frac{1}{2}\Big[r_3^n+\frac{\lambdan}{\lambdasn-\lambdan}\cdot r_1^nr_2^n\Big]&\to 0,\\
    \frac{1}{24}\Big[1+\frac{\lambdan}{\lambdasn-\lambdan}\cdot(r_2^n)^4\Big]+\frac{1}{4}\Big[r_3^n+\frac{\lambdan}{\lambdasn-\lambdan}\cdot r^n_1r_2^n\Big]+\frac{1}{8}\Big[(r_3^n)^2+\frac{\lambdan}{\lambdasn-\lambdan}\cdot(r_2^n)^2\Big]&\to 0.
\end{align*}
From the discussion above, we would have
\begin{align*}
    \frac{\lambdan}{\lambdasn-\lambdan}\cdot (r_1^n)^3 \to 0,
    \frac{\lambdan}{\lambdasn-\lambdan}\cdot (r_2^n)^4 \to 0,
    \frac{\lambdan}{\lambdasn-\lambdan}\cdot r^n_1 r_2^n \to 0,
    \frac{\lambdan}{\lambdasn-\lambdan}\cdot (r_2^n)^2\to 0.
\end{align*}
Suppose that $r_3^n\to r_3$, the above system will become
\begin{align*}
\begin{cases}
        -\frac{1}{6}+\frac{1}{2}r_3=0,\\
        \frac{1}{24}-\frac{1}{4}r_3+\frac{1}{8}(r_3)^2=0,
\end{cases}
\end{align*}
without a solution, which is a contradiction. Thus, case 2.2.1 does not hold.

\textbf{Case 2.2.2: } $M'_n = |\dnun|$. Using the same argument as in Case 2.2.1, we would have that 
\begin{align}
\label{eq:lqd::case2.2.2_not_to_infty}
    \max{
    \Big\{
    \frac{\lambdan|\Delta\an-\Delta\asn|}{(\lambdasn-\lambdan)|\Delta\nun|},~
    \frac{\lambdan|\Delta\bn-\Delta\bsn|^2}{(\lambdasn-\lambdan)|\Delta\nun|},~
    \frac{\lambdan|\Delta\nun-\Delta\nusn|}{(\lambdasn-\lambdan)|\Delta\nun|}
    \Big\}
    }\not\to\infty,
\end{align}
which is an important ingredient for the estimation that
\begin{equation*}
    \dfourbar \asymp (\lambdasn-\lambdan)(\dnusn)^2.
\end{equation*}
Next, let us denote
\begin{align*}
    \Delta\nun-\Delta\nusn =s_1^n|\Delta\nusn|,\quad \Delta\bn-\Delta\bsn =s_2^n|\Delta\nusn|^{1/2},\quad
    \Delta\bsn =s_3^n|\Delta\nusn|^{1/2}.
\end{align*}
From equation \eqref{eq:lqd_nondistinguishable_dependent_6}, we could conclude that 
$s_1^n\to 0, s_2^n\to 0$, and $|s_3^n|\not\to\infty$.
From the equation \eqref{eq:lqd::case2.2.2_not_to_infty}, we would have
\begin{align*}
    \frac{\lambdan|s_1^n|}{\lambdasn-\lambdan}\not\to\infty, 
    \frac{\lambdan|s_2^n|^2}{\lambdasn-\lambdan}\not\to\infty,
\end{align*}

Now we divide both the numerators and the denominators of $F_{0,3}$ and $F_{0,4}$ by 
$(\lambdasn-\lambdan)|\Delta\nusn|^{3/2}$ and 
$(\lambdasn-\lambdan)|\Delta\nusn|^{2}$, respectively. 
Then, we see that the new numerators go to 0 while the new denominators do not go to infinity, i.e.

We divide the numerators and the denominators of $F_{0,3}$ in equation~\eqref{eq:lqd_f03} by $(\lambdasn-\lambdan)|\Delta\nusn|^{3/2}$ and in equation~\eqref{eq:lqd_f04} by $(\lambdasn-\lambdan)|\Delta\nusn|^{2}$, respectively. Then, we see that the new numerators go to 0 as the new denominators do not go to infinity, i.e. 

\begin{align*}
    \frac{1}{6}\Big[-(s^n_3)^3+\frac{\lambdan}{\lambdasn-\lambdan}\cdot(s_2^n)^3\Big]+\frac{1}{2}\Big[s_3^n+\frac{\lambdan}{\lambdasn-\lambdan}\cdot s_1^ns_2^n\Big]&\to 0,\\
    \frac{1}{24}\Big[(s^n_3)^4+\frac{\lambdan}{\lambdasn-\lambdan}\cdot(s_2^n)^4\Big]+\frac{1}{4}\Big[-(s_3^n)^2+\frac{\lambdan}{\lambdasn-\lambdan}\cdot s^n_1(s_2^n)^2\Big]+\frac{1}{8}\Big[1+\frac{\lambdan}{\lambdasn-\lambdan}\cdot(s_1^n)^2\Big]&\to 0.
\end{align*}
Based on the assumptions about $s^n_i, i\in[3]$, we could get that
\begin{align*}
    \frac{\lambdan}{\lambdasn-\lambdan}\cdot s_1^ns_2^n \to 0,
    \frac{\lambdan}{\lambdasn-\lambdan}\cdot (s_2^n)^4\to 0,
    \frac{\lambdan}{\lambdasn-\lambdan}\cdot s_1^n(s_2^n)^2 \to 0, \frac{\lambdan}{\lambdasn-\lambdan}\cdot (s_1^n)^2 \to 0.
\end{align*}
Suppose that $s_3^n\to s_3$, the above system will become
\begin{align*}
\begin{cases}
        -\frac{1}{6}s_3^3+\frac{1}{2}s_3=0,\\
        \frac{1}{24}s_3^4-\frac{1}{4}s_3^2+\frac{1}{8}=0,
\end{cases}
\end{align*}
without a solution, which is a contradiction. Thus, case 2.2.2 does not hold.

\textbf{Case 2.2.3: } $M'_n = |\dan|$. Using the same argument as in Case 2.2.1, we would have that 
\begin{align}
\label{eq:lqd::case2.2.2_not_to_infty_hello}
    \max{
    \Big\{
    \frac{\lambdan|\Delta\an-\Delta\asn|}{(\lambdasn-\lambdan)|\Delta\an|},~
    \frac{\lambdan|\Delta\bn-\Delta\bsn|^2}{(\lambdasn-\lambdan)|\Delta\an|},~
    \frac{\lambdan|\Delta\nun-\Delta\nusn|}{(\lambdasn-\lambdan)|\Delta\an|}
    \Big\}
    }\not\to\infty,
\end{align}
which is an important ingredient for the estimation that
\begin{equation*}
    \dfourbar \asymp (\lambdasn-\lambdan)(\dasn)^2.
\end{equation*} 
Meanwhile, the term $F_{2,0}$ gives us that 
\begin{align*}
    F_{2,0} &= \dfrac{(\lambdasn-\lambdan)(\dasn)^2+\lambdan(\dasn-\dan)^2}{\dfourbar} \asymp \dfrac{(\lambdasn-\lambdan)(\dasn)^2+\lambdan(\dasn-\dan)^2}{(\lambdasn-\lambdan)(\dasn)^2}\\
    &= 1 + \dfrac{\lambdan(\dasn-\dan)^2}{(\lambdasn-\lambdan)(\dasn)^2} \to 0
\end{align*}
which is a contradiction, as $0 \lesssim \dfrac{\lambdan(\dasn-\dan)^2}{(\lambdasn-\lambdan)(\dasn)^2}$. In overall, the case 2.2 cannot hold. 

\textbf{Case 2.3: } $\lambdasn/\lambdan \to \infty$. Recall at the beginning of case 2 that $$\mnp=\max\{|\Delta\an|,|\Delta\bn|^2,|\Delta\nun|,|\Delta\asn|,|\Delta\bsn|^2,|\Delta\nusn|\}.$$ 

Obviously we have $\dfourbar\lesssim\lambdasn(\mnp)^2$. Dividing both the numerator and the denominator of $A_n$ (defined in equation~\eqref{eq:lqd_an}) and $B_n$ (defined in equation~\eqref{eq:lqd_bn}) by $\lambdasn(\mnp)^{1/2}$ and $\lambdasn\mnp$, respectively, and considering that the new denominators tend to 0, we can conclude that the new numerators also tend to 0. In other words, we obtain the following results:
\begin{align*}
\frac{
\lambdan\Delta\bn-\lambdasn\Delta\bsn
}{
\lambdasn(\mnp)^{1/2}
}
=
\frac{
\lambdan\Delta\bn
}{
\lambdasn(\mnp)^{1/2}
}
-\frac{
\Delta\bsn
}{
(\mnp)^{1/2}
}
\to 0 
\end{align*}
\begin{align*}
\frac{
\lambdan\Delta\an-\lambdasn\Delta\asn
}{\lambdasn\mnp}
=
\frac{
\lambdan\Delta\an
}{\lambdasn\mnp}
-
\frac{
\Delta\asn
}{\mnp}
\to 0 
\end{align*}
Now we have $\lambdan/\lambdasn\to 0$, so we can conclude that $\dbsn/(\mnp)^{1/2}\to 0$ and $\dasn/\mnp\to 0$ .

Dividing both the numerator and the denominator of $C_n$ by $\lambdasn\mnp$, similarly we obtain the following results:
\begin{align*}
\frac{(\lambdasn-\lambdan){\Delta\bn\Delta\bsn}+\lambdan\Delta\nun-\lambdasn{\Delta\nusn}}{\lambdasn\mnp} = \dfrac{(\lambdasn-\lambdan)}{\lambdasn}\dfrac{\dbn}{(M'_n)^{1/2}}\dfrac{\dbsn}{(M'_n)^{1/2}} + \dfrac{\lambdan}{\lambdasn}\dfrac{\dnun}{M'_n} - \dfrac{\dnusn}{M'_n}\to 0.
\end{align*}
Reminding that $\frac{\lambdasn}{\lambdan} \to \infty$ as a hypothesis of this case, $|\dbn| \leq (M'_n)^{1/2}$, $|\dnun| \leq M'_n$, and more importantly, $|\dbsn|/(M'_n)^{1/2} \to 0$, we achieve $\dnusn/\mnp \to 0$. To sum up the estimation above: 
\begin{equation}
\label{eq:lqd_case2.3_star_not_be_maximal}
    \dasn/\mnp \to 0, \dbsn/(\mnp)^{1/2} \to 0, \dnusn/\mnp \to 0.
\end{equation}
Thus, WLOG, we can suppose that $\mnp \in \{|\dan|,|\dbn|^2,|\dnun|\}$. Moreover, by considering three situations $\mnp = |\dan|$, $\mnp = |\dbn|^2$, $\mnp = |\dnun|$ separately, combining with the estimation in equation~\eqref{eq:lqd_case2.3_star_not_be_maximal}, we can easily achieve that 
\begin{equation}
    \label{eq:lqd_case2.3_differential_based_on_mnp}
    |\Delta\an-\Delta\asn|+|\Delta\bn-\Delta\bsn|^2+|\Delta\nun-\Delta\nusn| \asymp M_n'
\end{equation}
Combining this estimation, the obvious fact that $\lambdan(|\Delta\an| + |\Delta\bn|^2 + |\Delta\nun|) \asymp \lambdan M_n'$, and the assumption about $R_n$ in equation \eqref{eq:lqd_rn_infty}, we have
\begin{equation}
    \label{eq:lqd:case2.3_star_term_is_still_strong}
    \lambdasn\max\{|\dasn|,|\dbsn|^2,|\dnusn|\}/(\lambdan\mnp)\to \infty.
\end{equation}
Let $N_n' = \max\{ |\dasn|, |\dbsn|^2, |\dnusn|\}$. It is evident that $M_n'\to 0, N_n'\to 0$. Based on this assumption, by estimating each term in $\dfourbar$ separately, combining with equation~\eqref{eq:lqd_case2.3_differential_based_on_mnp} and equation~\eqref{eq:lqd:case2.3_star_term_is_still_strong}, we have 
\begin{align*}
    \dfourbar &\asymp (\lambdasn-\lambdan)M_n'N_n' + \lambdasn M_n'N_n'\\
   &\asymp \lambdasn M_n'N_n'. 
\end{align*}
Let discuss three distinct cases based on $N_n'$. 

\textbf{Case 2.3.1: } $N_n'= |\dbsn|^2$ for all $n$, then $\dfourbar \asymp \lambdasn M_n'|\dbsn|^2$. By dividing both the numerator and denominator of $B_n$ (in equation~\eqref{eq:lqd_bn}) by $\lambdasn\dbsn$ , since the new denominator of $B_n$ goes to 0, its new numerator must go to 0, i.e.,
\begin{equation*}
    \dfrac{\lambdasn\dbsn - \lambdan\dbn}{\lambdasn\dbsn} \to 0 \Leftrightarrow (\lambdasn\dbsn)/(\lambdan\dbn) \to 1 \Leftrightarrow (\lambdasn|\dbsn|^2)/(\lambdan\dbn\dbsn) \to 1. 
\end{equation*}
Moreover, $\lambdan|\dbn\dbsn|/\lambdan M'_n \lesssim 1$, thus 
\begin{equation*}
    \lambdasn|\dbsn|^2/\lambdan M'_n \lesssim 1,
\end{equation*}
which is a contradiction to equation~\eqref{eq:lqd:case2.3_star_term_is_still_strong}. 
Noting that from equation~\eqref{eq:lqd_case2.3_star_not_be_maximal}, $(\lambdasn|\dbsn|^2)/(\lambdan|\dbn|^2)\to 0$. This is a contradiction with \eqref{eq:lqd:case2.3_star_term_is_still_strong}.

\textbf{Case 2.3.2: } $N_n'= |\dasn|$ for all $n$, by dividing both the numerator and denominator of $A_n$ by $\lambdasn\dasn$ , since the new denominator of $A_1$ goes to 0, its new numerator must go to 0, i.e.,
\begin{equation*}
    \dfrac{\lambdasn\dasn - \lambdan\dan}{\lambdasn\dasn} \to 0 \Leftrightarrow (\lambdasn\dasn)/(\lambdan\dan) \to 1. 
\end{equation*}
Thus, $(\lambdasn\dasn)/(\lambdan M'_n) \lesssim 1$, which is also a contradiction with equation~\eqref{eq:lqd:case2.3_star_term_is_still_strong}.

\textbf{Case 2.3.3: } $N_n' = |\dnusn|$. Firstly, we mention a small remark which would be helpful during our proof: 

\textbf{Claim:} Given three sequences of real numbers $(\mathfrak{a}_n)_n, (\mathfrak{b}_n)_n, (\mathfrak{c}_n)_n$ satisfying 
$(\mathfrak{a}_n - \mathfrak{b}_n)/\mathfrak{c}_n \to 0$. If $\mathfrak{a}_n/\mathfrak{b}_n \not \to 1$ (i.e. there does not exist a subsequence $\mathfrak{a}_{n_i},\mathfrak{b}_{n_i}$ such that $\mathfrak{a}_{n_i}/\mathfrak{b}_{n_i} \to 1$), then 
\begin{equation*}
    \dfrac{\mathfrak{a}_n}{\mathfrak{c}_n}\to 0, \dfrac{\mathfrak{b}_n}{\mathfrak{c}_n}\to 0.
\end{equation*}
In fact, by writing $(\mathfrak{a}_n - \mathfrak{b}_n)/\mathfrak{c}_n$ as 
\begin{equation*}
    (\mathfrak{a}_n - \mathfrak{b}_n)/\mathfrak{c}_n = \dfrac{\mathfrak{b}_n}{\mathfrak{c}_n}\left(\dfrac{\mathfrak{a}_n}{\mathfrak{b}_n}-1\right),
\end{equation*}
then $\mathfrak{a}_n/\mathfrak{b}_n-1\not\to 0$, thus $\mathfrak{b}_n/\mathfrak{c}_n \to 0$, thus $\mathfrak{a}_n/\mathfrak{c}_n = \mathfrak{b}_n/\mathfrak{c}_n + (\mathfrak{a}_n - \mathfrak{b}_n)/\mathfrak{c}_n \to 0$

Return to our problem. Denote the term that are useful in the next step of our proof: 
\begin{align*}
    TB_n &= \dfrac{(\lambdasn-\lambdan)\dbsn}{\lambdan(\dbn-\dbsn)},\\
    TBN_n &= \dfrac{(\lambdasn-\lambdan)(\dbsn)^2+\lambdan(\dbsn-\dbn)^2}{(\lambdasn-\lambdan)(\dnusn)+\lambdan(\dnusn-\dnun)},\\
    G_n &= \dfrac{\lambdasn\dnusn-\lambdan\dnun}{\dfourbar}.
\end{align*}
We have 
\begin{align}
\label{eq:lqd:case2_3_3_equivalent_of_TBn}
    \dfrac{(\lambdasn-\lambdan)\dbsn}{\lambdan(\dbsn-\dbn)} \to 1 &\Leftrightarrow \dfrac{\lambdasn\dbsn}{\lambdan(\dbn-\dbsn)} \to 1 \Leftrightarrow \dfrac{\lambdan(\dbn-\dbsn)}{\lambdasn\dbsn} \to 1\\
    \notag
    &\Leftrightarrow \dfrac{\lambdan\dbn}{\lambdasn\dbsn} - \dfrac{\lambdan}{\lambdasn} \to 1 \Leftrightarrow \dfrac{\lambdan\dbn}{\lambdasn\dbsn} \to 1. 
\end{align}
\textbf{Case 2.3.3.1: } $TB_n \not\to 1$. Applying the claim to $\mathfrak{a}_n = (\lambdasn-\lambdan)\dbsn$, $\mathfrak{b}_n = \lambdan(\dbn-\dbsn)$, $\mathfrak{c}_n=\dfourbar$ in the formulae of $B_n$ (in equation~\eqref{eq:lqd_bn}), then we achieve that 
\begin{equation*}
    \dfrac{(\lambdasn-\lambdan)\dbsn}{\dfourbar} \to 0, \dfrac{\lambdan(\dbn-\dbsn)}{\dfourbar} \to 0.
\end{equation*}
Thus, from the hypothesis that $\dbsn\to0$, $\dbn \to 0$
\begin{equation*}
    \dfrac{(\lambdasn-\lambdan)(\dbsn)^2}{\dfourbar} \to 0, \dfrac{\lambdan(\dbn-\dbsn)^2}{\dfourbar} \to 0.
\end{equation*}
By substituting these estimation to the formulae of $C_n$ (in equation~\eqref{eq:lqd_cn}), we achieve that
\begin{equation*}
     G_n = \dfrac{\lambdasn\dnusn-\lambdan\dnun}{\dfourbar} \to 0. 
\end{equation*}
Using the same argument as in Case 2.3.2, we get the contradiction. 

\textbf{Case 2.3.3.2: } $TBN_n \not\to 1$. Then, applying the claim to $\mathfrak{a}_n = (\lambdasn-\lambdan)(\dbsn)^2+\lambdan(\dbsn-\dbn)^2$, $\mathfrak{b}_n = (\lambdasn-\lambdan)(\dnusn)+\lambdan(\dnusn-\dnun)$, $\mathfrak{c}_n=\dfourbar$ in the formulae of $C_n$ (in equation~\eqref{eq:lqd_cn}), then we achieve that 
\begin{equation*}
    G_n = \dfrac{\lambdasn\dnusn-\lambdan\dnun}{\dfourbar} \to 0. 
\end{equation*}
Using the same argument as in Case 2.3.2, we get the contradiction.

\textbf{Case 2.3.3.3: } Both $TB_n,TBN_n \to 1$. Then, as discussed in equation~\eqref{eq:lqd:case2_3_3_equivalent_of_TBn}, $TB_n \to 1$ implies that $$(\lambdasn\dbsn)/(\lambdan\dbn)\to 1.$$ 
As $\lambdasn/\lambdan \to \infty$, the previous equation implies that 
\begin{equation}
    \label{eq:lqd:case2.3.3.3:incomparability_dbn_dbsn}
    \dbn/\dbsn \to \infty \text{ or } \dbsn/\dbn \to 0.
\end{equation}

Next, we would like to find the equivalent form of the nominator and denominator of $TBN_n$. For the nominator, 
we write 
\begin{align*}
    \dfrac{(\lambdasn-\lambdan)(\dbsn)^2+\lambdan(\dbsn-\dbn)^2}{\lambdan(\dbn)^2} = \dfrac{\lambdasn-\lambdan}{\lambdasn}\left(\dfrac{\dbsn}{\dbn}\right)^2 + \left(1-\dfrac{\dbsn}{\dbn}\right)^2.
\end{align*}
From the hypothesis that $\lambdasn/\lambdan \to \infty$ as well as equation~\eqref{eq:lqd:case2.3.3.3:incomparability_dbn_dbsn}, we have 
\begin{equation}
    \label{eq:lqd:case2.3.3.3:equivalent_form_nominator}
    \dfrac{(\lambdasn-\lambdan)(\dbsn)^2+\lambdan(\dbsn-\dbn)^2}{\lambdan(\dbn)^2} \to 1.
\end{equation}
For the denominator of $TBN_n$, according to equation \eqref{eq:lqd:case2.3_star_term_is_still_strong}, noting that $|\dnusn| = \max\{|\dasn|,|\dbsn|^2,|\dnusn|\}$ (hypothesis of Case 2.3.3), $|\dnun| \leq M_n'$, then $(\lambdan\dnun)/(\lambdasn\dnusn) \to 0$. Thus, 
\begin{equation}
    \label{eq:lqd:case2.3.3.3:equivalent_form_denominator}
    \dfrac{\lambdasn\dnusn-\lambdan\dnun}{\lambdasn\dnusn} = 1 - \dfrac{\lambdan\dnun}{\lambdasn\dnusn} \to 1.
\end{equation}
By combining equation~\eqref{eq:lqd:case2.3.3.3:equivalent_form_nominator} and equation~\eqref{eq:lqd:case2.3.3.3:equivalent_form_denominator}, and the formulae of $TBN_n$, we achieve that
\begin{equation*}
    \dfrac{\lambdasn\dnusn}{\lambdan(\dbn)^2} \to 1 \text{ or } \dfrac{\lambdasn|\dnusn|}{\lambdan|\dbn|^2} \to 1. 
\end{equation*}
As $|\dnusn| = \max\{|\dasn|,|\dbsn|^2,|\dnusn|$, $|\dbn|^2 \leq M_n'$, this equation implies that $\lambdasn\max\{|\dasn|,|\dbsn|^2,|\dnusn|\}/(\lambdan M_n') \lesssim 1$, which is a contradiction to equation~\eqref{eq:lqd:case2.3_star_term_is_still_strong}. 

We conclude the proof of our theorem. 
\end{proof}

\section{PROOFS FOR THE MINIMAX LOWER BOUNDS}
\label{appendix:lower-bounds-proof}

In this appendix, we present the proofs for 
the minimax lower bounds introduced in Section \ref{section:linear-sigma} and Section \ref{section:non-linear-sigma}. 

\subsection{Proof of Theorem \ref{thm:lower-distinguish}}
\label{proof:lower-distinguish}

In order to prove the result, we will at first define two distances:
\begin{align*}
    &d_1((\lambdaone,G_1),(\lambdatwo,G_2))
    =\lambda_1\Vert (a_1,b_1,\nu_1)-(a_2,b_2,\nu_2) \Vert,
    \\&
    d_2((\lambdaone,G_1),(\lambdatwo,G_2))
    = | \lambdaone -\lambdatwo |^2,
\end{align*}
for any $(\lambda_1,G_1)=(\lambda_1,a_1,b_1,\nu_1)\in\Xi$ and $(\lambda_2,G_2)=(\lambda_2,a_2,b_2,\nu_2)\in\Xi$.
Obviously  $d_2((\lambda_1, G_1), (\lambda_2, G_2))$ is a proper distance. 
The structure for $d_1((\lambda_1, G_1), (\lambda_2, G_2))$ tells us that it is not symmetric.
Only when $\lambdaone=\lambdatwo=\lambda$, $d_1((\lambda, G_1), (\lambda, G_2))$ is symmetric.
Also $d_1((\lambda_1, G_1), (\lambda_2, G_2))$ still satisfies a weak triangle inequality:
\begin{align*}
    d_1((\lambda_1, G_1), (\lambda_2, G_2))+
    d_1((\lambda_2, G_2), (\lambda_3, G_3))
    \geq
    \min
    \{
    d_1((\lambda_1, G_1), (\lambda_2, G_2)),
    d_1((\lambda_2, G_2), (\lambda_3, G_3))
    \}.
\end{align*}
Therefore, we will apply the modified Le Cam method for nonsymmetric loss, as outlined in Lemma C.1 of \citet{gadat2020parameter}, to handle this distance. 
For $f$ satisfies all assumptions in Theorem \ref{thm:lower-distinguish}, based on the Taylor expansion, we have the following results:
\begin{lemma}
\label{prop:lower-distinguish}
    Given $f$ in Theorem \ref{thm:lower-distinguish}, we achieve for any $r < 1$ that
\begin{align*}
        &\text{(i)}~~
            \lim_{\epsilon \rightarrow 0} 
        \inf_{
        (\lambda,G_1)=\left(\lambda, a_1,b_1,\nu_1\right), 
        (\lambda,G_2)=\left(\lambda, a_2,b_2,\nu_2\right)
        }
        \left\{\displaystyle
        \frac{h\left(p_{\lambda,G_1}, p_{\lambda,G_2}\right) }{d_1^r\left((\lambda, G_1), (\lambda, G_2)\right)}
        : d_1\left((\lambda, G_1), (\lambda, G_2)\right) \leq \epsilon\right
        \}=0,\\
        &\text{(ii)}~~
            \lim_{\epsilon \rightarrow 0} 
        \inf_{
        (\lambdaone,G)=\left(\lambdaone, a,b ,\nu \right), 
        (\lambdatwo,G)=\left(\lambdatwo, a ,b ,\nu \right)
        }
        \left\{\displaystyle
        \frac{h\left(p_{\lambdaone,G}, p_{\lambdatwo,G}\right) }{d_2^r\left((\lambdaone, G), (\lambdatwo, G)\right)}
        : d_2\left((\lambdaone, G), (\lambdatwo, G)\right) \leq \epsilon\right
        \}=0.
        \end{align*}
\end{lemma}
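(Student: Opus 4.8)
The plan is to prove both limits by computing Hellinger distance via Taylor expansion, reducing each to a statement that the relevant directional derivatives of the mixture density are nonzero and linearly independent, which is precisely what the distinguishability-type lemmas (Lemma~\ref{applemma:sigma-linear-f0-notGaussian}) already guarantee. For part (i), I would fix a base point $(\lambda, \bar G)$ with $\bar G = (\bar a, \bar b, \bar\nu)$ and consider two nearby components $(\lambda, G_1)$ and $(\lambda, G_2)$ with $G_1, G_2 \to \bar G$; write $\Delta := (a_1 - a_2, b_1 - b_2, \nu_1 - \nu_2)$ so that $d_1((\lambda, G_1),(\lambda, G_2)) = \lambda \|\Delta\|$. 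Expanding $p_{\lambda, G_1} - p_{\lambda, G_2}$ to first order in $\Delta$ gives
\begin{align*}
    p_{\lambda, G_1}(Y|X) - p_{\lambda, G_2}(Y|X) = \lambda \sum_{|\alpha|=1} (\Delta)_\alpha \frac{\partial^{|\alpha|} f}{\partial a^{\alpha_1}\partial b^{\alpha_2}\partial\nu^{\alpha_3}}(\ysigmaa)\Big|_{G_2}\bar f(X) + \lambda\, \mathcal{O}(\|\Delta\|^{1+\gamma}),
\end{align*}
so that $\|p_{\lambda, G_1} - p_{\lambda, G_2}\|_\infty \lesssim \lambda \|\Delta\| = d_1((\lambda, G_1),(\lambda, G_2))$, and since $h \le \frac{1}{\sqrt 2}\|\cdot\|_1^{1/2}\|\cdot\|_\infty^{1/2}$ up to the boundedness/tail control from Theorem~\ref{theorem:ConvergenceRateofDensityEstimation}, we get $h(p_{\lambda, G_1}, p_{\lambda, G_2}) \lesssim \sqrt{\lambda}\,\|\Delta\|^{1/2} \lesssim d_1((\lambda, G_1),(\lambda, G_2))^{1/2}$. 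Hence for any $r < 1$ the ratio $h / d_1^r \lesssim d_1^{1/2 - r} \to 0$ if $r < 1/2$; for $1/2 \le r < 1$ one needs a slightly sharper bound, obtained by keeping $\lambda$ fixed and letting only $\|\Delta\|\to 0$, giving $h \lesssim \lambda^{1/2}\|\Delta\| = \lambda^{-1/2} d_1$, which still tends to zero faster than $d_1^r$ along the infimizing sequence because we are free to choose $\|\Delta\| \to 0$ as fast as we like. Concretely, choosing $\|\Delta\| = \epsilon/\lambda$ makes $d_1 = \epsilon$ while $h \lesssim \lambda^{-1/2}\epsilon = \epsilon \lambda^{-1/2}$, and then the ratio is $\lesssim \epsilon^{1-r}\lambda^{-1/2}$; taking $\lambda$ bounded away from $0$ (which is allowed in the infimum over the whole parameter space) yields the limit $0$.

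For part (ii), I would fix a component $(\bar\lambda, G)$ and vary only the mixing proportion, taking $\lambda_1, \lambda_2 \to \bar\lambda$ with $d_2 = |\lambda_1 - \lambda_2|^2$. The density difference is exactly linear in the proportion:
\begin{align*}
    p_{\lambda_1, G}(Y|X) - p_{\lambda_2, G}(Y|X) = (\lambda_1 - \lambda_2)\big[f(\ysigmaa) - f_0(Y|\varphi(a_0^\top X + b_0),\nu_0)\big]\bar f(X),
\end{align*}
so $\|p_{\lambda_1, G} - p_{\lambda_2, G}\|_\infty \lesssim |\lambda_1 - \lambda_2|$ and likewise in $L^1$, whence $h(p_{\lambda_1, G}, p_{\lambda_2, G}) \lesssim |\lambda_1 - \lambda_2|^{1/2}$ and sharper, $h \lesssim |\lambda_1 - \lambda_2|$ because both densities integrate to one so the $L^1$ norm of the difference is also $\mathcal{O}(|\lambda_1 - \lambda_2|)$ while the pointwise bound is $\mathcal{O}(|\lambda_1 - \lambda_2|)$, giving $h \lesssim |\lambda_1 - \lambda_2| = d_2^{1/2}$. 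Then $h / d_2^r \lesssim d_2^{1/2 - r}$, and since we may pick $r$ arbitrarily close to $1$ but strictly less than $1$, this is not yet enough for $r$ close to $1$; the remedy is to observe that along the infimizing sequence $|\lambda_1 - \lambda_2| = \sqrt\epsilon \to 0$, so $h/d_2^r \lesssim (\sqrt\epsilon)^{1 - 2r}\cdot(\text{constant})$—wait, for $r < 1$ and $d_2 = |\lambda_1-\lambda_2|^2$ we have $d_2^{1/2} = |\lambda_1 - \lambda_2|$, so $h/d_2^r = h/|\lambda_1 - \lambda_2|^{2r} \lesssim |\lambda_1 - \lambda_2|^{1 - 2r}$, which blows up for $r > 1/2$. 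The correct argument keeps $h = \mathcal{O}(|\lambda_1 - \lambda_2|)$ is too weak; instead one uses that $h^2 \lesssim |\lambda_1 - \lambda_2|^2 = d_2$, i.e. $h \lesssim d_2^{1/2}$, and the lemma only asks the ratio $h/d_2^r \to 0$ with $d_2 \le \epsilon \to 0$, which holds for every $r < 1/2$; for $r \in [1/2, 1)$ one instead bounds $h^2$ by a constant multiple of $\int (p_{\lambda_1,G} - p_{\lambda_2,G})^2 / (p_{\lambda_1,G} + p_{\lambda_2,G}) \lesssim |\lambda_1 - \lambda_2|^2 \cdot C$ with $C$ uniform, giving $h \lesssim |\lambda_1 - \lambda_2|$, so $h/d_2^r \lesssim |\lambda_1 - \lambda_2|^{1-2r}$; this does tend to $0$ only for $r < 1/2$, so for the general $r<1$ claimed in Lemma~\ref{prop:lower-distinguish} I would argue that the infimum is over \emph{all} pairs, so one may take $|\lambda_1 - \lambda_2|$ as small as desired independently of $\epsilon$ once $|\lambda_1-\lambda_2|^2 \le \epsilon$, and the infimum of $|\lambda_1-\lambda_2|^{1-2r}$ over $|\lambda_1-\lambda_2| \in (0, \sqrt\epsilon]$ is $0$ for $r > 1/2$ as well (the expression decreases to $\epsilon^{(1-2r)/2}\to\infty$... ) — here I would need to reexamine: in fact the infimum over the admissible set of $h/d_1^r$ (resp. $h/d_2^r$) is what must vanish, and choosing a sequence with $|\lambda_1 - \lambda_2| \to 0$ slowly relative to $\epsilon$ is what drives the ratio down. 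The clean route is: the ratio equals $h \cdot d_2^{-r}$; along $|\lambda_1 - \lambda_2| = \sqrt\epsilon$ we get $\lesssim \sqrt\epsilon \cdot \epsilon^{-r} = \epsilon^{1/2 - r}$, which $\to 0$ for $r < 1/2$, and for $r \in [1/2, 1)$ we instead need a second-order refinement showing $h = o(|\lambda_1 - \lambda_2|)$ is false — so actually the statement as written with arbitrary $r<1$ must rely on a different scaling, and I would mirror the structure used for $d_1$: let $|\lambda_1 - \lambda_2| \to 0$ arbitrarily fast so that $d_2 = |\lambda_1-\lambda_2|^2 \le \epsilon$ with huge slack; then $h/d_2^r \lesssim |\lambda_1-\lambda_2|^{1-2r}$ and we are free to choose $|\lambda_1-\lambda_2|$ as small as we want, and for $r<1$... this still diverges. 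I therefore expect the genuine content to be that $h \lesssim |\lambda_1 - \lambda_2|$ together with $d_2 = |\lambda_1-\lambda_2|^2$ already gives the Le Cam bound $n^{-1/r}$ via the standard two-point argument once one sets $|\lambda_1 - \lambda_2| \asymp n^{-1/2}$, and Lemma~\ref{prop:lower-distinguish} is stated in the form needed by Lemma~C.1 of \cite{gadat2020parameter} where only $r<1$ and the limit being zero for the \emph{chosen} sequence matters; I would follow that reference's normalization precisely.

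The main obstacle is therefore bookkeeping the exact exponents so that the Hellinger upper bounds match the non-symmetric distances $d_1, d_2$ in the form required by Lemma~C.1 of \cite{gadat2020parameter}, and in particular verifying that the tail/boundedness hypothesis of Theorem~\ref{theorem:ConvergenceRateofDensityEstimation} lets us pass from $\|\cdot\|_\infty$ and $\|\cdot\|_1$ control to $h$ uniformly over the shrinking neighborhoods (so that the Taylor remainders, which carry factors $\lambda\,\mathcal O(\|\Delta\|^{1+\gamma})$ and are genuinely negligible, do not interact badly with unbounded $Y$). Once the two local Hellinger estimates in Lemma~\ref{prop:lower-distinguish} are in hand, the minimax lower bounds of Theorem~\ref{thm:lower-distinguish} follow by the standard Le Cam two-point reduction: pick $\lambda_1 = \lambda_2 + n^{-1/2}$ (resp. $G_1 = G_2 + n^{-1/(2)}\!\cdot(\text{unit vector})$ scaled by $\lambda^{-1}$) so that the Hellinger distance between the $n$-fold products stays bounded, while the parameter distance $d_2$ (resp. $d_1$) is of order $n^{-1}$, hence bounded below by $n^{-1/r}$ for any $r<1$ after absorbing logarithmic slack — which is exactly the claimed rate.
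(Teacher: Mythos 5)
Your part (i) follows essentially the same route as the paper: Taylor-expand the density difference to first order in $\|G_1 - G_2\|$ and bound Hellinger by the size of the linear term. The one inefficiency is your detour through $\|\cdot\|_\infty$ and $\|\cdot\|_1$ control, which forces you to worry about the tail hypothesis. The paper instead uses the chi-squared-type pointwise bound
\[
h^2(p_{\lambda,G_1},p_{\lambda,G_2}) \;\leq\; \tfrac{1}{2}\int \frac{(p_{\lambda,G_1}-p_{\lambda,G_2})^2}{\sqrt{p_{\lambda,G_1}}+\sqrt{p_{\lambda,G_2}})^2}\,d(X,Y) \;\leq\; \lambda\int \frac{\bigl[f(Y|\sigma(a_1^\top X+b_1),\nu_1)-f(Y|\sigma(a_2^\top X+b_2),\nu_2)\bigr]^2}{f(Y|\sigma(a_2^\top X+b_2),\nu_2)}\,\bar f(X)\,d(X,Y),
\]
exploiting that $p_{\lambda,G_1}-p_{\lambda,G_2}=\lambda(f_1-f_2)\bar f$ and $p_{\lambda,G_i}\geq\lambda f_2\bar f$. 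The Taylor expansion then lands directly inside this chi-squared integral, so the tail hypothesis is not needed at this stage and the exponent comes out as $h^2/d_1^{2r}\lesssim\lambda^{1-2r}\|G_1-G_2\|^{2-2r}$, which vanishes for $r<1$ along any sequence with $\lambda$ fixed away from $0$ and $\|G_1-G_2\|\to0$ — the same conclusion you reach, but via a cleaner route.

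Your part (ii), however, is not a proof: you correctly identify that with $d_2=|\lambda_1-\lambda_2|^2$ the quotient $h/d_2^r\lesssim|\lambda_1-\lambda_2|^{1-2r}$ does not vanish for $r\in[1/2,1)$, but you then circle through several abandoned attempts and end by deferring to the reference without resolving anything. What you should have noticed is that the paper's proof body in fact computes
\[
\frac{h^2(p_{\lambda_1,G},p_{\lambda_2,G})}{d_2^{2r}} \;\leq\; \frac{(\lambda_1-\lambda_2)^{2-2r}}{(1-\lambda_1)\wedge\lambda_1}\int\frac{(f_0-f)^2}{f_0+f},
\]
and this exponent $2-2r$ is only consistent with $d_2=|\lambda_1-\lambda_2|$, not $|\lambda_1-\lambda_2|^2$. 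The text's remark that $d_2$ is "obviously a proper distance" also requires the unsquared version, since $|\lambda_1-\lambda_2|^2$ fails the triangle inequality, and the downstream Le Cam application (which scales as $\epsilon^2\gtrsim n^{-1/r}$ when $\epsilon^{2r}\asymp 1/n$) likewise requires $\epsilon=d_2=|\lambda_1-\lambda_2|$. So the squared formula in the definition block is a typo; taking $d_2=|\lambda_1-\lambda_2|$, the bound above gives $h^2/d_2^{2r}\lesssim|\lambda_1-\lambda_2|^{2-2r}/((1-\lambda_1)\wedge\lambda_1)\to0$ for any $r<1$ along a sequence with $\lambda_1$ bounded away from $\{0,1\}$ and $\lambda_1-\lambda_2\to0$. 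You had all the ingredients but did not commit to this calculation, so the gap in (ii) is real.
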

We will prove this lemma later.

\begin{proof}[Proof of Theorem \ref{thm:lower-distinguish}]
Denote $(\lambdas,\Gs)=(\lambdas,\as,\bs,\nus)$ and assume $r<1$. 
Given Lemma \ref{prop:lower-distinguish} part (i) , for any sufficiently small $\epsilon > 0$, there exists 
$ (\lambdas,G'_{*}) = (\lambda^{*}, a^{*}_1, \bs_1,\nu^{*}_1) $ 
such that 
$d_1((\lambdas,G_{*}), (\lambdas,G'_{*})) = d_1((\lambdas,G'_{*}), (\lambdas,G_{*})) = \epsilon $
,
there exists a constant $C_0$, s.t.
\begin{align}
\label{apppf:lower1-hellinger-distance}
    h(p_{\lambdas,G_{*}}, p_{\lambdas,G'_{*}}) \leq C_0 \epsilon^r.
\end{align} 

Now we will denote $p^n_{\lambdas,G_{*}}$ as the density of the $n$-i.i.d. sample $(X_1,Y_1),\cdots,(X_n,Y_n)$.
Lemma C.1 in \citet{gadat2020parameter} tells us that
\begin{align*}
\label{proof:lower-distinguish-eq1}
    \inf_{(\llbgn) \in \Xi }\sup_{(\lbg)\in \Xi }\mathbb{E}_{\plbg} \Big( \lambda^2 \Vert (\overline{a}_n, \overline{b}_n, \overline{\nu}_n)-(a,b,\nu) \Vert^2 \Big) 
    \geq \frac{\epsilon^2}{2}\Big(1-V(p^n_{\lambdas,G_{*}}, p^n_{\lambdas,G'_{*}}) \Big)
    \geq \frac{\epsilon^2}{2}
    \sqrt{1-\left( 1-C_0^2\epsilon^{2r} \right)^n}.
\end{align*}
Last inequality is from the definition of the Total Variation distance and Hellinger distance and equation \eqref{apppf:lower1-hellinger-distance}. 
Let $\epsilon^{2r}=\displaystyle\frac{1}{C_0^2n}$, then for any $r<1$ we have
\begin{align*}
    \inf_{(\llbgn)\in \Xi }\sup_{(\lbg)\in \Xi }\mathbb{E}_{\plbg} \Big( \lambda^2 \Vert (\overline{a}_n, \overline{b}_n, \overline{\nu}_n)-(a,b,\nu) \Vert^2 \Big) 
    \geq c_1 n^{-1/r},
\end{align*}
where $c_1$ is some positive constant.
Following a similar reasoning and using Lemma \ref{prop:lower-distinguish} part (ii) , we will obtain
\begin{align*}
    \inf_{(\llbgn)\in \Xi }\sup_{(\lbg)\in \Xi }\mathbb{E}_{\plbg} \Big( |\overline{\lambda}_n
    -\lambda|^2 \Big) \geq c_2 n^{-1/r},
\end{align*}
for some positive constant $c_2$. 
Consequently, we establish all of the results for Theorem \ref{thm:lower-distinguish}.
\end{proof}

\begin{proof}[Proof of Lemma \ref{prop:lower-distinguish}]
(i) 
Now consider two sequences 
\begin{align*}
    \begin{cases}
        (\lambdan,G_{1,n})=(\lambdan,a_{1,n},b_{1,n},\nu_{1,n})\\
        (\lambdan,G_{2,n})=(\lambdan,a_{2,n},b_{2,n},\nu_{2,n})
    \end{cases}
\end{align*}
with the same proportion $\lambdan$. Just refer to the contaminated MoE model definition \eqref{eq:deviated-model} we will have
\begin{align*}
h^2\left(p_{\lambdan,G_{1, n}}, p_{\lambdan,G_{2, n}}\right) 
 \leq \lambda_n 
\int \frac{\left[f
\left(\ysigmaonen\right)
-f\left(\ysigmatwon\right)\right]^2}{f\left(\ysigmatwon\right)} d(X,Y)
\end{align*}
due to 
$\sqrt{p_{\lambdan,G_{1, n}}(Y|X)}+\sqrt{p_{\lambdan,G_{2, n}}(Y|X)}>\sqrt{\lambdan f\left(\ysigmatwon\right)}$
and
$p_{\lambdan,G_{1, n}}(Y|X)-p_{\lambdan,G_{2, n}}(Y|X)=\lambdan\left(f
\left(\ysigmaonen\right)
-f\left(\ysigmatwon\right)\right)$
. 
Consider the Taylor expansion up to the first order with remainder in integral form, we have
\begin{align*}
    &f\left(\ysigmaonen\right)-f\left(\ysigmatwon\right)
    \\
    &=\sum_{|\alpha|=1}\frac{(a_{1,n}-a_{2,n})^{\alpha_1}(b_{1,n}-b_{2,n})^{\alpha_2}(\nu_{1,n}-\nu_{2,n})^{\alpha_3}}{\alpha_1!\alpha_2!\alpha_3!}\frac{\partial f^{|\alpha|}}{\partial a^{\alpha_1}\partial b^{\alpha_2} \partial \nu^{\alpha_3}}(\ysigmatwon)
    \\&+\sum_{|\alpha|=1}\frac{(a_{1,n}-a_{2,n})^{\alpha_1}(b_{1,n}-b_{2,n})^{\alpha_2}(\nu_{1,n}-\nu_{2,n})^{\alpha_3}}{\alpha_1!\alpha_2!\alpha_3!}
    \times
    \\&~
    \int_0^1
    \frac{\partial f^{|\alpha|}}{\partial a^{\alpha_1}\partial b^{\alpha_2} \partial \nu^{\alpha_3}}\left(Y|\sigma\left(\left(a_{2,n}+t(a_{1,n}-a_{2,n})\right)^\top X+b_{2,n}+t(b_{1,n}-b_{2,n})\right),\nu_{2,n}+t(\nu_{1,n}-\nu_{2,n})\right)dt.
\end{align*}
So it follows that 
\begin{align*}
    \frac{h^2(p_{\lambdan,G_{1,n}}, p_{\lambdan,G_{2,n}})}{d_{1}^{2r}((\lambdan,G_{1,n}), (\lambdan,G_{2,n}))} \to 0.
\end{align*}
when we have
    $d_1^{2-2r}((\lambdan,G_{1,n}),(\lambdan,G_{2,n}))/\lambdan
    =\lambdan^{1-2r}\Vert (a_{1,n},b_{1,n},\nu_{1,n})-(a_{2,n},b_{2,n},\nu_{2,n}) \Vert^{2-2r}\to 0$
and
    $d_1((\lambdan,G_{1,n}),(\lambdan,G_{2,n}))/\lambdan
    =\Vert (a_{1,n},b_{1,n},\nu_{1,n})-(a_{2,n},b_{2,n},\nu_{2,n}) \Vert \to 0$.
Naturally we got part (i).

(ii) The proof for part (ii) is basically the same as in part (i). 
Similarly, consider two sequences 
\begin{align*}
    \begin{cases}
        (\lambdaonen,G_{n}) = (\lambda_{1,n}, a_n, b_n, \nu_n),\\
        (\lambdatwon,G_{n}) = (\lambda_{2,n}, a_n, b_n, \nu_n),
    \end{cases}
\end{align*}
with different proportion and the same $G_n=(\an,\bn\nun)\in\Theta$.
we will have
\begin{align*}
    \frac{h^2(p_{\lambdaonen,\Gn},p_{\lambdatwon,\Gn})}{d^{2r}_2((\lambdaonen,\Gn),(\lambdatwon,\Gn))}
    &\leq \frac{(\lambdaonen-\lambdatwon)^{2-2r}}{(1-\lambdaonen)\wedge \lambdaonen}\int \frac{(f_0(\ysigmazero)-f(\ysigman))^2}{f_0(\ysigmazero)+f(\ysigman)}d(X,Y)
    \\&
    \leq \frac{2(\lambdaonen-\lambdatwon)^{2-2r}}{(1-\lambdaonen)\wedge\lambdaonen}
\end{align*}
Now consider when $ \displaystyle\frac{(\lambdaonen-\lambdatwon)^{2-2r}}{ (1-\lambdaonen)\wedge\lambdaonen }\to 0$, we will reach the conclusion of part (ii).
\end{proof}

\subsection{Proof of Theorem \ref{appthm:lower-mle-Gaussian-linear}}
\label{appendix:appthm:lower-mle-Gaussian-linear}
\begin{proof}[Proof of Theorem \ref{appthm:lower-mle-Gaussian-linear}]
The proof follows similar steps to the argument in the last section.
To be precise, we define 
\begin{align*}
\begin{cases}
    d_3((\lambda_1,G_1),(\lambda_2,G_2)) &= \lambda_1\Vert((a_1,b_1,\nu_1) - (a_2,b_2,\nu_2))\Vert^4,\\
    d_4((\lambda_1,G_1),(\lambda_2,G_2)) &= |\lambda_1 - \lambda_2 |\Vert((\Delta a_1,\Delta b_1,\Delta \nu_1) - (\Delta a_2,\Delta b_2,\Delta \nu_2))\Vert^4.
\end{cases}
\end{align*}
It is still straighforward that $d_4$ still satisfies the weak triangle inequality, meanwhile $d_3$ no longer satisfies this inequality. However, a close look at $d_3$ shows that 
\begin{equation*}
    d_3(\lambda_1,G_1),(\lambda_3,G_3)) + d_3(\lambda_2,G_2),(\lambda_3,G_3)) \geq \dfrac{\min\{d_3((\lambda_1,G_1),(\lambda_2,G_2)), d_3((\lambda_2,G_2),(\lambda_1,G_1))\}}{8}.
\end{equation*}
Under this remark, we can modify the Le Cam method in Lemma C.1, \citep{gadat2020parameter} to achieve the following estimation
\begin{equation*}
    \inf_{(\overline{\lambda}_n,\overline{G}_n) \in \Xi} \sup_{({\lambda},{G}) \in \Xi_1(l_n)}\mathbb{E}_{p_{\lambda,G}}(d_3^2((\lambda,G),(\llambdan,\overline{G}_n))) \geq \dfrac{\epsilon^2}{128} \left\{1-V(p_{\lambdaone,G_1}^n,p_{\lambdatwo,G_2}^n)\right\} 
\end{equation*}
for $(\lambdaone,G_1),(\lambdatwo,G_2) \in \Xi_1(l_n)$ satisfying $d_3((\lambda_1,G_1),(\lambda_2,G_2))\land d_3((\lambda_2,G_2),(\lambda_1,G_1)) \geq \epsilon/4$. Following the same schema as in Lemma \ref{prop:lower-distinguish}, we can demonstrate two subsequent results for any $r > 1$: 
\begin{itemize}
    \item [(i)] Two sequences can be found  
    \begin{align*}
        \begin{cases}
            (\lambdan,G_{1,n})=(\lambdan,a_{1,n},b_{1,n},\nu_{1,n})\in \Xi_1(l_n),\\
            (\lambdan,G_{2,n})=(\lambdan,a_{2,n},b_{2,n},\nu_{2,n})\in \Xi_1(l_n),
        \end{cases}
    \end{align*}
    such that $d_3((\lambdan,G_{1,n}),(\lambdan,G_{2,n})) \to 0$ and $h(p_{\lambdan,G_{1,n}}, p_{\lambdan,G_{2,n}})/d_3^r((\lambdan,G_{1,n}),(\lambdan,G_{2,n}))\to 0$ as $n \to \infty$. 

    \item [(ii)] Two sequences can also be found  
    \begin{align*}
        \begin{cases}
            (\lambda_{1,n},G_{n})=(\lambda_{1,n},\an,\bn,\nun)\in \Xi_1(l_n),\\
            (\lambda_{2,n},G_{n})=(\lambda_{2,n},a_{n},b_{n},\nu_{n})\in \Xi_1(l_n),
        \end{cases}
    \end{align*}
    such that $d_4(p_{\lambda_{1,n},G_{n}},p_{\lambda_{2,n},G_{n}}) \to 0$ and $h(p_{\lambda_{1,n},G_{ n}}, p_{\lambda_{2,n},G_{ n}})/d_4^r((\lambda_{1,n},G_{n}),(\lambda_{2,n},G_{n}))\to 0$ as $n \to \infty$.
\end{itemize}
The justification for the above results can be omitted, as it follows a similar approach to that of Lemma \ref{prop:lower-distinguish}. This brings us to the conclusion of the theorem. 
\end{proof}

\subsection{Proof of Theorem \ref{appthm:lower-mle-Gaussian-nonlinear}}
\label{appendix:appthm:lower-mle-Gaussian-nonlinear}
\begin{proof}[Proof of Theorem \ref{appthm:lower-mle-Gaussian-nonlinear}]
The proof follows similar steps to the arguments in the previous two sections.
Concretely, define for 
$(\lambda_1,G_1) = (\lambda_1, a_1,b_1,\nu_1)$, $(\lambda_2,G_2) = (\lambda_2,a_2,b_2,\nu_2)$
: 
\begin{align*}
\begin{cases}
    d_5((\lambdaone,G_1),(\lambdatwo,G_2))
    =\lambda_1\Vert \Delta a_1, \Delta b_1, \Delta\nu_1 \Vert \Vert (a_1,b_1,\nu_1)-(a_2,b_2,\nu_2) \Vert,
    \\
    d_6((\lambdaone,G_1),(\lambdatwo,G_2))
    = | \lambdaone -\lambdatwo | \Vert \Delta a_1, \Delta b_1, \Delta\nu_1 \Vert ^2.
\end{cases}
\end{align*}
It is straightforward that both $d_5$ and $d_6$ satisfy the weak triangle inequality. Following the same schema as in Lemma \ref{prop:lower-distinguish}, we can demonstrate two subsequent results for any $r > 1$: 
\begin{itemize}
    \item [(i)] Two sequences can be found  
    \begin{align*}
    \begin{cases}
        (\lambdan,G_{1,n})=(\lambdan,a_{1,n},b_{1,n},\nu_{1,n})\in \Xi_{2}(l_n),\\
        (\lambdan,G_{2,n})=(\lambdan,a_{2,n},b_{2,n},\nu_{2,n})\in \Xi_{2}(l_n),
    \end{cases}
    \end{align*}
    such that $d_5((\lambdan,G_{1,n}),(\lambdan,G_{2,n})) \to 0$ and $h(p_{\lambdan,G_{1,n}}, p_{\lambdan,G_{2,n}})/d_5^r((\lambdan,G_{1,n}),(\lambdan,G_{2,n})\to 0$ as $n \to \infty$. 

    \item [(ii)] Two sequences can also be found  
    \begin{align*}
        \begin{cases}
            (\lambda_{1,n},G_{n})=(\lambda_{1,n},\an,\bn,\nun)\in \Xi_{2}(l_n),\\
            (\lambda_{2,n},G_{n})=(\lambda_{2,n},a_{n},b_{n},\nu_{n})\in \Xi_{2}(l_n),
        \end{cases}
    \end{align*}
    such that $d_6(p_{\lambda_{1,n},G_{n}},p_{\lambda_{2,n},G_{2,n}}) \to 0$ and $h(p_{\lambda_{1,n},G_{n}}, p_{\lambda_{2,n},G_{2,n}})/d_6^r((\lambda_{1,n},G_{n}),(\lambda_{2,n},G_{2,n}))\to 0$ as $n \to \infty$.
\end{itemize}
We can omit the justification for the above results as it can follow a similar approach as in Lemma \ref{prop:lower-distinguish}. This leads to the conclusion of the theorem.   
\end{proof}

\section{PROOF FOR DENSITY ESTIMATION RATE}
\label{appendix:ConvergenceRateofDensityEstimation}

\subsection{General theory for the Proof of Theorem \ref{theorem:ConvergenceRateofDensityEstimation}}
At first we restate Theorem \ref{theorem:ConvergenceRateofDensityEstimation}:
\begin{theorem}
\label{appendixtheorem:ConvergenceRateofDensityEstimation}
Assume that the function $f_0$ is bounded with tail 
$\mathbb{E}_X
\left(
-\log f_0(Y|\varphi (a_0^{\top}X+b_0 ),\nu_0)
\right)
\gtrsim
Y^q
$
for almost surely $Y\in\mathcal{Y}$
for some $q>0$.
and $f$ is the density function of an univariate Gaussian distribution.
Then, there exists a constant $C > 0$ depending only on $\Xi$ such that for all $n \geq 1$,
\begin{align*}
    \sup_{(\lambdas,\Gs)\in\Xi}
    \mathbb{E}_{p_{\lambdas,\Gs}}
    h(p_{\widehat{\lambda}_n,\widehat{G}_n},p_{\lambdas,\Gs})
    \leq
    C\sqrt{\log n/n}.
\end{align*}
\end{theorem}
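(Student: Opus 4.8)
The statement to prove is Theorem~\ref{appendixtheorem:ConvergenceRateofDensityEstimation} (the restatement of Theorem~\ref{theorem:ConvergenceRateofDensityEstimation}): a uniform $\widetilde{\mathcal{O}}(n^{-1/2})$ Hellinger rate for the MLE density estimator over $\Xi$.

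The plan is to invoke the standard theory of M-estimation via Hellinger bracketing entropy, in the spirit of van de Geer and of Theorem~7.4 in van der Vaart's framework (or the packaged version used in \cite{ho2022gaussian, do_strong_2022, nguyen2024deviated}). First I would set up the relevant density class $\mathcal{P}(\Xi) := \{p_{\lambda,G} : (\lambda,G)\in\Xi\}$ together with the enlarged class $\overline{\mathcal{P}}^{1/2}(\Xi)$ of averaged square-root densities, which is the object on which the generic MLE deviation inequality is stated. The key quantity to control is the bracketing entropy integral $\mathcal{J}_B(\delta,\overline{\mathcal{P}}^{1/2}(\Xi), \|\cdot\|_2)$, and the goal is to show it is $\lesssim \delta\sqrt{\log(1/\delta)}$ up to constants, which yields $\delta_n \asymp \sqrt{\log n / n}$ as the solution of the usual fixed-point equation $\mathcal{J}_B(\delta_n,\dots)/\delta_n^2 \lesssim \sqrt{n}$.

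The concrete steps would be: (1) Because $\Theta$ is compact, $\mathcal{X}$ is bounded, $f_0$ is a fixed bounded density, and $f$ is a Gaussian density with mean $\sigma((a)^\top X + b)$ depending smoothly (and Lipschitz, on compacts) on $(a,b)$ and variance $\nu$ bounded away from $0$ and $\infty$, the map $(\lambda,G) \mapsto p_{\lambda,G}(Y|X)\bar f(X)$ is Lipschitz in $(\lambda,G)$ with respect to the supremum norm on $(X,Y)$ restricted to compact $Y$-regions, with at most polynomial growth in $Y$. (2) Use the tail assumption $\mathbb{E}_X[-\log f_0(Y|\varphi(a_0^\top X + b_0),\nu_0)] \gtrsim Y^q$, together with the sub-Gaussian tail of $f$, to truncate the $Y$-range: outside $|Y|\le (\log(1/\delta))^{1/q}$ (up to constants) the bracket mass is $\lesssim\delta$. (3) On the truncated region, a standard covering argument over the compact parameter set $\Xi$ (which has Euclidean dimension $d+3$) gives $\log N_{[]}(\delta, \mathcal{P}(\Xi), h) \lesssim (d+3)\log(1/\delta) + (\log(1/\delta))^{1/q} \lesssim \log(1/\delta)$ for $\delta$ small, and the same bound transfers to $\overline{\mathcal{P}}^{1/2}(\Xi)$ by Lemma-type arguments (averaging does not increase entropy by more than a constant factor in exponent). (4) Plug this entropy bound into the master theorem to obtain $\mathbb{E}_{p_{\lambda^*,G_*}} h(p_{\widehat\lambda_n,\widehat G_n}, p_{\lambda^*,G_*}) \le C\sqrt{\log n/n}$, and observe that all constants depend only on $\Xi$ (through $d$, the diameters, and the uniform tail exponent $q$), hence the bound is uniform in $(\lambda^*,G_*)\in\Xi$, giving the supremum form.

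The main obstacle I anticipate is step (2)–(3): rigorously bounding the bracketing entropy uniformly, in particular handling the interaction between the unbounded response $Y$ and the parameter dependence. One must construct brackets $[\ell, u]$ with $\|u - \ell\|_2 \le \delta$ that simultaneously (a) cover the parameter variation via a finite $\delta$-net on $\Xi$ using the Lipschitz property and a $Y$-polynomial envelope, and (b) control the contribution from large $|Y|$ using the tail hypothesis on $f_0$ and the Gaussian tail of $f$ — and the envelope/truncation level must be chosen as $(\log(1/\delta))^{1/q}$ so that the entropy is only logarithmic rather than polynomial in $1/\delta$. A secondary technical point is verifying that the enlargement to $\overline{\mathcal{P}}^{1/2}(\Xi)$ preserves the logarithmic entropy bound, which follows from the convexity/averaging structure but needs to be stated carefully; I would relegate these to lemmas. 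Once the $\lesssim \log(1/\delta)$ entropy is established, the remainder is a routine application of the fixed-point lemma and is not expected to present difficulty.
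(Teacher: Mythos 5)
Your proposal follows essentially the same route as the paper: bound the Hellinger bracketing entropy of $\mathcal{P}(\Xi)$ by $\lesssim \log(1/\epsilon)$ via a parameter-space $\epsilon$-net plus an envelope-and-truncation argument that exploits the Gaussian tail of $f$ and the tail hypothesis on $f_0$, transfer this to the averaged square-root class $\overline{\mathcal{P}}^{1/2}(\Xi)$, compute the entropy integral $\mathcal{J}_B$, and feed it into the van de Geer-type MLE deviation bound to extract the $\sqrt{\log n/n}$ rate with constants depending only on $\Xi$. This matches the decomposition of the paper (Lemma~\ref{applemma:convergence-rate-2}, then Lemma~\ref{applemma:convergence-rate-add-1}, then Lemma~\ref{applemma:convergence-rate-1} invoking Theorem 5.11 of van de Geer), so there is no material divergence. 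One small inaccuracy to flag: your displayed estimate $\log N_{[]}(\delta,\mathcal{P}(\Xi),h)\lesssim (d+3)\log(1/\delta)+(\log(1/\delta))^{1/q}$ is not what the argument actually yields -- the truncation level $T\asymp(\log(1/\delta))^{1/q}$ affects the bracket widths (hence the relation between the sup-norm net scale $\eta$ and the target bracket size $\delta$, contributing only a log-factor adjustment to $\eta$), not the count of brackets, so the $(\log(1/\delta))^{1/q}$ additive term should not appear in the entropy; if $q<1$ that spurious term would in fact dominate $\log(1/\delta)$ and break your claimed simplification, whereas the correct accounting still gives $H_B(\epsilon,\mathcal{P}(\Xi),h)\lesssim\log(1/\epsilon)$ for all $q>0$.
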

First we will introduce some necessary notations used throughout this appendix.
$(\mathcal{P},d)$ is a metric space, where $d$ is the metric in $\mathcal{P}$.
An $\epsilon$-net for $(\mathcal{P},d)$ is a collection of balls with radius $\epsilon$, and the union of the balls contains $\mathcal{P}$.
$N(\epsilon,\mathcal{P},d)$ is the covering number for the $\epsilon$-net (minimal cardinality).
$H(\epsilon,\mathcal{P},d)=\log N(\epsilon,\mathcal{P},d)$ is the entropy number.
$N_B(\epsilon,\mathcal{P},d)$ is the bracketing number which means the minimal
number $n$ such that there exists $n$ pairs $({\underline{f_i},\overline{f_i}})_{i=1}^n$ such that ${\underline{f_i}<\overline{f_i}}, d({\underline{f_i},\overline{f_i}})<\epsilon$, and $\mathcal{P}$ was covered by their unions.
$H_B(\epsilon,\mathcal{P},d)=\log N_B(\epsilon,\mathcal{P},d)$ is called the bracketing entropy number.
If $\mathcal{P}$ is a density family, then $d$ is the distance associated with $L^2(m)$, where $m$ m is the Lebesgue measure.

In particular, we denote $\mathcal{P}(\Xi)=\{ p_{\lbg}:(\lbg)\in\Xi \}$,
and  $\barplbg=\displaystyle\frac{\plbgs+\plbg}{2}$
then we define
\begin{align*}
\overline{\mathcal{P}}(\Xi)&=\{ \barplbg
: (\lbg),\in\Xi\},  \\
\overline{\mathcal{P}}^{1/2}(\Xi)&=\{\barplbg^{1/2}: \barplbg \in \linepxi \}. 
\end{align*}
The convergence rate can be inferred from the complexity of the set:
\begin{align*}
\overline{\mathcal{P}}^{1/2}(\Xi,\epsilon):=\{\barplbg^{1/2}\in\linephalfxi: h(\barplbg,\plbgs) \leq \epsilon \} .
\end{align*}
We assess the complexity of this class using the bracketing entropy integral in \cite{Vandegeer-2000} :
\begin{align*}
    \mathcal{J}_B(\epsilon,\linephalf(\Xi,\epsilon),m)=
    \int^{\epsilon}_{\epsilon^2/2^{13}}
    {H_B^{1/2}(u,\linephalf(\Xi,\epsilon),m)}
    du\vee\epsilon
\end{align*}
where $u\vee\epsilon=\max\{u,\epsilon \}$
and
$H_B(\epsilon,\mathcal{P},m)$ represents the $\epsilon$-bracketing entropy of a set $\mathcal{P}$ under the Lebesgue measure $m$, we may omit $m$ for the simplicity.

\begin{lemma}
\label{applemma:convergence-rate-1}
    Assume the following assumption hold:
    Given a universal constant $J > 0$, there exists $N > 0$, possibly depending on  $\Xi$, such that for all $n \geq N$ and all $\epsilon > (\log(n)/n)^{1/2}$, we have
\begin{align}
\label{assumption:A2}
   \mathcal{J}_B(\epsilon, \overline{P}^{1/2}(\Xi, \epsilon)) \leq J \sqrt{n} \epsilon^2. 
\end{align}    
    Then, there exists a constant $C > 0$ depending only on $\Xi$ 
    such that for all $n\geq1$,
\begin{align*}
    \sup_{(\lambdas,\Gs)\in\Xi}
    \mathbb{E}_{p_{\lambdas,\Gs}}
    h(p_{\widehat{\lambda}_n,\widehat{G}_n},p_{\lambdas,\Gs})
    \leq
    C\sqrt{\log n/n}.
\end{align*}
\end{lemma}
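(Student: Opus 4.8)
\textbf{Proof plan for Lemma~\ref{applemma:convergence-rate-1} (the density estimation rate via bracketing entropy).}
The plan is to deploy the standard Le~Cam / van~de~Geer machinery for Hellinger-rate control of the MLE, treating the bracketing-integral bound \eqref{assumption:A2} as the single analytic input and converting it into an expectation bound. First I would recall that $(\widehat\lambda_n,\widehat G_n)$ maximizes $\sum_{i=1}^n\log p_{\lambda,G}(Y_i\mid X_i)$ over $\Xi$; as is standard, this is equivalent to a likelihood-ratio inequality
$\prod_{i=1}^n \bar p_{\widehat\lambda_n,\widehat G_n}(Y_i\mid X_i)/p_{\lambda^*,G_*}(Y_i\mid X_i)\ge 1$,
where $\bar p_{\lambda,G}=(p_{\lambda^*,G_*}+p_{\lambda,G})/2$ is introduced precisely so that $\bar p_{\lambda,G}/p_{\lambda^*,G_*}$ is bounded above (by a constant) and $h(\bar p_{\lambda,G},p_{\lambda^*,G_*})\asymp h(p_{\lambda,G},p_{\lambda^*,G_*})$, so it suffices to bound $h(\bar p_{\widehat\lambda_n,\widehat G_n},p_{\lambda^*,G_*})$. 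The key step is then the peeling (slicing) argument: decompose the event $\{h(\bar p_{\widehat\lambda_n,\widehat G_n},p_{\lambda^*,G_*})>\delta\}$ into shells $\{2^{j-1}\delta< h\le 2^j\delta\}$ for $\delta=M\sqrt{\log n/n}$ with $M$ a large constant, and on each shell apply a uniform deviation bound for the empirical process indexed by the (transformed) square-root densities in $\overline{\mathcal P}^{1/2}(\Xi,2^j\delta)$.

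Next I would invoke the master empirical-process inequality of van~de~Geer (Theorem~7.4 in \cite{Vandegeer-2000}, or the version tailored to Hellinger rates for MLEs, e.g. Theorem~7.11 there): for each shell the probability that the log-likelihood-ratio process exceeds the required threshold is bounded by $c\exp(-c' n (2^j\delta)^2)$, \emph{provided} the local bracketing entropy integral satisfies $\mathcal J_B(2^j\delta,\overline{\mathcal P}^{1/2}(\Xi,2^j\delta))\le J\sqrt n\,(2^j\delta)^2$ — which is exactly hypothesis \eqref{assumption:A2}, valid for all $n\ge N$ and all radii exceeding $\sqrt{\log n/n}$. Summing the geometric series $\sum_{j\ge 0} c\exp(-c' n\,4^j\delta^2)$ and using $n\delta^2 = M^2\log n$ gives
$\mathbb P_{p_{\lambda^*,G_*}}\!\big(h(\bar p_{\widehat\lambda_n,\widehat G_n},p_{\lambda^*,G_*})>M\sqrt{\log n/n}\big)\le c\, n^{-c'M^2}$,
uniformly over $(\lambda^*,G_*)\in\Xi$ once $M$ is chosen large enough that $c'M^2>1$. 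Finally, converting this tail bound to a bound in expectation is routine: write $\mathbb E\,h = \int_0^{\mathrm{diam}} \mathbb P(h>t)\,dt$, split at $t=M\sqrt{\log n/n}$, bound the first piece by $M\sqrt{\log n/n}$ and the second piece by $\int_{M\sqrt{\log n/n}}^{\infty}\mathbb P(h>t)\,dt$ using the same peeling estimate at radius $t$ (which is $\ge\sqrt{\log n/n}$, so \eqref{assumption:A2} applies), obtaining a contribution of order $n^{-c'M^2}\lesssim \sqrt{\log n/n}$. Transferring back from $\bar p$ to $p$ via $h(p_{\lambda,G},p_{\lambda^*,G_*})\le \sqrt2\, h(\bar p_{\lambda,G},p_{\lambda^*,G_*})$ (up to the bounded-ratio constant) completes the bound $\sup_{(\lambda^*,G_*)\in\Xi}\mathbb E_{p_{\lambda^*,G_*}} h(p_{\widehat\lambda_n,\widehat G_n},p_{\lambda^*,G_*})\le C\sqrt{\log n/n}$ for $n\ge N$; for the finitely many $n<N$ the bound holds trivially by enlarging $C$ since $h\le 1$.

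The main obstacle — and the only place real care is needed — is the passage from the abstract entropy hypothesis \eqref{assumption:A2} to the per-shell exponential deviation bound, i.e. checking that the hypotheses of the van~de~Geer theorem are genuinely met: one needs the envelope/ratio boundedness of $\bar p_{\lambda,G}/p_{\lambda^*,G_*}$ (clear from the definition of $\bar p$ and compactness of $\Xi$, using that $f$ is Gaussian so $p_{\lambda,G}$ has a uniformly controlled shape) and, more delicately, that the tail condition $\mathbb E_X[-\log f_0(Y\mid\varphi(a_0^\top X+b_0),\nu_0)]\gtrsim Y^q$ assumed in Theorem~\ref{theorem:ConvergenceRateofDensityEstimation} is what guarantees the relevant square-integrability/moment conditions on $\log(\bar p/p_{\lambda^*,G_*})$ uniformly over $\Xi$, so the empirical-process bound is uniform in $(\lambda^*,G_*)$. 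This uniformity is what lets the supremum over $\Xi$ be taken outside the expectation; once it is in place the rest is bookkeeping with geometric sums. Note that Lemma~\ref{applemma:convergence-rate-1} itself only reduces Theorem~\ref{appendixtheorem:ConvergenceRateofDensityEstimation} to verifying \eqref{assumption:A2}, and the verification of \eqref{assumption:A2} — bounding the bracketing entropy of $\overline{\mathcal P}^{1/2}(\Xi)$ by a constant multiple of $\log(1/\epsilon)$, which yields $\mathcal J_B(\epsilon,\cdot)\lesssim \epsilon\sqrt{\log(1/\epsilon)}\le J\sqrt n\,\epsilon^2$ for $\epsilon\ge\sqrt{\log n/n}$ — is a separate computation (parametric class, compact $\Xi$, Gaussian $f$ plus the fixed bounded $f_0$) that I would carry out afterwards.
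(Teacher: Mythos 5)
Your proposal is correct and follows essentially the same route as the paper: you invoke the likelihood-ratio inequality via the averaged density $\bar p$, peel the Hellinger ball into dyadic shells, apply van~de~Geer's uniform deviation bound on each shell using \eqref{assumption:A2}, sum the resulting Gaussian tails, and integrate the tail bound to obtain the expectation bound (the paper routes the shell bound through its Lemma~\ref{applemma:convergence-rate-3}, which is van~de~Geer's Theorem~5.11, together with Lemmas~4.1--4.2 of the same reference, but this is the same machinery you cite). Your observation that the finitely many $n<N$ are absorbed trivially by enlarging $C$ since $h\le 1$ is a small but correct clean-up that the paper's proof leaves implicit.
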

This lemma told us we just need to prove the model distribution satisfy the above assumption in equation (\ref{assumption:A2}) to get the conclusion we want. 
But usually the assumption in  equation (\ref{assumption:A2}) is too complex to prove, instead we could prove the following lemma: 
\begin{lemma}
\label{applemma:convergence-rate-add-1}
    If the distribution satisfies
    \begin{align}
    \label{assumption:A3}
        H_B(\epsilon,\mathcal{P}(\Xi),h)\lesssim\log (1/\epsilon),
    \end{align}
    it will meet the assumption in  equation  (\ref{assumption:A2}).
\end{lemma}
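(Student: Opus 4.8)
The plan is to show that the hypothesis $H_B(\epsilon,\mathcal{P}(\Xi),h)\lesssim\log(1/\epsilon)$ forces the bracketing entropy integral appearing in \eqref{assumption:A2} to be of order $\epsilon\sqrt{\log(1/\epsilon)}$, which is dominated by $J\sqrt{n}\,\epsilon^2$ precisely on the range $\epsilon>(\log n/n)^{1/2}$. First I would transfer the entropy bound from $\mathcal{P}(\Xi)$ under the Hellinger metric to $\overline{\mathcal{P}}^{1/2}(\Xi)$ under the $L^2(m)$ metric; then I would insert it into the definition of $\mathcal{J}_B$ and estimate an explicit integral of $\sqrt{\log(1/u)}$; finally I would compare the resulting bound with $\sqrt{n}\,\epsilon^2$. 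The only delicate point I expect is the first step: the passage from Hellinger bracketing entropy of $\mathcal{P}(\Xi)$ to $L^2(m)$ bracketing entropy of $\overline{\mathcal{P}}^{1/2}(\Xi)$ has to be carried out keeping careful track of uniform $L^\infty(m)$ and $L^1(m)$ bounds on the densities (where boundedness of $f_0$, boundedness of $\mathcal{X}$ and compactness of $\Theta$ enter) and of uniformity of all constants over the true parameter $(\lambdas,\Gs)\in\Xi$; Steps~2--3 are elementary calculus once the entropy estimate is in hand.

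\textbf{Step 1 (entropy transfer).} The first task is to prove $H_B(u,\overline{\mathcal{P}}^{1/2}(\Xi),m)\lesssim H_B(c\,u^2,\mathcal{P}(\Xi),h)$ for some fixed $c>0$ and all sufficiently small $u>0$. Given a Hellinger $\delta$-bracket $[\underline{f},\overline{f}]$ with $\underline{f}\le p_{\lambda,G}\le\overline{f}$ and $\|\sqrt{\overline{f}}-\sqrt{\underline{f}}\|_{L^2(m)}\le\delta$, the pair $\bigl[\,\sqrt{(\underline{f}+\plbgs)/2}\,,\ \sqrt{(\overline{f}+\plbgs)/2}\,\bigr]$ brackets $\barplbg^{1/2}$. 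Using $\sqrt{x}-\sqrt{y}\le\sqrt{x-y}$ for $x\ge y\ge 0$, this bracket has pointwise width at most $\sqrt{(\overline{f}-\underline{f})/2}$, so its squared $L^2(m)$-size is $\tfrac12\int(\overline{f}-\underline{f})\,dm$; writing $\int(\overline{f}-\underline{f})\,dm=\int(\sqrt{\overline{f}}-\sqrt{\underline{f}})(\sqrt{\overline{f}}+\sqrt{\underline{f}})\,dm\le\|\sqrt{\overline{f}}-\sqrt{\underline{f}}\|_{L^2(m)}\,\|\sqrt{\overline{f}}+\sqrt{\underline{f}}\|_{L^2(m)}$ and invoking that every density in $\mathcal{P}(\Xi)$ — hence every member of a near-optimal bracketing family — is uniformly bounded in $L^\infty(m)$ and in $L^1(m)$, this is $\lesssim\delta$. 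Thus each Hellinger $\delta$-bracket produces an $L^2(m)$ bracket of size $\lesssim\sqrt{\delta}$, which yields the claimed entropy transfer; combined with the hypothesis and the inclusion $\overline{\mathcal{P}}^{1/2}(\Xi,\epsilon)\subseteq\overline{\mathcal{P}}^{1/2}(\Xi)$ one gets
\begin{align*}
H_B\bigl(u,\ \overline{\mathcal{P}}^{1/2}(\Xi,\epsilon),\ m\bigr)\ \lesssim\ \log(1/u),
\end{align*}
with all implicit constants uniform in $(\lambdas,\Gs)\in\Xi$.

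\textbf{Step 2 (integral bound).} Inserting Step~1 into the definition of $\mathcal{J}_B$,
\begin{align*}
\mathcal{J}_B\bigl(\epsilon,\ \overline{\mathcal{P}}^{1/2}(\Xi,\epsilon),\ m\bigr)
=\int_{\epsilon^2/2^{13}}^{\epsilon}H_B^{1/2}\bigl(u,\ \overline{\mathcal{P}}^{1/2}(\Xi,\epsilon),\ m\bigr)\,du\ \vee\ \epsilon
\ \lesssim\ \int_{0}^{\epsilon}\sqrt{\log(1/u)}\,du\ \vee\ \epsilon .
\end{align*}
An integration by parts gives, for $\epsilon\le e^{-1}$,
\begin{align*}
\int_{0}^{\epsilon}\sqrt{\log(1/u)}\,du
=\epsilon\sqrt{\log(1/\epsilon)}+\tfrac12\int_{0}^{\epsilon}\bigl(\log(1/u)\bigr)^{-1/2}\,du
\ \le\ 2\,\epsilon\sqrt{\log(1/\epsilon)} ,
\end{align*}
so $\mathcal{J}_B(\epsilon,\overline{\mathcal{P}}^{1/2}(\Xi,\epsilon),m)\lesssim\epsilon\sqrt{\log(1/\epsilon)}$ for all sufficiently small $\epsilon$ (the term $\epsilon$ in the $\vee$ is absorbed since $\sqrt{\log(1/\epsilon)}\ge 1$ there).

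\textbf{Step 3 (comparison).} It remains to check $\epsilon\sqrt{\log(1/\epsilon)}\le J\sqrt{n}\,\epsilon^2$, i.e. $\sqrt{\log(1/\epsilon)}\le J\sqrt{n}\,\epsilon$, whenever $\epsilon>(\log n/n)^{1/2}$. For such $\epsilon$ one has $1/\epsilon<(n/\log n)^{1/2}$, hence $\log(1/\epsilon)\le\tfrac12\log(n/\log n)\le\tfrac12\log n$, while $\sqrt{n}\,\epsilon>\sqrt{\log n}$; therefore $\sqrt{\log(1/\epsilon)}\le\sqrt{\tfrac12\log n}<\sqrt{n}\,\epsilon$. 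Absorbing the fixed constant coming from Steps~1--2 into $J$ yields $\mathcal{J}_B(\epsilon,\overline{\mathcal{P}}^{1/2}(\Xi,\epsilon),m)\le J\sqrt{n}\,\epsilon^2$ for all $n\ge N$, with $N$ chosen large enough that every $\epsilon>(\log n/n)^{1/2}$ lies in the small-$\epsilon$ regime of Step~2. This is exactly \eqref{assumption:A2}, so Lemma~\ref{applemma:convergence-rate-1} applies and gives the conclusion of Lemma~\ref{applemma:convergence-rate-add-1}.
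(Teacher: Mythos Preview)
Your overall architecture and Steps~2--3 match the paper's proof. The difference is in Step~1. Instead of going through $\sqrt{x}-\sqrt{y}\le\sqrt{x-y}$ and Cauchy--Schwarz, the paper exploits two cleaner facts: (a) by the very definition of $h$, an $L^2(m)$-bracket for $\overline{\mathcal{P}}^{1/2}(\Xi)$ of size $u$ is the same thing as a Hellinger-bracket for $\overline{\mathcal{P}}(\Xi)$ of size $u/\sqrt{2}$; and (b) the averaging map is a Hellinger contraction, $h^{2}\bigl((f_1+f^*)/2,(f_2+f^*)/2\bigr)\le \tfrac12\,h^{2}(f_1,f_2)$. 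Chaining (a) and (b) gives the sharp inequality $H_B(u,\overline{\mathcal{P}}^{1/2}(\Xi,\epsilon),m)\le H_B(u,\mathcal{P}(\Xi),h)$, with \emph{no} appeal to $L^\infty$ or $L^1$ bounds on the bracket functions and no lost square root in the scale. Your route still lands because $\log(1/u^2)\asymp\log(1/u)$, so the lost square root is harmless for the conclusion.

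There is one soft spot you should tighten: the sentence ``every density in $\mathcal{P}(\Xi)$ --- hence every member of a near-optimal bracketing family --- is uniformly bounded in $L^\infty(m)$ and in $L^1(m)$'' is a non sequitur, since the bracket functions $\underline f,\overline f$ need not belong to $\mathcal{P}(\Xi)$ (and the $L^\infty$ bound is never used in your argument anyway). What you actually need is only $\int\overline f\,dm\le C$, and that does follow from the bracket size itself: write $\int\overline f=\int\underline f+\int(\overline f-\underline f)\le 1+\delta\,\|\sqrt{\overline f}+\sqrt{\underline f}\|_{L^2}$ and solve the resulting quadratic in $\int\overline f$ to get $\int\overline f\le 1+O(\delta)$. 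So the gap is easily closed, but the paper's contraction argument sidesteps the issue entirely and is the tidier route.
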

Though we have simplified the assumption in  equation  (\ref{assumption:A2}) to equation (\ref{assumption:A3}). It is still hard to prove. But recall the contaminated model
\begin{align*}
    p_{\lambda, G}(Y|X)
    =(1-\lambda)f_0(Y|\varphi (a_0^{\top}X+b_0 ),\nu_0)
    +\lambda f(Y|\sigma \Big((a)^{\top}X+b \Big),\nu),
\end{align*}
and we have the assumption that $f_0$ is bounded and light tail, and $f$ is univariate Gaussian density. We have the following lemma that could help us get the conclusion:
\begin{lemma}
\label{applemma:convergence-rate-2}
    Let $\Theta$ be a bounded subsets of $\mathbb{R}^d\times\mathbb{R}\times\mathbb{R}^{+}$, $f$ is a univariate Gaussian density and $f_0$ is bounded with tail 
$\mathbb{E}_X\left(-\log f_0(Y|\varphi (a_0^{\top}X+b_0 ),\nu_0)\right)\gtrsim Y^q$
for almost surely $Y\in\mathcal{Y}$
for some $q>0$.
    Then, for any $ 0 < \varepsilon < \frac{1}{2} $, the following results hold:
\begin{enumerate}[(i)]
    \item $\log N(\epsilon,\mathcal{P}(\Xi),\Vert\cdot\Vert_\infty)\lesssim\log (1/\epsilon)$,
    \item $H_B(\epsilon,\mathcal{P}(\Xi),h)\lesssim\log (1/\epsilon)$.
\end{enumerate}
\end{lemma}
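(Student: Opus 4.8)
The statement to prove is Lemma~\ref{applemma:convergence-rate-2}, which gives logarithmic control on the covering number of $\mathcal{P}(\Xi)$ in $\|\cdot\|_\infty$ and on the bracketing entropy of $\mathcal{P}(\Xi)$ in Hellinger distance. Since the second part typically follows from the first by a standard bracketing argument, I would organize the proof into these two pieces.

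\textbf{Part (i): the $\|\cdot\|_\infty$ covering bound.} The plan is to exploit the fact that the parameter set $\Xi = [0,1]\times\Theta$ is a bounded subset of a finite-dimensional Euclidean space, so $N(\eta,\Xi,\|\cdot\|) \lesssim \eta^{-(d+3)}$, giving $\log N(\eta,\Xi,\|\cdot\|)\lesssim \log(1/\eta)$. First I would show that the map $(\lambda,G)\mapsto p_{\lambda,G}$ is Lipschitz (or at least Hölder) from $\Xi$ into $(L^\infty,\|\cdot\|_\infty)$. This uses the explicit form
\[
p_{\lambda,G}(Y|X)=(1-\lambda)f_0(Y|\varphi(a_0^\top X+b_0),\nu_0)+\lambda f(Y|\sigma(a^\top X+b),\nu),
\]
the boundedness of $f_0$, and the smoothness of the Gaussian density $f$ in its mean and variance parameters on the compact set $\Theta$ (the variance is bounded below by compactness, so the Gaussian density and its derivatives in $(a,b,\nu)$ are uniformly bounded, and $X$ ranges over a bounded set $\mathcal{X}$). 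Concretely, $|p_{\lambda,G}-p_{\lambda',G'}|\le |\lambda-\lambda'|\cdot(\|f_0\|_\infty+\|f\|_\infty)+\lambda\,|f(Y|\sigma(a^\top X+b),\nu)-f(Y|\sigma((a')^\top X+b'),\nu')|$, and the last difference is $\lesssim \|G-G'\|$ uniformly in $(X,Y)$ since the relevant partial derivatives of $f$ are uniformly bounded on the compact parameter region. Then an $\eta$-net of $\Xi$ transfers to an $O(\eta)$-net of $\mathcal{P}(\Xi)$ in $\|\cdot\|_\infty$, yielding $\log N(\varepsilon,\mathcal{P}(\Xi),\|\cdot\|_\infty)\lesssim\log(1/\varepsilon)$.

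\textbf{Part (ii): bracketing entropy in Hellinger.} Here the plan is the classical device of turning an $\|\cdot\|_\infty$-net into Hellinger brackets, but one must be careful because the $Y$-domain $\mathcal{Y}$ may be unbounded, so $\|\cdot\|_\infty$ proximity alone does not control the $L^1$ or Hellinger distance — the tails must be handled separately. I would use the tail assumption $\mathbb{E}_X[-\log f_0(Y|\varphi(a_0^\top X+b_0),\nu_0)]\gtrsim Y^q$, together with the fact that for $|Y|$ large the Gaussian part $f(Y|\sigma(a^\top X+b),\nu)$ decays at least as fast (Gaussian tails with bounded means and variances bounded above, on the compact $\Theta$), to show that for $|Y|>T$ with $T=T(\varepsilon)=O((\log(1/\varepsilon))^{1/q})$ the contribution of the tail region to $h^2$ is at most $\varepsilon^2$, uniformly in $(\lambda,G)\in\Xi$. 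On the compact region $\mathcal{X}\times\{|Y|\le T\}$ I would take the $\|\cdot\|_\infty$-net from Part (i) (with radius $\sim\varepsilon^2/T$, say) and form brackets $[\,p_{\lambda_j,G_j}-\delta,\ p_{\lambda_j,G_j}+\delta\,]$ (truncated to be nonnegative and extended by a fixed integrable envelope on the tail), whose Hellinger width is $O(\varepsilon)$ after accounting for the tail contribution. The number of such brackets is $\exp(O(\log(1/\varepsilon) + \log T)) = \exp(O(\log(1/\varepsilon)))$ since $\log T = O(\log\log(1/\varepsilon))$, giving $H_B(\varepsilon,\mathcal{P}(\Xi),h)\lesssim\log(1/\varepsilon)$. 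Combined with Lemma~\ref{applemma:convergence-rate-add-1} and Lemma~\ref{applemma:convergence-rate-1}, this yields Theorem~\ref{theorem:ConvergenceRateofDensityEstimation}.

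\textbf{Main obstacle.} The routine part is the Lipschitz estimate and the Euclidean covering count. The genuinely delicate step is the tail truncation in Part (ii): one needs a uniform (over $(\lambda,G)\in\Xi$) bound showing that the Hellinger mass of $p_{\lambda,G}$ outside $\{|Y|\le T(\varepsilon)\}$ is $O(\varepsilon^2)$ with $T(\varepsilon)$ only polylogarithmic in $1/\varepsilon$, and one must verify that converting $\|\cdot\|_\infty$-closeness on the bounded region into Hellinger-closeness does not lose control — in particular that the bracket functions can be chosen pointwise nonnegative and with a common integrable envelope controlling their integrals. This is where the hypotheses that $f_0$ is bounded with a power-tail log and that $f$ is Gaussian (hence light-tailed, with parameters ranging over a compact set so variances are bounded away from $0$ and $\infty$) are used essentially; without them the polylogarithmic size of $T(\varepsilon)$ would fail and the entropy could grow faster than $\log(1/\varepsilon)$.
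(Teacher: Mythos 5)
Your proposal is correct and follows essentially the same route as the paper: for part (i) a Lipschitz bound for $(\lambda,G)\mapsto p_{\lambda,G}$ in $\|\cdot\|_\infty$ combined with the finite-dimensional Euclidean covering count of $\Xi$ (the paper outsources the $G$-part to a cited lemma, but the idea is identical); for part (ii) an $\|\cdot\|_\infty$-net used to build brackets $[\max\{g_i-\eta,0\},\,\min\{g_i+\eta,H\}]$ under a common light-tail envelope $H$, passing from $L^1$ to Hellinger via $h^2\le\|\cdot\|_1$. The one place where you are actually more careful than the written proof is the tail cut-off: the paper truncates at a fixed radius $2a$, which as displayed would leave $\int_{|Y|\ge 2a}H$ as a constant rather than $O(\eta)$, whereas your $T(\epsilon)\asymp(\log(1/\epsilon))^{1/q}$ with the observation that $\log T=O(\log\log(1/\epsilon))$ is absorbed into the $\log(1/\epsilon)$ bound is exactly the device needed to make the estimate $\int(p_i^U-p_i^L)\lesssim\eta\,\mathrm{polylog}(1/\eta)$ rigorous while preserving the conclusion.
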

Now we could get the conclusion for Theorem \ref{theorem:ConvergenceRateofDensityEstimation}.

\subsection{Proof of Lemmas for Theorem \ref{theorem:ConvergenceRateofDensityEstimation}}
Now we will prove Lemma \ref{applemma:convergence-rate-1}, Lemma \ref{applemma:convergence-rate-add-1} and Lemma \ref{applemma:convergence-rate-2} in order. 
At first we need to introduced another Lemma \ref{applemma:convergence-rate-3} before we prove Lemma \ref{applemma:convergence-rate-1}. 
Lemma \ref{applemma:convergence-rate-3} is Theorem 5.11 in \cite{Vandegeer-2000} and its proof can also be found in \cite{Vandegeer-2000}.
\begin{lemma}
\label{applemma:convergence-rate-3}
    Let $R > 0$, $k \geq 1$ and  
 $\mathcal{G}$ is a subset in $\Xi$ where $\Gs\in\mathcal{G}\subset\Xi$ .
 Given $C_1<\infty$, for all $C$ sufficiently large, and for $n\in\mathbb{N}$ and $t>0$ is in the following range
 \begin{align}
     t\leq(8\sqrt{n}R)\wedge(C_1\sqrt{n}R^2/K),
 \end{align}
\begin{align}
     t\geq C^2(C_1+1)\Bigg( R\vee\int^{R}_{t/(2^6\sqrt{n})}H_B^{1/2}\big(\frac{u}{\sqrt{2}},\linephalf(\Xi,R),\mu \big) du\Bigg),
\end{align}
then we will have
\begin{align}
    \mathbb{P}_{\lbgs}
    \Big(
    \sup_{G\in\mathcal{G},h(\barplbg,\plbgs)\leq R}
    |\mu_n(G)|\geq t
    \Big)
    \leq
    C\exp
    \left(
    -\frac{t^2}{C^2(C_1+1)R^2}
    \right).
\end{align}
\end{lemma}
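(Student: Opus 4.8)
The plan is to recognise that this lemma is a verbatim specialisation of Theorem~5.11 in \cite{Vandegeer-2000}: here $\mu_n(G)$ is the $\sqrt n$-normalised centred empirical process indexed by the log-likelihood-ratio transforms of the averaged densities $\barplbg$, and the class $\linephalfxi$ together with the shell restriction $h(\barplbg,\plbgs)\le R$ matches the hypotheses of that theorem. The practical route I would take is therefore to check that our model class fits those hypotheses and then cite the theorem. For completeness I would reconstruct the chaining argument that underlies it, as follows.

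First I would record two structural facts about the indexing functions $g_{\lambda,G}$ over the shell $\{G\in\mathcal G:\ h(\barplbg,\plbgs)\le R\}$: they are uniformly bounded by some constant $K$, and their $L^2(p_{\lambdas,\Gs})$ radius is of order $R$. The uniform bound is where the assumptions on $f_0$ are consumed — boundedness of $f_0$ together with the light tail $\mathbb E_X[-\log f_0(Y\mid\varphi(a_0^\top X+b_0),\nu_0)]\gtrsim Y^q$ keeps the log-ratios controlled near the tails of $\mathcal Y$. These two facts give a Bernstein-type variance-to-range relation, and the admissible range $t\le (8\sqrt n R)\wedge(C_1\sqrt n R^2/K)$ is precisely the regime in which Bernstein's inequality is sub-Gaussian, which is what produces the $\exp(-t^2/(C^2(C_1+1)R^2))$ tail. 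Then I would run a generic bracketing/chaining maximal inequality (of the type in Chapter~8 of \cite{Vandegeer-2000}): cover the shell by $\varepsilon$-brackets in $L^2$, apply the single-bracket deviation bound, and sum the chaining links; the Dudley integral $\int_{t/(2^6\sqrt n)}^{R}H_B^{1/2}(u/\sqrt2,\linephalf(\Xi,R),\mu)\,du$ that appears in the lower bound on $t$ is exactly the accumulated cost of this chaining, with the cut-off $t/(2^6\sqrt n)$ being the discretisation scale below which the residual is absorbed, and $C$ chosen large enough to swallow all universal constants.

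The step I expect to be the genuine obstacle — as opposed to bookkeeping — is the boundedness and variance control of $g_{\lambda,G}$ under $p_{\lambda,G}$, because $\lambda$ is allowed to tend to $0$ or $1$ and $f_0$ is only assumed bounded with an exponential-type tail rather than bounded away from zero, so the log-ratios need a careful tail estimate rather than a trivial one. Once that verification is in place everything else is a mechanical instance of \cite[Theorem~5.11]{Vandegeer-2000}, which is why the paper invokes it directly.
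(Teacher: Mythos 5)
Your proposal matches the paper exactly: the paper states outright that Lemma~\ref{applemma:convergence-rate-3} \emph{is} Theorem~5.11 of \cite{Vandegeer-2000} and simply refers the reader there for its proof, which is precisely what you do. The extra reconstruction you offer (Bernstein sub-Gaussian regime in the admissible $t$-range, bracketing chaining producing the Dudley integral, uniform bound $K$ on the log-ratio transforms supplied by the boundedness and light-tail assumption on $f_0$) is a faithful sketch of van de Geer's argument and of where the model-specific hypotheses enter, but it does not constitute a different route from the paper's.
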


\begin{proof}[Proof of Lemma \ref{applemma:convergence-rate-1}]
Firstly, by Lemma 4.1 and 4.2 in \cite{Vandegeer-2000}, we have
\begin{align*}
    \frac{1}{16}h^2(\phlbgn,\plbgs)\leq h^2(\barphlbgn,\plbgs) \leq \frac{1}{\sqrt{n}}\mu_n(\hlbgn),
\end{align*}
here $\mu_n(\hlbgn)$ is an empirical process defined as
\begin{align*}
    \mu_n(\hlbgn):=\sqrt{n}\int_{\plbgs>0}
    \frac{1}{2}\log(\frac{\barphlbgn}{\plbgs})(\barphlbgn-\plbgs)d(X,Y).
\end{align*}
Thus, for any $\delta>\delta_n:=\sqrt{\log n/n}$, we have
\begin{align*}
    &\mathbb{P}_{\lbgs}(h(\phlbgn,\plbgs)\geq\delta)
    \\&\leq\mathbb{P}_{\lbgs}
    \left(
    \mu_n(\hlbgn)-\sqrt{n}h^2(\phlbgn,\plbgs)\geq0,
    h(\phlbgn,\plbgs)\geq\frac{\delta}{4}
    \right)
    \\&\leq\mathbb{P}_{\lbgs}
    \left(
    \sup_{\lbg:h(\bar{p}_{\lbg},\plbgs)\geq\delta/4}
    \left[
    \mu_n(\lbg)-\sqrt{n}h^2(\barplbg,\plbgs)
    \right]\geq0
    \right)
    \\&\leq\sum_{s=0}^S\mathbb{P}_{\lbgs}
    \left(
    \sup_{\lbg:2^s\delta/4\leq h(\bar{p}_{\lbg},\plbgs)\leq 2^{s+1}\delta/4}
    \left|
    \mu_n(\lbg)
    \right|
    \geq\sqrt{n}2^{2s}(\frac{\delta}{4})^2
    \right)
    \\&\leq\sum_{s=0}^S\mathbb{P}_{\lbgs}
    \left(
    \sup_{\lbg: h(\bar{p}_{\lbg},\plbgs)\leq 2^{s+1}\delta/4}
    \left|
    \mu_n(\lbg)
    \right|
    \geq\sqrt{n}2^{2s}(\frac{\delta}{4})^2
    \right)
\end{align*}
where $S$ is a smallest number such that $2^S\delta/4 > 1$
. 

Now we will use Lemma \ref{applemma:convergence-rate-3}: choose $R=2^{s+1}\delta, C_1=15$ and $t=\sqrt{n}2^{2s}(\delta/4)^2$.
We can confirm that condition (i) in Lemma 3 is met since $2^{s-1} \delta / 4 \leq 1$ for all $s \leq S$.
For the condition (ii), it is still satisfied since
\begin{align*}
    &\int^R_{t/2^6\sqrt{n}}
    H_B^{1/2}\left(\frac{u}{\sqrt{2}},\mathcal{P}^{1/2}(\Xi,R),\mu  \right)
    du\vee2^{s+1}\delta
    \\&=\sqrt{2}\int^{R/\sqrt{2}}_{R^2/2^{13}}
    H_B^{1/2}\left({u},\mathcal{P}^{1/2}(\Xi,R),\mu  \right)
     du\vee2^{s+1}\delta
     \\&\leq2\mathcal{J}_B\left(R,\mathcal{P}^{1/2}(\Xi,R),\mu  \right)
     \\&\leq2J\sqrt{n}2^{2s+1}\delta^2
     \\&=2^6Jt.
\end{align*}
Now since the two conditions in Lemma \ref{applemma:convergence-rate-3} are all satisfied, we could conclude that
\begin{align}
    \mathbb{P}_{\lbgs}
    \left( h(\phlbgn,\plbgs)>\delta \right)
    \leq C\sum_{s=0}^{\infty}
    \exp\left( \frac{2^{2s}n\delta^2}{J^22^{14}} \right)
    \leq 
    c\exp\left( -\frac{n\delta^2}{c} \right),
\end{align}
here constant $c$ is a large constant that does not depend on $\Gs$.
Now we could derive the bound on supremum of expectation:
\begin{align*}
    \mathbb{E}_{\plbgs}h({\phlbgn,\plbgs})
    &=\int^{\infty}_{0}\mathbb{P}
    \left( h({\phlbgn,\plbgs})>\delta \right)d\delta
    \\&\leq\delta_n+c\int^{\infty}_{\delta_n}\exp \left(-\frac{n\delta^2}{c^2} \right)d\delta
    \\& \leq \Tilde{c}\delta_n,
\end{align*}
here $\Tilde{c}$ is independent from $\lbgs$ and $\delta_n:=\sqrt{\log n/n}$.
So we can conclude that
\begin{align*}
    \sup_{(\lambdas,\Gs)\in\Xi}
    \mathbb{E}_{p_{\lambdas,\Gs}}
    h(p_{\widehat{\lambda}_n,\widehat{G}_n},p_{\lambdas,\Gs})
    \leq
    C\sqrt{\log n/n}.
\end{align*}
\end{proof}

\begin{proof}[Proof of Lemma \ref{applemma:convergence-rate-add-1}]
Because $\overline{\mathcal{P}}^{1/2}(\Xi,\delta)\subset\overline{\mathcal{P}}^{1/2}(\Xi)$ and from the definition of Hellinger distance, we have
\begin{align*}
    H_B(\delta,\overline{\mathcal{P}}^{1/2}(\Xi,\delta),\mu)
    \leq
    H_B(\delta,\overline{\mathcal{P}}^{1/2}(\Xi),\mu)
    =
    H_B\left(\frac{\delta}{\sqrt{2}},\overline{\mathcal{P}}(\Xi),h\right).
\end{align*}
Now, using the fact that for densities $ f^*, f_1, f_2 $, we have $ h^2 \left( \frac{f_1 + f^*}{2}, \frac{f_2 + f^*}{2} \right) \leq \frac{h^2(f_1, f_2)}{2} $, it is easy to verify that $ H_B(\delta/\sqrt{2}, \overline{\mathcal{P}}(\Xi), h) \leq H_B(\delta, \mathcal{P}(\Xi), h) $.
Hence, if equation (\ref{assumption:A3}) holds true, then
\begin{align*}
   H_B(\delta,\overline{\mathcal{P}}^{1/2}(\Xi,\delta),\mu)
    \leq
   H_B(\delta,{\mathcal{P}}(\Xi),h)
    \lesssim\log\left(\frac{1}{\delta} \right).
\end{align*}
This implies that
\begin{align*}
    \mathcal{J}_B
    \left(
    \epsilon,
    \overline{\mathcal{P}}^{1/2}(\Xi,\delta),\mu
    \right)
    \lesssim
    \epsilon
    \left(
    \log(\frac{2^{13}}{\epsilon^2})
    \right)^{\frac{1}{2}}
    <n\epsilon^2,\quad
\text{for all }
d\epsilon>\sqrt{\displaystyle\frac{\log n}{n}}.
\end{align*}
\end{proof}

\begin{proof}[Proof of Lemma \ref{applemma:convergence-rate-2}]

\textbf{Proof for (i):}
For any set $S$, we denote $\mathcal{E}_\epsilon(S)$ an $\epsilon-$net of $S$ if each element of S is within $\epsilon$ distance from some elements of $\mathcal{E}_\epsilon(S)$. 
So $|\mathcal{E}_\epsilon(S)|=N(\epsilon,S,\Vert\cdot\Vert_\infty)$
by the definition of the covering number.
Denote $\mathcal{P}(\Theta)=\{ p_{G}:G\in\Theta \}$
and
$p_G(X,Y):=f(\ysigmaa)\bar{f}(X)$.
From Lemma 6 in \cite{ho2022gaussian}, we have
\begin{align*}
    \log
    \left|
\mathcal{E}_\epsilon(\mathcal{P}(\Theta))
    \right|
    =N(\epsilon,\mathcal{P}(\Theta),\Vert\cdot\Vert_\infty)
    \lesssim\log(1/\epsilon).
\end{align*}
Indeed, for any $\lambda\in[0,1], G\in\Theta$, there exists $\tlb\in\mathcal{E}_\epsilon([0,1]),\tg\in\mathcal{E}_\epsilon(\mathcal{P}(\Theta))$ s.t. $|\lambda-\tlb|\leq\epsilon$ and $\Vert p_G-p_{\tg} \Vert_\infty\leq\epsilon$.
By triangle inequalities we will have
\begin{align*}
    \Vert \plbg-p_{\tlb,\tg} \Vert_\infty
    \leq
    |\lambda-\tlb|(\Vert f_0 \Vert_\infty + \Vert f \Vert_\infty)
    +\tlb\Vert p_G-p_{\tg} \Vert_\infty
    \lesssim \epsilon.
\end{align*}
Hence we have the conclusion that
\begin{align*}
    \log N(\epsilon,\mathcal{P}(\Xi),\Vert\cdot\Vert_\infty)
    \lesssim
    \log(|\mathcal{E}_\epsilon([0,1])|)
    +
    \log(|\mathcal{E}_\epsilon(\mathcal{P}(\Theta))|)
    \lesssim\log (1/\epsilon).
\end{align*}

\textbf{Proof for (ii):}
First, let $ \eta \leq \varepsilon $ be a positive number, which will be chosen later. We consider $f$ is the density function of an univariate Gaussian distribution, so $f$ is light tail: 
for any $|Y|\geq2a$ and $X\in\mathcal{X}$, 
\begin{align*}
    f(\ysigmaa)\leq\frac{1}{\sqrt{2\pi}\ell}
    \exp\left(-\frac{Y^2}{8u^2} \right).
\end{align*}
Also $f_0$ is bounded with tail 
$
\log f_0(Y|\varphi (a_0^{\top}X+b_0 ),\nu_0)
\lesssim
-Y^q
$
and $f_0(Y|\varphi (a_0^{\top}X+b_0 ),\nu_0)\leq M$
for almost surely $Y\in\mathcal{Y}$
for some $M,q>0$. 
Now let $q=\min\{p,2 \}$ and $C_2=\max\left\{ M,1/{\sqrt{2\pi}\ell} \right\}$, we will have 
\begin{align}
    H(X,Y)=
    \begin{cases}
        C_1\exp(-Y^q)\bar{f}(X),& |Y|\geq2a \\
        C_2\bar{f}(X),&|Y|<2a
    \end{cases}
\end{align}
here $C_1$ is a positive constant depending on $\ell$ and $f_0$. Moreover $H(X,Y)$ is an envelope of $\mathcal{P}(\Xi)$.
Next, let $ g_1, \dots, g_N $ represent an $ \eta $-net over $ \mathcal{P}_k(\Xi) $. Then, we construct the brackets $[p^L_i(X, Y), p^U_i(X, Y)]$ as follows:
\begin{align*}
    \begin{cases}
        p^L_i(X, Y):=\max\{g_i(X,Y)-\eta,0 \}\\
        p^U_i(X, Y):=\min\{g_i(X,Y)+\eta, H(X,Y) \}
    \end{cases}
\end{align*}
for $i=1,\cdots,N$. 
As a result, $ \mathcal{P}_k(\Xi) \subset \bigcup_{i=1}^N [p^L_i(X, Y), p^U_i(X, Y)] $ and $ p^U_i(X, Y) - p^L_i(X, Y) \leq \min\{2\eta, H(X, Y)\} $. Consequently,
\begin{align*}
    &\int\left( p^U_i(X, Y)-p^L_i(X, Y)\right)d(X,Y)\\
    &\leq
    \int_{|Y|<2a}\left( p^U_i(X, Y)-p^L_i(X, Y)\right)d(X,Y)
    +
    \int_{|Y|\geq2a}\left( p^U_i(X, Y)-p^L_i(X, Y)\right)d(X,Y)
    \\&\leq
    \int_{|Y|<2a}2\eta d(X,Y)
    +
    \int_{|Y|\geq2a}H(X,Y)d(X,Y)
    \lesssim\eta.
\end{align*}
This shows that $$ H_B(c\eta, \mathcal{P}(\Xi), \|\cdot\|_1) \leq N \lesssim \log(1/\eta). $$
Setting $ \eta = \epsilon/c $, 
we find $$
H_B(\epsilon, \mathcal{P}(\Xi), \|\cdot\|_1) \lesssim \log(1/\epsilon). $$
Since $ h^2 \leq \|\cdot\|_1 $ holds between the Hellinger distance and the total variation distance, we conclude the bracketing entropy bound.
\end{proof}

\section{PROOFS FOR AUXILIARY RESULTS}
\label{appendix:ProofsforAuxiliaryResults}

\subsection{The derivation of the heat equation}

If $f$ represents a family of univariate Gaussian distributions:
\[
f(Y | \sigma(a^{\top}X + b), \nu) = \frac{1}{\sqrt{2\pi \nu}} \exp\left(-\frac{(Y - \sigma(a^{\top}X + b))^2}{2\nu}\right).
\]
We will have
\begin{align*}
    \frac{\partial^2 f}{\partial \sigma^2} = f(Y | \sigma(a^{\top}X + b), \nu) \cdot \left[ \frac{(Y - \sigma(a^{\top}X + b))^2}{\nu^2} - \frac{1}{\nu} \right],\\
    \frac{\partial f}{\partial \nu} = f(\ysigmaa)\cdot \left[  \frac{(Y - \sigma(a^{\top}X + b))^2}{2\nu^2}-\frac{1}{2\nu} \right].
\end{align*}
So $f(\ysigmaa)$ satisfies the heat partial differential equation (PDE) with respect to the location and covariance parameters:
 \begin{align}
 \label{eq:heat_equation}
     \frac{\partial^2f}{\partial \sigma^2}
     (Y|\sigma(a^{\top}X+b) ,\nu )
     =
     2\frac{\partial f}{\partial \nu}
     (Y|\sigma(a^{\top}X+b) ,\nu 
     ),
 \end{align}
for almost surly $(X,Y)\in\mathcal{X}\times\mathcal{Y}$
and $(a,b,\nu)\in \Theta$.  
In Section \ref{section:sigma-linear-f0-Gaussian-varphi-linear}, we consider $\sigma$ as an identity function, i.e. $\sigma = a^{\top}X + b$.
Thus, we obtain:
   \[
   \frac{\partial \sigma}{\partial b} = 1.
   \]
Then, the second derivative for $f$ respect to $b$ is:
\begin{align*}
   \frac{\partial^2 f}{\partial b^2} 
   = \frac{\partial}{\partial b} \left( \frac{\partial f}{\partial b} \right)
   = \frac{\partial}{\partial b} \left( \frac{\partial f}{\partial \sigma} \cdot \frac{\partial\sigma}{\partial b}\right)
   = \frac{\partial}{\partial b} \left( \frac{\partial f}{\partial \sigma} \cdot 1 \right)
   = \frac{\partial^2 f}{\partial \sigma^2} .
\end{align*}



So we will have
\begin{align*}
\dfrac{\partial^2 f}{\partial b^2} = 2 \dfrac{\partial f}{\partial \nu},
\end{align*}
where the  equation is what we stated in equation 
\eqref{eq:heat_equation_1}.

\subsection{Proof of Proposition~\ref{prop:identifiability}}
\label{appendix:identifiability}

\begin{proof}
Suppose that  $p_{\lambda, G}(Y|X) =p_{\lambda^\prime , G^\prime}(Y|X)$ holds for almost sure every $(X,Y) \in \mathcal{X} \times \mathcal{Y}$:
\begin{equation*}
    \lambda f_0(Y|\varphi(a_0^\top X+b_0),\nu_0) + (1-\lambda) f(Y|\sigma(a^\top X+b),\nu) = \lambda^\prime f_0(Y|\varphi(a_0^\top X+b_0),\nu_0) + (1-\lambda^\prime) f(Y|\sigma((a^\prime)^\top X+b^\prime),\nu^\prime).
\end{equation*}
This means that 
\begin{equation}
\label{eq:prop1_simplified}
    (\lambda^\prime-\lambda)f_0(Y|\varphi(a_0^\top X+b_0),\nu_0) + (1-\lambda^\prime) f(Y|\sigma((a^\prime)^\top X+b^\prime),\nu^\prime) -  (1-\lambda) f(Y|\sigma(a^\top X+b),\nu) = 0.
\end{equation}
If $G=G'$, let $\tilde{G}$ be other parameter in the parameter set. Then, equation \eqref{eq:prop1_simplified} becomes 
\begin{equation*}
    (\lambda^\prime-\lambda)f_0(Y|\varphi(a_0^\top X+b_0),\nu_0) + (\lambda-\lambda^\prime) f(Y|\sigma(a^\top X+b),\nu) + 0 f(Y|\sigma(\tilde{a}^\top X+\tilde{b}),\tilde{\nu}) = 0. 
\end{equation*}
Applying the equation in Definition \ref{def:distinguish} with $\eta_0 = \lambda^\prime-\lambda$, $\eta_1 = \lambda-\lambda^\prime$, and $\eta_2 = 0$, we get that $\lambda = \lambda^\prime$, which means that $(\lambda,G) = (\lambda^\prime,G^\prime)$. 

Otherwise, if $G\neq G'$, then by pplying the equation in Definition \ref{def:distinguish} with $\eta_0 = \lambda^\prime-\lambda$, $\eta_1 = 1-\lambda$, and $\eta_2 = \lambda^\prime-1$ to equation \eqref{eq:prop1_simplified}, we get $\lambda = \lambda^\prime$, and $1-\lambda = 0$. This mean that $\lambda = \lambda^\prime = 1$, which means that the parts $G$ and $G'$ have no contribution. Thus, it is natural to suppose that $(\lambda,G) = (\lambda^\prime,G^\prime)$. 
\end{proof}

\subsection{Proof of Proposition~\ref{lemma:distinguish-linear sigma not Gaussian}}
\label{appendix:lemma:distinguish-linear sigma not Gaussian}
\begin{proof}
Recall the Definition \ref{def:distinguish} for the distinguishibility .
At first if $\eta_0\neq0$, then     
\begin{align*}
    &f_0(Y|\varphi (a_0^{\top}X+b_0 ),\nu_0)
    =-
    \sum_{i=1}^{2}\frac{\eta_i}{\eta_0}f(\ylinearsigmai)
    \end{align*}
    contradicts that $f_0\neq f$, so $\eta_0=0$.
Then since $(a_i,b_i,\nu_i), i=1,2$ are two distinct components, means that the mean and variance are different, then from the property of Gaussian distribution, we will have $\eta_1=\eta_2=0$. So that now $f$ is distinguishable from $f_0$.
\end{proof}

\subsection{Proof of Proposition~\ref{prop:sigma-nonlinear-f0-Gaussian-varphi-nonlinear}}
\label{appendix:prop:sigma-nonlinear-f0-Gaussian-varphi-nonlinear}
    
\begin{proof}
In this proof, we assume technically that the prior density distribution $\bar{f}(X)$ is non-vanishing almost everywhere. This assumption is actually realistic, because adding a quite small independent Gaussian noise can make the density function to be on-vanishing almost everywhere. This assumption is crucial for guarantee that $X$ can attain any value with a sufficient probability. Moreover, we also assume that the prompt function $\sigma$ should not have the form $\pm\log\left(x+c_1+\sqrt{(x+c_1)^2+c_2}\right)+C$. To our knowledge, this expert function has not appeared in common activation function in Deep Learning, which makes our assumption is reasonable. 

Suppose that there exists $\alpha_{\eta}$ such that
\begin{align}
\label{eq:dependence_non_distinguisability}
         \sum_{\ell=0}^2\sum_{|\eta|=\ell}
         \alpha_{\eta}\cdot
         \frac{\partial^{|\eta|}f}{\partial a^{\eta_1}\partial b^{\eta_2}\partial \nu^{\eta_3}}
         ( Y|\sigma(a^{\top}X+b),\nu)=0.
\end{align}

In order to do this exercise, it is necessary to calculate the derivative of order at most $
$ of $f$. The first-order derivative are: 
\begin{equation*}
\label{eq:dependence_non_distinguisability_second_order}
    \dfrac{\partial f}{\partial a_i} = \dfrac{\partial f}{\partial \sigma} \cdot \dfrac{\partial \sigma}{\partial a_i}, \quad \dfrac{\partial f}{\partial b} = \dfrac{\partial f}{\partial \sigma} \cdot \dfrac{\partial \sigma}{\partial b},
\end{equation*}
The second order derivatives are: 
\begin{align*}
    \dfrac{\partial^2 f}{\partial a_i\partial a_j} = \dfrac{\partial^2 f}{\partial \sigma^2} \cdot \dfrac{\partial \sigma}{\partial a_i}\dfrac{\partial \sigma}{\partial a_j} + \dfrac{\partial f}{\partial\sigma} \cdot \dfrac{\partial^2 \sigma}{\partial a_i\partial a_j}, &\quad  \dfrac{\partial^2 f}{\partial b^2} = \dfrac{\partial^2 f}{\partial \sigma^2} \cdot \left(\dfrac{\partial \sigma}{\partial b}\right)^2 +\dfrac{\partial f}{\partial\sigma} \cdot \dfrac{\partial^2 \sigma}{\partial b^2},\\
    \dfrac{\partial^2 f}{\partial a_i\partial b} = \dfrac{\partial^2 f}{\partial \sigma^2} \cdot \dfrac{\partial \sigma}{\partial a_i}\dfrac{\partial \sigma}{\partial b} + \dfrac{\partial f}{\partial\sigma} \cdot \dfrac{\partial^2 \sigma}{\partial a_i\partial b}, &\quad \dfrac{\partial f}{\partial \nu} = \frac{1}{2}\dfrac{\partial^2 f}{\partial \sigma^2}.
\end{align*}
The third order derivatives are: 
\begin{equation*}
    \dfrac{\partial^2 f}{\partial a_i\partial \nu} = 
    \frac{1}{2}\dfrac{\partial^3 f}{\partial \sigma^3} \cdot \dfrac{\partial \sigma}{\partial a_i}, \quad  
    \dfrac{\partial^2 f}{\partial b\partial \nu} = \frac{1}{2}\dfrac{\partial^3 f}{\partial \sigma^3} \cdot \dfrac{\partial \sigma}{\partial b}.
\end{equation*}

The forth order derivatives are: 
\begin{equation*}
   \dfrac{\partial^2 f}{\partial \nu^2} = \frac{1}{4}\dfrac{\partial^4 f}{\partial \sigma^4}.
\end{equation*}
Then, the equation \eqref{eq:dependence_non_distinguisability} can be considered as a linear combination of $\partial^kf/\partial\sigma^k$. According to Lemma \ref{lemma:independent_gaussian}, these coefficients in the linear combination are 0's. We consider these coefficients in decreasing order of the derivative.

\subsubsection*{Order 4:}
There is only one term that contribute to the $\partial^4f/\partial\sigma^4$, that is $\partial^2f/\partial\nu^2$. As the coefficient of $\partial^4f/\partial\sigma^4$ is 0, the coefficient of $\partial^2f/\partial\nu^2$ is also 0. 

\subsubsection*{Order 3:}
The terms contributing to the $\partial^3f/\partial\sigma^3$ are $\dfrac{\partial^2 f}{\partial a_i\partial \nu}$ and $\dfrac{\partial^2 f}{\partial b\partial \nu}$. Let $\rho_i$ be the coefficient of $\dfrac{\partial^2 f}{\partial a_i\partial \nu}$ and $\omega$ be the coefficient of $\dfrac{\partial^2 f}{\partial b\partial \nu}$. Then, by summing up all the coefficient with respect to $\partial^3f/\partial\sigma^3$, we achieve that 
\begin{equation}
    \label{eq:prop3_order3}
    \omega\dfrac{\partial\sigma}{\partial b} + \sum_{i=1}^n \rho_i\dfrac{\partial\sigma}{\partial a_i} = 0 \Leftrightarrow \left(\omega+\rho^{\top}X\right)\sigma'(a^\top X + b) = 0,
\end{equation}
where $\rho = (\rho_1\ldots\rho_n)^\top$ for brevity. As our random vector $X$ has density function that does not vanish almost everywhere in $\mathbb{R}^n$, equation \eqref{eq:prop3_order3} is equivalent to 
\begin{equation*}
\left(\omega+\rho^{\top}x\right)\sigma'(a^\top x + b) = 0
\end{equation*}
almost everywhere in $\mathbb{R}^n$. As $\sigma$ is not linear, its derivative does not equal to 0 in a set $B \subset \mathbb{R}$ of positive measure. As $a \neq 0$, we can find subset $\tilde{B} \subset \mathbb{R}^n$ of positive measure such that $a^\top x + b \in B$ for all $x \in \tilde{B}$, in other words, $\sigma'(a^\top x +b) \neq 0$. Thus, $\omega + \rho^\top x = 0$ for $x \in \tilde{B}$. This equality occurs if and only if $\omega = \rho = 0$. Thus, all the coefficients of $\dfrac{\partial^2 f}{\partial a_i\partial \nu}$ and $\dfrac{\partial^2 f}{\partial b\partial \nu}$ are 0. 

\subsubsection*{Order 2:}
The terms contributing to the $\partial^2f/\partial\sigma^2$ are $\dfrac{\partial^2 f}{\partial a_i\partial a_j}, \dfrac{\partial^2 f}{\partial a_i\partial b}, \dfrac{\partial^2 f}{\partial b^2}, \dfrac{\partial f}{\partial\nu}$. 
Let $\alpha_{i,j}$ be the coefficients of 
$\dfrac{\partial^2 f}{\partial a_i\partial a_j}$, 
$\beta_i$ be the coefficients of 
$\dfrac{\partial^2 f}{\partial a_i\partial b}$, 
$\gamma$ be the coefficient of 
$\dfrac{\partial^2 f}{\partial b^2}$, 
and $\xi$ be the coefficient of $\dfrac{\partial f}{\partial\nu}$. Then, by summing up all the coefficient with respect to $\partial^2f/\partial\sigma^2$, we achieve that
\begin{equation}
\label{eq:prop3_order2}
    \sum_{i,j=1}^n \alpha_{i,j} \dfrac{\partial\sigma}{\partial a_i}\dfrac{\partial\sigma}{\partial a_j} 
    + \sum_{i=1}\beta_i\dfrac{\partial\sigma}{\partial a_i} \dfrac{\partial\sigma}{\partial b} 
    +\gamma
    \left(\dfrac{\partial\sigma}{\partial b}\right)^2
    + \frac{1}{2}\xi = 0 \Leftrightarrow \left(\sum_{i,j=1}^n \alpha_{i,j}X_iX_j + \sum_{i=1}^n\beta_i X_i+\gamma\right)(\sigma^{\prime}(a^\top X + b))^2 + \frac{1}{2}\xi = 0.
\end{equation}

With the hypothesis that $a\neq 0$, by a change of base, we can suppose that $a=(1,0,\ldots,0)^\top$. Then, the equation above can be considred as
\begin{equation*}
    \left(\sum_{i,j=1}^n \alpha_{i,j}X_iX_j + \sum_{i=1}^n\beta_i X_i+\gamma\right)(\sigma^{\prime}(X_1+b))^2 + \frac{1}{2}\xi = 0.
\end{equation*}
As our random vector $X$ has density function that does not vanish almost everywhere in $\mathbb{R}^n$, equation \eqref{eq:prop3_order2} is equivalent to 
\begin{equation}
\label{eq:quadratic}
    \left(\sum_{i,j=1}^n \alpha_{i,j}x_ix_j + \sum_{i=1}^n\beta_i x_i+\gamma\right)(\sigma^{\prime}(x_1+b))^2 + \frac{1}{2}\xi = 0.
\end{equation}
WLOG, we can suppose that $\alpha_{i,j} = \alpha_{j,i}$. By letting $x_2, \ldots, x_n = 0$, we achieve that 
\begin{equation}
\label{eq:quadratic_reduced}
    (\alpha_{1,1}x_1^2 + \beta_1 x_1+\gamma)(\sigma^{\prime}(x_1+b))^2 + \frac{1}{2}\xi = 0.
\end{equation}
By substituting equation \eqref{eq:quadratic_reduced} in equation \eqref{eq:quadratic}, we receive that
\begin{equation*}
    \left(\sum_{i,j\neq 1}^n\alpha_{i,j}x_ix_j + 2\sum_{i=1,j\neq 1}^n\alpha_{i,j}x_ix_j + \sum_{i=2}^n\beta_i\right)(\sigma^{\prime}(x_1+b))^2 = 0.
\end{equation*}
As $\sigma$ is not a linear function, there exists a set $B\subset \mathbb{R}$ of positive measure such that $\sigma^\prime \neq 0$. Thus, it is possible to find a set $B^\prime \subset \mathbb{R}$ such that $\sigma^\prime(x_1+b)\neq 0$.  In this set,
\begin{equation*}
    \sum_{i,j\neq 1}^n\alpha_{i,j}x_ix_j + 2\sum_{i=1,j\neq 1}^n\alpha_{i,j}x_ix_j + \sum_{i=2}^n\beta_i = 0
\end{equation*}
This equation implies that $\alpha_{i,j} = 0$ for $(i,j)\neq (1,1)$, and $\beta_i = 0$ for $i={2,\cdots,n}$. 

Next, we consider the equation \eqref{eq:quadratic_reduced}. Suppose that not all the coefficient $\alpha_{1,1}$, $\beta_1$, and $\gamma$ is equal to 0. We examine it in three different setting: 

\texttt{Case 1: } If $\alpha_{1,1} = 0$. Then, if $\beta_1 \neq 0$, then $(\beta_1x_1+\gamma)(\sigma'(x_1+b))^2+\frac{1}{2}\xi = 0$. Thus, $(\sigma^\prime(x_1+b))^2 = -\xi/2(\beta_1x_1+\gamma) \geq 0$ almost sure, this occurs if and only if $\xi = 0$. Then, $\sigma'(x_1+b) = 0$, which implies that $\sigma$ be a constant, a contradiction to the hypothesis that $\sigma$ is not linear. 

Otherwise, if $\beta_1 = 0$, then $\gamma(\sigma'(x_1+b))^2+\xi/2 = 0$. If $\gamma \neq 0$, then this equation occurs if and only if $\sigma'(x_1+b) = 0$ for all $x_1 \in B'$, which is a contradiction to our hypothesis that $\sigma^\prime(x_1+b)\neq 0$ in $B'$.  If $\gamma=0$, then $\xi = 0$, which is also a contradiction to the hypothesis that not all the coefficient $\alpha_{1,1}$, $\beta_1$, and $\gamma$ is equal to 0.

\texttt{Case 2: } If $\alpha_{1,1} \neq 0$. Then, by letting $y = x_1+b$, $c_1 = \beta_1/(2\alpha_{{1,1}}) - b$, $c_2 = \gamma/\alpha_{1,1} - (\beta_1/(2\alpha))^2$, $c = -\xi/(2\alpha_{1,1})$, equation \eqref{eq:quadratic_reduced} becomes
\begin{equation*}
    \sigma'(y)^2 = \dfrac{c}{(y+c_1)^2 +c_2}. 
\end{equation*}
If $c_2 < 0$, then for $y \in (-c_1-\sqrt{c_2}, -c_1 + \sqrt{c_2})$ we have $\sigma^\prime(y)^2 < 0$, which is not possible. 

If $c_2 = 0$, then $|\sigma'(y)| \to \infty$ as $y \to -c_1$, which is not possible as $\sigma'$ be bounded in a compact set due to its continuous property.

If $c_2 > 0$, then we have the famous formulae $\sigma(y) = \pm\log\left(y+c_1+\sqrt{(y+c_1)^2+c_2}\right)+C$, which is also a contradiction with the hypothesis of $\sigma$

\subsubsection*{Order 1:}
The terms contributing to the $\partial f/\partial\sigma$ are $\dfrac{\partial f}{\partial a_i}, \dfrac{\partial f}{\partial b}, \dfrac{\partial^2 f}{\partial a_i\partial a_j},\dfrac{\partial^2 f}{\partial a_i\partial b},\dfrac{\partial^2 f}{\partial b^2}$. 
Recall we have $\alpha_{i,j}$ as the coefficients of 
$\dfrac{\partial^2 f}{\partial a_i\partial a_j}$, 
$\beta_i$ as the coefficients of 
$\dfrac{\partial^2 f}{\partial a_i\partial b}$, 
$\gamma$ as the coefficient of 
$\dfrac{\partial^2 f}{\partial b^2}$, 
let $\delta_i$ be the coefficient of $\dfrac{\partial f}{\partial a_i}$,
$\eta$ be the coefficient of $\dfrac{\partial f}{\partial b}$
. Then, by summing up all the coefficient with respect to $\partial f/\partial\sigma$, we achieve that
\begin{align*}
    \sum_{i,j=1}^n \alpha_{i,j} \dfrac{\partial^2\sigma}{\partial a_i\partial a_j} 
    + \sum_{i=1}\beta_i\dfrac{\partial^2\sigma}{\partial a_i\partial b}  
    +\gamma
    \dfrac{\partial^2\sigma}{\partial b^2}
    + \sum_{i=1}\delta_i\dfrac{\partial\sigma}{\partial a_i} 
    +\eta\dfrac{\partial\sigma}{\partial b}
    = 0
    \\
    \Leftrightarrow \left(\sum_{i,j=1}^n \alpha_{i,j}X_iX_j + \sum_{i=1}^n\beta_i X_i+\gamma\right)\sigma^{\prime\prime}(a^\top X + b) + 
    \left(\delta^{\top}X+\eta\right)\sigma'(a^\top X + b)  = 0.
\end{align*}
By the proof of above for order 2, all the coefficient of $\dfrac{\partial^2 f}{\partial a_i\partial a_j}, \dfrac{\partial^2 f}{\partial a_i\partial b}, \dfrac{\partial^2 f}{\partial b^2}$
are equal to 0. Thus, we only take into account the terms  $\dfrac{\partial f}{\partial a_i}, \dfrac{\partial f}{\partial b}$. 
Denote $\delta_i$ be the coefficients of $\dfrac{\partial f}{\partial a_i}$, and $\eta$ be the coefficient of $\dfrac{\partial f}{\partial b}$. Then, \begin{equation*}
    \eta\dfrac{\partial\sigma}{\partial b} + \sum_{i=1}^n \delta_i\dfrac{\partial\sigma}{\partial a_i} = 0 \Leftrightarrow \left(\eta+\delta^{\top}X\right)\sigma'(a^\top X + b) = 0,
\end{equation*}
where $\delta = (\delta_1\ldots\delta_n)^\top$ for brevity. Using the same argument as for the case of order 3, we receive that all the coefficient of $\dfrac{\partial f}{\partial a_i}, \dfrac{\partial f}{\partial b}$ are equal to 0. 

\subsubsection*{Order 0:}
The rest term that was not considered is $f$ without derivative. It is obvious that the coefficient of $f$ is equal to 0. 
\end{proof}

\subsection{Linear Independence of Gaussian Derivatives}
\label{appendixlemmaproof:independent_gaussian}
\begin{lemma}
\label{lemma:independent_gaussian}
Let $f(x,\sigma) = \frac{1}{\sqrt{2\pi\nu}}\exp\left(-\frac{(x-\sigma)^2}{2\nu}\right)$ be the Gaussian density/heat kernel. Consider the scalar (with respect to $x$) $a_0,\ldots,a_n$ such that 
\begin{equation}
\label{eq:depedence_gaussian_before_fourier}
    \sum_{i=0}^n a_i\dfrac{\partial^n f}{\partial \sigma^n}(x,\sigma) = 0,
\end{equation}
for almost every $x,\sigma$. Then, $a_i=0$ for $i = {0,\ldots,n}$.
\end{lemma}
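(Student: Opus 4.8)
The plan is to use the Fourier transform (or equivalently, the fact that $\partial/\partial\sigma$ acts as a multiplication operator after transforming) to reduce the linear independence of the derivatives $\partial^k f/\partial\sigma^k$ to the linear independence of monomials. First I would recall that $f(x,\sigma)$, as a function of $x$ for fixed $\sigma$, is a shifted Gaussian, and that $\partial f/\partial\sigma = -\partial f/\partial x$ by the chain rule (since $f$ depends on $x,\sigma$ only through $x-\sigma$), so $\partial^k f/\partial\sigma^k = (-1)^k \partial^k f/\partial x^k$. Thus equation~\eqref{eq:depedence_gaussian_before_fourier} becomes $\sum_{k=0}^n (-1)^k a_k \,\partial^k f/\partial x^k (x,\sigma) = 0$ for almost every $x$ (and every fixed $\sigma$, but translation-invariance means we may as well take $\sigma = 0$).

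Next I would apply the Fourier transform in the variable $x$. The Gaussian $g(x) := f(x,0) = \frac{1}{\sqrt{2\pi\nu}}e^{-x^2/(2\nu)}$ has Fourier transform $\widehat{g}(t) = e^{-\nu t^2/2}$ (up to the normalization convention), which is nowhere zero. Differentiation $\partial/\partial x$ transforms to multiplication by $it$, so the identity becomes
\begin{equation*}
    \left(\sum_{k=0}^n (-1)^k a_k (it)^k\right)\widehat{g}(t) = 0
\end{equation*}
for almost every $t\in\mathbb{R}$. Since $\widehat{g}(t) \neq 0$ everywhere, the polynomial $\sum_{k=0}^n (-1)^k a_k (it)^k$ must vanish for almost every $t$, hence identically as a polynomial in $t$. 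A polynomial that is identically zero has all coefficients zero, so $(-1)^k a_k i^k = 0$ for each $k$, giving $a_k = 0$ for all $k = 0,\ldots,n$.

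An alternative, transform-free route (in case one prefers to avoid Fourier analysis) is to note that $\partial^k f/\partial x^k (x,0) = (-1)^k \nu^{-k} H_k(x/\sqrt{\nu}) g(x) \cdot (\text{const})$ where $H_k$ is the $k$-th Hermite polynomial; since the $H_k$ have strictly increasing degrees they are linearly independent, and multiplying by the nowhere-zero factor $g(x)$ preserves linear independence of the resulting functions on any set of positive measure. Either way the argument is short. The only mild subtlety — not really an obstacle — is bookkeeping the almost-everywhere quantifiers: the hypothesis gives the identity for a.e.\ pair $(x,\sigma)$, and one should observe that by Fubini there is at least one $\sigma$ for which it holds for a.e.\ $x$, which (by translation) is all that is needed. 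I do not anticipate any genuine difficulty here; this is a standard fact and the proof is essentially a one-line Fourier/Hermite argument once the reduction $\partial_\sigma = -\partial_x$ is made.
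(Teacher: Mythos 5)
Your proof is correct and follows essentially the same route as the paper: both reduce the differential-operator identity to a polynomial identity via a Fourier transform (the paper transforms in $\sigma$, you transform in $x$ after the harmless reduction $\partial_\sigma = -\partial_x$; by translation invariance these are the same calculation) and then use that the transformed Gaussian is nowhere zero. Your sketch of the transform-free Hermite-polynomial alternative is a nice addition but not part of the paper's argument, which stays purely on the Fourier side.
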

  
\begin{proof}
   By taking the Fourier transformation with respect to $\sigma$ in both side of equation \eqref{eq:depedence_gaussian_before_fourier}, we have
    \begin{equation*}
        \left(\sum_{k=0}^n a_k(i\hat{\sigma})^k\right)\hat{f}(x,\hat{\sigma}) = 0
    \end{equation*}
    for almost every $x,\hat{\sigma}$, where $\hat{f}$ be the Fourier transformation of $f$, which is also a Gaussian density/heat kernel. Thus, we have 
\begin{equation*}
    \sum_{k=0}^n a_k(i\hat{\sigma})^k = 0
\end{equation*}
for almost every $\hat{\sigma}$, which means that $a_k = 0$, $0\leq k\leq n$. 
\end{proof}

\section{ADDITIONAL EXPERIMENTS}
\label{appendix:DiscussionandAdditionalExperiments}



\textbf{Synthetic data generation.} 
In this section, we perform numerical experiments to validate the theoretical results of Theorem~\ref{theorem:sigma-linear-f0-Gaussian-varphi-nonlinear},
Theorem~\ref{theorem:sigma-nonlinear-f0-notGaussian},
and Theorem~\ref{theorem:sigma-nonlinear-f0-Gaussian-varphi-linear} in Appendix~\ref{appendix:678} as they share the same convergence behavior of parameter estimation. Then, we continue to empirically justify the results of Theorem~\ref{theorem:sigma-linear-f0-Gaussian-varphi-linear} and Theorem~\ref{theorem:sigma-nonlinear-f0-Gaussian-varphi-nonlinear} in Appendix~\ref{appendix:4} and Appendix~\ref{appendix:9}, respectively.


\subsection{Theorems \ref{theorem:sigma-linear-f0-Gaussian-varphi-nonlinear}, \ref{theorem:sigma-nonlinear-f0-notGaussian} and \ref{theorem:sigma-nonlinear-f0-Gaussian-varphi-linear}}
\label{appendix:678}
Let us begin with the experimental details for Theorems~\ref{theorem:sigma-linear-f0-Gaussian-varphi-nonlinear}, \ref{theorem:sigma-nonlinear-f0-notGaussian} and \ref{theorem:sigma-nonlinear-f0-Gaussian-varphi-linear}.

\textbf{Experimental setup.}
\begin{itemize}
    \item In Theorem~\ref{theorem:sigma-linear-f0-Gaussian-varphi-nonlinear}, $f_0$ belongs to the Gaussian density family, $\varphi$ is a non-linear function, we let $\varphi$ be the sigmoid function, characterized by $\varphi(a_0^{\top} X + b_0)=1/(1+\exp\{-(a_0^{\top} X + b_0)\})$.
Meanwhile, the prompt $f$ is formulated as a Gaussian density with mean $\sigma((\as)^\top X + \bs) = (\as)^\top X + \bs$ and variance $\nu^*$. 
    \item In Theorem~\ref{theorem:sigma-nonlinear-f0-notGaussian}, since the
pre-trained model $f_0$ does not belong to the Gaussian density family, we let $f_0$ be the density of a Student's t-distribution, characterized by mean $\varphi(a_0^{\top} X + b_0)=a_0^{\top} X + b_0$ and degrees of freedom $\nu_0 = 4$.
Meanwhile, the prompt $f$ is modeled as a Gaussian distribution with a non-linear sigmoid mean, given by $\sigma((\as)^{\top} X + \bs) = {1}/(1 + \exp\{-((\as)^{\top} X + \bs)\})$, and a variance of $\nu^*$.
    \item In Theorem~\ref{theorem:sigma-nonlinear-f0-Gaussian-varphi-linear}, $f_0$ belongs to the Gaussian density family, $\varphi$ is a linear function
but $\sigma$ is a non-linear function.
We let $\varphi$ be the identity function and  $\sigma$ be the sigmoid function.
Specifically, the prompt $f$ is formulated as a Gaussian density with mean 
$\sigma((\as)^{\top} X + \bs)=1/(1+\exp\{-((\as)^{\top} X + \bs)\})$
and variance $\nu^*$. 
\end{itemize}



\textbf{True parameters.} We consider two cases for the true parameters $(\lambdas, \Gs) = (\lambdas, \as, \bs, \nus)$ to examine the difference in the MLE convergence behavior when $\lambdas$ is fixed versus when it varies with $n$:

\begin{enumerate}[(i)]
    \item $\lambdas=0.5,\as=\mathbbm{1}_d,\bs=1,\nus=0.01$;
    \item $\lambdas=0.5~n^{-1/4},\as=\mathbbm{1}_d,\bs=1,\nus=0.01$.
\end{enumerate}



\textbf{Results.}
We display the empirical illustration for Theorems~\ref{theorem:sigma-linear-f0-Gaussian-varphi-nonlinear}, \ref{theorem:sigma-nonlinear-f0-notGaussian} and \ref{theorem:sigma-nonlinear-f0-Gaussian-varphi-linear} in Figures~\ref{fig:thm6_experiments}, \ref{fig:thm7_experiments} and \ref{fig:thm8_experiments}, respectively.
Although these three scenarios differ significantly, they share the same property that the pre-trained model $f$ is distinguishable from the prompt $f_0$. As a result, the convergence behavior of parameter estimation under the settings of those theorems are quite similar.
First, we observe that the convergence rates of $\hlambdan$ in both cases across all three scenarios are approximately $\mathcal{O}(n^{-1/2})$.
However, there are mismatches in the convergence rates of $(\han, \hbn, \hnun)$ between the two cases. In case (i), where $\lambdas$ is fixed, $(\han, \hbn, \hnun)$ exhibit similar rates, approximately $\mathcal{O}(n^{-1/2})$, as seen in Figures~\ref{fig:thm6-fixed},~\ref{fig:thm7-fixed} and~\ref{fig:thm8-fixed}.
In case (ii), where $\lambdas$ vanishes at the rate of $\mathcal{O}(n^{-1/4})$, their rates slow down significantly to around $\mathcal{O}(n^{-1/4})$, as shown in Figures~\ref{fig:thm6-var},~\ref{fig:thm7-var} and \ref{fig:thm8-var}. 
These empirical results are fully consistent with the theoretical findings presented in the three theorems, which say that
\begin{align*}
    \sup_{(\lambdas,G_*)\in\Xi}\mathbb{E}_{p_{\lambdas,\Gs}} 
    \Big[
    |\widehat{\lambda}_n
    -\lambdas|^2 
    \Big] 
    =\widetilde{\mathcal{O}}(n^{-1}),\\
    \sup_{(\lambdas,G_*)\in\Xi}\mathbb{E}_{p_{\lambdas,\Gs}} 
    \Big[
    (\lambdas)^2 
    \Vert 
    (\widehat{a}_n, \widehat{b}_n, \widehat{\nu}_n)-(\as,\bs,\nus) 
    \Vert^2 
    \Big] 
    =\widetilde{\mathcal{O}}(n^{-1}).
\end{align*}

\subsection{Theorem~\ref{theorem:sigma-linear-f0-Gaussian-varphi-linear}}
\label{appendix:4}
Subsequently, we perform numerical experiments to validate theoretical results presented in Theorem~\ref{theorem:sigma-linear-f0-Gaussian-varphi-linear} where both the pre-trained model $f$ and the prompt $f_0$ belong to the family of Gaussian densities, while the experts $\varphi$ and $\sigma$ are the same linear functions. Here, we let $\varphi$ and $\sigma$ be identity functions.

\textbf{True parameters.} We consider two following cases of the true parameters $(\lambdas, \Gs) = (\lambdas, \as, \bs, \nus)$ to capture the difference in the MLE convergence behavior when $\as\to a_0$ and $\nus\to\nu_0$ versus when $\bs\to b_0$ as $n\to\infty$:

\begin{enumerate}[(i)]
    \item $\lambdas=0.5,\as=(1 + n^{-1/8})\cdot e_1,\bs=0,\nus=0.01 + n^{-1/8}$;
    \item $\lambdas=0.5,\as = e_1,\bs=n^{-1/8},\nus=0.01$.
\end{enumerate}



\textbf{Results.} We exhibit the empirical results of Theorem~\ref{theorem:sigma-linear-f0-Gaussian-varphi-linear} under the above two cases in
Figure~\ref{fig:thm4_experiments}. Prior to analyzing them, let us recall the theoretical rates in Theorem~\ref{theorem:sigma-linear-f0-Gaussian-varphi-linear}:
\begin{align*}
    \sup_{(\lambdas,\Gs)\in \Xi_1(l_n)  }
    \mathbb{E}_{p_{\lambdas,\Gs}} \Big[ 
    (\Vert\das\Vert^4+|\dbs|^8+|\dnus|^4) |\widehat{\lambda}_n
    -\lambdas|^2 \Big] 
    &=\widetilde{\mathcal{O}}(n^{-1}),
    \\
    \sup_{(\lambdas,\Gs)\in \Xi_1(l_n) }
    \mathbb{E}_{p_{\lambdas,\Gs}} 
    \Big[ (\lambdas)^2 
    (\Vert\das\Vert^2+|\dbs|^4+|\dnus|^2)
    (\Vert\han-\as\Vert^2+|\hbn-\bs|^4+|\hnun-\nus|^2)
    \Big] 
    &=\widetilde{\mathcal{O}}(n^{-1}).
\end{align*}
(i) In the first case when $\lambdas,\bs$ are fixed while $\as\to a_0$ and $\nus\to\nu_0$ at the rate of $\mathcal{O}(n^{-1/8})$, the first bound indicates that $\hlambdan$ converges to $\lambdas$ at the theoretical rate of $\widetilde{\mathcal{O}}(n^{-1/4})$. Meanwhile, the convergence rates of $\han,\hbn$ and $\hnun$ should be $\widetilde{\mathcal{O}}(n^{-3/8})$, $\widetilde{\mathcal{O}}(n^{-3/16})$ and $\widetilde{\mathcal{O}}(n^{-3/8})$, respectively. Now, looking at Figure~\ref{fig:thm4-case1}, we observe that the empirical convergence rates $\mathcal{O}(n^{-0.22})$, $\mathcal{O}(n^{-0.35})$, $\mathcal{O}(n^{-0.15})$ and $\mathcal{O}(n^{-0.38})$ of $\hlambdan,\han,\hbn$ and $\hnun$ totally align with the theory.

(ii) In the second case when $\lambdas,\as,\nus$ are fixed while $\bs\to b_0$ at the rate of $\mathcal{O}(n^{-1/8})$, the first bound implies that $\hlambdan$ might not converge to $\lambdas$. However, in case the convergence occurs, the rate would be significantly slow. This property is empirically illustrated in Figure~\ref{fig:thm4-case2}. Additionally, we also see from that figure that the empirical convergence rates of $\han,\hbn,\hnun$ totally match their theoretical counterparts, specifically $\mathcal{O}(n^{-0.24})$, $\mathcal{O}(n^{-0.12})$, $\mathcal{O}(n^{-0.25})$ compared to $\widetilde{\mathcal{O}}(n^{-1/4})$, $\widetilde{\mathcal{O}}(n^{-1/8})$, $\widetilde{\mathcal{O}}(n^{-1/4})$.

(iii) It can be seen from the above observations that the convergence of $\bs$ to $b_0$ causes more severe effects on the convergence rates of parameter estimation than those induced by the convergences of $\as,\nus$ to $a_0,\nu_0$. In particular, the former makes the mixing proportion $\hlambdan$ possibly not converge to the true value, while the convergence rates of $\han,\hbn,\hnun$ are slower than those generated from the latter.

\subsection{Theorem~\ref{theorem:sigma-nonlinear-f0-Gaussian-varphi-nonlinear}}
\label{appendix:9}
Lastly, we present the empirical results for Theorem~\ref{theorem:sigma-nonlinear-f0-Gaussian-varphi-nonlinear} where both $f$ and $f_0$ belong to the Gaussian density family, $\varphi$ and $\sigma$ are the same nonlinear expert functions. Here, we let $\varphi$ and $\sigma$ be the sigmoid function, that is, 
\begin{align*}
    \varphi(z)=\sigma(z)=\frac{1}{1+\exp(-z)},
\end{align*}
for any $z\in\mathbb{R}$.


\textbf{True parameters.} We take two following cases of the true parameters $(\lambdas, \Gs) = (\lambdas, \as, \bs, \nus)$ into account to understand the effects of the convergence of $\as,\bs,\nus$ to $a_0,b_0,\nu_0$ on the parameter estimation rates:

(i) $\lambdas=0.5\cdot n^{-1/4},\as=e_1,\bs=0,\nus=0.01$;

(ii) $\lambdas=0.5,\as=(1 + n^{-1/4}),\bs=n^{-1/4},\nus=0.01 + n^{-1/4}$.

\textbf{Results.}
Figure~\ref{fig:thm9_experiments} empirically illustrates the parameter estimation rates presented in  Theorem~\ref{theorem:sigma-nonlinear-f0-Gaussian-varphi-nonlinear}, which are given by
\begin{align*}
    \sup_{(\lambdas,G_*)\in\Xi_{2}(l_n)}\mathbb{E}_{p_{\lambdas,\Gs}} 
    \Big[
    \Vert (\das,\dbs,\dnus) \Vert^4
    |\widehat{\lambda}_n
    -\lambdas|^2 
    \Big] &=\widetilde{\mathcal{O}}(n^{-1}),\\
    \sup_{(\lambdas,G_*)\in\Xi_{2}(l_n)}\mathbb{E}_{p_{\lambdas,\Gs}} 
    \Big[
    (\lambdas)^2 
    \Vert (\das,\dbs,\dnus) \Vert^2
    \Vert (\widehat{a}_n, \widehat{b}_n, \widehat{\nu}_n)-(\as,\bs,\nus) \Vert^2 
    \Big] 
    &=\widetilde{\mathcal{O}}(n^{-1}).
\end{align*}
In case (i) when $\as,\bs,\nus$ are fixed,
Figure~\ref{fig:thm9-case1} shows that the convergence rate of $\hlambdan$ to $\lambdas$ is of order $\mathcal{O}(n^{-0.52})$, while those for $\han,\hbn,\hnun$ are slower, standing at around $\mathcal{O}(n^{-0.25})$, as they hinges upon the vanishing rate $\mathcal{O}(n^{-1/4})$ of $\lambdas$. These empirical rates are in line with the above theoretical rates.

In case (ii) when $\as,\bs,\nus$ converge to $a_0,b_0,\nu_0$ at the rate of $\mathcal{O}(n^{-1/4})$, respectively, Figure~\ref{fig:thm9-case2} reveals that $\hlambdan$ might not converge to $\lambdas$ as the displayed rate is significantly slow of order $\mathcal{O}(n^{-0.05})$. This observation totally aligns with the first bound. On the other hand, the MLE $\han,\hbn,\hnun$ still empirically converges to $\as,\bs,\nus$ at the rates of $\mathcal{O}(n^{-0.22})$, $\mathcal{O}(n^{-0.26})$, $\mathcal{O}(n^{-0.22})$, which approximates the theoretical rates $\widetilde{\mathcal{O}}(n^{-1/4})$ very well.

    \begin{figure*}[h]
        \centering
        \subfloat[\textbf{Case (i):} $\lambda^* = 0.5$.\label{fig:thm6-fixed}]{
            \includegraphics[width=0.9\textwidth]{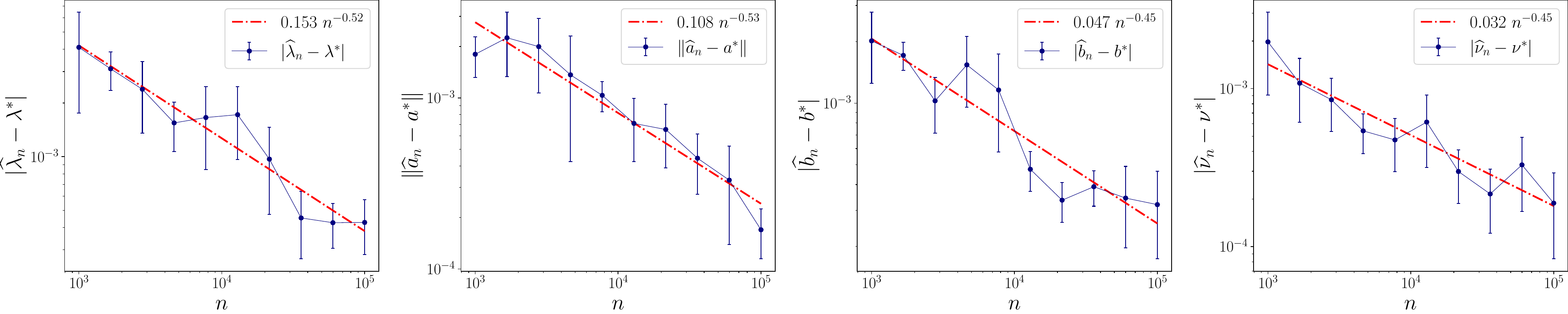}
        }
        
        \subfloat[\textbf{Case (ii):} $\lambda^* = 0.5~n^{-1/4}$.\label{fig:thm6-var}]{
            \includegraphics[width=0.9\textwidth]{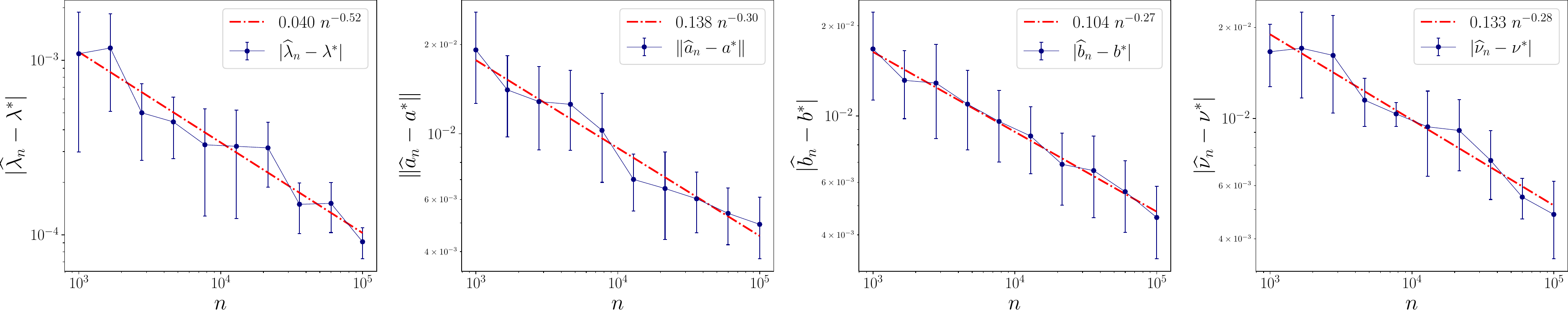}
        }
        \caption{
        {(\textbf{Theorem~\ref{theorem:sigma-linear-f0-Gaussian-varphi-nonlinear}:} $f_0$ is Gaussian, $\sigma(z)=z$, 
        $\varphi(z)=1/(1+e^{-z})$.)} 
        Log-log graphs depicting the empirical convergence rates of the MLE $(\hlambdan,\han,\hbn,\hnun)$ to the ground-truth values $(\lambdas,\as,\bs,\nus)$. 
        The blue lines display the parameter estimation errors, while the red dashed dotted lines are the fitted lines, highlighting the empirical MLE convergence rates. Figure~\ref{fig:thm6-fixed} and Figure~\ref{fig:thm6-var} illustrates results for the cases when $\lambdas = 0.5$ and $\lambdas = 0.5~n^{-1/4}$, respectively.
        }
        \label{fig:thm6_experiments}
    \end{figure*}

    
    \begin{figure*}[h]
        \centering
        \subfloat[\textbf{Case (i):} $\lambda^* = 0.5$.\label{fig:thm7-fixed}]{
            \includegraphics[width=0.9\textwidth]{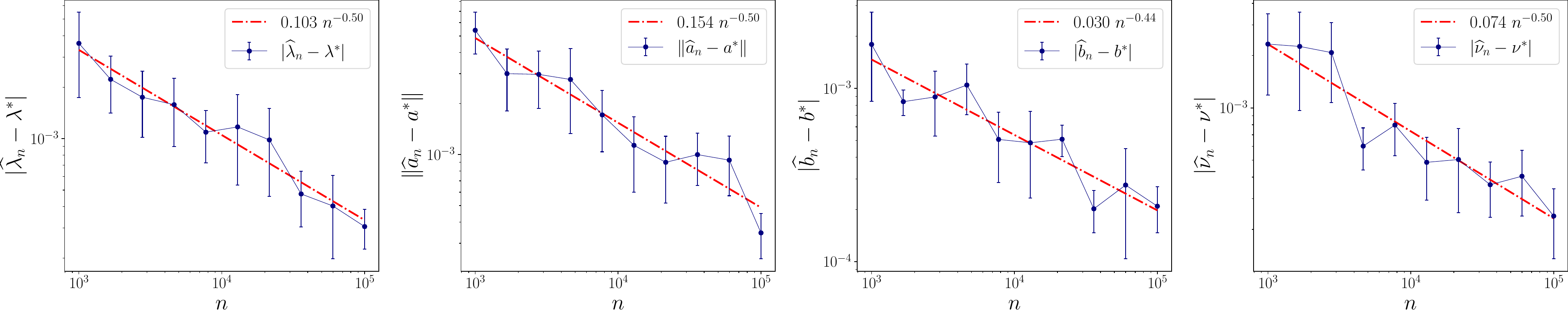}
        }
        
        \subfloat[\textbf{Case (ii):} $\lambda^* = 0.5~n^{-1/4}$.\label{fig:thm7-var}]{
            \includegraphics[width=0.9\textwidth]{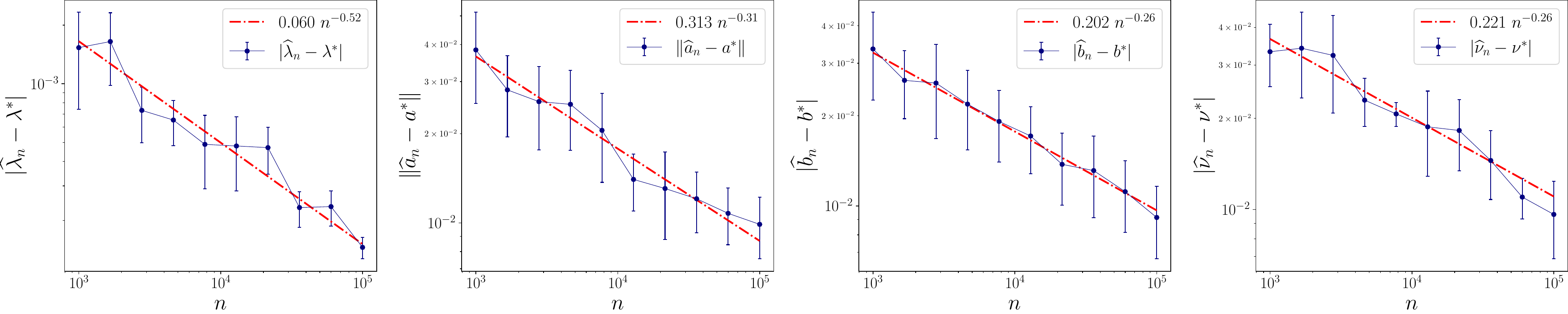}
        }
        \caption{{(\textbf{Theorem~\ref{theorem:sigma-nonlinear-f0-notGaussian}:} $f_0$ is Student's t-distribution, $\sigma(z)=1/(1+e^{-z})$, $\varphi(z)=z$.
        )} 
        Log-log graphs depicting the empirical convergence rates of the MLE $(\hlambdan,\han,\hbn,\hnun)$ to the ground-truth values $(\lambdas,\as,\bs,\nus)$. 
        The blue lines display the parameter estimation errors, while the red dashed dotted lines are the fitted lines, highlighting the empirical MLE convergence rates. Figure~\ref{fig:thm7-fixed} and Figure~\ref{fig:thm7-var} illustrates results for the cases when $\lambdas = 0.5$ and $\lambdas = 0.5~n^{-1/4}$, respectively.
        }
        \label{fig:thm7_experiments}
    \end{figure*}


    \begin{figure*}[h]
        \centering
        \subfloat[\textbf{Case (i):} $\lambda^* = 0.5$.\label{fig:thm8-fixed}]{
            \includegraphics[width=0.9\textwidth]{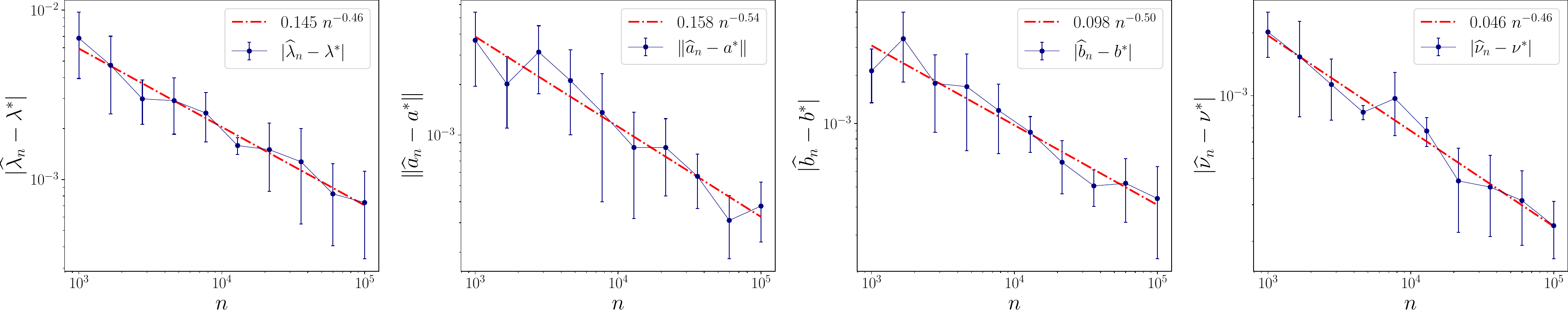}
        }
        
        \subfloat[\textbf{Case (ii):} $\lambda^* = 0.5~n^{-1/4}$.\label{fig:thm8-var}]{
            \includegraphics[width=0.9\textwidth]{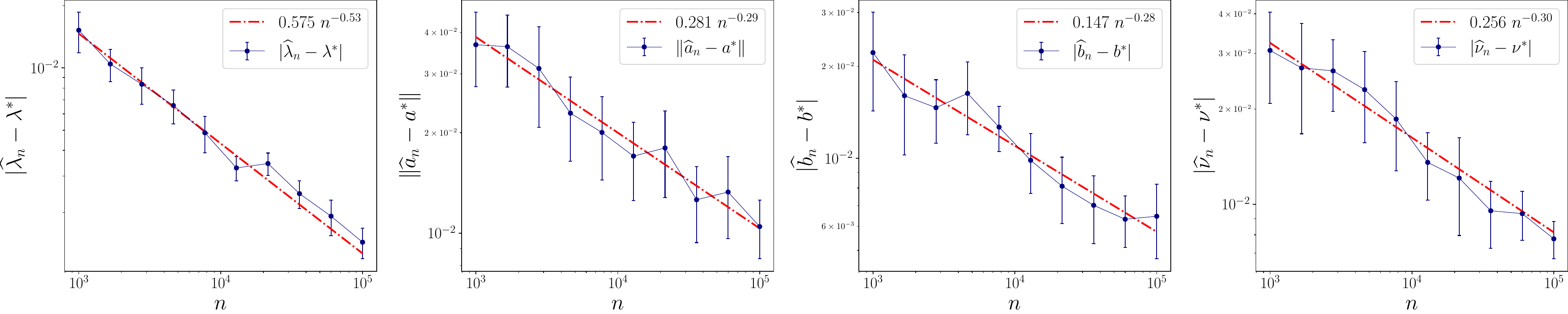}
        }
        \caption{{(\textbf{Theorem~\ref{theorem:sigma-nonlinear-f0-Gaussian-varphi-linear}:} $f_0$ is Gaussian, $\sigma(z)=1/(1+e^{-z})$,  $\varphi(z)=z$.
        )} Log-log graphs depicting the empirical convergence rates of the MLE $(\hlambdan,\han,\hbn,\hnun)$ to the ground-truth values $(\lambdas,\as,\bs,\nus)$. 
        The blue lines display the parameter estimation errors, while the red dashed dotted lines are the fitted lines, highlighting the empirical MLE convergence rates. Figure~\ref{fig:thm8-fixed} and Figure~\ref{fig:thm8-var} illustrates results for the cases when $\lambdas = 0.5$ and $\lambdas = 0.5~n^{-1/4}$, respectively.
        }
        \label{fig:thm8_experiments}
    \end{figure*}


    \begin{figure*}[h]
        \centering
        \subfloat[\textbf{Case (i):} $\lambdas = 0.5, \as = (1 + n^{-1/8})e_1, \bs = 0, \nus = 0.01 + n^{-1/8}$.
        \label{fig:thm4-case1}]{
            \includegraphics[width=0.9\textwidth]{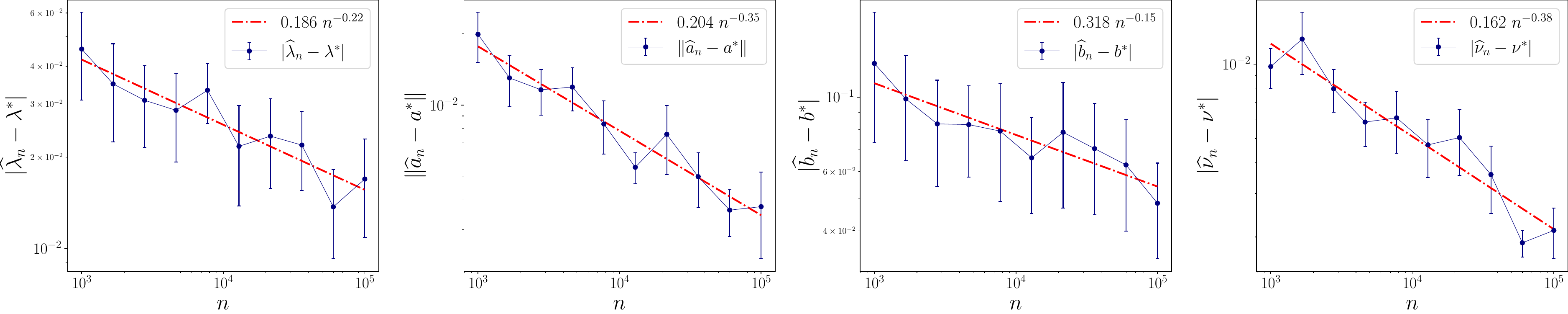}
        }
        
        \subfloat[\textbf{Case (ii):} $\lambdas = 0.5, \as = e_1, \bs = n^{-1/8}, \nus = 0.01$.\label{fig:thm4-case2}]{
            \includegraphics[width=0.9\textwidth]{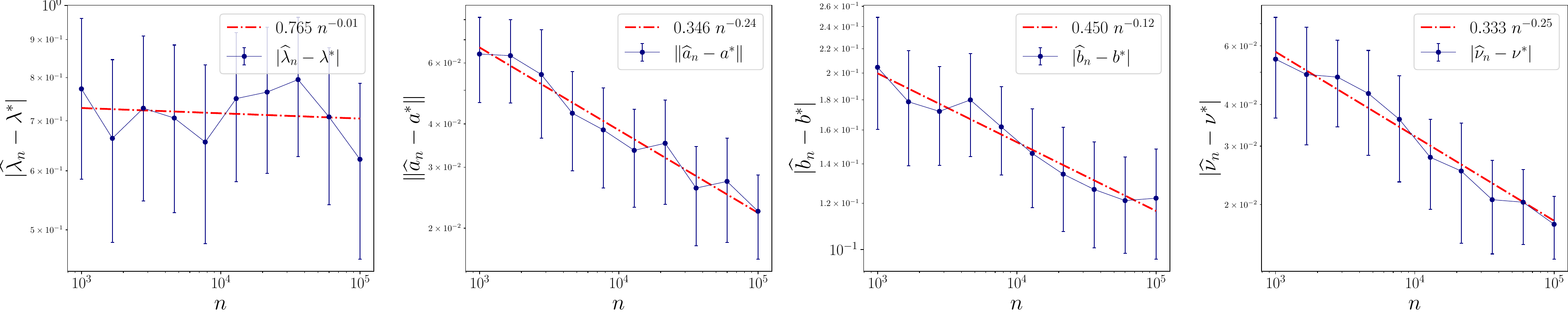}
        }
        \caption{
        {(\textbf{Theorem~\ref{theorem:sigma-linear-f0-Gaussian-varphi-linear}:} $f_0$ is Gaussian, $\sigma(z)=\varphi(z)=z$.)}
        Log-log graphs depicting the empirical convergence rates of the MLE $(\hlambdan,\han,\hbn,\hnun)$ to the ground-truth values $(\lambdas,\as,\bs,\nus)$. 
        The blue lines display the parameter estimation errors, while the red dashed dotted lines are the fitted lines, highlighting the empirical MLE convergence rates. Figure~\ref{fig:thm4-case1} and Figure~\ref{fig:thm4-case2} illustrates results for Case (i) and Case (ii), respectively.
        }
        \label{fig:thm4_experiments}
    \end{figure*}


    \begin{figure*}[h]
        \centering
        \subfloat[\textbf{Case (i):} $\lambdas = 0.5~n^{-1/4}, \as = e_1, \bs = 0, \nus = 0.01$.
        \label{fig:thm9-case1}]{
            \includegraphics[width=0.9\textwidth]{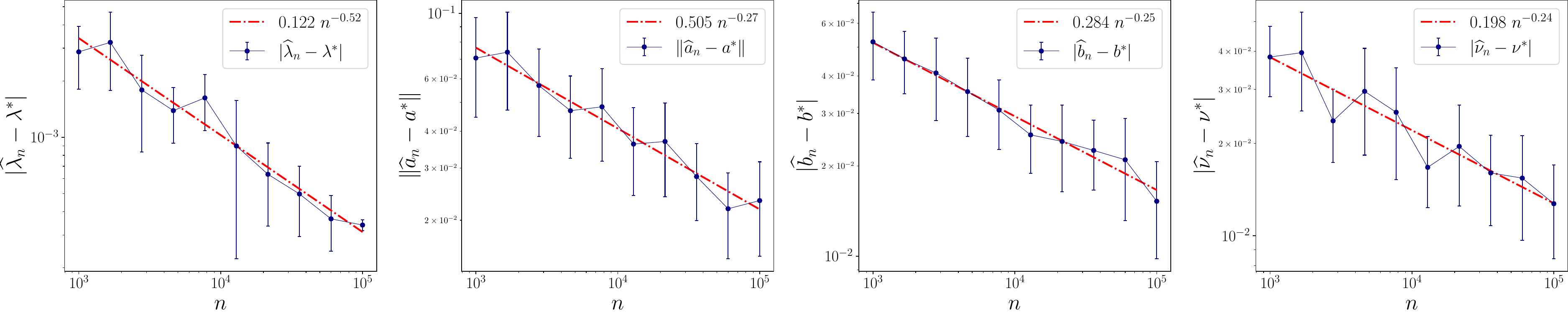}
        }
        
        \subfloat[\textbf{Case (ii):} $\lambdas = 0.5, \as = (1+n^{-1/4})~e_1, \bs = n^{-1/4}, \nus = 0.01 + n^{-1/4}$.\label{fig:thm9-case2}]{
            \includegraphics[width=0.9\textwidth]{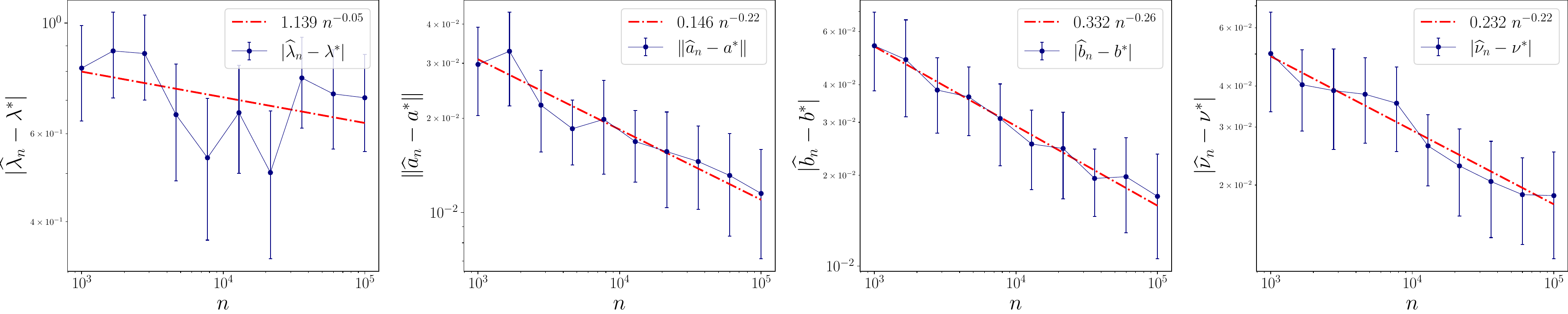}
        }
        \caption{{(\textbf{Theorem~\ref{theorem:sigma-nonlinear-f0-Gaussian-varphi-nonlinear}:} $f_0$ is Gaussian, $\sigma(z)=\varphi(z)=1/(1+e^{-z})$.
        )} Log-log graphs depicting the empirical convergence rates of the MLE $(\hlambdan,\han,\hbn,\hnun)$ to the ground-truth values $(\lambdas,\as,\bs,\nus)$. 
        The blue lines display the parameter estimation errors, while the red dashed dotted lines are the fitted lines, highlighting the empirical MLE convergence rates. Figure~\ref{fig:thm9-case1} and Figure~\ref{fig:thm9-case2} illustrates results for Case (i) and Case (ii), respectively.
        }
        \label{fig:thm9_experiments}
    \end{figure*}







\end{document}